%% file: main.tex
\documentclass[11pt,letterpaper]{article}
\usepackage[T1]{fontenc}
\usepackage[utf8]{inputenc}

\usepackage{amsmath,amssymb,amsthm,mathtools}
\usepackage{newpxtext}                 
\usepackage{newpxmath}                 
\usepackage[scaled=0.90]{FiraSans}     

\usepackage[letterspace=80]{microtype}

\usepackage[letterpaper,left=1.15in,right=1.15in,top=1.1in,bottom=1.15in,headsep=0.32in,footskip=0.45in]{geometry}
\usepackage{xcolor}
\definecolor{ink}{HTML}{1A1F2B}
\definecolor{navy}{HTML}{1F3A63}         
\definecolor{refcool}{HTML}{2C5AA0}      
\definecolor{citewarm}{HTML}{B5403B}     
\definecolor{muted}{HTML}{6B6A66}        
\definecolor{rulegray}{HTML}{D3D6DC}
\definecolor{thmback}{HTML}{F2F6FC}
\definecolor{lemback}{HTML}{F5F6F8}
\definecolor{lembar}{HTML}{6B7A90}
\definecolor{absback}{HTML}{F7F8FA}

\usepackage{graphicx,booktabs,array,tabularx,multicol}
\usepackage{enumitem}
\setlist{itemsep=1pt,parsep=0pt,topsep=3pt,partopsep=0pt}
\usepackage[font=small,labelfont={sf,bf,color=navy},labelsep=period,skip=6pt]{caption}
\usepackage[round]{natbib}

\usepackage[most]{tcolorbox}
\usepackage{titlesec,titletoc}
\usepackage{fancyhdr}
\usepackage{bm,aliascnt}
\usepackage{tikz}
\usepackage{pgfplots}
\usepackage{placeins}
\usepackage{etoolbox}
\usetikzlibrary{arrows.meta,calc,positioning,decorations.pathreplacing,patterns,shapes.geometric,fit,backgrounds}
\pgfplotsset{compat=1.18}

\titleformat{\section}{\sffamily\Large\bfseries\color{navy}}{\textcolor{refcool}{\thesection}}{0.7em}{}
\titlespacing*{\section}{0pt}{3.2ex plus 1ex minus .2ex}{1.5ex plus .2ex}
\titleformat{\subsection}{\sffamily\large\bfseries\color{navy}}{\textcolor{refcool}{\thesubsection}}{0.6em}{}
\titlespacing*{\subsection}{0pt}{2.4ex plus .8ex minus .2ex}{1.0ex plus .2ex}
\titleformat{\subsubsection}{\sffamily\normalsize\bfseries\color{navy}}{\textcolor{refcool}{\thesubsubsection}}{0.5em}{}
\titleformat{\paragraph}[runin]{\sffamily\bfseries\color{navy}}{}{0pt}{}
\titlespacing*{\paragraph}{0pt}{1.4ex plus .5ex minus .2ex}{0.55em}

\titlecontents{section}[2.0em]{\addvspace{0.6pt}\sffamily\fontsize{8.4}{10.2}\selectfont}
  {\contentslabel[\textcolor{refcool}{\bfseries\thecontentslabel}]{2.0em}}{\hspace*{-2.0em}}
  {\titlerule*[0.6pc]{\textcolor{rulegray}{.}}\contentspage}

\fancypagestyle{first}{\fancyhf{}\fancyfoot[C]{\sffamily\footnotesize\color{muted}\thepage}}

\newtheoremstyle{boxed}{0pt}{0pt}{\itshape}{}{\sffamily\bfseries\color{navy}}{.}{0.5em}
  {\thmname{#1}\thmnumber{ #2}\thmnote{\ \textmd{(#3)}}}
\newtheoremstyle{boxedrm}{0pt}{0pt}{\normalfont}{}{\sffamily\bfseries\color{navy}}{.}{0.5em}
  {\thmname{#1}\thmnumber{ #2}\thmnote{\ \textmd{(#3)}}}
\theoremstyle{boxed}
\newtheorem{theorem}{Theorem}[section]
\newtheorem{proposition}[theorem]{Proposition}
\newtheorem{lemma}[theorem]{Lemma}
\newtheorem{corollary}[theorem]{Corollary}

\theoremstyle{boxedrm}
\newtheorem{definition}[theorem]{Definition}

\newtheorem{remark}[theorem]{Remark}

\tcbset{thmbox/.style={enhanced,breakable,frame hidden,sharp corners,boxrule=0pt,
  left=8pt,right=8pt,top=5pt,bottom=5pt,before skip=9pt,after skip=9pt}}
\tcbset{coolbox/.style={thmbox,colback=thmback,borderline west={2.4pt}{0pt}{refcool}}}
\tcbset{slatebox/.style={thmbox,colback=lemback,borderline west={2.4pt}{0pt}{lembar}}}
\tcbset{warmbox/.style={thmbox,colback=white,borderline west={2.4pt}{0pt}{citewarm}}}
\tcbset{linebox/.style={blanker,breakable,left=8pt,before skip=7pt,after skip=7pt,borderline west={1pt}{0pt}{rulegray}}}
\tcolorboxenvironment{theorem}{coolbox}
\tcolorboxenvironment{proposition}{coolbox}
\tcolorboxenvironment{corollary}{coolbox}
\tcolorboxenvironment{conjecture}{coolbox}
\tcolorboxenvironment{lemma}{slatebox}
\tcolorboxenvironment{claim}{slatebox}
\tcolorboxenvironment{fact}{slatebox}
\tcolorboxenvironment{observation}{slatebox}
\tcolorboxenvironment{definition}{warmbox}
\tcolorboxenvironment{assumption}{warmbox}
\tcolorboxenvironment{remark}{linebox}
\tcolorboxenvironment{example}{linebox}
\tcolorboxenvironment{problem}{linebox}
\tcolorboxenvironment{question}{linebox}

\makeatletter
\renewenvironment{proof}[1][\proofname]{\par\pushQED{\qed}\normalfont\topsep6\p@\@plus6\p@\relax
  \trivlist\item[\hskip\labelsep{\sffamily\bfseries\color{navy}#1.}]\ignorespaces}{\popQED\endtrivlist\@endpefalse}
\makeatother

\makeatletter
\newcommand{\R}{\mathbb{R}}

\newcommand{\Exp}{\mathbb{E}}
\newcommand{\Prob}{\mathbb{P}}
\newcommand{\one}{\mathbf{1}}
\newcommand{\Phibar}{\overline{\Phi}}
\newcommand{\cF}{\mathcal{F}}
\newcommand{\cU}{\mathcal{U}}
\newcommand{\sigd}{\sigma_d}
\newcommand{\lhat}{\hat\lambda}
\newcommand{\Boxz}[1]{\mathsf{Box}_{#1}}
\newcommand{\Ballz}[1]{\mathsf{Ball}_{#1}}
\newcommand{\Dbox}{D_{\mathsf{box}}}
\newcommand{\Dball}{D_{\mathsf{ball}}}
\newcommand{\Vbox}{V_{\mathsf{box}}}
\newcommand{\Vball}{V_{\mathsf{ball}}}
\newcommand{\THR}{\mathrm{THR}}
\newcommand{\ANDg}{\mathrm{AND}}
\newcommand{\ORg}{\mathrm{OR}}
\newcommand{\MAJg}{\mathrm{MAJ}}
\newcommand{\ip}[2]{\langle #1,#2\rangle}
\newcommand{\abs}[1]{\lvert #1\rvert}
\newcommand{\norm}[1]{\lVert #1\rVert}
\newcommand{\Rep}{\mathrm{Rep}}
\DeclareMathOperator{\Var}{Var}
\DeclareMathOperator{\Cov}{Cov}
\DeclareMathOperator{\Bin}{Bin}

\definecolor{wongblue}{HTML}{0072B2}
\definecolor{wongorange}{HTML}{E69F00}
\definecolor{wongsky}{HTML}{56B4E9}
\definecolor{wonggreen}{HTML}{009E73}
\definecolor{wongyellow}{HTML}{F0E442}
\definecolor{wongverm}{HTML}{D55E00}
\definecolor{wongpurple}{HTML}{CC79A7}
\definecolor{referenceblue}{HTML}{285A7D}
\definecolor{citationrust}{HTML}{9B4D30}
\makeatother

\usepackage[colorlinks=true,linkcolor=refcool,citecolor=citewarm,urlcolor=refcool,
  pdftitle={The Ball and the Box: Two Geometries of Computation in Superposition},pdfauthor={Xiaoyu Li, Lequan Lin, Dai Shi, Jiaojiao Jiang, Junbin Gao, Andi Han},pdfsubject={Two Geometries of Computation in Superposition},pdfkeywords={computation in superposition, sparse feature representations, threshold circuits, Gaussian extreme values, sharp thresholds, reliability, mechanistic interpretability}]{hyperref}
\usepackage[nameinlink,capitalize,noabbrev]{cleveref}
\crefname{lemma}{lemma}{lemmas}
\Crefname{lemma}{Lemma}{Lemmas}
\crefname{proposition}{proposition}{propositions}
\Crefname{proposition}{Proposition}{Propositions}
\crefname{corollary}{corollary}{corollaries}
\Crefname{corollary}{Corollary}{Corollaries}
\hypersetup{hypertexnames=false}
\AddToHook{env/assumption/begin}{\crefalias{theorem}{assumption}}
\AddToHook{env/claim/begin}{\crefalias{theorem}{claim}}
\AddToHook{env/conjecture/begin}{\crefalias{theorem}{conjecture}}
\AddToHook{env/corollary/begin}{\crefalias{theorem}{corollary}}
\AddToHook{env/definition/begin}{\crefalias{theorem}{definition}}
\AddToHook{env/example/begin}{\crefalias{theorem}{example}}
\AddToHook{env/fact/begin}{\crefalias{theorem}{fact}}
\AddToHook{env/lemma/begin}{\crefalias{theorem}{lemma}}
\AddToHook{env/observation/begin}{\crefalias{theorem}{observation}}
\AddToHook{env/problem/begin}{\crefalias{theorem}{problem}}
\AddToHook{env/proposition/begin}{\crefalias{theorem}{proposition}}
\AddToHook{env/question/begin}{\crefalias{theorem}{question}}
\AddToHook{env/remark/begin}{\crefalias{theorem}{remark}}

\begin{document}
\thispagestyle{first}

\noindent{\color{refcool}\rule{\linewidth}{2.2pt}}\par\vspace{7pt}
\noindent{\sffamily\footnotesize\color{muted}\lsstyle PREPRINT\quad\textbullet\quad October 2026}\par\vspace{8pt}
\noindent{\sffamily\bfseries\fontsize{23}{28}\selectfont\color{navy}The Ball and the Box\par}\vspace{7pt}
\noindent{\sffamily\large\color{citewarm}Two Geometries of Computation in Superposition\par}\vspace{12pt}
\noindent\begin{tabular}{@{}>{\raggedright\arraybackslash}p{\dimexpr(\linewidth-20pt)/3\relax}@{\hspace{10pt}}>{\raggedright\arraybackslash}p{\dimexpr(\linewidth-20pt)/3\relax}@{\hspace{10pt}}>{\raggedright\arraybackslash}p{\dimexpr(\linewidth-20pt)/3\relax}@{}}
{\sffamily\bfseries\color{ink}Xiaoyu Li} & {\sffamily\bfseries\color{ink}Lequan Lin} & {\sffamily\bfseries\color{ink}Dai Shi} \\[1pt]
{\small University of New South Wales} & {\small University of Sydney} & {\small University of Cambridge} \\[1pt]
{\sffamily\footnotesize\href{mailto:xiaoyu.li2@unsw.edu.au}{\color{muted}xiaoyu.li2@unsw.edu.au}} & {\sffamily\footnotesize\href{mailto:lequan.lin@sydney.edu.au}{\color{muted}lequan.lin@sydney.edu.au}} & {\sffamily\footnotesize\href{mailto:ds2213@cam.ac.uk}{\color{muted}ds2213@cam.ac.uk}}
\end{tabular}\par\vspace{6pt}\noindent\begin{tabular}{@{}>{\raggedright\arraybackslash}p{\dimexpr(\linewidth-20pt)/3\relax}@{\hspace{10pt}}>{\raggedright\arraybackslash}p{\dimexpr(\linewidth-20pt)/3\relax}@{\hspace{10pt}}>{\raggedright\arraybackslash}p{\dimexpr(\linewidth-20pt)/3\relax}@{}}
{\sffamily\bfseries\color{ink}Jiaojiao Jiang} & {\sffamily\bfseries\color{ink}Junbin Gao} & {\sffamily\bfseries\color{ink}Andi Han} \\[1pt]
{\small University of New South Wales} & {\small University of Sydney} & {\small University of Sydney} \\[1pt]
{\sffamily\footnotesize\href{mailto:jiaojiao.jiang@unsw.edu.au}{\color{muted}jiaojiao.jiang@unsw.edu.au}} & {\sffamily\footnotesize\href{mailto:junbin.gao@sydney.edu.au}{\color{muted}junbin.gao@sydney.edu.au}} & {\sffamily\footnotesize\href{mailto:andi.han@sydney.edu.au}{\color{muted}andi.han@sydney.edu.au}}
\end{tabular}

\vspace{10pt}

\begin{tcolorbox}[enhanced,colback=absback,frame hidden,sharp corners,boxrule=0pt,
  borderline west={3pt}{0pt}{refcool},left=11pt,right=11pt,top=7pt,bottom=7pt]
{\sffamily\bfseries\footnotesize\color{navy}\lsstyle ABSTRACT}\par\vspace{3pt}
\small Neural representations can encode more features than they have dimensions, a phenomenon known as superposition. We study the dimension needed to compute Boolean gates from such representations. For a single threshold layer with a Gaussian random dictionary and uniformly random sparse Boolean inputs, we derive sharp dimension thresholds under two error criteria. A vanishing expected error count can require more dimensions than correctness of every output with high probability. Shared reads explain the gap: rare realizations can produce many errors at once. The expected-count threshold has ball geometry, while joint reliability has box geometry when a gate is evaluated on every feature tuple. Optimizing shared readout weights and biases gives explicit thresholds for conjunction, disjunction, and majority. For pairwise conjunction, the analysis also describes the transition near the threshold, in agreement with exact simulations.\par
\vspace{4pt}{\sffamily\footnotesize\color{navy}\textbf{Keywords:}}\ {\footnotesize computation in superposition, sparse feature representations, threshold circuits, Gaussian extreme values, sharp thresholds, reliability, mechanistic interpretability}
\end{tcolorbox}

\vspace{2pt}
\begin{tcolorbox}[enhanced,colback=white,colframe=rulegray,boxrule=0.5pt,arc=2pt,left=10pt,right=10pt,top=5pt,bottom=4pt]
{\sffamily\bfseries\footnotesize\color{navy}\lsstyle CONTENTS}\par
\setlength{\multicolsep}{4pt plus 1pt minus 1pt}\begin{multicols}{2}
{\hypersetup{linkcolor=ink}\makeatletter\@starttoc{toc}\makeatother}
\end{multicols}
\end{tcolorbox}
\suppressfloats[t]
\clearpage

\suppressfloats[t]
\input{intro}
\input{setting}
\input{exact}
\input{sections_ball}
\input{sections_box}
\input{sections_gates}

\input{sections_window}
\input{experiment_figure}
\input{sections_beyond}
\input{sections_experiments}
\input{discussion}
\input{ai_use}

\bibliographystyle{plainnat}
\bibliography{refs}

\clearpage
\appendix
\crefalias{section}{appendix}\crefalias{subsection}{subappendix}\crefalias{subsubsection}{subsubappendix}
\addtocontents{toc}{\protect\addvspace{5pt}\protect\noindent{\protect\sffamily\protect\small\protect\bfseries\protect\color{muted}Appendix}\protect\par}
\noindent{\color{refcool}\rule{\linewidth}{1.6pt}}\par\vspace{8pt}
\noindent{\sffamily\bfseries\LARGE\color{navy}Appendix}\par\vspace{3pt}
\noindent{\sffamily\small\color{muted}The Ball and the Box}\par\vspace{6pt}
\noindent{\color{rulegray}\rule{\linewidth}{0.5pt}}\par\vspace{4pt}
\suppressfloats[t]
\input{guide}
\input{reduction}
\input{sm_ball}
\input{sm_box}
\input{sm_gates}
\input{sm_window}
\input{extensions}
\input{sm_beyond}
\input{windowdetail}
\input{comparisons}
\input{numerics}
\input{sm_experiments}

\input{applications}

\end{document}

%% file: intro.tex
\setcounter{topnumber}{1}\section{Introduction}\label{sec:intro}

A network computes in superposition when many decisions read a vector that stores more features than it has dimensions \citep{Elhage22,HanniMVC24,AdlerShavit24}. Each feature has a direction in $\R^d$, and a threshold unit reads a weighted sum of these directions. If only $k$ of $m$ features are active, a unit probe sees interference of order $\sqrt{k/d}$. This controls one decision. What dimension is needed when a whole family of decisions shares the same stored vector? We count the width $d$ of that vector; each gate has its own readout unit.

Pairwise conjunction gives a concrete example. Store $m$ binary features in a Gaussian dictionary and read all $\binom m2$ products $x_ix_j$ with probes $e_i+e_j$. There are two choices: require the expected number of wrong outputs to vanish, or require every output to be correct with probability tending to one. The bias can be fixed at the midpoint $3/2$ or optimized. These choices give four sharp thresholds (\cref{tab:four}). When $\log k=o(\log m)$, the constants are $16$, $8$, $4$ and $2$ in units of $k\log m$. When $\log m\sim\log k$, all four thresholds equal $32k\log k$ to first order.

\begin{table*}[t]
\caption{Sharp first-order thresholds for pairwise conjunction, with $\lambda_1=\sqrt{2\log k}$ and $\lambda_0=\sqrt{2\log(m-k)}$. Second lines give the limit when $\log k=o(\log m)$. All entries are $32k\log k$ to first order when $\log m\sim\log k$.}
\label{tab:four}
\centering
\small
\renewcommand{\arraystretch}{1.25}
\begin{tabular}{@{}lll@{}}
\toprule
& Midpoint bias & Optimized bias\\
\midrule
Count errors, $\Exp X\to0$ (ball) & $8k(\lambda_1^2+\lambda_0^2)$ & $2k\bigl(\sqrt2\,\lambda_1+(\lambda_1^2+\lambda_0^2)^{1/2}\bigr)^2$\\
& $\sim16\,k\log m$ & $\sim4\,k\log m$\\
\addlinespace
Forbid errors, $\Prob(X=0)\to1$ (box) & $4k(\lambda_1+\lambda_0)^2$ & $k(3\lambda_1+\lambda_0)^2$\\
& $\sim8\,k\log m$ & $\sim2\,k\log m$\\
\bottomrule
\end{tabular}
\end{table*}
\begin{figure*}[t]
\centering
\scalebox{1.18}{\input{hero}}
\caption{Two geometries for the one-active class of pairwise conjunction. Left: independent vectors have the extreme projections of a disc of radius $\sqrt{2\log N}$, where $N=k(m-k)$. Right: shared active and inactive reads form a grid in $[-\lambda_1,\lambda_1]\times[-\lambda_0,\lambda_0]$. A gate fails beyond the line normal to $a=(1,1)$ at distance $s_z/(\sigd\norm a)$. The expected count must clear the disc; reliability must clear the rectangle. Here $\lambda_0=3\lambda_1$; points are illustrative.}
\label{fig:hero}
\end{figure*}

Reliability retains dependence between gate errors. Group gates by their active-input pattern $z\in\{0,1\}^r$. A gate in this class has weights $a$ and margin $s_z$; it fails when its interference crosses the margin. Counting errors uses only the law of each gate. Its answer is unchanged if the $N_z$ interference vectors are independent Gaussian points, whose extreme projections have the support function of a ball of radius $\sqrt{2\log N_z}$. Reliability uses the actual joint law. In the family of all gates, every interference vector is assembled from the same active and inactive feature reads. Their extremes form a box (\cref{fig:hero}), with half-sides $\lambda_1=\sqrt{2\log k}$ and $\lambda_0=\sqrt{2\log(m-k)}$ according to the input state. In either case, a class costs
\begin{equation}\label{eq:rule}
d\approx k\,\frac{h(a)^2}{s_z^2},
\end{equation}
where $h(a)=\norm a\sqrt{2\log N_z}$ for the ball and $h(a)=\sum_l\abs{a_l}\lambda_{z_l}$ for the box. The box lies inside the ball to first order. The Cauchy--Schwarz inequality between their support functions explains why the expected count can require more dimensions than reliability. Optimizing the bias produces a separate reduction by assigning more margin to classes with larger extremes.

\paragraph{Results}
The exact read law gives a common probabilistic reduction (\cref{thm:exact}). Expected-count thresholds depend on class sizes; reliability depends on Gaussian suprema (\cref{thm:ball,thm:sup}). The optimized count is a second-order cone program with a converse over all gate-local readouts (\cref{thm:weighted}). For the universal family, the box program has a converse over common representations, even if chosen after observing the reads (\cref{thm:box}). Symmetric threshold gates have closed forms (\cref{thm:thr}). Pairwise conjunction has explicit Gumbel--Poisson windows, tested by exact simulation (\cref{thm:window}).

\paragraph{Previous work}
Superposition explains polysemantic neurons in models of sparse features \citep{Olah20,Elhage22}. \citet{HanniMVC24} and \citet{AdlerShavit24} study neural computation with superposed hidden activations. Here we ask which Boolean functions a single threshold layer can read from one stored code. \citet{Newgas25} studies learned universal-AND circuits; \citet{BhagatMGH26} distinguish compressed computation from computation in superposition. \citet{GKP26} distinguish linear storage from threshold accessibility, and \citet{IvanitskiyJKM26} analyze feature recovery from superposition. Our comparison with recovery uses the same dictionary, input law and error criterion.

Sharp thresholds also appear in compressed sensing and support recovery \citep{CandesTao06,Donoho06,DonohoTanner09,FletcherRanganGoyal09,Wainwright09}. Sparse superposition codes supply the Gaussian conditioning identity used here \citep{BarronJoseph11,JosephBarron14}; vector-symbolic architectures also store superposed random vectors \citep{Plate95,ThomasDasguptaRosing21}. Different error criteria change capacity constants in associative memories \citep{McEliece87}. The present geometry identifies that difference for threshold gate families. The proofs use Gaussian concentration and comparison \citep{Borell75,TIS76,Slepian62} and Gaussian extreme-value theory \citep{LeadbetterLindgrenRootzen1983,Resnick87}. The gap between the expected and typical error counts is an annealed--quenched distinction \citep{Derrida81}.

%% file: hero.tex
\begin{tikzpicture}[scale=0.74]
\begin{scope}[xshift=0cm]
\fill[wongblue!7] (0,0) circle (3.1623);
\draw[-{Latex[length=4pt]},black!55] (-3.45,0) -- (3.55,0) node[below=1pt,font=\scriptsize,text=black!75,anchor=north east] {active};
\draw[-{Latex[length=4pt]},black!55] (0,-3.45) -- (0,3.75) node[left=1pt,font=\scriptsize,text=black!75,anchor=north east] {inactive};
\draw[wongblue,thick,dashed] (0,0) circle (3.1623);
\fill[black!60] (-0.558,0.686) circle (0.75pt);
\fill[black!60] (-0.200,0.642) circle (0.75pt);
\fill[black!60] (1.323,0.098) circle (0.75pt);
\fill[black!60] (-0.704,-1.359) circle (0.75pt);
\fill[black!60] (0.627,-0.953) circle (0.75pt);
\fill[black!60] (0.560,0.409) circle (0.75pt);
\fill[black!60] (-0.589,-0.134) circle (0.75pt);
\fill[black!60] (-0.676,0.267) circle (0.75pt);
\fill[black!60] (1.108,0.262) circle (0.75pt);
\fill[black!60] (0.915,1.337) circle (0.75pt);
\fill[black!60] (-0.357,0.318) circle (0.75pt);
\fill[black!60] (-0.564,1.249) circle (0.75pt);
\fill[black!60] (0.199,0.889) circle (0.75pt);
\fill[black!60] (-0.388,-0.087) circle (0.75pt);
\fill[black!60] (-1.073,-0.328) circle (0.75pt);
\fill[black!60] (0.431,0.628) circle (0.75pt);
\fill[black!60] (0.980,-0.922) circle (0.75pt);
\fill[black!60] (0.725,1.265) circle (0.75pt);
\fill[black!60] (0.990,0.061) circle (0.75pt);
\fill[black!60] (0.949,0.042) circle (0.75pt);
\fill[black!60] (0.831,-0.421) circle (0.75pt);
\fill[black!60] (-0.232,-0.332) circle (0.75pt);
\fill[black!60] (-1.595,0.415) circle (0.75pt);
\fill[black!60] (-0.164,-1.171) circle (0.75pt);
\fill[black!60] (-0.191,-0.429) circle (0.75pt);
\fill[black!60] (0.988,0.010) circle (0.75pt);
\fill[black!60] (0.435,-1.191) circle (0.75pt);
\fill[black!60] (0.258,-1.945) circle (0.75pt);
\fill[black!60] (-0.229,1.396) circle (0.75pt);
\fill[black!60] (-1.700,-0.220) circle (0.75pt);
\fill[black!60] (-0.726,-0.471) circle (0.75pt);
\fill[black!60] (0.689,-0.720) circle (0.75pt);
\fill[black!60] (0.291,1.268) circle (0.75pt);
\fill[black!60] (-0.029,0.012) circle (0.75pt);
\fill[black!60] (0.193,-0.103) circle (0.75pt);
\fill[black!60] (0.454,-0.218) circle (0.75pt);
\fill[black!60] (0.003,-1.611) circle (0.75pt);
\fill[black!60] (0.140,-0.012) circle (0.75pt);
\fill[black!60] (-1.192,-0.619) circle (0.75pt);
\fill[black!60] (0.017,0.917) circle (0.75pt);
\fill[black!60] (-0.124,0.009) circle (0.75pt);
\fill[black!60] (0.201,1.166) circle (0.75pt);
\fill[black!60] (0.167,0.082) circle (0.75pt);
\fill[black!60] (0.224,0.048) circle (0.75pt);
\fill[black!60] (-0.437,-1.175) circle (0.75pt);
\fill[black!60] (-0.590,0.930) circle (0.75pt);
\fill[black!60] (1.094,-1.496) circle (0.75pt);
\fill[black!60] (-0.706,0.212) circle (0.75pt);
\fill[black!60] (0.249,0.232) circle (0.75pt);
\fill[black!60] (-0.602,-2.266) circle (0.75pt);
\fill[black!60] (-0.305,-0.403) circle (0.75pt);
\fill[black!60] (-0.341,0.821) circle (0.75pt);
\fill[black!60] (0.815,0.971) circle (0.75pt);
\fill[black!60] (-0.408,0.432) circle (0.75pt);
\fill[black!60] (-1.582,-0.150) circle (0.75pt);
\fill[black!60] (0.734,-0.643) circle (0.75pt);
\fill[black!60] (0.545,-0.082) circle (0.75pt);
\fill[black!60] (0.282,-1.667) circle (0.75pt);
\fill[black!60] (0.868,1.487) circle (0.75pt);
\fill[black!60] (-0.721,-1.202) circle (0.75pt);
\fill[black!60] (0.756,0.881) circle (0.75pt);
\fill[black!60] (0.430,-1.336) circle (0.75pt);
\fill[black!60] (0.528,-0.599) circle (0.75pt);
\fill[black!60] (-0.560,0.231) circle (0.75pt);
\fill[black!60] (0.887,0.021) circle (0.75pt);
\fill[black!60] (-0.406,-0.478) circle (0.75pt);
\fill[black!60] (-0.098,-0.613) circle (0.75pt);
\fill[black!60] (0.336,0.304) circle (0.75pt);
\fill[black!60] (-0.608,-0.080) circle (0.75pt);
\fill[black!60] (-0.307,0.868) circle (0.75pt);
\fill[black!60] (-0.788,-0.190) circle (0.75pt);
\fill[black!60] (0.039,0.472) circle (0.75pt);
\fill[black!60] (-1.933,1.040) circle (0.75pt);
\fill[black!60] (0.547,-0.517) circle (0.75pt);
\fill[black!60] (0.137,-0.869) circle (0.75pt);
\fill[black!60] (0.436,-1.422) circle (0.75pt);
\fill[black!60] (-0.369,0.802) circle (0.75pt);
\fill[black!60] (0.217,1.009) circle (0.75pt);
\fill[black!60] (0.547,0.322) circle (0.75pt);
\fill[black!60] (0.381,0.829) circle (0.75pt);
\fill[black!60] (0.352,-0.272) circle (0.75pt);
\fill[black!60] (-0.119,1.411) circle (0.75pt);
\fill[black!60] (0.040,-0.446) circle (0.75pt);
\fill[black!60] (-0.055,-0.872) circle (0.75pt);
\fill[black!60] (0.208,-0.591) circle (0.75pt);
\fill[black!60] (1.023,-0.179) circle (0.75pt);
\fill[black!60] (-0.719,1.096) circle (0.75pt);
\fill[black!60] (-0.664,-0.240) circle (0.75pt);
\fill[black!60] (-0.466,-0.251) circle (0.75pt);
\fill[black!60] (1.856,-0.001) circle (0.75pt);
\fill[black!60] (-1.577,-0.667) circle (0.75pt);
\fill[black!60] (0.362,-0.707) circle (0.75pt);
\fill[black!60] (-0.438,0.118) circle (0.75pt);
\fill[black!60] (0.002,1.042) circle (0.75pt);
\fill[black!60] (-2.393,-0.249) circle (0.75pt);
\fill[black!60] (0.840,-0.375) circle (0.75pt);
\fill[black!60] (-1.420,-0.598) circle (0.75pt);
\fill[black!60] (0.078,0.768) circle (0.75pt);
\fill[black!60] (-0.108,0.338) circle (0.75pt);
\fill[black!60] (0.905,0.052) circle (0.75pt);
\fill[black!60] (1.070,0.427) circle (0.75pt);
\fill[black!60] (0.848,-0.749) circle (0.75pt);
\fill[black!60] (0.545,0.934) circle (0.75pt);
\fill[black!60] (0.655,-1.255) circle (0.75pt);
\fill[black!60] (-0.510,-0.134) circle (0.75pt);
\fill[black!60] (1.589,1.301) circle (0.75pt);
\fill[black!60] (0.233,-0.034) circle (0.75pt);
\fill[black!60] (-0.396,-0.072) circle (0.75pt);
\fill[black!60] (-0.871,-0.619) circle (0.75pt);
\fill[black!60] (1.048,0.292) circle (0.75pt);
\fill[black!60] (-0.308,-0.522) circle (0.75pt);
\fill[black!60] (-0.710,0.093) circle (0.75pt);
\fill[black!60] (0.017,0.178) circle (0.75pt);
\fill[black!60] (-0.684,0.207) circle (0.75pt);
\fill[black!60] (-0.097,0.382) circle (0.75pt);
\fill[black!60] (0.584,0.423) circle (0.75pt);
\fill[black!60] (-0.644,0.104) circle (0.75pt);
\fill[black!60] (1.250,0.244) circle (0.75pt);
\fill[black!60] (-0.023,-0.913) circle (0.75pt);
\fill[black!60] (-0.140,0.373) circle (0.75pt);
\fill[black!60] (0.203,0.395) circle (0.75pt);
\fill[black!60] (-1.256,-1.241) circle (0.75pt);
\fill[black!60] (-0.593,0.283) circle (0.75pt);
\fill[black!60] (-0.510,-0.660) circle (0.75pt);
\fill[black!60] (0.445,1.082) circle (0.75pt);
\fill[black!60] (-0.155,-0.600) circle (0.75pt);
\fill[black!60] (-0.533,-0.536) circle (0.75pt);
\fill[black!60] (0.170,0.684) circle (0.75pt);
\fill[black!60] (0.161,0.778) circle (0.75pt);
\fill[black!60] (-0.532,-1.381) circle (0.75pt);
\fill[black!60] (-0.401,-0.971) circle (0.75pt);
\fill[black!60] (-0.448,-0.012) circle (0.75pt);
\fill[black!60] (0.656,-0.223) circle (0.75pt);
\fill[black!60] (1.840,-0.698) circle (0.75pt);
\fill[black!60] (-0.387,0.058) circle (0.75pt);
\fill[black!60] (0.853,0.116) circle (0.75pt);
\fill[black!60] (1.483,1.168) circle (0.75pt);
\fill[black!60] (1.414,0.203) circle (0.75pt);
\fill[black!60] (0.407,0.192) circle (0.75pt);
\fill[black!60] (0.763,-0.072) circle (0.75pt);
\fill[black!60] (-0.170,0.377) circle (0.75pt);
\fill[black!60] (-0.816,-0.007) circle (0.75pt);
\fill[black!60] (-1.423,0.236) circle (0.75pt);
\fill[black!60] (0.602,-0.301) circle (0.75pt);
\fill[black!60] (1.548,1.422) circle (0.75pt);
\fill[black!60] (0.489,-0.115) circle (0.75pt);
\fill[black!60] (-0.115,-0.230) circle (0.75pt);
\fill[black!60] (0.763,0.389) circle (0.75pt);
\fill[black!60] (0.287,-0.780) circle (0.75pt);
\fill[black!60] (-0.253,-0.840) circle (0.75pt);
\fill[black!60] (0.966,0.046) circle (0.75pt);
\fill[black!60] (-0.867,-0.118) circle (0.75pt);
\fill[black!60] (0.330,1.868) circle (0.75pt);
\fill[black!60] (-0.529,0.646) circle (0.75pt);
\fill[black!60] (1.195,0.958) circle (0.75pt);
\fill[black!60] (0.949,-0.665) circle (0.75pt);
\fill[black!60] (0.256,-0.401) circle (0.75pt);
\fill[black!60] (0.772,0.830) circle (0.75pt);
\fill[black!60] (-0.539,0.639) circle (0.75pt);
\fill[black!60] (-0.507,-0.014) circle (0.75pt);
\fill[black!60] (-0.406,-0.100) circle (0.75pt);
\fill[black!60] (0.291,-0.159) circle (0.75pt);
\fill[black!60] (0.427,1.121) circle (0.75pt);
\fill[black!60] (1.086,-0.059) circle (0.75pt);
\fill[black!60] (0.365,0.423) circle (0.75pt);
\fill[black!60] (-1.281,0.179) circle (0.75pt);
\fill[black!60] (0.952,0.797) circle (0.75pt);
\fill[black!60] (-0.467,-0.297) circle (0.75pt);
\fill[black!60] (-0.171,0.931) circle (0.75pt);
\fill[black!60] (-1.525,1.059) circle (0.75pt);
\fill[black!60] (0.846,-0.343) circle (0.75pt);
\fill[black!60] (-1.306,2.880) circle (0.75pt);
\fill[black!60] (0.994,-0.293) circle (0.75pt);
\fill[black!60] (-0.196,0.865) circle (0.75pt);
\fill[black!60] (-0.201,-0.763) circle (0.75pt);
\fill[black!60] (0.713,-0.327) circle (0.75pt);
\fill[black!60] (-0.146,-0.548) circle (0.75pt);
\fill[black!60] (0.947,-0.454) circle (0.75pt);
\fill[black!60] (0.767,-0.372) circle (0.75pt);
\fill[black!60] (0.567,0.700) circle (0.75pt);
\fill[black!60] (-1.434,1.594) circle (0.75pt);
\fill[black!60] (-0.112,-0.597) circle (0.75pt);
\fill[black!60] (1.697,0.975) circle (0.75pt);
\fill[black!60] (-0.791,0.957) circle (0.75pt);
\fill[black!60] (1.012,0.322) circle (0.75pt);
\fill[black!60] (-0.114,0.510) circle (0.75pt);
\fill[black!60] (0.338,1.371) circle (0.75pt);
\fill[black!60] (1.502,0.524) circle (0.75pt);
\fill[black!60] (0.091,-0.335) circle (0.75pt);
\fill[black!60] (0.785,0.120) circle (0.75pt);
\fill[black!60] (0.683,-0.394) circle (0.75pt);
\fill[black!60] (-0.926,-0.236) circle (0.75pt);
\fill[black!60] (-1.339,0.132) circle (0.75pt);
\fill[black!60] (-1.369,0.554) circle (0.75pt);
\fill[black!60] (0.189,0.391) circle (0.75pt);
\fill[black!60] (-0.533,-0.403) circle (0.75pt);
\fill[black!60] (-0.172,-0.848) circle (0.75pt);
\fill[black!60] (0.396,1.071) circle (0.75pt);
\fill[black!60] (-0.660,0.646) circle (0.75pt);
\fill[black!60] (1.443,-1.224) circle (0.75pt);
\fill[black!60] (0.868,1.094) circle (0.75pt);
\fill[black!60] (-1.117,0.399) circle (0.75pt);
\fill[black!60] (-1.897,-0.514) circle (0.75pt);
\fill[black!60] (-0.562,-0.026) circle (0.75pt);
\fill[black!60] (0.609,-0.973) circle (0.75pt);
\fill[black!60] (-0.270,-1.039) circle (0.75pt);
\fill[black!60] (-0.326,0.412) circle (0.75pt);
\fill[black!60] (0.282,-0.293) circle (0.75pt);
\fill[black!60] (0.527,-0.784) circle (0.75pt);
\fill[black!60] (0.568,1.591) circle (0.75pt);
\fill[black!60] (-0.886,-0.213) circle (0.75pt);
\fill[black!60] (-0.495,-0.083) circle (0.75pt);
\fill[black!60] (0.121,-0.450) circle (0.75pt);
\fill[black!60] (-0.889,0.106) circle (0.75pt);
\fill[black!60] (0.251,-1.073) circle (0.75pt);
\fill[black!60] (-0.553,0.514) circle (0.75pt);
\fill[black!60] (-0.427,1.252) circle (0.75pt);
\fill[black!60] (0.865,0.206) circle (0.75pt);
\fill[black!60] (-0.115,0.444) circle (0.75pt);
\fill[black!60] (0.325,0.955) circle (0.75pt);
\fill[black!60] (0.703,0.356) circle (0.75pt);
\fill[black!60] (0.326,0.678) circle (0.75pt);
\fill[black!60] (-0.380,0.352) circle (0.75pt);
\fill[black!60] (-0.936,-0.621) circle (0.75pt);
\fill[black!60] (-0.483,0.980) circle (0.75pt);
\fill[black!60] (0.071,1.030) circle (0.75pt);
\fill[black!60] (-1.314,0.686) circle (0.75pt);
\fill[black!60] (-0.658,0.020) circle (0.75pt);
\fill[black!60] (0.627,-0.220) circle (0.75pt);
\fill[black!60] (-0.878,0.408) circle (0.75pt);
\fill[black!60] (0.192,-0.299) circle (0.75pt);
\fill[black!60] (0.553,0.188) circle (0.75pt);
\fill[black!60] (1.123,-1.355) circle (0.75pt);
\fill[black!60] (-0.827,1.455) circle (0.75pt);
\fill[black!60] (-0.694,1.465) circle (0.75pt);
\fill[black!60] (0.098,-1.301) circle (0.75pt);
\fill[black!60] (-0.537,-0.968) circle (0.75pt);
\fill[black!60] (-0.844,0.754) circle (0.75pt);
\fill[black!60] (0.970,-1.114) circle (0.75pt);
\fill[black!60] (-1.267,-1.423) circle (0.75pt);
\fill[black!60] (0.367,-1.720) circle (0.75pt);
\fill[black!60] (-0.355,-0.266) circle (0.75pt);
\fill[black!60] (0.277,0.191) circle (0.75pt);
\fill[black!60] (0.142,0.713) circle (0.75pt);
\fill[black!60] (-0.048,0.481) circle (0.75pt);
\fill[black!60] (-0.520,0.014) circle (0.75pt);
\fill[black!60] (0.557,-0.413) circle (0.75pt);
\fill[black!60] (0.556,-0.482) circle (0.75pt);
\fill[black!60] (0.166,-0.545) circle (0.75pt);
\fill[black!60] (-0.574,-0.951) circle (0.75pt);
\fill[black!60] (-1.002,-1.084) circle (0.75pt);
\fill[black!60] (-0.621,0.673) circle (0.75pt);
\fill[black!60] (1.882,1.304) circle (0.75pt);
\fill[black!60] (0.499,-1.003) circle (0.75pt);
\fill[black!60] (-0.172,0.692) circle (0.75pt);
\fill[black!60] (0.665,0.539) circle (0.75pt);
\fill[black!60] (0.574,-0.275) circle (0.75pt);
\fill[black!60] (-0.821,-0.146) circle (0.75pt);
\fill[black!60] (-0.768,-0.631) circle (0.75pt);
\fill[black!60] (0.663,-0.017) circle (0.75pt);
\fill[black!60] (0.317,-0.031) circle (0.75pt);
\fill[black!60] (-1.285,-0.671) circle (0.75pt);
\fill[black!60] (-0.128,0.467) circle (0.75pt);
\fill[black!60] (-0.155,0.199) circle (0.75pt);
\fill[black!60] (-0.909,-0.074) circle (0.75pt);
\fill[black!60] (0.049,-1.057) circle (0.75pt);
\fill[black!60] (0.033,0.300) circle (0.75pt);
\fill[black!60] (0.985,-1.860) circle (0.75pt);
\fill[black!60] (-0.371,-0.208) circle (0.75pt);
\fill[black!60] (-0.474,-0.940) circle (0.75pt);
\fill[black!60] (-0.482,-1.888) circle (0.75pt);
\fill[black!60] (0.843,1.319) circle (0.75pt);
\fill[black!60] (-0.157,0.717) circle (0.75pt);
\fill[black!60] (1.284,-0.616) circle (0.75pt);
\fill[black!60] (0.481,0.172) circle (0.75pt);
\fill[black!60] (-0.716,0.388) circle (0.75pt);
\fill[black!60] (-0.207,-1.138) circle (0.75pt);
\fill[black!60] (1.454,0.386) circle (0.75pt);
\fill[black!60] (-0.399,-1.310) circle (0.75pt);
\fill[black!60] (0.435,-0.282) circle (0.75pt);
\fill[black!60] (-0.219,0.162) circle (0.75pt);
\fill[black!60] (0.834,-0.290) circle (0.75pt);
\fill[black!60] (1.841,-0.237) circle (0.75pt);
\fill[black!60] (-0.172,-0.709) circle (0.75pt);
\fill[black!60] (0.096,0.451) circle (0.75pt);
\fill[black!60] (0.689,-0.163) circle (0.75pt);
\fill[black!60] (-0.367,1.011) circle (0.75pt);
\fill[black!60] (-0.305,-0.872) circle (0.75pt);
\fill[black!60] (0.704,-0.790) circle (0.75pt);
\fill[black!60] (-0.675,-1.256) circle (0.75pt);
\fill[black!60] (0.929,-1.223) circle (0.75pt);
\fill[black!60] (-0.551,0.094) circle (0.75pt);
\fill[black!60] (-0.202,0.373) circle (0.75pt);
\fill[black!60] (-0.635,-0.091) circle (0.75pt);
\fill[black!60] (0.374,1.088) circle (0.75pt);
\fill[black!60] (0.032,0.427) circle (0.75pt);
\fill[black!60] (-0.723,1.088) circle (0.75pt);
\fill[black!60] (-0.631,0.300) circle (0.75pt);
\fill[black!60] (0.118,-0.112) circle (0.75pt);
\fill[black!60] (0.325,1.263) circle (0.75pt);
\draw[wongblue,thick,dashed] (3.474,0.999) -- (0.999,3.474);
\node[font=\scriptsize,right,text=wongblue] at (3.474,0.999) {$h_{\mathsf{Ball}}(a)$};
\draw[-{Latex[length=5pt]},thick,black!80] (0,0) -- (1.0,1.0) node[below right=-2pt,font=\scriptsize] {$a$};
\draw[black!70,thin] (0,0) -- (-2.8660,-1.3364) node[pos=0.55,above=1pt,sloped,font=\scriptsize,text=black] {$\sqrt{2\log N}$};
\node[font=\small\bfseries] at (0,-4.05) {counting errors: the ball};
\end{scope}
\begin{scope}[xshift=7.7cm]
\fill[wongorange!16] (-1.0,-3.0) rectangle (1.0,3.0);
\draw[-{Latex[length=4pt]},black!55] (-3.45,0) -- (3.55,0) node[below=1pt,font=\scriptsize,text=black!75,anchor=north east] {active};
\draw[-{Latex[length=4pt]},black!55] (0,-3.45) -- (0,3.75) node[left=1pt,font=\scriptsize,text=black!75,anchor=north east] {inactive};
\draw[wongblue,dashed] (0,0) circle (3.1623);
\draw[wongverm,thick] (-1.0,-3.0) rectangle (1.0,3.0);
\fill[black!70] (1.000,1.779) circle (0.75pt);
\fill[black!70] (1.000,0.964) circle (0.75pt);
\fill[black!70] (1.000,-0.622) circle (0.75pt);
\fill[black!70] (1.000,0.848) circle (0.75pt);
\fill[black!70] (1.000,1.873) circle (0.75pt);
\fill[black!70] (1.000,-0.673) circle (0.75pt);
\fill[black!70] (1.000,-0.576) circle (0.75pt);
\fill[black!70] (1.000,-0.036) circle (0.75pt);
\fill[black!70] (1.000,-1.436) circle (0.75pt);
\fill[black!70] (1.000,1.591) circle (0.75pt);
\fill[black!70] (1.000,-1.047) circle (0.75pt);
\fill[black!70] (1.000,-2.201) circle (0.75pt);
\fill[black!70] (1.000,0.248) circle (0.75pt);
\fill[black!70] (1.000,0.310) circle (0.75pt);
\fill[black!70] (1.000,-0.515) circle (0.75pt);
\fill[black!70] (1.000,0.000) circle (0.75pt);
\fill[black!70] (1.000,-0.028) circle (0.75pt);
\fill[black!70] (1.000,0.074) circle (0.75pt);
\fill[black!70] (1.000,-1.204) circle (0.75pt);
\fill[black!70] (1.000,-0.448) circle (0.75pt);
\fill[black!70] (1.000,-0.790) circle (0.75pt);
\fill[black!70] (1.000,-0.690) circle (0.75pt);
\fill[black!70] (1.000,0.224) circle (0.75pt);
\fill[black!70] (1.000,0.545) circle (0.75pt);
\fill[black!70] (1.000,-0.203) circle (0.75pt);
\fill[black!70] (1.000,2.142) circle (0.75pt);
\fill[black!70] (1.000,-1.665) circle (0.75pt);
\fill[black!70] (1.000,-0.755) circle (0.75pt);
\fill[black!70] (1.000,-0.032) circle (0.75pt);
\fill[black!70] (1.000,1.018) circle (0.75pt);
\fill[black!70] (1.000,0.072) circle (0.75pt);
\fill[black!70] (1.000,-0.582) circle (0.75pt);
\fill[black!70] (1.000,-0.351) circle (0.75pt);
\fill[black!70] (1.000,1.153) circle (0.75pt);
\fill[black!70] (1.000,0.766) circle (0.75pt);
\fill[black!70] (1.000,-1.616) circle (0.75pt);
\fill[black!70] (1.000,-0.145) circle (0.75pt);
\fill[black!70] (1.000,1.536) circle (0.75pt);
\fill[black!70] (1.000,-0.646) circle (0.75pt);
\fill[black!70] (1.000,0.238) circle (0.75pt);
\fill[black!70] (1.000,-0.267) circle (0.75pt);
\fill[black!70] (1.000,-1.752) circle (0.75pt);
\fill[black!70] (1.000,0.313) circle (0.75pt);
\fill[black!70] (1.000,-1.719) circle (0.75pt);
\fill[black!70] (1.000,0.395) circle (0.75pt);
\fill[black!70] (1.000,-0.073) circle (0.75pt);
\fill[black!70] (1.000,3.000) circle (0.75pt);
\fill[black!70] (1.000,0.924) circle (0.75pt);
\fill[black!70] (1.000,0.642) circle (0.75pt);
\fill[black!70] (1.000,-1.237) circle (0.75pt);
\fill[black!70] (1.000,0.932) circle (0.75pt);
\fill[black!70] (1.000,-0.570) circle (0.75pt);
\fill[black!70] (1.000,-1.355) circle (0.75pt);
\fill[black!70] (1.000,0.494) circle (0.75pt);
\fill[black!70] (1.000,0.598) circle (0.75pt);
\fill[black!70] (1.000,0.035) circle (0.75pt);
\fill[black!70] (1.000,1.471) circle (0.75pt);
\fill[black!70] (1.000,0.201) circle (0.75pt);
\fill[black!70] (1.000,1.531) circle (0.75pt);
\fill[black!70] (1.000,-0.487) circle (0.75pt);
\fill[black!70] (0.113,1.779) circle (0.75pt);
\fill[black!70] (0.113,0.964) circle (0.75pt);
\fill[black!70] (0.113,-0.622) circle (0.75pt);
\fill[black!70] (0.113,0.848) circle (0.75pt);
\fill[black!70] (0.113,1.873) circle (0.75pt);
\fill[black!70] (0.113,-0.673) circle (0.75pt);
\fill[black!70] (0.113,-0.576) circle (0.75pt);
\fill[black!70] (0.113,-0.036) circle (0.75pt);
\fill[black!70] (0.113,-1.436) circle (0.75pt);
\fill[black!70] (0.113,1.591) circle (0.75pt);
\fill[black!70] (0.113,-1.047) circle (0.75pt);
\fill[black!70] (0.113,-2.201) circle (0.75pt);
\fill[black!70] (0.113,0.248) circle (0.75pt);
\fill[black!70] (0.113,0.310) circle (0.75pt);
\fill[black!70] (0.113,-0.515) circle (0.75pt);
\fill[black!70] (0.113,0.000) circle (0.75pt);
\fill[black!70] (0.113,-0.028) circle (0.75pt);
\fill[black!70] (0.113,0.074) circle (0.75pt);
\fill[black!70] (0.113,-1.204) circle (0.75pt);
\fill[black!70] (0.113,-0.448) circle (0.75pt);
\fill[black!70] (0.113,-0.790) circle (0.75pt);
\fill[black!70] (0.113,-0.690) circle (0.75pt);
\fill[black!70] (0.113,0.224) circle (0.75pt);
\fill[black!70] (0.113,0.545) circle (0.75pt);
\fill[black!70] (0.113,-0.203) circle (0.75pt);
\fill[black!70] (0.113,2.142) circle (0.75pt);
\fill[black!70] (0.113,-1.665) circle (0.75pt);
\fill[black!70] (0.113,-0.755) circle (0.75pt);
\fill[black!70] (0.113,-0.032) circle (0.75pt);
\fill[black!70] (0.113,1.018) circle (0.75pt);
\fill[black!70] (0.113,0.072) circle (0.75pt);
\fill[black!70] (0.113,-0.582) circle (0.75pt);
\fill[black!70] (0.113,-0.351) circle (0.75pt);
\fill[black!70] (0.113,1.153) circle (0.75pt);
\fill[black!70] (0.113,0.766) circle (0.75pt);
\fill[black!70] (0.113,-1.616) circle (0.75pt);
\fill[black!70] (0.113,-0.145) circle (0.75pt);
\fill[black!70] (0.113,1.536) circle (0.75pt);
\fill[black!70] (0.113,-0.646) circle (0.75pt);
\fill[black!70] (0.113,0.238) circle (0.75pt);
\fill[black!70] (0.113,-0.267) circle (0.75pt);
\fill[black!70] (0.113,-1.752) circle (0.75pt);
\fill[black!70] (0.113,0.313) circle (0.75pt);
\fill[black!70] (0.113,-1.719) circle (0.75pt);
\fill[black!70] (0.113,0.395) circle (0.75pt);
\fill[black!70] (0.113,-0.073) circle (0.75pt);
\fill[black!70] (0.113,3.000) circle (0.75pt);
\fill[black!70] (0.113,0.924) circle (0.75pt);
\fill[black!70] (0.113,0.642) circle (0.75pt);
\fill[black!70] (0.113,-1.237) circle (0.75pt);
\fill[black!70] (0.113,0.932) circle (0.75pt);
\fill[black!70] (0.113,-0.570) circle (0.75pt);
\fill[black!70] (0.113,-1.355) circle (0.75pt);
\fill[black!70] (0.113,0.494) circle (0.75pt);
\fill[black!70] (0.113,0.598) circle (0.75pt);
\fill[black!70] (0.113,0.035) circle (0.75pt);
\fill[black!70] (0.113,1.471) circle (0.75pt);
\fill[black!70] (0.113,0.201) circle (0.75pt);
\fill[black!70] (0.113,1.531) circle (0.75pt);
\fill[black!70] (0.113,-0.487) circle (0.75pt);
\fill[black!70] (-0.066,1.779) circle (0.75pt);
\fill[black!70] (-0.066,0.964) circle (0.75pt);
\fill[black!70] (-0.066,-0.622) circle (0.75pt);
\fill[black!70] (-0.066,0.848) circle (0.75pt);
\fill[black!70] (-0.066,1.873) circle (0.75pt);
\fill[black!70] (-0.066,-0.673) circle (0.75pt);
\fill[black!70] (-0.066,-0.576) circle (0.75pt);
\fill[black!70] (-0.066,-0.036) circle (0.75pt);
\fill[black!70] (-0.066,-1.436) circle (0.75pt);
\fill[black!70] (-0.066,1.591) circle (0.75pt);
\fill[black!70] (-0.066,-1.047) circle (0.75pt);
\fill[black!70] (-0.066,-2.201) circle (0.75pt);
\fill[black!70] (-0.066,0.248) circle (0.75pt);
\fill[black!70] (-0.066,0.310) circle (0.75pt);
\fill[black!70] (-0.066,-0.515) circle (0.75pt);
\fill[black!70] (-0.066,0.000) circle (0.75pt);
\fill[black!70] (-0.066,-0.028) circle (0.75pt);
\fill[black!70] (-0.066,0.074) circle (0.75pt);
\fill[black!70] (-0.066,-1.204) circle (0.75pt);
\fill[black!70] (-0.066,-0.448) circle (0.75pt);
\fill[black!70] (-0.066,-0.790) circle (0.75pt);
\fill[black!70] (-0.066,-0.690) circle (0.75pt);
\fill[black!70] (-0.066,0.224) circle (0.75pt);
\fill[black!70] (-0.066,0.545) circle (0.75pt);
\fill[black!70] (-0.066,-0.203) circle (0.75pt);
\fill[black!70] (-0.066,2.142) circle (0.75pt);
\fill[black!70] (-0.066,-1.665) circle (0.75pt);
\fill[black!70] (-0.066,-0.755) circle (0.75pt);
\fill[black!70] (-0.066,-0.032) circle (0.75pt);
\fill[black!70] (-0.066,1.018) circle (0.75pt);
\fill[black!70] (-0.066,0.072) circle (0.75pt);
\fill[black!70] (-0.066,-0.582) circle (0.75pt);
\fill[black!70] (-0.066,-0.351) circle (0.75pt);
\fill[black!70] (-0.066,1.153) circle (0.75pt);
\fill[black!70] (-0.066,0.766) circle (0.75pt);
\fill[black!70] (-0.066,-1.616) circle (0.75pt);
\fill[black!70] (-0.066,-0.145) circle (0.75pt);
\fill[black!70] (-0.066,1.536) circle (0.75pt);
\fill[black!70] (-0.066,-0.646) circle (0.75pt);
\fill[black!70] (-0.066,0.238) circle (0.75pt);
\fill[black!70] (-0.066,-0.267) circle (0.75pt);
\fill[black!70] (-0.066,-1.752) circle (0.75pt);
\fill[black!70] (-0.066,0.313) circle (0.75pt);
\fill[black!70] (-0.066,-1.719) circle (0.75pt);
\fill[black!70] (-0.066,0.395) circle (0.75pt);
\fill[black!70] (-0.066,-0.073) circle (0.75pt);
\fill[black!70] (-0.066,3.000) circle (0.75pt);
\fill[black!70] (-0.066,0.924) circle (0.75pt);
\fill[black!70] (-0.066,0.642) circle (0.75pt);
\fill[black!70] (-0.066,-1.237) circle (0.75pt);
\fill[black!70] (-0.066,0.932) circle (0.75pt);
\fill[black!70] (-0.066,-0.570) circle (0.75pt);
\fill[black!70] (-0.066,-1.355) circle (0.75pt);
\fill[black!70] (-0.066,0.494) circle (0.75pt);
\fill[black!70] (-0.066,0.598) circle (0.75pt);
\fill[black!70] (-0.066,0.035) circle (0.75pt);
\fill[black!70] (-0.066,1.471) circle (0.75pt);
\fill[black!70] (-0.066,0.201) circle (0.75pt);
\fill[black!70] (-0.066,1.531) circle (0.75pt);
\fill[black!70] (-0.066,-0.487) circle (0.75pt);
\fill[black!70] (0.236,1.779) circle (0.75pt);
\fill[black!70] (0.236,0.964) circle (0.75pt);
\fill[black!70] (0.236,-0.622) circle (0.75pt);
\fill[black!70] (0.236,0.848) circle (0.75pt);
\fill[black!70] (0.236,1.873) circle (0.75pt);
\fill[black!70] (0.236,-0.673) circle (0.75pt);
\fill[black!70] (0.236,-0.576) circle (0.75pt);
\fill[black!70] (0.236,-0.036) circle (0.75pt);
\fill[black!70] (0.236,-1.436) circle (0.75pt);
\fill[black!70] (0.236,1.591) circle (0.75pt);
\fill[black!70] (0.236,-1.047) circle (0.75pt);
\fill[black!70] (0.236,-2.201) circle (0.75pt);
\fill[black!70] (0.236,0.248) circle (0.75pt);
\fill[black!70] (0.236,0.310) circle (0.75pt);
\fill[black!70] (0.236,-0.515) circle (0.75pt);
\fill[black!70] (0.236,0.000) circle (0.75pt);
\fill[black!70] (0.236,-0.028) circle (0.75pt);
\fill[black!70] (0.236,0.074) circle (0.75pt);
\fill[black!70] (0.236,-1.204) circle (0.75pt);
\fill[black!70] (0.236,-0.448) circle (0.75pt);
\fill[black!70] (0.236,-0.790) circle (0.75pt);
\fill[black!70] (0.236,-0.690) circle (0.75pt);
\fill[black!70] (0.236,0.224) circle (0.75pt);
\fill[black!70] (0.236,0.545) circle (0.75pt);
\fill[black!70] (0.236,-0.203) circle (0.75pt);
\fill[black!70] (0.236,2.142) circle (0.75pt);
\fill[black!70] (0.236,-1.665) circle (0.75pt);
\fill[black!70] (0.236,-0.755) circle (0.75pt);
\fill[black!70] (0.236,-0.032) circle (0.75pt);
\fill[black!70] (0.236,1.018) circle (0.75pt);
\fill[black!70] (0.236,0.072) circle (0.75pt);
\fill[black!70] (0.236,-0.582) circle (0.75pt);
\fill[black!70] (0.236,-0.351) circle (0.75pt);
\fill[black!70] (0.236,1.153) circle (0.75pt);
\fill[black!70] (0.236,0.766) circle (0.75pt);
\fill[black!70] (0.236,-1.616) circle (0.75pt);
\fill[black!70] (0.236,-0.145) circle (0.75pt);
\fill[black!70] (0.236,1.536) circle (0.75pt);
\fill[black!70] (0.236,-0.646) circle (0.75pt);
\fill[black!70] (0.236,0.238) circle (0.75pt);
\fill[black!70] (0.236,-0.267) circle (0.75pt);
\fill[black!70] (0.236,-1.752) circle (0.75pt);
\fill[black!70] (0.236,0.313) circle (0.75pt);
\fill[black!70] (0.236,-1.719) circle (0.75pt);
\fill[black!70] (0.236,0.395) circle (0.75pt);
\fill[black!70] (0.236,-0.073) circle (0.75pt);
\fill[black!70] (0.236,3.000) circle (0.75pt);
\fill[black!70] (0.236,0.924) circle (0.75pt);
\fill[black!70] (0.236,0.642) circle (0.75pt);
\fill[black!70] (0.236,-1.237) circle (0.75pt);
\fill[black!70] (0.236,0.932) circle (0.75pt);
\fill[black!70] (0.236,-0.570) circle (0.75pt);
\fill[black!70] (0.236,-1.355) circle (0.75pt);
\fill[black!70] (0.236,0.494) circle (0.75pt);
\fill[black!70] (0.236,0.598) circle (0.75pt);
\fill[black!70] (0.236,0.035) circle (0.75pt);
\fill[black!70] (0.236,1.471) circle (0.75pt);
\fill[black!70] (0.236,0.201) circle (0.75pt);
\fill[black!70] (0.236,1.531) circle (0.75pt);
\fill[black!70] (0.236,-0.487) circle (0.75pt);
\fill[black!70] (-0.237,1.779) circle (0.75pt);
\fill[black!70] (-0.237,0.964) circle (0.75pt);
\fill[black!70] (-0.237,-0.622) circle (0.75pt);
\fill[black!70] (-0.237,0.848) circle (0.75pt);
\fill[black!70] (-0.237,1.873) circle (0.75pt);
\fill[black!70] (-0.237,-0.673) circle (0.75pt);
\fill[black!70] (-0.237,-0.576) circle (0.75pt);
\fill[black!70] (-0.237,-0.036) circle (0.75pt);
\fill[black!70] (-0.237,-1.436) circle (0.75pt);
\fill[black!70] (-0.237,1.591) circle (0.75pt);
\fill[black!70] (-0.237,-1.047) circle (0.75pt);
\fill[black!70] (-0.237,-2.201) circle (0.75pt);
\fill[black!70] (-0.237,0.248) circle (0.75pt);
\fill[black!70] (-0.237,0.310) circle (0.75pt);
\fill[black!70] (-0.237,-0.515) circle (0.75pt);
\fill[black!70] (-0.237,0.000) circle (0.75pt);
\fill[black!70] (-0.237,-0.028) circle (0.75pt);
\fill[black!70] (-0.237,0.074) circle (0.75pt);
\fill[black!70] (-0.237,-1.204) circle (0.75pt);
\fill[black!70] (-0.237,-0.448) circle (0.75pt);
\fill[black!70] (-0.237,-0.790) circle (0.75pt);
\fill[black!70] (-0.237,-0.690) circle (0.75pt);
\fill[black!70] (-0.237,0.224) circle (0.75pt);
\fill[black!70] (-0.237,0.545) circle (0.75pt);
\fill[black!70] (-0.237,-0.203) circle (0.75pt);
\fill[black!70] (-0.237,2.142) circle (0.75pt);
\fill[black!70] (-0.237,-1.665) circle (0.75pt);
\fill[black!70] (-0.237,-0.755) circle (0.75pt);
\fill[black!70] (-0.237,-0.032) circle (0.75pt);
\fill[black!70] (-0.237,1.018) circle (0.75pt);
\fill[black!70] (-0.237,0.072) circle (0.75pt);
\fill[black!70] (-0.237,-0.582) circle (0.75pt);
\fill[black!70] (-0.237,-0.351) circle (0.75pt);
\fill[black!70] (-0.237,1.153) circle (0.75pt);
\fill[black!70] (-0.237,0.766) circle (0.75pt);
\fill[black!70] (-0.237,-1.616) circle (0.75pt);
\fill[black!70] (-0.237,-0.145) circle (0.75pt);
\fill[black!70] (-0.237,1.536) circle (0.75pt);
\fill[black!70] (-0.237,-0.646) circle (0.75pt);
\fill[black!70] (-0.237,0.238) circle (0.75pt);
\fill[black!70] (-0.237,-0.267) circle (0.75pt);
\fill[black!70] (-0.237,-1.752) circle (0.75pt);
\fill[black!70] (-0.237,0.313) circle (0.75pt);
\fill[black!70] (-0.237,-1.719) circle (0.75pt);
\fill[black!70] (-0.237,0.395) circle (0.75pt);
\fill[black!70] (-0.237,-0.073) circle (0.75pt);
\fill[black!70] (-0.237,3.000) circle (0.75pt);
\fill[black!70] (-0.237,0.924) circle (0.75pt);
\fill[black!70] (-0.237,0.642) circle (0.75pt);
\fill[black!70] (-0.237,-1.237) circle (0.75pt);
\fill[black!70] (-0.237,0.932) circle (0.75pt);
\fill[black!70] (-0.237,-0.570) circle (0.75pt);
\fill[black!70] (-0.237,-1.355) circle (0.75pt);
\fill[black!70] (-0.237,0.494) circle (0.75pt);
\fill[black!70] (-0.237,0.598) circle (0.75pt);
\fill[black!70] (-0.237,0.035) circle (0.75pt);
\fill[black!70] (-0.237,1.471) circle (0.75pt);
\fill[black!70] (-0.237,0.201) circle (0.75pt);
\fill[black!70] (-0.237,1.531) circle (0.75pt);
\fill[black!70] (-0.237,-0.487) circle (0.75pt);
\draw[wongverm,thick] (3.237,0.763) -- (0.763,3.237);
\node[font=\scriptsize,below left=-1pt,text=wongverm] at (3.237,0.763) {$h_{\mathsf{Box}}(a)$};
\draw[wongblue,thick,dashed] (3.474,0.999) -- (0.999,3.474);
\node[font=\scriptsize,above] at (3.474,0.999) {};
\fill[wongverm] (1.0,3.0) circle (1.6pt);
\draw[-{Latex[length=5pt]},thick,black!80] (0,0) -- (1.0,1.0) node[below right=-2pt,font=\scriptsize] {$a$};
\node[font=\scriptsize,below] at (1.0,-0.05) {$\lambda_1$};
\draw[black!55] (-1.12,3.0) -- (-1.0,3.0); \node[font=\scriptsize,left] at (-1.1,3.0) {$\lambda_0$};
\node[font=\small\bfseries] at (0,-4.05) {forbidding errors: the box};
\end{scope}
\end{tikzpicture}

%% file: setting.tex
\setcounter{topnumber}{3}\section{Model and criteria}\label{sec:setting}
Let $E=[e_1|\cdots|e_m]\in\R^{d\times m}$ have independent columns $e_i\sim N(0,I_d/d)$. Independently choose $S$ uniformly among the $k$-subsets of $[m]$. Write $x=\one_S$, $n=m-k$ and
\begin{equation}\label{eq:residual}
u=Ex=\sum_{i\in S}e_i,\qquad R=E^\top u.
\end{equation}
The feature read is $R_i=\ip{e_i}{u}$. Throughout, $k\to\infty$ and $m/k\to\infty$; other restrictions are stated where needed.

\paragraph{Gates and classes}
A gate is an ordered tuple $I=(I_1,\dots,I_r)$ of distinct features and a nonconstant threshold function $g:\{0,1\}^r\to\{0,1\}$. Its target is $g(x_I)$. A family $\cF$ has common arity $r$ and function $g$. Finitely many types can be treated separately. The class of a gate is its input pattern $z=x_I$, and $\cF_z(S)$ contains the gates of that class. With falling factorials $(a)_j$ and $\abs z=\sum_lz_l$,
\begin{equation}\label{eq:counts}
N_z=\Exp\abs{\cF_z(S)}=\abs{\cF}\,\frac{(k)_{\abs z}(n)_{r-\abs z}}{(m)_r}.
\end{equation}
The universal family $\cU_r(g)$ contains all $(m)_r$ tuples. Its class sizes are deterministic: $N_z=(k)_{\abs z}(n)_{r-\abs z}$. For symmetric gates, unordered subsets change these counts only by constant factors and leave first-order thresholds unchanged.

\paragraph{Readouts and margins}
Gate $\gamma$ outputs $\one\{\ip{w_\gamma}{u}>\theta_\gamma\}$. A gate-local rule chooses $(w_\gamma,\theta_\gamma)$ from its own columns $E_I$ and independent randomness, before drawing the input. Rules may differ between gates. They can use their own column lengths and angles, but cannot use the support or other columns. A canonical rule uses $w_\gamma=\sum_la_le_{I_l}$ and a common bias $\theta$. The pair $(a,\theta)$ represents $g$ when
\begin{equation}\label{eq:margin}
s_z(a,\theta)=(2g(z)-1)\bigl(\ip az-\theta\bigr)>0\quad\text{for all }z.
\end{equation}
Write $\Rep(g)$ for these representations and $\varsigma_z=(-1)^{g(z)}$ for the error direction. Pairwise conjunction with $a=(1,1)$ has clean scores $0,1,2$ and margins $\theta,\theta-1,2-\theta$.

\paragraph{Error criteria}
Let $X$ count wrong outputs on a draw of $(E,S)$. We study
\begin{equation}\label{eq:criteria}
\Exp X\to0\qquad\text{and}\qquad\Prob(X=0)\to1.
\end{equation}
The first implies the second by Markov's inequality. The converse can fail: one rare draw can produce many errors. A sharp threshold $D(k,m)$ has a multiplicative $1\pm\varepsilon$ transition for every fixed $\varepsilon\in(0,1)$. Below it, failure means $\Exp X\to\infty$ for the count and $\Prob(X=0)\to0$ for reliability.

\paragraph{Levels and prices}
Set
\begin{equation}\label{eq:levels}
\lambda_1=\sqrt{2\log k},\quad\lambda_0=\sqrt{2\log n},\quad\beta=\lambda_0/\lambda_1.
\end{equation}
For class $z$, let $\lambda_z=(\lambda_{z_1},\dots,\lambda_{z_r})$. Eventually $\beta\ge1$, and for the universal family $2\log N_z=\norm{\lambda_z}^2(1+o(1))$. Finite-size calculations use $\lhat_1=\Phibar^{-1}(1/k)$ and $\lhat_0=\Phibar^{-1}(1/n)$; $\lhat_p=\lambda_p(1+o(1))$.
\begin{definition}[Ball, box and prices]\label{def:prices}
For a class $z$ let $\rho_z=\sqrt{2\log N_z}$ and
\[
\begin{aligned}\Ballz z&=\{v\in\R^r:\norm v\le\rho_z\},\\\Boxz z&=\prod_{l=1}^r[-\lambda_{z_l},\lambda_{z_l}],\end{aligned}
\]
with support functions $h_{\Ballz z}(a)=\rho_z\norm a$ and $h_{\Boxz z}(a)=\sum_l\abs{a_l}\lambda_{z_l}$. For $(a,\theta)\in\Rep(g)$ define the \emph{ball price} and the \emph{box price}
\begin{equation}\label{eq:prices}
\begin{aligned}\Dball(a,\theta)&=k\max_z\frac{h_{\Ballz z}(a)^2}{s_z(a,\theta)^2},\\\Dbox(a,\theta)&=k\max_z\frac{h_{\Boxz z}(a)^2}{s_z(a,\theta)^2},\end{aligned}
\end{equation}
and the optimal values $\Vball=\inf_{\Rep(g)}\Dball/k$ and $\Vbox=\inf_{\Rep(g)}\Dbox/k$.
\end{definition}
Both prices are invariant under positive rescaling of $(a,\theta)$. The box corners have norm $\norm{\lambda_z}$, so the box lies inside the ball to first order for the universal family.

%% file: exact.tex
\section{The exact read law}\label{sec:exact}
All canonical gates use the same read vector $R$. Conditioning on $u$ gives an exact law, the noiseless case of the lemmas for sparse superposition codes in \citet[Lemma~1]{BarronJoseph11} and \citet[Lemma~3]{JosephBarron14}. By symmetry fix $S=[k]$, and write $A_i=R_i$ for $i\le k$ and $B_j=R_{k+j}$ for $j\le n$.
\begin{theorem}[Exact conditional law]\label{thm:exact}
Let $H\sim\chi^2_d/d$, $Z_1,\dots,Z_k\sim N(0,1)$ and $W_1,\dots,W_n\sim N(0,1)$ be mutually independent. Write $\bar Z=k^{-1}\sum_iZ_i$ and $\sigma=\sqrt{kH/d}$. Then, jointly in $i\le k$ and $j\le n$,
\begin{equation}\label{eq:exact-law}
(A_i,B_j)\overset{d}{=}\bigl(H+\sigma(Z_i-\bar Z),\ \sigma W_j\bigr).
\end{equation}
The identity holds for all $d,k,n\ge1$.
\end{theorem}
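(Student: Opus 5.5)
We will condition on $u$ and use the rotation invariance of the Gaussian dictionary. By symmetry, take $S=[k]$, so $u=\sum_{i\le k}e_i$. We then separate each active column into its mean and its deviation. Write $\bar e=k^{-1}\sum_{i\le k}e_i=u/k$ and $f_i=e_i-\bar e$. The family $(e_1,\dots,e_k)$ is jointly Gaussian, and $\Cov(f_i,\bar e)=0$, so $(f_1,\dots,f_k)$ is independent of $\bar e$, hence of $u$. Moreover, $(f_i)$ has the law of $(g_i-\bar g)$ for fresh i.i.d.\ $g_i\sim N(0,I_d/d)$. This gives
\[
A_i=\ip{e_i}{u}=\frac{\norm u^2}{k}+\ip{f_i}{u},\qquad B_j=\ip{e_{k+j}}{u},
\]
where the inactive columns $e_{k+1},\dots,e_m$ are independent of $u$ and of $(f_i)$.

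Next we identify the magnitude. Since $u\sim N(0,kI_d/d)$, we have $\norm u^2\overset{d}{=}(k/d)\chi^2_d$. Defining $H=\norm u^2/k$ therefore gives $H\sim\chi^2_d/d$, with $\norm u^2/k=H$ and $\norm u/\sqrt d=\sqrt{kH/d}=\sigma$.

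For the fluctuations, we condition on $u$ and set $\hat u=u/\norm u$. For a fixed unit vector $\hat u$, the scalars $\sqrt d\,\ip{g_i}{\hat u}=:Z_i$ are i.i.d.\ $N(0,1)$. Consequently $\ip{f_i}{u}=\norm u\,d^{-1/2}(Z_i-\bar Z)=\sigma(Z_i-\bar Z)$. Likewise, $W_j:=\sqrt d\,\ip{e_{k+j}}{\hat u}$ are i.i.d.\ $N(0,1)$, giving $B_j=\sigma W_j$. All these projections are taken against the same direction $\hat u$, and the vectors $g_1,\dots,g_k,e_{k+1},\dots,e_m$ are mutually independent. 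Hence $(Z_i)$ and $(W_j)$ are jointly independent standard normals conditionally on $u$.

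The step needing care is the independence of $(Z,W)$ from $H$. The conditional law of $(Z,W)$ given $u$ is the same for every value of $u$, by rotation invariance of $N(0,I_d/d)$. So $(Z,W)$ is independent of $u$, hence of $H$, and the joint identity \eqref{eq:exact-law} follows. No asymptotics are used anywhere, so the identity holds for all $d,k,n\ge1$. For $k=1$ it holds trivially, since then $Z_1-\bar Z=0$ and $A_1=H$.
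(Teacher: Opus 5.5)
Your proof is correct and follows essentially the paper's route: Gaussian conditioning on $u$ splits each active column as $u/k+\eta_i$ with $(\eta_i)$ independent of $u$ and of the inactive columns, so the conditional law of the reads given $u$ depends only on $\norm u$, and $H=\norm u^2/k\sim\chi^2_d/d$. The only difference is cosmetic: the paper matches the conditional covariance $(\norm u^2/d)(I_k-\one\one^\top/k)$ directly, whereas you realize $\eta_i$ as $g_i-\bar g$ and project onto $u/\norm u$. That substitution is legitimate because $(A,B)$ is a function of $u$, $(f_i)$ and the inactive columns, whose joint law is a product.
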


\begin{proof}
Condition on $u=\sum_{i\le k}e_i$. Gaussian conditioning gives $e_i=u/k+\eta_i$ for $i\le k$, where $(\eta_i)_{i\le k}$ is independent of $u$ and jointly Gaussian with $\Cov(\eta_i,\eta_l)=d^{-1}(\one\{i=l\}-1/k)I_d$. Hence $A_i=\norm u^2/k+\ip{\eta_i}u$. Given $u$, the vector $(\ip{\eta_i}u)_{i\le k}$ is centered Gaussian with covariance $(\norm u^2/d)(I_k-\one\one^\top/k)$, which is the covariance of $(\norm u/\sqrt d)(Z_i-\bar Z)_{i\le k}$. The inactive columns are independent of $(u,\eta)$, so given $u$ the reads $B_j$ are independent $N(0,\norm u^2/d)$ variables, independent of the active ones. The conditional law depends on $u$ only through $\norm u$, and $H=\norm u^2/k\sim\chi^2_d/d$; since $\norm u/\sqrt d=\sigma$, this is \eqref{eq:exact-law}.
\end{proof}
The active sample is centered at its own mean and shifted by $H$; the inactive sample is independent. Both share the scale $\sigma$. Without conditioning on $H$, the reads are not jointly Gaussian: $\Exp A_i=1$, $\Var A_i=(k+1)/d$ and $\Cov(A_i,A_l)=1/d$ for $i\ne l$. We use the conditional representation throughout.

Put $\xi_i=Z_i-\bar Z$ on $S$ and $\xi_{k+j}=W_j$ off $S$. A gate of class $z$ has score $H\ip az+\sigma\ip a{\xi_I}$. Let $M_z=\max_{I\in\cF_z(S)}\ip{\varsigma_za}{\xi_I}$, with $\max\emptyset=-\infty$. For a fixed representation, the whole network is correct almost surely exactly when every class satisfies $\sigma M_z<s_z^H$, where $s_z^H=(2g(z)-1)(H\ip az-\theta)$ (\cref{prop:cloud}). Counting errors instead uses the following marginal law.
\begin{proposition}[Exact one-gate law]\label{prop:onegate}
Let $a\neq0$. The score of a canonical gate of class $z$ has the law of
\begin{equation}\label{eq:onegate}
Y_z=\ip azJ+\sqrt{\bigl(k\norm a^2-\ip az^2\bigr)J/d}\;\zeta,
\end{equation}
where $J\sim\chi^2_d/d$ and $\zeta\sim N(0,1)$ are independent. Consequently $\Exp X=\sum_zN_zp_z$ with $p_z=\Prob\bigl(\varsigma_z(Y_z-\theta)\ge0\bigr)$.
\end{proposition}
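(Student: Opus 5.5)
Fix a canonical gate of class $z$ with columns $I=(I_1,\dots,I_r)$ and weights $a\neq0$; its score is $\ip{w}{u}=\sum_l a_lR_{I_l}$. By exchangeability we may take $S=[k]$ and assume the first $\abs z$ coordinates of $I$ are active and the rest inactive. \cref{thm:exact} then lets us write the score as $H\ip az+\sigma\ip a{\xi_I}$. I would not start from that formula, because the coupling between $\bar Z$ and the remaining $Z_i$ makes the variance bookkeeping awkward. Instead I would condition directly on $u$, using the decomposition from the proof of \cref{thm:exact}: $e_i=u/k+\eta_i$ for active $i$, and $e_j$ independent of $u$ for inactive $j$. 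This gives
\[
\ip wu=\ip az\,\frac{\norm u^2}{k}+\Bigl\langle \sum_{l:z_l=1}a_l\eta_{I_l}+\sum_{l:z_l=0}a_le_{I_l},\,u\Bigr\rangle .
\]

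Given $u$, the second term is centered Gaussian, so the main step is computing its conditional variance. The active block contributes $(\norm u^2/d)\,a_A^\top(I-\one\one^\top/k)a_A=(\norm u^2/d)\bigl(\norm{a_A}^2-\ip az^2/k\bigr)$, where $a_A=(a_l)_{z_l=1}$ and $\ip az=\sum_{z_l=1}a_l$. The inactive block contributes $(\norm u^2/d)\norm{a_{z=0}}^2$, and the two blocks are conditionally independent. The total is $(\norm u^2/d)\bigl(\norm a^2-\ip az^2/k\bigr)$. Now set $J=H=\norm u^2/k\sim\chi^2_d/d$, so that $\norm u^2/d=kJ/d$. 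The variance becomes $(k\norm a^2-\ip az^2)J/d$ and the mean becomes $\ip azJ$.

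Since the conditional law depends on $u$ only through $J$, we can realize the score as $\ip azJ+\sqrt{\cdot}\,\zeta$ with $\zeta\sim N(0,1)$ independent of $J$. This is \eqref{eq:onegate}. The quantity under the square root is nonnegative by Cauchy--Schwarz, because $\ip az^2\le\abs z\norm{a}^2\le k\norm a^2$ once $\abs z\le r\le k$. The main obstacle is only this variance bookkeeping, in particular the $-\ip az^2/k$ correction coming from the centering $Z_i-\bar Z$ among active reads. It can be cross-checked against the moments stated after \cref{thm:exact}: for $a=e_1$ and $z=1$ we need $\Var A_i=(k+1)/d$, and the formula gives $\Exp[J](k-1)/d+\Var(J)=(k-1)/d+2/d$, which matches.

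The count identity then follows by linearity. Gate $\gamma$ of class $z$ errs exactly when its output differs from $g(z)$. If $g(z)=1$, that means $Y\le\theta$; if $g(z)=0$, it means $Y>\theta$. The two cases are unified as $\varsigma_z(Y-\theta)\ge0$, up to the event $Y=\theta$. That event has probability zero because $Y_z$ has a continuous law: $J$ has a density, and the Gaussian part is nondegenerate unless $a$ is supported on $z$ with $k\norm a^2=\ip az^2$, which cannot happen for $k>r$. Hence
\[
\Exp X=\sum_{\gamma\in\cF}\Prob(\gamma\text{ errs})=\sum_z\Exp\abs{\cF_z(S)}\,p_z=\sum_zN_zp_z .
\]
Here $p_z$ does not depend on which tuple lies in the class, by exchangeability of the columns and independence of $S$ from $E$. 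The step $\Exp\abs{\cF_z(S)}=N_z$ is \eqref{eq:counts}.
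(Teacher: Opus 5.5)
Your proof is correct, but you condition on the opposite vector from the paper. The paper's proof does not use the decomposition from \cref{thm:exact}. It notes that, given the class, the coordinates of $(w,u)$ are independent centered Gaussian pairs with variances $\norm a^2/d$ and $k/d$ and covariance $\ip az/d$. It then regresses $u$ on the probe, $u=(\ip az/\norm a^2)w+v$, and takes $J=\norm w^2/\norm a^2$, the probe's own normalized squared length. You regress $w$ on $u$ through $e_i=u/k+\eta_i$ and take $J=\norm u^2/k=H$. Both give the same law because the formula $\ip azJ+\sqrt{(k\norm a^2-\ip az^2)J/d}\,\zeta$ is symmetric in the two vectors' variances, so $J$ can be taken from either normalized norm.

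\emph{What the paper's route buys.} The variance comes out in one line, with no centering correction to track. Its identification of $J$ with the probe's norm is also what \cref{sm:rem:dual} relies on later, where the Gram fluctuations of a canonical probe are absorbed by $J$.

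\emph{What your route buys.} It shows that the one-gate law is exactly the marginal of the joint representation in \cref{thm:exact}, with the scale $H$ shared by all gates. Your variance computation is precisely $\Var(\ip a{\xi_I}\mid H)$ read off from \eqref{eq:exact-law}. So the detour you took to avoid $\bar Z$ is unnecessary: the $-\ip az^2/k$ term is exactly the $\bar Z$ contribution.

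Two points you handle more explicitly than the paper, which leaves them implicit:
\begin{itemize}
\item The tie event $\{Y_z=\theta\}$. It matters for classes with $g(z)=0$, where an error means $Y>\theta$, while $p_z$ is defined with $\ge$.
\item The weighting by $\Prob(z_\gamma(S)=z)$, whose sum over gates is $N_z$.
\end{itemize}
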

The proof is in \cref{sm:reduction}. For pairwise conjunction, $a=(1,1)$ gives $Y_j=jJ+\sqrt{(2k-j^2)J/d}\,\zeta$, $j\in\{0,1,2\}$. Thus each class error probability is a one-dimensional chi-square integral of a normal tail.

%% file: sections_ball.tex
\section{Counting errors: the ball}\label{sec:ball}
Linearity of expectation removes dependence between gates. At the exponential scale, a class with margin $y\sigd\norm a$ contributes $N_z\exp(-y^2/2)$ errors. The class size sets the Gaussian tail level.
\begin{theorem}[Ball law]\label{thm:ball}
Let $\cF$ be a gate family of fixed arity $r$ with $N_z\to\infty$ for every $z$, and let $(a,\theta)\in\Rep(g)$ be any sequence of representations for the canonical rule. For each fixed $\varepsilon\in(0,1)$,
\[
\begin{aligned}d\ge(1+\varepsilon)\Dball(a,\theta)&\ \Longrightarrow\ \Exp X\to0,\\d\le(1-\varepsilon)\Dball(a,\theta)&\ \Longrightarrow\ \Exp X\to\infty .\end{aligned}
\]
\end{theorem}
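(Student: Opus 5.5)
We would prove both directions from \cref{prop:onegate}, which gives $\Exp X=\sum_zN_zp_z$ exactly, with finitely many classes ($2^r$, $r$ fixed). By scale invariance of $\Dball$ and of the gate outputs, normalize $\norm a=1$. Write $\sigd=\sqrt{k/d}$ and $\tau_z=\sqrt{1-\ip az^2/k}$. Since $\ip az^2\le r\norm a^2$, we have $\tau_z=1-O(1/k)\to1$ uniformly in $a$. In \eqref{eq:onegate}, the error event $\varsigma_z(Y_z-\theta)\ge0$ reads
\[
\varsigma_z\zeta\,\sigd\tau_z\sqrt J\ \ge\ s_z+\varsigma_z\ip az(1-J)\cdot(2g(z)-1)\varsigma_z^{-1}\!\!,
\]
i.e. the margin is perturbed by $\ip az(J-1)$ up to sign. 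Given $J$, we therefore have $p_z=\Exp\,\Phibar\bigl(s_z^J/(\sigd\tau_z\sqrt J)\bigr)$ with $\abs{s_z^J-s_z}\le\sqrt r\,\abs{J-1}$. The target estimate is
\[
N_zp_z=\exp\!\Bigl(\tfrac{\rho_z^2}{2}-\tfrac{d\,s_z^2}{2k}(1+o(1))\Bigr),
\]
uniformly enough to compare with $d$ against $\Dball=k\max_z\rho_z^2/s_z^2$.

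\emph{Upper bound ($d\ge(1+\varepsilon)\Dball$).} Fix a small $\eta>0$ and split on the event $G=\{\abs{J-1}\le\delta_z\}$, where $\delta_z=\eta s_z/\sqrt r$. On $G$ we have $s_z^J\ge(1-\eta)s_z$ and $\sqrt J\le1+\delta_z$, so the Gaussian tail bound gives a contribution at most $\exp\bigl(-(1-O(\eta))\,d s_z^2/(2k)\bigr)\le\exp\bigl(-(1-O(\eta))(1+\varepsilon)\rho_z^2/2\bigr)$. Off $G$, the chi-square deviation bound gives $\Prob(G^c)\le2\exp(-c\,d\min(\delta_z,\delta_z^2))$. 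Using $d\ge k\rho_z^2/s_z^2$, the exponent is at least of order $k\eta^2\rho_z^2/r$ when $\delta_z\le1$, and at least $k\eta\rho_z^2/(\sqrt r\,s_z)\cdot s_z$ otherwise. Either way it is $\gg\rho_z^2$ because $k\to\infty$. This is the point where we use that the noise scale carries the extra factor $k$ while the $J$-fluctuation does not. Choosing $\eta$ small relative to $\varepsilon$, each term $N_zp_z\le\exp(-c'\varepsilon\rho_z^2)+N_z e^{-\omega(\rho_z^2)}\to0$, since $\rho_z\to\infty$. Summing over finitely many $z$ gives $\Exp X\to0$.

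\emph{Lower bound ($d\le(1-\varepsilon)\Dball$).} Pick a class $z^*$ attaining the max in $\Dball$, and keep only the term $N_{z^*}p_{z^*}$. Restrict to $G$ (with the same $\delta$), which has probability at least $1/2$ by the same deviation bound. On $G$ the standardized threshold satisfies $t\le(1+O(\eta))s_{z^*}/\sigd$. If $t$ stays bounded along a subsequence, then $p_{z^*}$ is bounded below and $N_{z^*}\to\infty$ finishes the argument. Otherwise the Mills-ratio lower bound $\Phibar(t)\ge e^{-t^2/2(1+o(1))}$ gives $N_{z^*}p_{z^*}\ge\tfrac12\exp\bigl(\rho^2/2-(1+O(\eta))(1-\varepsilon)\rho^2/2\bigr)\to\infty$. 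The polynomial prefactor $1/t$ is negligible because $t^2\lesssim\rho^2$ while the exponent gain is of order $\varepsilon\rho^2$.

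The main obstacle is \emph{uniformity over arbitrary sequences $(a,\theta)$}. Margins $s_z$ may be tiny compared with $\abs{\ip az}$, and $d$ may be astronomically larger than $k$. We therefore expect the delicate step to be the handling of $J$: the perturbation threshold must be set proportional to $s_z$ (not to a constant), and the chi-square tail must be checked to beat $N_z=e^{\rho_z^2/2}$ in both the sub-Gaussian and sub-exponential regimes. The key identity making this work is $d\,\delta_z^2\gtrsim k\eta^2\rho_z^2/r$, which uses only the definition of $\Dball$ and $k\to\infty$.
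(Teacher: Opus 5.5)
Your upper bound is sound and is essentially the paper's argument. You split on $\abs{J-1}\le\delta_z$ with $\delta_z$ proportional to $s_z$, and the chi-square tail beats $N_z$ because $d\delta_z^2\ge k\eta^2\rho_z^2/r$. The paper uses the vanishing scale $s/(2\sqrt r\norm a k^{1/4})$ where you use a fixed small $\eta$; both work. Two small points:
\begin{itemize}
\item The sign in front of $\ip az(J-1)$ in your displayed error event is wrong; the correct threshold is $s_z-\varsigma_z\ip az(J-1)$. This is harmless, because you only use $\abs{s_z^J-s_z}\le\sqrt r\abs{J-1}$.
\item Your passage from $\sqrt J\le1+\delta_z$ to a factor $1+O(\eta)$ silently uses $s_z\le2\sqrt r\norm a$. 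That bound also makes your ``otherwise'' regime $\delta_z>1$ vacuous; as written, that case's exponent is garbled.
\end{itemize}

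The gap is in the lower bound. You restrict to $G=\{\abs{J-1}\le\eta s_{z^*}/\sqrt r\}$ and assert $\Prob(G)\ge1/2$ ``by the same deviation bound.'' In the upper bound that bound was small because $d\ge k\rho^2/s^2$. Here you only have $d\le(1-\varepsilon)k\rho^2/s^2$. The relevant quantity is $d\delta^2=\eta^2ds_{z^*}^2/r=\eta^2kx_0^2/r$, with $x_0=s_{z^*}\sqrt{d/k}$.

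This is large when $kx_0^2\to\infty$, so your Mills case $x_0\to\infty$ survives, though for this reason rather than the one you state. But nothing prevents $ds_{z^*}^2$ from staying bounded. Bounded $d$ is allowed, since $\Dball\ge k\rho^2/(4r)\to\infty$, and so are margins of order $d^{-1/2}$. In that regime $\Prob(G)$ can tend to zero; for $d=1$, $\Prob(\abs{J-1}\le\delta)=O(\delta)$. Then ``$t$ bounded on $G$'' does not give $p_{z^*}$ bounded below.

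The missing idea is that in this regime the $J$-perturbation should be compared with the noise scale $\sigd$, not with the margin:
\begin{itemize}
\item When $d\to\infty$, use Chebyshev's event $\{\abs{J-1}\le2\sqrt{2/d}\}$, which has probability at least $3/4$. On it the standardized threshold is at most $\sqrt2\,x_0\sqrt{k/(k-r)}+4\sqrt{r/(k-r)}$, which is bounded.
\item When $d$ is bounded, use $\{J\in[1/2,2]\}$, whose probability is bounded below. On it the threshold is at most $(s_{z^*}+\sqrt r)\sqrt{2d/(k-r)}\to0$.
\end{itemize}
Also, the dichotomy should be on the deterministic $x_0$, not on the random $t$. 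This case split is exactly what the paper's proof does, and it is where the uniformity over arbitrary sequences $(a,\theta)$ that you flagged actually bites.
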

\noindent\emph{Proof sketch.} Concentrate $J$ in \eqref{eq:onegate} in a window proportional to the margin. This keeps the bound uniform when margins shrink. Gaussian upper tails control every class above the threshold; Mills' lower bound gives divergence in a binding class below it. See \cref{sm:ball}.

Optimizing the price also gives a converse over all local readouts.
\begin{theorem}[Weighted margin law]\label{thm:weighted}
Let $\cF$ be as in \cref{thm:ball}, and let
\begin{equation}\label{eq:program}
\Vball=\min_{(a,\theta)\in\Rep(g)}\ \max_z\frac{2\log N_z\,\norm a^2}{s_z(a,\theta)^2}.
\end{equation}
Fix $\varepsilon\in(0,1)$. If $d\ge(1+\varepsilon)k\Vball$, the canonical rule with a minimizing representation has $\Exp X\to0$. If $d\le(1-\varepsilon)k\Vball$, every gate-local readout rule satisfies
\[
\Exp X\ge\bigl(\min_zN_z\bigr)^{\varepsilon-o(1)}\to\infty .
\]
\end{theorem}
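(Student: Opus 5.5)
The plan is to prove the upper bound directly from \cref{thm:ball}, and to prove the converse with a per-gate Gaussian lower bound that does not use the canonical structure.

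\emph{Achievability.} By \cref{def:prices}, $\rho_z^2=2\log N_z$, so the objective in \eqref{eq:program} equals $\Dball(a,\theta)/k$. First I check that the minimum is attained. Both prices are invariant under positive rescaling, so I normalize $\norm a=1$. If $a$ stays bounded away from making some margin vanish, the objective is continuous on the compact set $\{\norm a=1\}\times\{\abs\theta\le C\}$. Near the boundary of $\Rep(g)$ some $s_z\to0$ and the objective tends to $+\infty$, so a minimizer exists. (Alternatively, take a near-minimizer with value $\Vball(1+o(1))$; this suffices.) Applying \cref{thm:ball} to this sequence of representations, with $\varepsilon$ replaced by $\varepsilon/2$, gives $\Exp X\to0$ whenever $d\ge(1+\varepsilon)k\Vball$.

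\emph{Converse setup.} Fix a gate $\gamma$ with tuple $I$ and a gate-local rule, so $(w,\theta)$ is a function of $E_I$ and independent randomness. Set $c_l=\ip w{e_{I_l}}$ and let $G=E_I^\top E_I$ be the Gram matrix. Condition on $E_I$, the rule's randomness, and the event $x_I=z$. Then $u=E_Iz+v$, where $v$ is the sum of the $k-\abs z$ remaining active columns. The vector $v$ is independent of everything so far and has law $N(0,(k-\abs z)I_d/d)$. The score is therefore exactly
\[
\ip cz+\ip wv,\qquad \ip wv\sim N\bigl(0,(k-\abs z)\norm w^2/d\bigr).
\]
Projecting $w$ onto the span of $E_I$ gives $\norm w^2\ge c^\top G^{-1}c\ge\norm c^2/\lambda_{\max}(G)$. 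Since $r$ is fixed and $d\to\infty$, the event $\lambda_{\max}(G)\le1+\delta$ has probability $1-o(1)$ for each gate, uniformly in the gate by exchangeability. On that event, the conditional error probability of class $z$ is at least $\Phibar\bigl(s_z(c,\theta)\sqrt{d(1+\delta)}/(\norm c\sqrt{k-\abs z})\bigr)$. If $s_z(c,\theta)\le0$ for some $z$, this is at least $1/2$.

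\emph{Converse conclusion.} Suppose $(c,\theta)\in\Rep(g)$. By the definition of $\Vball$ there is a class $z^*=z^*(\gamma,E_I)$ with $s_{z^*}/\norm c\le\rho_{z^*}/\sqrt{\Vball}$. With $d\le(1-\varepsilon)k\Vball$, the Gaussian argument above is at most $\rho_{z^*}\sqrt{(1-\varepsilon)(1+\delta)(1+o(1))}$. Mills' lower bound then gives a conditional error probability of at least $N_{z^*}^{-(1-\varepsilon)(1+\delta)-o(1)}$, where the polylogarithmic factor is absorbed because $\rho_z\to\infty$. Because $S$ is independent of $E$, we have $\Prob(x_I=z)=N_z/\abs\cF$. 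Summing over gates and keeping only the binding class of each gate gives
\[
\Exp X\ge\sum_\gamma\frac{1-o(1)}{\abs\cF}\,\min_zN_z^{\,1-(1-\varepsilon)(1+\delta)-o(1)}.
\]
Taking $\delta\downarrow0$ slowly yields $\Exp X\ge(\min_zN_z)^{\varepsilon-o(1)}$. The non-representing case contributes at least $N_z/(2\abs\cF)$ per gate, which is even larger.

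\emph{Main obstacle.} I expect the hardest step to be uniformity. The binding class and the effective weights $c$ vary with $E_I$, and the $o(1)$ terms must be uniform over gates and over possibly degenerate $c$. The Gram-matrix bound handles the norm loss. For the Mills step I would use the explicit bound $\Phibar(t)\ge t e^{-t^2/2}/((1+t^2)\sqrt{2\pi})$, which is monotone in $t$. That bound absorbs every class at once, since there are finitely many classes $z$.
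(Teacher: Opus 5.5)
Your achievability argument, and your converse along sequences with $d\to\infty$, are essentially the paper's proof. You condition on $E_I$ and the rule's randomness and write the score as $\ip cz$ plus an independent $N(0,(k-\abs z)\norm w^2/d)$ term. You then control $\norm c/\norm w$ on a Gram event, use the definition of $\Vball$ to find a class with small normalized margin, apply Mills, and sum one-gate bounds without using independence between gates. The paper normalizes $\norm w=1$ and bounds $\norm{E_I^\top w}^2\le1+\eta_d$, which is the same estimate as your $\norm w^2\ge\norm c^2/\lambda_{\max}(G)$.

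The gap is the phrase ``since $r$ is fixed and $d\to\infty$''. The converse hypothesis $d\le(1-\varepsilon)k\Vball$ only bounds $d$ from above, so the statement also covers sequences along which $d$ stays bounded, even $d=1$. In that regime $G=E_I^\top E_I$ does not concentrate: it is singular when $d<r$, and $\lambda_{\max}(G)$ is of order one but not near $1$. So $\{\lambda_{\max}(G)\le1+\delta\}$ does not have probability $1-o(1)$. Any fixed bound $\lambda_{\max}(G)\le C$ multiplies your Mills exponent by $C$, which destroys the $N_z^{\varepsilon-o(1)}$ conclusion along this route. The paper handles this regime by a separate argument.

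The repair fits your framework. Pass to subsequences along which either $d\to\infty$ or $d$ stays bounded. In the bounded case, work on $\{\lambda_{\max}(G)\le2r\}$; this has probability at least $1/2$ by Markov's inequality, since $\Exp\,\mathrm{tr}\,G=r$. Do not use the program there at all:
\begin{itemize}
\item If $(c,\theta)\in\Rep(g)$, the crude bound $s_z(c,\theta)\le2\sqrt r\norm c$ of \eqref{sm:eq:marginbound} makes the argument of $\Phibar$ at most $2r\sqrt{2d/(k-r)}\to0$ for every class. Hence every class errs with conditional probability $\tfrac12-o(1)$.
\item If $(c,\theta)\notin\Rep(g)$ and $w\ne0$, a class with $s_z\le0$ errs with probability at least $\tfrac12$.
\end{itemize}
Either way $\Exp X\ge(\tfrac14-o(1))\min_zN_z$, which is more than required.

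Two smaller points:
\begin{itemize}
\item When $w=0$ the noise is degenerate, and a class with $s_z=0$ and $g(z)=0$ need not err. Treat this case separately: the output is constant, so since $g$ is nonconstant, some class errs with probability one.
\item The Mills lower bound $te^{-t^2/2}/((1+t^2)\sqrt{2\pi})$ is not monotone on $(0,\infty)$. What you need is that $\Phibar$ is decreasing. So bound $\Phibar$ from below at the deterministic value $T=\rho_{z^*}\sqrt{(1-\varepsilon)(1+\delta)(1+o(1))}\to\infty$, and apply Mills at $T$.
\end{itemize}
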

\noindent\emph{Proof sketch.} Normalize a local probe to unit length. Conditional on its own columns and class $z$, its interference is $N(0,(k-\abs z)/d)$. On a high-probability Gram-matrix event, its clean weights have norm at most $1+o(1)$. The program forces a small margin in some class; a Gaussian lower tail then bounds that gate's error. Summing these marginal bounds needs no independence between errors. Bounded $d$ uses a separate two-class argument (\cref{sm:ball}).

For computation, maximize $\kappa$ over $(a,\theta,\kappa)$ with $\norm a\le1$ and
\begin{equation}\label{eq:socp}
(2g(z)-1)\bigl(\ip az-\theta\bigr)\ge\kappa\sqrt{2\log N_z}\quad\text{for all }z.
\end{equation}
Then $\Vball=\kappa_*^{-2}$. This second-order cone program has $r+2$ variables and $2^r$ linear constraints. It maximizes margin relative to class size: a less numerous class can tolerate a smaller margin. Every class remains a constraint.

%% file: sections_box.tex
\section{Reliability: the supremum and the box}\label{sec:box}
Reliability depends on the largest interference in each class. Let $G\sim N(0,I_m)$ be independent of $S$, and define
\begin{equation}\label{eq:gamma}
\Gamma_z(S)=\Exp\Bigl[\max_{I\in\cF_z(S)}\ip{\varsigma_za}{G_I}\ \Big|\ S\Bigr],
\end{equation}
with $\max\emptyset=-\infty$.
\begin{theorem}[Supremum law]\label{thm:sup}
Let $\cF$ be any gate family of fixed arity $r$, let $(a,\theta)\in\Rep(g)$ be fixed, and assume $d/k\to\infty$. Define the random threshold
\[
d^*(S)=k\max_z\frac{\Gamma_z(S)_+^2}{s_z(a,\theta)^2}.
\]
Then for each fixed $\varepsilon\in(0,1)$,
\[
\begin{aligned}\Prob\bigl(X=0,\ d\le(1-\varepsilon)d^*(S)\bigr)&\to0,\\\Prob\bigl(X\ne0,\ d\ge(1+\varepsilon)d^*(S)\bigr)&\to0 .\end{aligned}
\]
In particular, when $d^*$ is deterministic it is the sharp reliability threshold among sequences with $d/k\to\infty$; for the universal family \cref{thm:box} removes this restriction.
\end{theorem}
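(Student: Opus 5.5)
We start from the reduction stated after \cref{thm:exact}: condition on $S$ and on $H$. By \cref{prop:cloud}, the network is correct exactly when $\sigma M_z<s_z^H$ holds for every class $z$. Here $M_z=\max_{I\in\cF_z(S)}\ip{\varsigma_za}{\xi_I}$, the active coordinates of $\xi$ are $Z_i-\bar Z$, and the inactive ones are independent $W_j$.

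\textbf{Step 1: concentrate the scales.} Since $d/k\to\infty$, we have $H=1+O_P(d^{-1/2})$. So $s_z^H=s_z(a,\theta)+O_P(d^{-1/2})$, which is $s_z(1+o_P(1))$ because the representation is fixed and $\min_z s_z>0$. Also $\sigma=\sqrt{k/d}\,(1+o_P(1))$. The reliability event is therefore, up to factors $1+o_P(1)$, the event $\sqrt{k/d}\,M_z<s_z$ for all $z$. Equivalently, $d>k M_z^2/s_z^2$ for every class with $M_z>0$.

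\textbf{Step 2: replace $\xi$ by $G$.} The centering $\bar Z$ shifts each coordinate by at most $\abs{\bar Z}=O_P(k^{-1/2})$. Because $r$ and $a$ are fixed, this moves $\ip{\varsigma_za}{\xi_I}$ by $o_P(1)$ uniformly in $I$. We couple $\xi$ with $G$ by taking $G_i=Z_i$ on $S$ and $G_{k+j}=W_j$ off $S$. Then $M_z$ differs from $\max_{I}\ip{\varsigma_za}{G_I}$ by $o_P(1)$ uniformly.

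\textbf{Step 3: concentrate the supremum.} Conditionally on $S$, the map $G\mapsto\max_{I\in\cF_z(S)}\ip{\varsigma_za}{G_I}$ is $\norm a$-Lipschitz. By Borell--TIS it stays within $O_P(1)$ of $\Gamma_z(S)$. Two cases then remain.
\begin{itemize}
\item If $\Gamma_z(S)\to\infty$, we get $M_z=\Gamma_z(S)(1+o_P(1))$, a multiplicative approximation, and Step 1 gives exactly the $1\pm\varepsilon$ threshold $k\Gamma_z^2/s_z^2$.
\item If $\Gamma_z(S)$ stays bounded, then $M_z=O_P(1)$ and $\sqrt{k/d}\,M_z\to0<s_z$. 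The class is then asymptotically harmless. This is consistent with the threshold, because $k\Gamma_{z,+}^2/s_z^2=O(k)=o(d)$.
\end{itemize}

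Both directions now follow. We work on the event $d\ge(1+\varepsilon)d^*(S)$ for the upper bound, and on $d\le(1-\varepsilon)d^*(S)$ for the lower bound, where the binding class has $\Gamma_z$ large. Bounding $\Prob$ of each intersection event by the conditional probability given $S$ and averaging over $S$ gives the two limits. The finitely many classes are handled with a union bound, since $2^r$ is fixed.

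\textbf{Main obstacle.} The delicate point is uniformity over the random $S$ and over intermediate regimes where $\Gamma_z(S)$ is moderate. There the additive $O_P(1)$ Borell--TIS error is not negligible relative to $\Gamma_z$, while $d^*$ is still comparable to $d$. On the lower-bound side, however, $d\le(1-\varepsilon)d^*\le(1-\varepsilon)k\max_z\Gamma_z^2/s_z^2$ together with $d/k\to\infty$ forces $\Gamma_z\to\infty$ for the binding class, so the moderate regime is automatically excluded. On the upper-bound side, a bounded $\Gamma_z$ is harmless, as noted in the second case of Step 3.

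To make this rigorous, I would argue along subsequences on which $\Gamma_z(S)$ either diverges or stays bounded. Alternatively, I would use the explicit Gaussian tail bound $\Prob(\abs{M-\Gamma}>t)\le2e^{-t^2/(2\norm a^2)}$ with $t=\delta\max(\Gamma_z,\sqrt{d/k})$. The second option handles both regimes at once.
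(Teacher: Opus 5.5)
Your proposal is correct and follows essentially the paper's proof. Both use the exact representation, the exact class-wide shift $M_z=T_z-\varsigma_z\bar Z\ip az$, Borell--TIS concentration of $T_z$ about $\Gamma_z(S)$ given $S$, concentration of $H$, and the fact that on $\{d\le(1-\varepsilon)d^*(S)\}$ the binding class has $\Gamma_z(S)\ge s_z/(\sigd\sqrt{1-\varepsilon})\to\infty$. Your single deviation level $t=\delta\max(\Gamma_z,\sqrt{d/k})$ compactly replaces the paper's two choices ($L_d\to\infty$ with $\sigd L_d\to0$ on the success side, $y=\tfrac\varepsilon4\Gamma_z(S)$ on the failure side), and it works for the same reason: the tail bound is uniform in $S$.
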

\noindent\emph{Proof sketch.} The class maximum is $\norm a$-Lipschitz in $G$, so Gaussian concentration controls its deviation from $\Gamma_z(S)$ \citep{Borell75,TIS76}. The terms from $H-1$ and $\bar Z$ vanish relative to fixed margins when $d/k\to\infty$. The two threshold inequalities then give success or failure (\cref{sm:reduction}).
\begin{corollary}[Counting is the union bound]\label{cor:union}
In the setting of \cref{thm:sup}, $\Gamma_z(S)\le\norm a\sqrt{2\log\abs{\cF_z(S)}}$, so
\[
d^*(S)\le k\max_z\frac{2\log\abs{\cF_z(S)}\,\norm a^2}{s_z(a,\theta)^2},
\]
and for the universal family the right side is $\Dball(a,\theta)$. If the tuples in $\cF_z(S)$ are pairwise disjoint and $\abs{\cF_z(S)}\to\infty$, then $\Gamma_z(S)=\norm a\sqrt{2\log\abs{\cF_z(S)}}\,(1+o(1))$: without shared features, the reliability price of the class equals its ball price.
\end{corollary}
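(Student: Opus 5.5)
The corollary has three parts: an upper bound on each $\Gamma_z(S)$, its identification with $\Dball$ for the universal family, and a matching lower bound when the tuples in a class are disjoint.

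\emph{Upper bound.} Conditional on $S$, the variables $Y_I=\ip{\varsigma_za}{G_I}$, $I\in\cF_z(S)$, are centered Gaussians. Each has variance $\norm{a}^2$, because the coordinates of a tuple are distinct. Write $N=\abs{\cF_z(S)}$ and $\sigma_a=\norm a$; the case $N=0$ is trivial since $\Gamma_z=-\infty$. I would use the moment-generating-function bound. By Jensen's inequality,
\[
\exp\bigl(t\,\Exp\max_I Y_I\bigr)\le\sum_I\Exp e^{tY_I}=N e^{t^2\sigma_a^2/2},
\]
so $\Gamma_z(S)\le \log N/t+t\sigma_a^2/2$. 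Taking $t=\sqrt{2\log N}/\sigma_a$ gives $\Gamma_z(S)\le\sigma_a\sqrt{2\log N}$. This holds for any correlation structure and needs no independence.

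Squaring the positive part of this bound and inserting it into the definition of $d^*(S)$ in \cref{thm:sup} gives the displayed inequality. For the universal family, $\abs{\cF_z(S)}=(k)_{\abs z}(n)_{r-\abs z}=N_z$ is deterministic. So $2\log\abs{\cF_z(S)}=\rho_z^2$ and $\rho_z^2\norm a^2=h_{\Ballz z}(a)^2$, and the right side is exactly $\Dball(a,\theta)$ from \cref{def:prices}.

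\emph{Disjoint tuples.} If the tuples in $\cF_z(S)$ are pairwise disjoint, the $Y_I$ are functions of disjoint blocks of the independent vector $G$. They are therefore i.i.d.\ $N(0,\sigma_a^2)$. The classical extreme-value fact, or a direct second-moment computation, gives $\Exp\max_{I\le N}Y_I=\sigma_a\sqrt{2\log N}(1+o(1))$ as $N\to\infty$.

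For a self-contained lower bound, set $u_N=\sigma_a\sqrt{2(1-\delta)\log N}$. Mills' lower bound gives $\Prob(Y_I>u_N)\ge N^{-(1-\delta)+o(1)}$. Hence
\[
\Prob\bigl(\max_I Y_I\le u_N\bigr)\le\exp\bigl(-N^{\delta-o(1)}\bigr)\to0.
\]
Next I would split
\[
\Exp\max_I Y_I\ge u_N\,\Prob(\max_I Y_I>u_N)+\Exp\bigl[\max_I Y_I\,\one\{\max_I Y_I\le u_N\}\bigr].
\]
The second term is bounded below by $-\Exp\abs{Y_1}$, which is $O(\sigma_a)$ and hence negligible against $\sigma_a\sqrt{\log N}$. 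This gives $\Gamma_z\ge(1-\delta)^{1/2}(1-o(1))\,\sigma_a\sqrt{2\log N}$. Letting $\delta\downarrow0$ and combining with the upper bound proves the asymptotic equality. It follows that the class's contribution to $d^*(S)$ equals its ball price to first order.

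The only step needing care is the negative-part control in the lower bound. Since $\max_I Y_I\ge Y_1$, that term is bounded by $\Exp\abs{Y_1}=O(\sigma_a)$, so the obstacle is minor. Everything else is the standard maximal inequality.
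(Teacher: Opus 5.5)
Your proof is correct and follows essentially the paper's route. The upper bound is the standard Gaussian maximal inequality, which the paper cites from Boucheron--Lugosi--Massart; your MGF--Jensen argument is its proof. The universal-family identification uses the deterministic class sizes $N_z=(k)_{\abs z}(n)_{r-\abs z}$. The disjoint case reduces to the maximum of $\abs{\cF_z(S)}$ i.i.d.\ $N(0,\norm a^2)$ variables: the paper handles this with its Gaussian-extremes lemma (\cref{sm:lem:extremes}), while your truncation bound with $-\Exp\abs{Y_1}$ is a slightly more direct version of the same step.
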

For nonnegative weights, shared features raise covariances and can lower the expected supremum \citep{Sudakov71,Fernique75}. Disjoint gates attain the ball price. A family with private partner sets shows when sharing first reduces the price.
\begin{proposition}[Partial sharing]\label{prop:star}
Consider pairwise gates with $a=(1,1)$, and suppose that on the support $S$ the gates of the one-active class are the pairs $(i,j)$ with $i\in S$ and $j\in C(i)$, where the partner sets $C(i)$, $i\in S$, are disjoint sets of $D=D(k)$ inactive features. Then, with $\lambda_D=\sqrt{2\log D}$,
\[
\Gamma_{10}(S)=(1+o(1))\begin{cases}2\sqrt{\log(kD)}&\text{if }D\le k,\\ \lambda_1+\lambda_D&\text{if }D\ge k.\end{cases}
\]
Equivalently, $\Gamma_{10}(S)=(1+o(1))\,h_{K_D}(1,1)$ for $K_D=\{v\in\R^2:v_1^2\le\lambda_1^2,\ v_1^2+v_2^2\le2\log(kD)\}$, the ball of the class cut by the slab of its shared coordinate.
\end{proposition}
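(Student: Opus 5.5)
Throughout, take $g=\ANDg$ and $a=(1,1)$. The one-active class has $g(z)=0$, so $\varsigma_{10}=+1$. By the hypothesis on $\cF_{10}(S)$ the class maximum factorizes:
\[
\max_{I\in\cF_{10}(S)}\ip{a}{G_I}=\max_{i\in S}\bigl(G_i+Y_i\bigr),\qquad Y_i=\max_{j\in C(i)}G_j .
\]
The partner sets are disjoint and inactive, so the pairs $(G_i,Y_i)$, $i\in S$, are i.i.d. Each $Y_i$ is a maximum of $D$ standard normals and is independent of $G_i$. Hence $\Gamma_{10}(S)$ is the expected maximum of $k$ i.i.d. copies of $G+Y$, with $Y\overset{d}{=}\max_{j\le D}W_j$. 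I would prove matching upper and lower bounds equal to $(1+o(1))\,h_{K_D}(1,1)$. First I would check by elementary calculus that $h_{K_D}(1,1)=\max\{v_1+v_2:v\in K_D\}$ equals $2\sqrt{\log(kD)}$ when the unconstrained maximizer $v_1=\sqrt{\log(kD)}$ satisfies $v_1\le\lambda_1$, which is exactly $D\le k$. Otherwise the maximizer is $v_1=\lambda_1$, $v_2=\sqrt{2\log(kD)-\lambda_1^2}=\lambda_D$, giving $\lambda_1+\lambda_D$.

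\emph{Upper bound.} Two crude bounds suffice. The union bound over the $kD$ pairs, as in \cref{cor:union}, gives $\Gamma_{10}\le\sqrt2\,\sqrt{2\log(kD)}=2\sqrt{\log(kD)}$. Separately, $\max_i(G_i+Y_i)\le\max_{i\in S}G_i+\max_{j}G_j$, where the second maximum runs over the $kD$ partner coordinates. This gives $\Gamma_{10}\le\lambda_1+\sqrt{2\log(kD)}$, which is not sharp enough. Instead I would bound the tail directly: with $\tau=\Phibar$,
\[
\Prob(G+Y>t)\le\int\phi(x)\min\bigl(1,D\,\Phibar(t-x)\bigr)\,dx .
\]
Laplace's method gives the exponent $-\min_x\bigl[x^2/2+((t-x)^2/2-\log D)_+\bigr]$, up to $o(t^2)$. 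At $t=(1+\delta)h_{K_D}(1,1)$ this exponent is at most $-(1+\delta')\log k$. This is equivalent to the support-function computation, because the sublevel set $\{x^2/2+((y)^2/2-\log D)_+\le\log k\}$ is $K_D$ up to the constraint $y\le\lambda_D$ being automatic. Then $k\,\Prob(G+Y>t)\to0$, and together with a Gaussian integrability tail bound this yields $\Gamma_{10}\le(1+o(1))h_{K_D}(1,1)$.

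\emph{Lower bound.} Take $(x^*,y^*)$ the maximizer of $v_1+v_2$ on $K_D$, shrunk by a factor $1-\delta$. Let $N$ count the indices $i\in S$ with $G_i>x^*$ and $Y_i>y^*$. Then $N\sim\Bin\bigl(k,\Phibar(x^*)\Prob(Y>y^*)\bigr)$. In the case $D\le k$ we have $y^{*2}/2=(1-\delta)^2\log(kD)/2\ge\log D$ roughly. So $\Prob(Y>y^*)\approx D\,\Phibar(y^*)$, and the mean is $k^{\,1-(1-\delta)^2+o(1)}\cdot$(factor) $\to\infty$. In the case $D\ge k$, $\Prob(Y>y^*)\to1$ and $k\Phibar(x^*)\to\infty$. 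In both cases $\Prob(N\ge1)\to1$. On this event the maximum is at least $(1-\delta)h_{K_D}(1,1)$. Off the event it is at least $G_1+Y_1$, whose negative part has bounded expectation. Dividing by $h_{K_D}(1,1)\to\infty$ and letting $\delta\downarrow0$ gives the lower bound.

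The main obstacle is the Laplace step in the upper bound. It must be uniform in how $D=D(k)$ scales, including the regime $D\approx k$ where the two branches meet and $D$ bounded or slowly growing. I would handle it by splitting the $x$-integral at $x=\lambda_1(1+\delta)$: the region $x>\lambda_1(1+\delta)$ is killed by the factor $k$, and on the compact remainder a grid of $O(1/\delta)$ points with Mills-ratio bounds suffices.
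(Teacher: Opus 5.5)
Your proposal is correct and is essentially the paper's proof. The shared steps are: the factorization of the class maximum into $k$ independent copies of $Z+M$; the exponent $u^2/2+((t-u)^2/2-\log D)_+$, controlled by a first-moment bound over a grid of $O(1/\delta)$ values of the active read (your observation that its $\log k$-sublevel set is exactly $K_D$ is a nice way to see the support function); the binomial count of indices with both reads large for the lower bound; and an integrability step to pass to expectations.

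Two points should be tightened. First, rely on the grid with the exact bound $\Phibar(v)\le e^{-v^2/2}$, with spacing a small multiple of $\delta c$ and the range $G\le0$ handled by $kD\,\Phibar(t)$, rather than on a Laplace error of size $o(t^2)$. Such an error would not beat the factor $k$ when $\log D\gg\log k$, unless you also show that the exponent gap is of order $t^2$. Second, for $D$ near $k$, use $\Prob(Y>y^*)\ge\tfrac12\min\{1,D\,\Phibar(y^*)\}$ instead of $\Prob(Y>y^*)\approx D\,\Phibar(y^*)$.
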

The maximum is $\max_{i\in S}(Z_i+M_i)$, where $M_i$ is the largest of $D$ independent partner reads. For $D\le k$, both coordinates can split the extreme-value budget evenly. For $D\ge k$, the active coordinate saturates at $\lambda_1$ and the partner contributes $\lambda_D$. The expressions agree at $D=k$; disjoint partner sets require $kD\le n$. The proof is in \cref{sm:star}.

\paragraph{The universal family}
Every tuple is available, so the largest class projection is a rearranged sum of extreme feature reads (\cref{lem:boxid}). Each position contributes its weight magnitude times the level of its active or inactive pool. The finitely many relevant order statistics approximate these levels uniformly over weights. This gives the box law.
\begin{theorem}[Box law]\label{thm:box}
Let $\cF=\cU_r(g)$ with $k\to\infty$ and $m/k\to\infty$.
\begin{enumerate}[label=\textup{(\roman*)},leftmargin=2em]
\item For every sequence $(a,\theta)\in\Rep(g)$ and fixed $\varepsilon\in(0,1)$: $\Prob(X=0)\to1$ if $d\ge(1+\varepsilon)\Dbox(a,\theta)$, and $\Prob(X=0)\to0$ if $d\le(1-\varepsilon)\Dbox(a,\theta)$.
\item If $d\le(1-\varepsilon)k\Vbox$, then with probability tending to one no common representation makes the canonical network correct: $\Prob\bigl(\exists(a,\theta)\in\Rep(g):X_{a,\theta}=0\bigr)\to0$, where $X_{a,\theta}$ is the error count of the rule $(a,\theta)$ on the same draw.
\item $\Gamma_z=h_{\Boxz z}(a)(1+o(1))$ uniformly in $a\ne0$.
\end{enumerate}
In particular $k\Vbox$ is the sharp reliability threshold over common representations, and it is attained by any sequence of near-minimizers of $\Dbox$.
\end{theorem}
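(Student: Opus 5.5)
I would start from the exact conditional law (\cref{thm:exact}) and prove (iii) first, because (i) and (ii) both reduce to it.

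\emph{Step 1: the box identity (iii).} For the universal family, the class maximum $M_z=\max_{I\in\cU_z(S)}\ip{\varsigma_za}{\xi_I}$ is attained by a rearrangement. For each position $l$ with $z_l=1$, the tuple takes an active feature whose read is extreme in the direction $\mathrm{sign}(\varsigma_za_l)$. For each position with $z_l=0$, it takes an inactive feature in the same way. The only constraint is distinctness, so positions sharing a pool and a sign take successive order statistics of that pool. Hence
\[
M_z=\sum_l\abs{a_l}\,\xi^{(\cdot)}_{z_l},
\]
where $\xi^{(\cdot)}_{z_l}$ denotes one of the top $r$ order statistics (largest or smallest) of the $k$ centered active reads or of the $n$ inactive reads. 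I would record this as \cref{lem:boxid}.

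Since $r$ is fixed, the top $r$ order statistics of $k$ (resp.\ $n$) standard Gaussians are $\lambda_1(1+o_P(1))$ (resp.\ $\lambda_0(1+o_P(1))$). The centering by $\bar Z=O_P(k^{-1/2})$ is negligible. This gives $M_z=h_{\Boxz z}(a)(1+o_P(1))$, uniformly in $a\neq0$: the error is bounded by $\norm a_1$ times the $o(\lambda)$ deviations, and $h_{\Boxz z}(a)\ge\lambda_1\norm a_1$. Taking expectations with $G$ in place of $\xi$, and using the Lipschitz concentration from \cref{thm:sup}, gives $\Gamma_z=h_{\Boxz z}(a)(1+o(1))$.

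\emph{Step 2: part (i).} By \cref{prop:cloud}, correctness is equivalent to
\[
\sigma M_z<s_z^H=(2g(z)-1)(H\ip az-\theta)\quad\text{for all }z.
\]
Here $\sigma=\sqrt{kH/d}$ and $H=1+O_P(d^{-1/2})$. When $d/k\to\infty$, \cref{thm:sup} applies directly with $\Gamma_z\sim h_{\Boxz z}$.

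The remaining regime is $d=O(k)$, where $d$ is near $\Dbox$. Since $\Dbox\ge k\lambda_1^2\max_z\norm a_1^2/s_z^2$ and scale-invariance lets me normalize $\norm a_1=1$, margins are $O(1/\lambda_1)$ relative to $\norm a_1$ and $d\gtrsim k\log k$. So $d/k\to\infty$ anyway whenever $d\ge(1-\varepsilon)\Dbox$. For smaller $d$, monotonicity in $d$ gives failure; I would verify this via the explicit event that all box corners are violated.

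The fluctuation of $H$ is handled by noting that $H\ip az-\theta$ differs from $s_z$ by $\abs{H-1}\,\abs{\ip az}=O_P(d^{-1/2})\norm a_1$. This is $o(s_z)$ because $s_z\gtrsim\norm a_1\lambda_1\sqrt{k/d}$ and $\lambda_1\sqrt k\gg1$.

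\emph{Step 3: part (ii), the main obstacle.} Part (ii) needs a statement holding simultaneously over all $(a,\theta)$, even ones chosen after seeing the reads. My plan:
\begin{enumerate}[leftmargin=2em]
\item On one high-probability event, control the top $r$ order statistics of both pools to within a factor $1\pm\delta$, together with $\abs{H-1}\le\delta$.
\item By Step 1 this event, which does not depend on $a$, gives $M_z\ge(1-\delta)h_{\Boxz z}(a)-\delta\norm a_1$ for every $a$ at once.
\item Correctness for $(a,\theta)$ then forces $d>(1-O(\delta))k\,h_{\Boxz z}(a)^2/s_z(a,\theta)^2$ for every $z$, hence $d>(1-O(\delta))\Dbox(a,\theta)\ge(1-O(\delta))k\Vbox$.
\item Choose $\delta<\varepsilon/C$ to reach a contradiction.
\end{enumerate}
The delicate points are the uniformity of the $H$ perturbation over unbounded $\theta$, and representations with tiny margins. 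Scale-invariance handles both: normalize $\norm a_1=1$, and then $\abs\theta\le r$ for any representation. Attainability of $k\Vbox$ by near-minimizers then follows from (i).
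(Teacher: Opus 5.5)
Your skeleton matches the paper's proof. It uses the box identity and a single event controlling the $r$ largest and smallest values of both pools, together with $\bar Z$ and $H$, and on that event it derives deterministic implications that hold for all $(a,\theta)$ simultaneously. Your route to (iii) is a valid alternative to the paper's uniform-integrability argument: Borell--TIS concentration of $T_z$ at scale $\norm a$, combined with $h_{\Boxz z}(a)\ge\lambda_1\norm a_1\gg\norm a$.

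\textbf{Bounded $d$ (the genuine gap).} The failure half of (i) and all of (ii) must also hold along subsequences on which $d$ stays bounded. There $H\sim\chi^2_d/d$ does not concentrate, so your Step~3 event, which requires $\abs{H-1}\le\delta$, does not have probability tending to one. ``Monotonicity in $d$'' does not rescue this. $\Prob(X=0)$ averages over a law of $H$ that itself changes with $d$, and you give no coupling that would make it monotone. (For $d\to\infty$ with $d=O(k)$ no monotonicity is needed; your event argument already works there.)

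The missing idea is to eliminate $\theta$ along an edge of the cube.
\begin{itemize}
\item Take $z,z'$ differing only in position $l$, with $g(z)=1$ and $g(z')=0$. Then $s_z^H+s_{z'}^H=H\abs{a_l}\le H\norm a_1$.
\item Adding the two non-strict correctness conditions, success requires $\sigd\sqrt H\,(M_z+M_{z'})\le H\norm a_1$.
\item On the order-statistics event (with $\abs{\bar Z}\le\delta\lambda_1$), $M_z+M_{z'}\ge2(1-2\delta)\lambda_1\norm a_1$.
\item Hence success forces $H\ge4(1-2\delta)^2\lambda_1^2k/d$. This has vanishing probability when $d$ is bounded, and the bound holds for every representation at once.
\end{itemize}
The same edge gives $s_z+s_{z'}=\abs{a_l}\le\norm a_1$, hence $\Dbox\ge4k\lambda_1^2$. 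This is the correct justification of your $d\gtrsim k\log k$. Margins are not $O(1/\lambda_1)$ relative to $\norm a_1$; only some margin is at most $\norm a_1/2$.

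\textbf{Scale of the $H$ perturbation in Step~3.} Since $s_z^H-s_z=(2g(z)-1)(H-1)\ip az$ does not involve $\theta$, unbounded $\theta$ is not the issue. What must be small is $\abs{H-1}\,\norm a_1$ compared with $\sigd h_{\Boxz z}(a)$, whose only guaranteed size is $\sigd\lambda_1\norm a_1$.
\begin{itemize}
\item Under (ii), $\sigd\lambda_1$ can tend to zero. For pairwise conjunction with $\beta\to\infty$ and $d$ near $(1-\varepsilon)k(3\lambda_1+\lambda_0)^2$, one has $\sigd\lambda_1\asymp1/\beta$.
\item On $\{\abs{H-1}\le\delta\}$ the all-active class, with $h=\lambda_1\norm a_1$, can then satisfy its condition with an arbitrarily small margin. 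So your item~3 (``for every $z$'') does not follow on that event.
\item The fix is to use $\abs{H-1}\le\delta\sigd\lambda_1$. Chebyshev gives this with probability at least $1-2/(\delta^2k\lambda_1^2)$ for every $d$. It is exactly the comparison $d^{-1/2}\ll\sigd\lambda_1$ you already made in Step~2.
\end{itemize}

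\textbf{Use of \cref{thm:sup}.} That theorem is stated for a fixed representation, so it does not apply ``directly'' to the sequences in (i). With the corrected event, your Step~3 argument proves both directions of (i) for sequences, which is how the paper proceeds.
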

\noindent\emph{Proof sketch.} On one event of probability tending to one, all upper and lower order statistics needed by any gate are within a factor $1\pm\delta$ of their levels. The box identity then gives the support function uniformly over all representations. The common-scale and centering errors are smaller, so correctness reduces to the box-margin inequalities (\cref{sm:box}).

The representation in part (ii) may depend on the reads and is shared by all gates. Extending the reliability converse to all gate-local rules is an open problem.
\begin{corollary}[Counting overprices reliability]\label{cor:cs}
For the universal family, every representation and every class,
\[
h_{\Boxz z}(a)=\ip{\abs a}{\lambda_z}\le\norm a\norm{\lambda_z}=h_{\Ballz z}(a)\,(1+o(1)),
\]
with equality if and only if $\abs a$ is parallel to $\lambda_z$. Hence $\Dbox\le\Dball(1+o(1))$ and $\Vbox\le\Vball(1+o(1))$, and the ball price of a class exceeds its box price by the factor $\sec^2\angle(\abs a,\lambda_z)\,(1+o(1))$.
\end{corollary}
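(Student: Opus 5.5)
The plan is to argue at the level of support functions, class by class. I will take the definitions of \cref{def:prices} directly and apply Cauchy--Schwarz to the vectors $\abs a=(\abs{a_1},\dots,\abs{a_r})$ and $\lambda_z$. This gives $h_{\Boxz z}(a)=\sum_l\abs{a_l}\lambda_{z_l}=\ip{\abs a}{\lambda_z}\le\norm{\abs a}\norm{\lambda_z}=\norm a\norm{\lambda_z}$. Equality holds exactly when $\abs a$ and $\lambda_z$ are parallel; since all $\lambda_{z_l}>0$ eventually, parallel here means positively parallel. For the universal family, $N_z=(k)_{\abs z}(n)_{r-\abs z}$. Because $r$ is fixed and $k,n\to\infty$, we have $\log N_z=\abs z\log k+(r-\abs z)\log n+o(1)$, so $\rho_z^2=2\log N_z=\norm{\lambda_z}^2(1+o(1))$, as recorded after \eqref{eq:levels}. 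Hence $\norm a\norm{\lambda_z}=h_{\Ballz z}(a)(1+o(1))$, and the $o(1)$ is uniform in $a$ because it depends only on $z$. There are only $2^r$ classes, so the error is uniform over classes as well.

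Next I will identify the ratio precisely. Write $\phi=\angle(\abs a,\lambda_z)$. Then $h_{\Boxz z}(a)=\norm a\norm{\lambda_z}\cos\phi$ exactly, so
\[
\frac{h_{\Ballz z}(a)^2}{h_{\Boxz z}(a)^2}=\frac{\rho_z^2}{\norm{\lambda_z}^2\cos^2\phi}=\sec^2\phi\,(1+o(1)).
\]
Dividing numerator and denominator by the same $s_z(a,\theta)^2$ gives the per-class price comparison $k\,h_{\Boxz z}^2/s_z^2\le(1+o(1))\,k\,h_{\Ballz z}^2/s_z^2$. The only case that needs a remark is $a\ne0$, where $\cos\phi>0$ because $\abs a\ge0$ is nonzero and $\lambda_z$ has positive entries. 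For $a=0$ no representation exists: $s_z=-(2g(z)-1)\theta$ cannot be positive for every $z$ when $g$ is nonconstant. So the secant is finite, and the ratio does not depend on $\theta$.

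Taking the maximum over the finitely many $z$ in \eqref{eq:prices} preserves the inequality up to a uniform $(1+o(1))$, which gives $\Dbox(a,\theta)\le\Dball(a,\theta)(1+o(1))$ for every $(a,\theta)\in\Rep(g)$. Taking the infimum over $\Rep(g)$ then gives $\Vbox\le\Vball(1+o(1))$, and this step is legitimate precisely because the $o(1)$ is uniform in $(a,\theta)$. I expect the main obstacle to be only this uniformity bookkeeping: making sure the $1+o(1)$ relating $\rho_z$ to $\norm{\lambda_z}$ depends on $(k,n,r)$ alone and not on the representation. Beyond that the statement is a direct consequence of Cauchy--Schwarz, with the probabilistic meaning supplied by \cref{thm:weighted} and \cref{thm:box}.
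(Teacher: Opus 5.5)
Your proof is correct and follows the paper's route: Cauchy--Schwarz between $\abs a$ and $\lambda_z$, with equality exactly when they are positively parallel, combined with $2\log N_z=\norm{\lambda_z}^2(1+o(1))$ for the universal family. Your explicit checks that this $o(1)$ depends only on $(z,k,n)$, and therefore passes through the maximum over the $2^r$ classes and the infimum over $\Rep(g)$, and that $a=0$ lies outside $\Rep(g)$, supply bookkeeping the paper leaves implicit.
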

Mixed active and inactive classes create the gap: their side lengths need not align with the weights.

For pairwise conjunction, the one-active class gives a concrete calculation. Its weights are $(1,1)$ and its size is $kn$, so the ball support is $\sqrt{2(\lambda_1^2+\lambda_0^2)}\,(1+o(1))$, whereas the box support is $\lambda_1+\lambda_0$. Dividing both by the same margin gives the ratio $2(1+\beta^2)/(1+\beta)^2$ between the two class prices. This ratio equals one at $\beta=1$ and tends to two as $\beta\to\infty$. The largest inactive read is reused by every active partner. Counting each pair separately charges for $kn$ opportunities; reliability uses the two shared pools.

\paragraph{Optimizing the box}
Write
\begin{equation}\label{eq:vbox}
\begin{aligned}
\Vbox&=\lambda_1^2v_g(\beta),\\
v_g(\beta)&=\inf_{(a,\theta)\in\Rep(g)}\max_z\Bigl(\frac{\sum_l\abs{a_l}\beta^{1-z_l}}{s_z(a,\theta)}\Bigr)^2.
\end{aligned}
\end{equation}
For fixed $\kappa$, the constraints $s_z\ge\kappa\sum_l\abs{a_l}\beta^{1-z_l}$ form a convex polyhedral cone. Introduce $u_l\ge\pm a_l$, require $s_z\ge1$, and use linear feasibility with bisection on $\kappa$. This computes $v_g(\beta)^{-1/2}$ (\cref{prop:boxprogram}). For symmetric threshold gates, permutation averaging makes equal weights optimal in both programs.

%% file: sections_gates.tex
\section{Threshold gates}\label{sec:gates}
For $1\le t\le r$, let $\THR^r_t(z)=\one\{\abs z\ge t\}$. This includes conjunction $\ANDg_r$, disjunction $\ORg_r$, majority $\MAJg_{2t-1}$ and feature recovery $\THR^1_1$. Equal weights $a=\one$ are optimal, leaving only the bias. Set $\rho_j=(j\lambda_1^2+(r-j)\lambda_0^2)^{1/2}$ for $0\le j\le r$.
\begin{theorem}[Threshold gates]\label{thm:thr}
Let $\cF=\cU_r(\THR^r_t)$ with $k\to\infty$ and $m/k\to\infty$. Only the classes with $t-1$ and $t$ active inputs can bind; both bind at the optimized biases, and the class $t-1$ binds at the midpoint. The following are sharp thresholds.
\begin{enumerate}[label=\textup{(\alph*)},leftmargin=2em]
\item Forbidding errors, optimized: $k\Vbox=k\bigl((2t-1)\lambda_1+(2r-2t+1)\lambda_0\bigr)^2$. It is attained by $a=\one$ with the \emph{half-slack bias} $\theta=t-\tfrac12+\tfrac12\sigd(\lhat_0-\lhat_1)$, which leaves equal slack to the two binding classes (the levels $\lambda_p$ in place of $\lhat_p$ give the same threshold), and below it no common representation succeeds.
\item Forbidding errors, midpoint $a=\one$, $\theta=t-\tfrac12$: $4k\bigl((t-1)\lambda_1+(r-t+1)\lambda_0\bigr)^2$.
\item Counting errors, midpoint: $4rk\,\rho_{t-1}^2\,(1+o(1))=8rk\log N_{t-1}(1+o(1))$.
\item Counting errors, optimized: $k\Vball=rk(\rho_t+\rho_{t-1})^2(1+o(1))$. It is attained by $a=\one$ with $\theta=t-1+\rho_{t-1}/(\rho_{t-1}+\rho_t)$, and below it every gate-local rule has $\Exp X\to\infty$.
\end{enumerate}
\end{theorem}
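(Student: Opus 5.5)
The plan is to reduce everything to a two-class scalar optimization over the bias, and then read off sharpness from the earlier theorems. Part (a) uses \cref{thm:box}(i)--(ii), and part (b) uses \cref{thm:box}(i). Part (c) uses \cref{thm:ball}, together with $2\log N_z=\norm{\lambda_z}^2(1+o(1))$ for $\cU_r$. Part (d) uses \cref{thm:weighted}, which already contains the converse over all gate-local rules.

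\textbf{Step 1: equal weights are optimal.} For $\THR^r_t$ and the universal family, a coordinate permutation $\pi$ maps class $z$ to $\pi z$. This leaves $\abs z$, $g(z)$, $N_z$ and the multiset $\lambda_z$ unchanged. Hence, for fixed $\kappa$, the feasible set of \eqref{eq:socp} is a permutation-invariant convex set. So is the feasible set of the box constraints $s_z\ge\kappa\sum_l\abs{a_l}\lambda_{z_l}$; it is convex because $\abs{a_l}\le u_l$ linearizes it (\cref{prop:boxprogram}).

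Averaging a feasible $(a,\theta)$ over all $r!$ permutations gives a feasible point $(\bar a\one,\theta)$. A sign issue remains, since $\bar a$ could be nonpositive. I would rule this out using the margins: for $j=\abs z$ we need $\bar a j-\theta<0$ at $j=t-1$ and $\bar a j-\theta>0$ at $j=t$. Subtracting the first from the second gives $\bar a>0$. Rescaling then gives $a=\one$.

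With $a=\one$, class $j$ has margin $s_j=\theta-j$ for $j<t$ and $s_j=j-\theta$ for $j\ge t$, with $\theta\in(t-1,t)$. The box support is $h_j=j\lambda_1+(r-j)\lambda_0$. The ball support is $\sqrt r\,\rho_j(1+o(1))$.

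\textbf{Step 2: only classes $t-1$ and $t$ bind.} Since $\lambda_0\ge\lambda_1$ eventually, $h_j$ and $\rho_j$ are nonincreasing in $j$.

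For $j\ge t$, the support is nonincreasing while $s_j$ increases, so $j=t$ maximizes $h_j/s_j$.

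For $j\le t-1$, write $h_j=h_{t-1}+(t-1-j)(\lambda_0-\lambda_1)$ and $s_j=s_{t-1}+(t-1-j)$. By the mediant inequality, $h_j/s_j\le h_{t-1}/s_{t-1}$ as soon as $h_{t-1}/s_{t-1}\ge\lambda_0-\lambda_1$. This holds because $s_{t-1}<1$ and $h_{t-1}\ge\lambda_0$. For the ball the analogous condition is $\rho_{t-1}/s_{t-1}\ge\rho_{j}-\rho_{j+1}$; it follows the same way, since each increment $\rho_j-\rho_{j+1}\le\lambda_0\le\rho_{t-1}$.

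\textbf{Step 3: the four formulas.} Each price is now $k\max\bigl(h_{t-1}^2/(\theta-t+1)^2,\ h_t^2/(t-\theta)^2\bigr)$ in the box case, and the same with $r\rho_j^2$ replacing $h_j^2$ in the ball case.

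At the midpoint both margins equal $\tfrac12$, and the larger support $h_{t-1}$ (respectively $\rho_{t-1}$) binds. This gives (b) as $4kh_{t-1}^2$ and (c) as $4rk\rho_{t-1}^2$. The form $8rk\log N_{t-1}$ in (c) follows from $\rho_{t-1}^2=2\log N_{t-1}(1+o(1))$.

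Optimizing $\theta$ equalizes the two ratios. This gives $\theta-t+1=h_{t-1}/(h_{t-1}+h_t)$ and the value $k(h_{t-1}+h_t)^2=k((2t-1)\lambda_1+(2r-2t+1)\lambda_0)^2$ for (a). The same computation gives $rk(\rho_{t-1}+\rho_t)^2$ with the stated bias for (d).

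For the half-slack form in (a), I would rewrite the equalizing bias as $t-\tfrac12+\tfrac12(h_{t-1}-h_t)/(h_{t-1}+h_t)$. Here $h_{t-1}-h_t=\lambda_0-\lambda_1$, and $1/(h_{t-1}+h_t)$ is $\sigd$ at threshold, presumably $\sigd\approx\sqrt{k/d}$. I would then check that replacing $\lambda_p$ by $\lhat_p$ perturbs the margins only by a relative $o(1)$, so the first-order threshold is unchanged. I expect this bookkeeping, with a bias that depends on $d$, to be the main technical nuisance. \cref{thm:box}(i) allows sequences of representations, so it should absorb this.
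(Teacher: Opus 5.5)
Everything except the half-slack attainment claim in (a) follows the paper's proof: permutation averaging (\cref{prop:boxprogram}) to reach $a=\one$, the mediant bound keeping the classes below $t-1$ and above $t$ slack, equalization of the two binding ratios, and the converses from \cref{thm:box}(ii) and \cref{thm:weighted}. Your ball version of Step 2 bounds the increments by $\rho_j-\rho_{j+1}\le\lambda_0\le\rho_{t-1}$. The paper instead uses $\rho_j^2\le(1+i)\rho_{t-1}^2$, and yours works equally well at first order.

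The gap is that half-slack claim. You identify the half-slack bias with the equalizing bias $t-\tfrac12+\tfrac12(\lambda_0-\lambda_1)/(h_{t-1}+h_t)$, using ``$1/(h_{t-1}+h_t)$ is $\sigd$ at threshold''. You then plan to treat the difference as a relative $o(1)$ perturbation. But the claim concerns every $d\ge(1+\varepsilon)k\Vbox$. There $\sigd(h_{t-1}+h_t)\le(1+\varepsilon)^{-1/2}$ stays bounded away from $1$, and it tends to $0$ when $d/(k\Vbox)\to\infty$. In that regime $\theta_d$ approaches the midpoint, whose price $4kh_{t-1}^2$ can be nearly four times $k\Vbox$. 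So the gap between $\theta_d$ and the optimizer does not vanish. \cref{thm:box}(i) does accept $d$-dependent representations, but it absorbs nothing until you have bounded $\Dbox(\one,\theta_d)$ directly.

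The missing idea is the equal-slack identity. With $\hat h_j=j\lhat_1+(r-j)\lhat_0$, so that $\hat h_{t-1}-\hat h_t=\lhat_0-\lhat_1$, one has
\[
s_{t-1}-\sigd\hat h_{t-1}=s_t-\sigd\hat h_t=\tfrac12\bigl(1-\sigd(\hat h_{t-1}+\hat h_t)\bigr).
\]
For $d\ge(1+\varepsilon)k\Vbox$ this is at least $\tfrac12\bigl(1-(1+\varepsilon)^{-1/2}\bigr)-o(1)$. The two binding classes have $\sigd h_j\le1$ and $\hat h_j=h_j(1+o(1))$. Together these give $s_j\ge(1+\eta_\varepsilon)\sigd h_j$ for $j=t-1,t$. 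Your Step 2 extends this to all classes, so $\Dbox(\one,\theta_d)\le d/(1+\eta_\varepsilon)^2$ and \cref{thm:box}(i) applies. The same computation with $\lambda_p$ in place of $\lhat_p$ gives the parenthetical claim. The half-slack bias works because it splits whatever surplus exists equally between the two binding classes, not because it is near the optimal bias.
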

\noindent\emph{Proof sketch.} With equal weights, class $j$ has score $j$, box support $j\lambda_1+(r-j)\lambda_0$ and ball support $\sqrt r\rho_j$. Add the constraints for $j=t-1,t$ to optimize the bias. Classes farther from the boundary gain enough clean margin to stay slack. The converses follow from \cref{thm:box,thm:weighted}; details are in \cref{sm:gates}.

At first order, start with the two clean scores beside the decision boundary. They are $t-1$ and $t$, so their margins are $\theta-(t-1)$ and $t-\theta$ and sum to one. Moving the bias transfers margin from one class to the other. To minimize the largest support-to-margin ratio, assign the margins in proportion to the two supports. For the ball, these are $\sqrt r\rho_{t-1}$ and $\sqrt r\rho_t$; for the box, they are $(t-1)\lambda_1+(r-t+1)\lambda_0$ and $t\lambda_1+(r-t)\lambda_0$. Adding the two inequalities gives the dimension, and balancing them gives the bias.

For pairwise conjunction this gives \cref{tab:four}. The half-slack bias assigns more margin to negative classes because their inactive inputs have larger extremes.
\begin{corollary}[One point, four limits]\label{cor:markov}
If $\beta\to1$, all four thresholds of \cref{thm:thr} equal $4r^2k\lambda_1^2(1+o(1))=8r^2k\log k\,(1+o(1))$. If $\beta\to\infty$, in units of $2k\log m$ they tend to (a) $(2r-2t+1)^2$, (b) $4(r-t+1)^2$, (c) $4r(r-t+1)$ and (d) $r(\sqrt{r-t}+\sqrt{r-t+1})^2$. For conjunction these are $1,4,4r,r$, so counting overprices reliability by a factor tending to $r$ at both biases. For disjunction the midpoint prices (b) and (c) coincide for every $\beta$.
\end{corollary}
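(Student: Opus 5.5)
The plan is to substitute the two limiting regimes of $\beta$ directly into the four closed forms (a)–(d) of \cref{thm:thr}, then read off the special cases. Those closed forms are already sharp thresholds up to factors $1+o(1)$, so no new probability is needed. The only bookkeeping is to express everything through the common scale $\lambda_1$ when $\beta\to1$, and through $\lambda_0$ when $\beta\to\infty$. I will write $\lambda_0=\beta\lambda_1$ and $\rho_j^2=\lambda_0^2\bigl(j\beta^{-2}+r-j\bigr)=\lambda_1^2\bigl(j+(r-j)\beta^2\bigr)$, so that each threshold becomes an explicit continuous function of $\beta$ times $k\lambda_1^2$ or $k\lambda_0^2$.

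\emph{Regime $\beta\to1$.} Here $\lambda_0=\lambda_1(1+o(1))$ and $\rho_j^2=r\lambda_1^2(1+o(1))$ for every $j$. The four thresholds become:
\begin{itemize}
\item[(a)] $\bigl((2t-1)+(2r-2t+1)\bigr)^2k\lambda_1^2=4r^2k\lambda_1^2$;
\item[(b)] $4\bigl((t-1)+(r-t+1)\bigr)^2k\lambda_1^2=4r^2k\lambda_1^2$;
\item[(c)] $4rk\cdot r\lambda_1^2$;
\item[(d)] $rk\bigl(2\sqrt r\lambda_1\bigr)^2=4r^2k\lambda_1^2$.
\end{itemize}
All of these hold up to $1+o(1)$. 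Using $\lambda_1^2=2\log k$ gives $8r^2k\log k$.

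\emph{Regime $\beta\to\infty$.} Since $m/k\to\infty$ and $k\to\infty$, we have $n=m(1-k/m)$, so $\log n=\log m+o(1)$ and $k\lambda_0^2=2k\log m\,(1+o(1))$. I therefore normalize each threshold by $k\lambda_0^2$.
\begin{itemize}
\item[(a)] This becomes $\bigl((2t-1)\beta^{-1}+2r-2t+1\bigr)^2\to(2r-2t+1)^2$.
\item[(b)] This becomes $4\bigl((t-1)\beta^{-1}+r-t+1\bigr)^2\to4(r-t+1)^2$.
\item[(c)] Since $\rho_{t-1}^2/\lambda_0^2=(t-1)\beta^{-2}+r-t+1$, the normalized threshold tends to $4r(r-t+1)$.
\item[(d)] Since $\rho_t/\lambda_0\to\sqrt{r-t}$ and $\rho_{t-1}/\lambda_0\to\sqrt{r-t+1}$, the normalized threshold tends to $r\bigl(\sqrt{r-t}+\sqrt{r-t+1}\bigr)^2$. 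This includes the case $t=r$, where $\rho_r/\lambda_0=\sqrt r/\beta\to0$.
\end{itemize}
The $(1+o(1))$ factors in (c) and (d) of \cref{thm:thr} are multiplicative, so they pass through these limits unchanged.

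\emph{Conjunction and disjunction.} For conjunction, $t=r$ gives $(1,4,4r,r)$. Thus (c)/(b) and (d)/(a) both tend to $r$: counting overprices reliability by a factor $r$ at the midpoint bias and at the optimized bias. For disjunction, $t=1$, I compare the exact forms rather than the limits. The midpoint box threshold (b) is $4k(r\lambda_0)^2$. The midpoint count threshold (c) is $4rk\rho_0^2=4rk\cdot r\lambda_0^2$. These are identical for every $\beta$, to the same first order as in \cref{thm:thr}.

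The only step needing care is the passage from $\lambda_0^2=2\log n$ to $2\log m$, together with keeping the $o(1)$ terms of \cref{thm:thr}(c),(d) multiplicative and uniform as $\beta$ varies. Both follow from $m/k\to\infty$ and continuity of the expressions above in $\beta^{-1}$. The rest is direct substitution.
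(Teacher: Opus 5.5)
Your proposal is correct and follows the paper's argument. The paper substitutes $\lambda_0=\beta\lambda_1$ into the closed forms (a)--(d) of \cref{thm:thr}, observes that every $h_j$ and $\rho_j$ collapses to $r\lambda_1$ and $\sqrt r\lambda_1$ as $\beta\to1$, and divides by $\lambda_0^2\sim2\log m$ as $\beta\to\infty$. Your extra bookkeeping only makes explicit what the paper leaves to the reader, and it is accurate: the $t=r$ case of (d), the step $\log n\sim\log m$, and the coincidence of (b) and (c) at $4r^2k\lambda_0^2$ for disjunction.
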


\begin{corollary}[Computation versus recovery]\label{cor:recovery}
For feature recovery the two criteria share the threshold $k(\lambda_1+\lambda_0)^2$, attained by $\theta=\tfrac12+\tfrac12\sigd(\lhat_0-\lhat_1)$; below it the probes $e_i$ fail with every common bias, and every gate-local recovery rule has a diverging expected number of errors. A decoder that recovers every feature by such threshold probes and then applies exact Boolean conjunctions therefore needs this dimension, while direct canonical $r$-way conjunction needs $k((2r-1)\lambda_1+\lambda_0)^2$. Their ratio $\bigl((2r-1+\beta)/(1+\beta)\bigr)^2$ decreases from $r^2$ at $\beta=1$ to $1$ as $\beta\to\infty$.
\end{corollary}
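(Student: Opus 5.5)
This corollary is the case $r=t=1$ of \cref{thm:thr}, together with a few direct computations. The plan is to verify the threshold formulas in order: recovery, the converse, $r$-way conjunction, then the ratio.

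\emph{Recovery threshold.} For $\THR^1_1$ there are two classes. The active class $z=1$ has $N_1=k$ and $\lambda_z=\lambda_1$. The inactive class $z=0$ has $N_0=n$ and $\lambda_z=\lambda_0$. Since $r=1$, the ball and the box coincide: both are the interval $[-\rho_z,\rho_z]$, and $\rho_z=\sqrt{2\log N_z}$ equals $\lambda_1$ or $\lambda_0$ exactly. Hence $\Dball=\Dbox$ for every $(a,\theta)$. Setting $r=t=1$ in \cref{thm:thr}(a) gives $k\Vbox=k(\lambda_1+\lambda_0)^2$. In (d) we have $\rho_1=\lambda_1$ and $\rho_0=\lambda_0$, so $k\Vball=k(\lambda_1+\lambda_0)^2(1+o(1))$. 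The bias $\theta=\tfrac12+\tfrac12\sigd(\lhat_0-\lhat_1)$ is the half-slack bias of (a). Sharpness for the box then follows from \cref{thm:box}(i).

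\emph{Converse.} Below the threshold, \cref{thm:box}(ii) shows that with probability tending to one no common $(a,\theta)$ works. In particular no common bias works for the probes $e_i$, since these are $a=1$ with an arbitrary $\theta$. For the counting criterion, \cref{thm:weighted} gives $\Exp X\to\infty$ for every gate-local rule. This needs $\min_zN_z=k\to\infty$, which holds by assumption.

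\emph{Decoder and conjunction.} The decoder statement is immediate. Recovery followed by exact conjunction is correct exactly when every probe is correct, so its reliability threshold is the recovery threshold. Direct $r$-way conjunction is \cref{thm:thr}(a) with $t=r$. This gives $k\bigl((2r-1)\lambda_1+\lambda_0\bigr)^2$, and below this threshold no common representation succeeds.

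\emph{Ratio.} Divide the conjunction threshold by the recovery threshold and factor out $\lambda_1$. This gives $f(\beta)=\bigl((2r-1+\beta)/(1+\beta)\bigr)^2$. At $\beta=1$ we get $f(1)=r^2$. Write $(c+\beta)/(1+\beta)=1+(c-1)/(1+\beta)$ with $c=2r-1\ge1$. This is nonincreasing in $\beta$ and tends to $1$ as $\beta\to\infty$, strictly decreasing for $r\ge2$.

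\emph{Main obstacle.} The only nonroutine point is matching the quantifiers. "Below it" must be read separately under each criterion: reliability is covered by \cref{thm:box}(ii), restricted to common biases, and the count by \cref{thm:weighted}, which allows local rules. One must also check that the $(1+o(1))$ factors in (d) do not spoil first-order sharpness. They do not, because $\lambda_0\ge\lambda_1\to\infty$.
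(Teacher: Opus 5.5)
Your proposal is correct and follows the paper's route. The paper proves the corollary by specializing \cref{thm:thr} to $r=t=1$, where $h_0+h_1=\rho_0+\rho_1=\lambda_0+\lambda_1$, and to $t=r$ for conjunction; the converses come from \cref{thm:box}(ii) and \cref{thm:weighted}, and monotonicity comes from a derivative of sign $2-2r\le0$, which is equivalent to your $1+(2r-2)/(1+\beta)$ argument. You should also state explicitly that, since $\Dball=\Dbox$ exactly when $r=1$, \cref{thm:ball} shows the half-slack bias attains the expected-count threshold as well as the reliability one.
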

The ratio comes from the readout architecture. A common conjunction bias must separate the $r$ smallest active reads from the $r-1$ largest active reads plus one inactive read. It therefore pays for $2r-1$ active levels; recovery pays for one. At $m=k^4$, recovery needs $18k\log k$ dimensions and direct pairwise conjunction needs $50k\log k$. The midpoint pairwise rule needs $72k\log k$ for reliability and $80k\log k$ for the expected count.

Other decoders have different requirements. The Lasso recovers support from about $2k\log(m-k)$ measurements \citep{Wainwright09}, and $\ell_1$ minimization recovers a typical nonnegative sparse vector from about $2k\log(m/k)$ noiseless Gaussian measurements \citep{DonohoTanner09}. In the skewed regime both match $2k\log m$ to first order; when $\log m\sim\log k$, the latter is $o(k\log k)$. The constants here describe one threshold layer reading a fixed random code. Further comparisons, including asymmetric comparator gates, are in \cref{sm:comparisons}.

%% file: sections_window.tex
\section{The critical window}\label{sec:window}
Near the threshold, a few extreme reads decide success. Their fluctuations have scales $1/\lambda_1$ and $1/\lambda_0$ in units of $\sigd$. For pairwise conjunction, define the quantile centers
\begin{equation}\label{eq:centers}
\begin{aligned}
d_0&=4k\bigl(\lhat_1+\lhat_0\bigr)^2,\\
d_0^{\rm opt}&=k\bigl(3\lhat_1+\lhat_0\bigr)^2,
\end{aligned}
\end{equation}
and $\Psi_\beta(x)=\Prob(G_A+G_B/\beta\le x)$ for independent standard Gumbel variables $G_A,G_B$.
\begin{theorem}[Window profiles]\label{thm:window}
Let $\cF$ be the universal family of pairwise conjunctions, $k\to\infty$ and $m/k\to\infty$. Uniformly on compact sets of $x$:
\begin{enumerate}[label=\textup{(\roman*)},leftmargin=2em]
\item the midpoint rule at $d=d_0+8k(1+\beta)x$ has $\Prob(X=0)=\Psi_\beta(x)+o(1)$;
\item at $d=d_0^{\rm opt}+2k(3+\beta)x$, the best deterministic common bias has
\[
\sup_\theta\Prob(X=0)=\sup_{y\in\R}\Psi_\beta(y)\,F_+(x-y)+o(1),
\]
where $F_+$ is the distribution function of $-\log[\omega_1(\omega_1+\omega_2)]$ for independent unit exponentials $\omega_1,\omega_2$, and the half-slack bias of \cref{thm:thr}(a) has $\Prob(X=0)=\Psi_\beta(x/2)F_+(x/2)+o(1)$.
\end{enumerate}
\end{theorem}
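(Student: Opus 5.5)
By \cref{thm:exact} I would fix $S=[k]$ and work with $A_i=H+\sigma(Z_i-\bar Z)$ and $B_j=\sigma W_j$, where $\sigma=\sqrt{kH/d}$. For $a=(1,1)$ the network is correct exactly when three class inequalities hold. Class $10$ needs $2$ reads to satisfy $\max_iA_i+\max_jB_j<\theta$. Class $11$ needs $\min_{i\ne l}(A_i+A_l)>\theta$. Class $00$ needs $B_{(1)}+B_{(2)}<\theta$. Class $00$ stays slack near either center: its support $2\lambda_0$ over margin $\theta$ gives a price strictly below the class-$10$ price, so by the box law (\cref{thm:box}(iii)) it fails with probability $o(1)$. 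At the midpoint $\theta=3/2$, class $11$ is slack for the same reason, since its price $16k\lambda_1^2$ is below $d_0$. Everything therefore reduces to the joint fluctuations of a few order statistics: the top of $(Z_i)$, the bottom two of $(Z_i)$, and the top of $(W_j)$.

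\emph{Step 1: removing $H$ and $\bar Z$.} I will show that replacing $H$ by $1$ and $\bar Z$ by $0$ moves each class inequality by $o(1/\lambda_1)$ in units of $\sigma$. Near the centers $d\asymp k\lambda_0^2$, so $H-1=O_P(d^{-1/2})$. Its effect is $(H-1)\sqrt{d/k}\cdot O(1)+(H-1)O(\lambda_0)$, which is $O_P(k^{-1/2})$. Similarly $\bar Z=O_P(k^{-1/2})$. Both are $o(1/\lambda_1)$ because $\lambda_1^2/k\to0$. This reduces the problem to pure Gaussian extremes with deterministic thresholds.

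\emph{Step 2: extreme-value limits.} By classical theory, $\lhat_1\bigl(\max_iZ_i-\lhat_1\bigr)\Rightarrow G_A$ and $\lhat_0\bigl(\max_jW_j-\lhat_0\bigr)\Rightarrow G_B$, with $\lhat_1/\lhat_0\to1/\beta$ after rescaling. For the bottom two, the lower-tail point process $\lhat_1(-Z_{(i)}-\lhat_1)$ converges to a Poisson process with intensity $e^{-u}du$. Its first two points are $(-\log\omega_1,-\log(\omega_1+\omega_2))$, so their sum converges to $T=-\log[\omega_1(\omega_1+\omega_2)]$, whose law is $F_+$. Top and bottom extremes of one i.i.d.\ sample are asymptotically independent, and the $W$'s are independent of the $Z$'s, so $(G_A,G_B,T)$ are jointly independent in the limit.

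\emph{Step 3: plugging in the scalings.}

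For (i), at $d=d_0+8k(1+\beta)x$ a Taylor expansion gives $\tfrac12\sqrt{d/k}=\lhat_1+\lhat_0+x/\lambda_1+o(1/\lambda_1)$. The class-$10$ inequality then becomes $G_A+G_B/\beta<x+o_P(1)$, which yields $\Psi_\beta(x)$.

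For (ii), set $\sqrt{d/k}=3\lhat_1+\lhat_0+x/\lambda_1+o(1/\lambda_1)$ and parametrize the bias by $y=\lambda_1\bigl[(\theta-1)\sqrt{d/k}-\lhat_1-\lhat_0\bigr]$. Because the two margins $\theta-1$ and $2-\theta$ sum to $1$, class $10$ becomes $G_A+G_B/\beta<y$ and class $11$ becomes $T<x-y$. Asymptotic independence then gives $\Psi_\beta(y)F_+(x-y)$. The half-slack bias corresponds to $y=x/2+o(1)$, since it splits the total slack $x/\lambda_1$ equally between the two classes.

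\emph{Main obstacle.} The difficult step is the uniformity: taking $\sup_\theta$ inside the limit, and making the limits hold uniformly on compacts in $x$. For the upper bound I would restrict to $\theta$ with $|y|\le Y$, handling large $|y|$ by monotonicity, since either factor then tends to $0$. I would use a finite grid in $y$ together with the monotonicity of both events in $y$. The continuity of $\Psi_\beta$ and $F_+$ then upgrades pointwise convergence to uniform convergence, in Pólya fashion. Justifying joint convergence of the three extremes with the $o_P(1/\lambda_1)$ errors holding uniformly over this compact parameter range is the delicate part of the argument.
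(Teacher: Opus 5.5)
Your overall route matches the paper's proof: the exact law, removal of $H-1$ and $\bar Z$ at scale $o_\Prob(1/\lhat_1)$, Gumbel limits for the top reads, and the lower two-point Poisson limit with asymptotic independence of the two tails. The slack coordinate $y$ with total slack $x$, and the compactification in $y$ before taking the supremum, are also the paper's.

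The step that fails as written is the dismissal of the non-binding classes by a first-order price comparison plus the box law. The box law only separates $d$ from a price by a fixed factor $1\pm\varepsilon$. The theorem, however, covers all sequences with $k\to\infty$ and $m/k\to\infty$, and in two boundary regimes the relevant price ratio tends to one.
\begin{itemize}
\item At the midpoint, class $11$ has price $16k\lambda_1^2$ against $d_0\sim4k(\lambda_1+\lambda_0)^2$. The ratio $4/(1+\beta)^2$ tends to $1$ when $\beta\to1$, for instance for $n=k\log\log k$.
\item At the optimal center, class $00$ has price $k(2\lambda_0/\theta)^2$ against $k(3\lambda_1+\lambda_0)^2$. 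The ratio $(\beta/(2+\beta))^2$ tends to $1$ when $\beta\to\infty$.
\end{itemize}
In these regimes, ``price below $d$'' says nothing at the window scale. What you must show is that the slack of each non-binding class diverges in the window unit $\sigd/\lhat_1$.

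The fix is a direct second-order computation, which is what the paper does.
\begin{itemize}
\item Class $11$ at the midpoint. On the window, $\sigd(\lhat_1+\lhat_0)=\tfrac12+O(\sigd/\lhat_1)$, and the smallest both-active score is $2-2\sigd\lhat_1+O_\Prob(\sigd/\lhat_1)$. So its slack above $3/2$ is $\lhat_1(\lhat_0-\lhat_1)+O_\Prob(1)$ in window units. This diverges because $\lhat_0^2-\lhat_1^2=2\log(n/k)+o(1)\to\infty$ when $\beta\to1$, and trivially otherwise. This is exactly where the hypothesis $m/k\to\infty$ is used beyond first order: with $n/k$ bounded, this slack would stay bounded.
\item Class $00$ at the optimal center. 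Uniformly over biases with bounded slack coordinates, $\theta-2\sigd\lhat_0=4\sigd\lhat_1+O(\sigd/\lhat_1)$. Meanwhile $B_{(1)}+B_{(2)}=2\sigd\lhat_0+O_\Prob(\sigd/\lhat_0)+o_\Prob(\sigd/\lhat_1)$. So the gap is of order $\lhat_1^2$ window units for every $\beta$.
\end{itemize}
With these two estimates in place of the price comparison, the rest of your argument goes through.
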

\noindent\emph{Proof sketch.} The midpoint rule is controlled by the largest active and inactive reads. The optimized bias must also protect the two smallest active reads. Gaussian upper and lower extremes become independent, giving Gumbel variables and the first two points of a unit Poisson process. The bias divides the available slack between these constraints. Common-scale and centering errors vanish in the window units (\cref{sm:window}).

The two factors in the optimized profile have separate roles. The one-active negative class is protected when the sum of the largest active read and the largest inactive read stays below the bias; this gives $\Psi_\beta$. The positive class is protected when the sum of the two smallest active reads stays above it; this gives $F_+$. Increasing the bias helps the first event and hurts the second. The variable $y$ allocates the window slack between them. The half-slack rule splits it equally; the supremum chooses the allocation that maximizes the product.

The optimized window is narrower by $4(1+\beta)/(3+\beta)\in[2,4)$. The half-slack bias is optimal to first order; a bias tuned to a finite-size success target can improve its limiting profile. The profiles have explicit integral forms:
\[
\Psi_\beta(x)=\int_0^\infty\exp(-\tau-\tau^{-\beta}e^{-\beta x})\,d\tau,
\]
and $F_+(x)=e^{-\sqrt c}+\int_0^{\sqrt c}e^{-c/s}\,ds$ with $c=e^{-x}$. For $\beta=1$, $\Psi_1(x)=2e^{-x/2}K_1(2e^{-x/2})$, where $K_1$ is the modified Bessel function; as $\beta\to\infty$, $\Psi_\beta(x)\to\exp(-e^{-x})$. At the centers, the midpoint profile is $\Psi_\beta(0)$ and the half-slack profile is $\Psi_\beta(0)F_+(0)$, with $F_+(0)=0.516$.

At the midpoint, the number of inactive features that cause an error has a mixed Poisson limit. Along $\beta\to\beta_0\in[1,\infty)$ its intensity is $\Lambda_\infty=e^{\beta_0(G_A-x)}$. Thus $\Prob(X=0)\to\Exp e^{-\Lambda_\infty}=\Psi_{\beta_0}(x)\in(0,1)$, while $\Exp\Lambda_\infty=\infty$ and $\Exp X\to\infty$. Rare large active reads make many gates fail together. The full argument is retained in \cref{sm:windowdetail}.

%% file: experiment_figure.tex
\begin{figure*}[t]
\centering
\includegraphics[width=\linewidth]{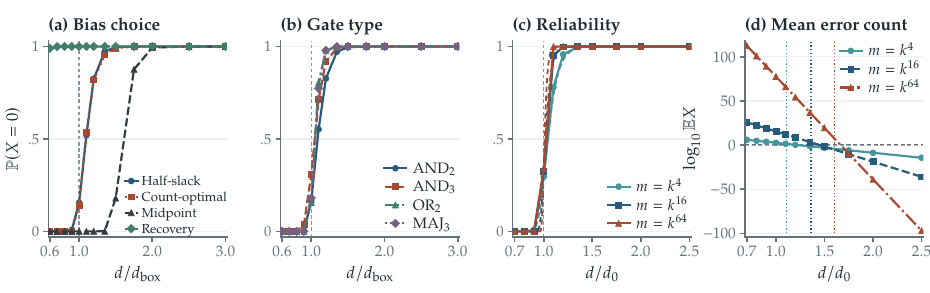}
\caption{Bias, gate type and the separation of reliability from expected errors, all at $k=1024$. (a) Three pairwise biases and feature recovery on shared draws at $m=k^4$. (b) Four gates with the half-slack bias at $m=k^4$. Here $d_{\rm box}=k((2t-1)\lhat_1+(2r-2t+1)\lhat_0)^2$, with $(r,t)=(2,2)$ in (a) and each gate's own $(r,t)$ in (b). (c,d) The midpoint pairwise rule at $m=k^4,k^{16},k^{64}$, normalized by $d_0$ of \eqref{eq:centers}. Points and bands in (a--c): 4096 exact draws per cell and pointwise $95\%$ Wilson intervals. Dashed curves in (c): $\Psi_{\hat\beta}$ from \cref{thm:window}(i). Panel (d): quadrature of \cref{prop:onegate}; dotted lines mark the first-order ball centers.}
\label{fig:e3}
\end{figure*}

%% file: sections_beyond.tex
\section{Extensions}\label{sec:beyond}

Three variations show how the dimension requirement depends on the model: supports chosen from the dictionary, non-Gaussian entries and multilayer computation. The proofs, circuit statements and encoding constructions are in \cref{sm:beyond}.

\subsection{Adversarial supports}
If the support can depend on the dictionary, the class cloud need not be a product of two samples. An adversary can select $k-1$ columns that align with one probe. The resulting interference grows linearly in $k$, rather than as $\sqrt k$.

\begin{proposition}[Uniform correctness]\label{prop:uniform}
Let $k\to\infty$ and $\log k=o(\log m)$, and use canonical pairwise probes with bias $3/2$. There are absolute constants $0<c<C$ such that the network is correct for every support of size at most $k$ with probability tending to one if $d\ge Ck^2\log m$, and fails on some support of size $k$ with probability tending to one if $d\le ck^2\log m$ and $d\to\infty$.
\end{proposition}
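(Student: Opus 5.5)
We prove the two directions separately. The upper bound is a deterministic worst-case inequality on a high-probability dictionary event. The lower bound is an explicit adversarial support built from the realized dictionary. Throughout, the pairwise probe for $(i,j)$ is $w=e_i+e_j$ with bias $3/2$. For a support $S$ with $\abs S\le k$, the score is
\[
\ip{e_i+e_j}{u}=\sum_{l\in S}\bigl(\ip{e_i}{e_l}+\ip{e_j}{e_l}\bigr).
\]

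\emph{Upper bound.} Let $\mathcal E$ be the event that every column satisfies $\abs{\norm{e_i}^2-1}\le\delta$ and every off-diagonal pair satisfies $\abs{\ip{e_i}{e_l}}\le\mu:=C_0\sqrt{\log m/d}$. A union bound over $m^2$ pairs, using the chi-square and Gaussian-product (sub-exponential) tails, gives $\Prob(\mathcal E)\to1$ once $d\ge C\log m$. On $\mathcal E$, a gate with $j\in\{0,1,2\}$ active inputs has score $j(1\pm\delta)+e$, where the interference satisfies $\abs e\le 2k\mu$, uniformly over all supports $S$ of size at most $k$. The worst margin in $\{0,1,2\}$ against bias $3/2$ is $1/2-O(\delta)$. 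Correctness for every support therefore follows once $2k\mu<1/2-2\delta$, that is, once $d\ge Ck^2\log m$ for a suitable absolute $C$. This direction is routine; its only content is that coherence controls interference uniformly in $S$.

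\emph{Lower bound.} The plan is to fix the probe pair $(1,2)$ and let the adversary make the inactive gate $(1,2)$ fire. Choose $S$ as the $k$ features $l\notin\{1,2\}$ with the largest values of $c_l:=\ip{e_1+e_2}{e_l}$. Then
\[
\text{score}=\sum_{l\in S}c_l.
\]
Conditionally on $e_1,e_2$, the $c_l$ are i.i.d.\ $N(0,\norm{e_1+e_2}^2/d)$, and $\norm{e_1+e_2}^2\approx2$. The key step is to lower-bound the sum of the top $k$ of $m-2$ such Gaussians. When $\log k=o(\log m)$, the $k$-th largest order statistic is at least $(1-o(1))\sqrt{2\log(m/k)}$ times the standard deviation, and $\log(m/k)\sim\log m$. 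Hence with probability tending to one,
\[
\text{score}\ge(1-o(1))\,k\sqrt{2\cdot2\log m/d}.
\]
This exceeds $3/2$ when $d\le ck^2\log m$ for a small absolute $c$. Gate $(1,2)$ then outputs $1$ while its target is $0$, so the network fails on $S$. Concentration of the order statistic can be obtained from a binomial count: the number of $l$ with $c_l\ge t$ is $\Bin(m-2,\Phibar(t\sqrt{d/2}))$, whose mean is $\gg k$ at the chosen $t$.

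\emph{Main obstacle.} The main difficulty is the side condition $d\to\infty$ together with the range where $d$ is small or comparable to $\log m$. There the chi-square norms do not concentrate and $\norm{e_1+e_2}^2/d$ can deviate. I would handle this by conditioning on $\norm{e_1+e_2}^2\ge1$, which has probability bounded below and in fact tending to one as $d\to\infty$. The order-statistic bound needs only the Gaussian tail of the $c_l$ given $e_1,e_2$, so it holds for any $d$. To upgrade to probability tending to one, I can alternatively take the best pair among many disjoint candidate pairs $(i,i')$, which makes the conditioning event hold for some pair with overwhelming probability.
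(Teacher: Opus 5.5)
Your proof is correct and follows the paper's argument. The upper bound is a norm-and-coherence union bound that makes the interference at most $2kC_0\sqrt{\log m/d}$ uniformly over supports. The lower bound uses an adversarial support made of the features best aligned with $e_1+e_2$, with a binomial/Chernoff count for the top order statistics.

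The only difference is which gate you make fail. You make the zero-active gate $(1,2)$ fire, where the margin is $3/2$, so you need $c<16/9$. The paper instead puts feature $1$ in the support and exploits the one-active gate, whose margin is $1/2$.

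Also, since $d\to\infty$ is assumed, $\norm{e_1+e_2}^2\to2$ in probability. Your ``main obstacle'' is therefore not an obstacle, and the disjoint-pairs device is unnecessary.
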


Related bounds for uniform feature recovery appear in \citep{GKP26,IvanitskiyJKM26}; random and worst-case supports also differ for sparse regression \citep{CandesPlan09}. The detailed comparisons are in \cref{sm:extensions}.

\subsection{Other entry laws}
The ball uses one-gate tails. For the midpoint pairwise network, their leading exponent is universal. Let the entries of $\sqrt d\,E$ be independent with a fixed centered, unit-variance, sub-Gaussian law, and let $q_{k,d}$ be the probability that a midpoint pairwise read with one active input exceeds $3/2$.

\begin{theorem}[Universality of the one-gate exponent]\label{thm:universality}
If $k\to\infty$ and $d/k\to\infty$, then
\[
\log q_{k,d}=-\frac d{16k}\Bigl[1+O\Bigl(\frac1k+\frac kd\log\frac dk\Bigr)\Bigr],
\]
with constants depending on the entry law. Consequently the midpoint pairwise network has $\Exp X\to0$ once $d\ge(16+\varepsilon)k\log(km)$.
\end{theorem}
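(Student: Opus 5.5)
The plan is a Chernoff upper bound plus an exponential-tilting lower bound for a sum of $d$ i.i.d.\ coordinate terms. The whole task is to control the per-coordinate cumulant generating function at the saddle point $s\approx 1/(4k)$. Write $x_{\cdot t}$ for the entries of $\sqrt d\,E$. Fix the active input $i$, an inactive $j$, and $y=\sqrt d\,(u-e_i)$, whose entries $y_t$ are sums of $k-1$ i.i.d.\ copies of the entry law. The midpoint one-active read is $\ip{e_i+e_j}{u}$, and
\[
\ip{e_i+e_j}{u}-1=\frac1d\sum_{t=1}^d\psi_t,\qquad
\psi_t=(x_{it}^2-1)+x_{it}y_t+x_{it}x_{jt}+x_{jt}y_t .
\]
The $\psi_t$ are i.i.d.\ and centered, with
\[
\Var\psi_t=(\mu_4-1)+(k-1)+1+(k-1)=2k\bigl(1+O(1/k)\bigr),
\]
where $\mu_4$ is the fourth moment of the entry law. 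Thus $q_{k,d}=\Prob\bigl(\sum_t\psi_t>d/2\bigr)$. The Gaussian heuristic gives $(d/2)^2/(2\cdot 2kd)=d/(16k)$, and both the upper and lower bounds are organized around this value.

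\emph{Step 1: cumulant control.} Each summand of $\psi_t$ is a product of sub-Gaussian variables with scale at most $\sqrt k$, so $\psi_t$ is sub-exponential at scale $O(\sqrt k)$. Hence $\Lambda(s)=\log\Exp e^{s\psi_t}$ is finite and analytic for $\abs s\le c/\sqrt k$, and the relevant range $s=O(1/k)$ lies deep inside it. I will bound the cumulants directly, conditioning on $y_t$ and expanding the mgf in $s(x_{it}+x_{jt})y_t$ and $s(x_{it}^2-1)$:
\begin{itemize}[leftmargin=2em]
\item the third cumulant is $O(k)$; its leading contribution is $3\Exp[x_{it}^2x_{jt}^2y_t^2]=3(k-1)$, and the terms odd in a single centered variable vanish;
\item the fourth cumulant is $O(k^2)$;
\item generally $\abs{\kappa_p}\le p!\,C^pk^{p/2}$, which follows from Cauchy estimates on the disc $\abs s\le c/\sqrt k$.
\end{itemize}
This yields, uniformly for $\abs s\le 1/k$,
\[
d\,\Lambda(s)=d\,k s^2\bigl(1+O(1/k)\bigr),
\]
because $d\,s^3k=O(d/k^2)$ and $d\,s^4k^2=O(d/k^2)$ are both of relative size $O(1/k)$ against the main term $d/k$.

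\emph{Step 2: upper bound.} Chernoff's bound at $s_*=1/(4k)$ gives
\[
\log q_{k,d}\le -\frac{s_*d}{2}+d\Lambda(s_*)=-\frac d{16k}\bigl(1+O(1/k)\bigr).
\]

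\emph{Step 3: lower bound (main obstacle).} Under the tilted law $d\Prob_s/d\Prob\propto e^{s\sum\psi_t}$, choose $s=s_*(1+O(1/k))$ so that $\Exp_s\sum\psi_t=d/2+k$. The tilted sum then has standard deviation $\asymp\sqrt{kd}$. Consider the event $\sum\psi_t\in[d/2,\,d/2+2k]$. The change of measure costs at most $e^{-sd/2-2sk+d\Lambda(s)}$, that is $\exp\{-\tfrac d{16k}(1+O(1/k))-O(1)\}$. Under $\Prob_s$ the event has probability $\gtrsim\sqrt{k/d}$. I would obtain this from a local CLT, or from Berry--Esseen for the tilted i.i.d.\ sum, whose error $O(d^{-1/2})$ is $o(\sqrt{k/d})$ since $k\to\infty$. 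The window must be this narrow: a width $\sqrt{kd}$ would lose $e^{-\sqrt{d/k}/4}$, which is too much. With width $O(k)$, the polynomial factor $\sqrt{k/d}$ contributes $\tfrac12\log(d/k)$ to the exponent, which is the $O\bigl(\tfrac kd\log\tfrac dk\bigr)$ relative error. The main technical point is making the local estimate uniform in $k$ and $d$ with only $d/k\to\infty$. If a lattice entry law blocks the local CLT, I would smooth with a window of width $k$, which still costs only $O(1)$ in the exponent.

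\emph{Step 4: the consequence.} The one-active class has $2kn\le 2km$ gates, so if $d\ge(16+\varepsilon)k\log(km)$ its contribution is at most
\[
2km\,e^{-\frac{d}{16k}(1-o(1))}\to0;
\]
here $d/k\ge 16\log(km)\to\infty$, so the error terms are $o(1)$. For the zero-active class (at most $m^2$ gates, clean score $0$, deviation $3/2$, variance $2k$), Steps 1--2 give the exponent $9d/(16k)(1-o(1))\ge 9\log(km)(1-o(1))$, which exceeds $2\log m$. For the two-active class (at most $k^2$ gates, mean $2$, deviation $1/2$ downward), the variance is $2k(1+O(1/k))$ and the same Chernoff argument applied to $-\sum\psi_t$ works. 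Summing the three classes gives $\Exp X\to0$.
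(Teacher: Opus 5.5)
Your proposal is correct and follows essentially the paper's route. Both use the per-coordinate decomposition $\psi_t=P+CS$, cumulant control for $\abs s=O(1/k)$, a Chernoff bound at $s\approx1/(4k)$, a tilted lower bound via Berry--Esseen (the paper extracts the exact prefactor $-\log\eta_k-\tfrac12\log(2\pi)$, whereas your window of width $O(k)$ already suffices for the stated error), and union bounds over the three classes. One harmless slip: the leading term of $\kappa_3$ is $(3\mu_4+3+2\mu_3^2)k$, with $\mu_3$ the third moment of the entry law, not $3(k-1)$; only $\kappa_3=O(k)$ is used.
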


Exponential tilting of coordinate rows proves the result. \Cref{sm:universality} gives the next term and sparse-entry extensions. The reliability law uses the Gaussian joint identity; its universality remains open.

\subsection{Composition and unrestricted encodings}\label{sec:composition}
A later layer must store the outputs it reads: pairwise conjunction has $\binom k2$ active outputs. For a Gaussian stream in which each active output is written once, \cref{thm:writebudget} gives the sufficient budget $d\ge(1+\varepsilon)\kappa_{\min}^{-1}K\log Q$. Here $Q$ counts reads, $K$ is the largest stream activity before a read, and $2\kappa_{\min}$ is the smallest squared clean margin of a unit-norm read. The proof applies a union bound to an ideal execution.

Without the linear dictionary, order $k$ dimensions suffice for all pairwise conjunctions. In the skewed regime they are also necessary for polynomially many threshold units, by sign-pattern counting \citep{Cover65} (\cref{prop:counting}). The coefficients of $\prod_{l\in S}(y-l)^2$ give a $2k$-dimensional nonlinear code, with large coefficients and small normalized margins \citep[Theorem~15.2]{Barvinok2010} (\cref{prop:polycode}). Thus the $\Theta(k\log m)$ scale describes accessibility from a random linear dictionary.

%% file: sections_experiments.tex
\section{Exact simulations}\label{sec:experiments}
The exact law samples a network without constructing $E$. Draw $H$, $k$ active Gaussians and the few inactive order statistics required by \cref{lem:boxid}. These order statistics can be sampled exactly by sequential uniform tail draws and transformed by $\Phi^{-1}$. Each draw costs $O(k\log k)$, independent of $m$ and $d$.

Each simulation point uses 4096 independent draws and pointwise $95\%$ Wilson intervals. Expected counts use quadrature of \cref{prop:onegate} with exact unordered class sizes: rare draws can dominate the mean. The protocol and complete results are in \cref{sm:experiments,sm:comparisons}.

\paragraph{Bias and gate type}
At $k=1024$, $m=1024^4$ and $d=1.35k(3\lhat_1+\lhat_0)^2$, the half-slack pairwise rule (\cref{fig:e3}(a)) succeeds on $3984$ of $4096$ draws. The count-optimal bias succeeds on $3912$, and the midpoint on $2$. At the quantile center, the half-slack success frequencies for $k=64,256,1024$ are $0.143,0.139,0.155$; the predicted limits are $0.155,0.154,0.154$. For $\ANDg_2$, $\ANDg_3$, $\ORg_2$ and $\MAJg_3$ at $m=k^4$, the transitions cluster near their box centers (\cref{fig:e3}(b)). At $k=1024$, success is at most $3.8\%$ at ratio $0.9$ and between $82.7\%$ and $97.9\%$ at $1.2$. The curves steepen as $k$ grows (\cref{fig:e2}).

\paragraph{Separation of the criteria}
Fix $k=1024$ and use the midpoint pairwise rule at $m=k^4,k^{16},k^{64}$ (\cref{fig:e3}(c,d)). At the quantile box center, success frequencies $0.295,0.324,0.340$ agree with profile values $0.298,0.326,0.346$. The expected count crosses one at about $1.23,1.45,1.67$ times $d_0$, tracking the first-order ball positions $1.11,1.36,1.60$. At $m=k^{64}$ and $d=1.5d_0$, all $4096$ draws are correct, yet quadrature gives $\Exp X\approx2.8\times10^{19}$. Here the network is correct on every sampled draw while its expected error count remains large. The exact sampler reaches these dictionary sizes by sampling extremes rather than constructing the dictionary.

%% file: discussion.tex
\section{Discussion}\label{sec:discussion}
Expected error counts use marginal tails and give a ball. Joint reliability uses shared reads through a Gaussian supremum; for the universal family, its geometry is a box.

The converse for reliability leaves a question about the readout class: can gate-local probes adapted to their own Gram matrices improve on common representations? Structured gate families raise a related geometric problem. The disjoint and partial-sharing examples suggest looking for sharp supremum constants; generic chaining gives the order \citep{Talagrand14}, while a variational description may resemble the generalized random energy model \citep{Derrida85}. Reliability for non-Gaussian, correlated or learned dictionaries is another question. For general gates, a window law would require the relevant extreme order statistics. For circuits, a box law would also need to track the dependence created by earlier writes; the present composition bound controls reads by a union bound.

%% file: ai_use.tex
\section*{AI Use Statement}\addcontentsline{toc}{section}{AI Use Statement}
The original research ideas and initial manuscript draft were developed by the authors. AI tools were used to assist with checking mathematical details and the correctness of arguments, filling in routine proof steps, coding assistance, and language polishing. The authors take full responsibility for the content and correctness of the final manuscript.

%% file: guide.tex
\numberwithin{equation}{section}\numberwithin{figure}{section}\numberwithin{table}{section}\setcounter{equation}{0}\setcounter{figure}{0}\setcounter{table}{0}
\section{Guide to the appendix}\label{sm:guide}

The article keeps the main threshold statements and the proof of the exact read law. The appendix gives the read-vector reductions, the supremum proof, the box identity and its optimization program (\cref{sm:reduction}); the remaining ball, box, gate and window proofs (\cref{sm:ball,sm:box,sm:gates,sm:window}); and the full extensions of the model (\cref{sm:extensions,sm:beyond}). Additional window formulas, comparisons and experimental results are in \cref{sm:windowdetail,sm:comparisons}. \Cref{sm:numerics} records finite computational checks; \cref{sm:experiments} gives the frozen experimental protocol. \Cref{sm:applications} turns the thresholds into design rules for width, capacity, sizing and evaluation, and states open problems. The notation table and dependency map provide a reading guide.

\begin{figure}[!ht]
\centering
\input{deps}
\caption{Dependency map. An arrow points from a result to one that builds on it. Blue: expected errors (the ball); orange: reliability (the supremum and the box); grey: the exact law and its reduction. Small italic text names the classical tool each proof adds.}
\label{fig:deps}
\end{figure}

\begin{table}[ht]
\caption{Notation.}
\label{tab:notation}
\centering
\small
\renewcommand{\arraystretch}{1.12}
\begin{tabular}{@{}p{0.27\linewidth}p{0.66\linewidth}@{}}
\toprule
Symbol & Meaning\\
\midrule
$E=[e_1|\cdots|e_m]$ & dictionary, independent columns $e_i\sim N(0,I_d/d)$\\
$S$, $x=\one_S$, $k$, $n=m-k$ & active set (uniform $k$-subset), input, sparsity, number of inactive features\\
$u=Ex$, $R=E^\top u$ & stored vector and feature reads; $A_i=R_i$ ($i\in S$), $B_j$ (inactive reads)\\
$H$, $Z$, $W$, $\sigma$ & $H\sim\chi^2_d/d$, independent Gaussian samples of sizes $k$ and $n$, $\sigma=\sqrt{kH/d}$ (\cref{thm:exact})\\
$\sigd=\sqrt{k/d}$ & deterministic interference scale\\
$\xi$, $\xi_I$ & $\xi_i=Z_i-\bar Z$ on $S$ and $W_j$ off $S$; interference vector of a tuple $I$\\
$(I,g)$, $r$, $\cF$, $\cU_r(g)$ & gate (tuple and threshold function), arity, gate family, universal family\\
$z$, $\cF_z(S)$, $N_z$ & class (input pattern), gates of class $z$, expected class size \eqref{eq:counts}\\
$(a,\theta)$, $\Rep(g)$ & weights and common bias of the canonical rule; representations of $g$\\
$s_z(a,\theta)$, $\varsigma_z$ & class margin \eqref{eq:margin}; error direction $1-2g(z)$\\
$X$ & number of wrong outputs\\
$\lambda_1,\lambda_0,\lambda_z,\beta$ & levels $\sqrt{2\log k}$, $\sqrt{2\log n}$, their vector for class $z$, ratio $\lambda_0/\lambda_1$\\
$\rho_z=\sqrt{2\log N_z}$ & ball radius of class $z$; $\rho_j$ in \cref{sec:gates} for classes with $j$ active inputs\\
$\Ballz z$, $\Boxz z$, $h_K$ & ball and box of \cref{def:prices}; support function of a set $K$\\
$\Dball$, $\Dbox$, $\Vball$, $\Vbox$ & ball and box prices \eqref{eq:prices} and their optimal values\\
$\Gamma_z(S)$, $d^*(S)$ & expected supremum of the class process \eqref{eq:gamma}; reliability threshold of \cref{thm:sup}\\
$\THR^r_t$ & threshold gate firing when at least $t$ of $r$ inputs are active\\
$\lhat_1,\lhat_0$ & quantile levels $\Phibar^{-1}(1/k)$ and $\Phibar^{-1}(1/n)$\\
$\Psi_\beta$, $F_+$ & window profiles of \cref{thm:window}\\
$G_A,G_B$, $\omega_1,\omega_2$ & standard Gumbel and unit exponential variables of \cref{thm:window}\\
$D$, $C(i)$, $\lambda_D$ & number of partners, partner set and level $\sqrt{2\log D}$ of \cref{prop:star}\\
$Q$, $K$, $\kappa_{\min}$ & number of reads, stream activity and margin rate of \cref{thm:writebudget}\\
\bottomrule
\end{tabular}
\end{table}

\FloatBarrier

%% file: deps.tex
\begin{tikzpicture}[
  x=1cm, y=1cm, >={Latex[length=4pt]},
  res/.style={draw=black!55, rounded corners=2pt, fill=white, align=center, font=\scriptsize, text width=2.35cm, minimum height=0.95cm, inner sep=2pt},
  exact/.style={res, draw=black!70, fill=black!5},
  ball/.style={res, draw=wongblue!85, fill=wongsky!13},
  box/.style={res, draw=wongverm!85, fill=wongorange!12},
  tool/.style={font=\tiny\itshape, text=black!55},
  dep/.style={->, black!65, thick, rounded corners=3pt},
]
\node[ball]  (og)  at (0,0)       {Prop.~\ref{prop:onegate} one-gate law\\[1pt]{\tiny\itshape\color{black!55}Gaussian regression}};
\node[exact] (ex)  at (3.3,0)     {Thm.~\ref{thm:exact} exact law\\[1pt]{\tiny\itshape\color{black!55}Gaussian conditioning}};
\node[box]   (bi)  at (6.6,0)     {Lem.~\ref{lem:boxid} box identity\\[1pt]{\tiny\itshape\color{black!55}rearrangement}};
\node[res]   (win) at (9.9,0)     {Thm.~\ref{thm:window} window profiles\\[1pt]{\tiny\itshape\color{black!55}Poisson limits}};
\node[ball]  (bl)  at (0,-1.55)    {Thm.~\ref{thm:ball} ball law\\[1pt]{\tiny\itshape\color{black!55}chi-square tails, Mills}};
\node[exact] (cl)  at (3.3,-1.55)  {Prop.~\ref{prop:cloud} reliability as a support function};
\node[box]   (prog) at (9.9,-1.55) {Prop.~\ref{prop:boxprogram} the two programs\\[1pt]{\tiny\itshape\color{black!55}convexity, symmetry}};
\node[ball]  (wm)  at (0,-3.1)    {Thm.~\ref{thm:weighted} weighted margin law\\[1pt]{\tiny\itshape\color{black!55}Gram concentration}};
\node[box]   (sup) at (3.3,-3.1)  {Thm.~\ref{thm:sup} supremum law\\[1pt]{\tiny\itshape\color{black!55}Borell--TIS}};
\node[box]   (box) at (6.6,-3.1)  {Thm.~\ref{thm:box} box law\\[1pt]{\tiny\itshape\color{black!55}normal extremes}};
\node[box]   (un)  at (3.3,-4.65)  {Cor.~\ref{cor:union} counting is the union bound};
\node[box]   (cs)  at (6.6,-4.65)  {Cor.~\ref{cor:cs} box inside the ball};
\node[res]   (thr) at (9.9,-4.65)  {Thm.~\ref{thm:thr} threshold gates};
\node[box]   (star) at (3.3,-6.1) {Prop.~\ref{prop:star} partial sharing\\[1pt]{\tiny\itshape\color{black!55}two-level first moment}};
\node[res]   (cor) at (9.9,-6.1)  {Cors.~\ref{cor:markov}, \ref{cor:recovery} four limits, recovery};
\draw[dep] (og) -- (bl);
\draw[dep] (bl) -- (wm);
\draw[dep] (ex) -- (cl);
\draw[dep] (cl) -- (sup);
\draw[dep] (sup) -- (un);
\draw[dep] (sup.west) -- ++(-0.38,0) |- (star.west);
\draw[dep] (bi) -- (box);
\draw[dep] (box) -- (cs);
\draw[dep] (prog) -- (thr);
\draw[dep] (thr) -- (cor);
\draw[dep] (cl.south east) -- (box.north west);
\draw[dep] (box.south east) -- (thr.north west);
\draw[dep] (bi) -- (win);
\draw[dep] (ex.north) -- ++(0,0.62) -| ([xshift=-6mm]win.north);
\draw[dep] (wm.south) -- ++(0,-3.45) -| ($(thr.east)+(0.35,0)$) -- (thr.east);
\end{tikzpicture}

%% file: reduction.tex
\section{Read-vector reductions and the supremum}\label{sm:reduction}
The following statements and proofs give the reductions used in Sections~\ref{sec:exact} and~\ref{sec:box}.
\subsection{Reliability as a support function}
\begin{proposition}[Reliability as a support function]\label{prop:cloud}
For the canonical rule with $(a,\theta)\in\Rep(g)$ and any gate family, let $C_z=\{\xi_I:I\in\cF_z(S)\}$ and
\[
M_z=\max_{I\in\cF_z(S)}\ip{\varsigma_za}{\xi_I}=h_{C_z}(\varsigma_za),\qquad s_z^H=(2g(z)-1)\bigl(H\ip az-\theta\bigr),
\]
with $\max\emptyset=-\infty$. For every realization, $\bigcap_z\{\sigma M_z<s_z^H\}\subseteq\{X=0\}\subseteq\bigcap_z\{\sigma M_z\le s_z^H\}$, and for a fixed representation the two outer events differ by an event of probability zero.
\end{proposition}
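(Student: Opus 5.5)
The argument is a pointwise rewriting of the error event, followed by a null-set check for the boundary. By \cref{thm:exact} we work on the representation $R_i=H+\sigma\xi_i$ for $i\in S$ and $R_j=\sigma\xi_j$ off $S$. Although the theorem is a distributional identity, $X$ is a measurable function of the read vector $R$ alone. So it suffices to prove the inclusions for the function of $(H,\xi)$ that computes $X$ from these reads, and the probability statement transfers through equality in law.

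For a canonical gate $\gamma=(I,g)$ of class $z=x_I$ with $w_\gamma=\sum_l a_le_{I_l}$, the score is $\ip{w_\gamma}{u}=\sum_la_lR_{I_l}=H\ip az+\sigma\ip a{\xi_I}$. This holds because the entries of $I$ that lie in $S$ are exactly those with $z_l=1$. Gate $\gamma$ outputs $\one\{\text{score}>\theta\}$, so it is wrong exactly when:
\begin{itemize}
\item $g(z)=1$ and $\text{score}\le\theta$, that is, $\sigma\ip{-a}{\xi_I}\ge H\ip az-\theta=s_z^H$;
\item $g(z)=0$ and $\text{score}>\theta$, that is, $\sigma\ip{a}{\xi_I}>\theta-H\ip az=s_z^H$.
\end{itemize}
In both cases the condition reads $\sigma\ip{\varsigma_za}{\xi_I}\gtrsim s_z^H$, with ``$\ge$'' or ``$>$'' depending on $g(z)$.

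Hence $\{X=0\}=\bigcap_z\bigcap_{I\in\cF_z(S)}\{\text{gate } I \text{ correct}\}$. Each gate-level correctness event lies between $\{\sigma\ip{\varsigma_za}{\xi_I}<s_z^H\}$ and $\{\sigma\ip{\varsigma_za}{\xi_I}\le s_z^H\}$. Taking the maximum over the finite set $\cF_z(S)$, and noting that empty classes give $M_z=-\infty$ so both conditions hold trivially, yields the two inclusions. The identity $M_z=h_{C_z}(\varsigma_za)$ is just the definition of the support function of a finite set.

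For the null-set claim, the difference of the outer events is contained in $\bigcup_z\bigcup_{I\in\cF_z(S)}\{\sigma\ip{\varsigma_za}{\xi_I}=s_z^H\}$. Condition on $S$ and on $H$; the family $\cF_z(S)$ depends only on $S$, so this union is finite. It therefore suffices to show that each event $\{\sigma\ip{\varsigma_za}{\xi_I}=s_z^H\}$ has conditional probability zero.

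This is the only step needing care. Given $H$, we have $\sigma>0$ almost surely, and $\ip a{\xi_I}$ is a centered Gaussian that is independent of $H$. So we need its variance to be positive. The variance is $\sum_{l:z_l=0}a_l^2+\Var\bigl(\sum_{l:z_l=1}a_l(Z_{I_l}-\bar Z)\bigr)$. The inactive part is positive unless $a$ vanishes on the inactive coordinates. The active part is $\norm{a_{\mathrm{act}}}^2-\bigl(\sum_{l:z_l=1} a_l\bigr)^2/k$, which is positive whenever $a_{\mathrm{act}}\ne0$ and $\abs z<k$, because the distinct active indices span at most $\abs z\le r<k$ coordinates. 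If the variance is zero then $a$ restricted to $I$ is zero. In that case the score is the constant $0$, and equality $0=s_z^H$ is impossible because $s_z^H=s_z(a,\theta)=-(2g(z)-1)\theta\neq0$ by the margin condition \eqref{eq:margin} of the representation. So in every case the conditional law is either a nondegenerate Gaussian, which has no atoms, or a constant that differs from the threshold. Integrating over $H$ and $S$ gives probability zero.
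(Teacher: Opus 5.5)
Your proof is correct and follows the paper's route. You rewrite each gate's correctness as $\sigma\ip{\varsigma_za}{\xi_I}<s_z^H$ or $\le s_z^H$ according to $g(z)$, take the maximum over the finite class, and remove ties by conditioning on $H$ so that the interference is Gaussian. The one difference is the degenerate case: you use $\abs z\le r<k$ (true under the standing assumption $k\to\infty$) to force $a=0$, a branch that is vacuous since a nonconstant $g$ has no representation with $a=0$. The paper needs no relation between $r$ and $k$: when the conditional variance vanishes (possible only if $\abs z=k$), the score is $H\ip az$, which has a density when $\ip az\ne0$ because $H$ does.
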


\begin{proof}
A gate of class $z$ is correct when $(2g(z)-1)(\ip a{R_I}-\theta)>0$, that is, when $\sigma\ip{\varsigma_za}{\xi_I}<s_z^H$, and it may also be correct when its score equals $\theta$; this gives the two inclusions. For a fixed representation such ties have probability zero: given $H$ the score is Gaussian, and when its conditional variance vanishes the score equals $H\ip az$, which has a density if $\ip az\ne0$ and equals $0\ne\theta$ otherwise, because $s_z>0$. Take the maximum over the class and intersect over the finitely many classes.
\end{proof}
\subsection{Proof of the exact one-gate law}
\begin{proof}
The coordinates of $(w,u)$ are independent pairs of centered Gaussians with variances $\norm a^2/d$ and $k/d$ and covariance $\ip az/d$. Hence $u=(\ip az/\norm a^2)w+v$ with $v\sim N\bigl(0,(k-\ip az^2/\norm a^2)I_d/d\bigr)$ independent of $w$, and $\ip wu=(\ip az/\norm a^2)\norm w^2+\ip wv$. Given $w$, the last term is centered Gaussian with variance $\norm w^2(k\norm a^2-\ip az^2)/(d\norm a^2)$. Put $J=\norm w^2/\norm a^2$. The formula for $\Exp X$ is linearity of expectation over gates, each gate contributing its class error probability with weight $\Prob(z_\gamma(S)=z)$.
\end{proof}
\subsection{Proof of the supremum law}
\begin{proof}
Work on the representation of \cref{thm:exact} and let $G\in\R^m$ collect $Z$ on $S$ and $W$ off $S$; it is $N(0,I_m)$ and independent of $H$. For a tuple of class $z$, $\ip a{\xi_I}=\ip a{G_I}-\bar Z\ip az$, so $M_z=T_z-\varsigma_z\bar Z\ip az$ with $T_z=\max_{I\in\cF_z(S)}\ip{\varsigma_za}{G_I}$. The map $G\mapsto T_z$ is $\norm a$-Lipschitz, because the coordinates of $G_I$ are distinct coordinates of $G$. By Gaussian concentration \citep{Borell75,TIS76},
\begin{equation}\label{eq:tis}
\Prob\bigl(\abs{T_z-\Gamma_z(S)}\ge y\mid S\bigr)\le2e^{-y^2/(2\norm a^2)}\qquad(y>0).
\end{equation}
Moreover $\abs{H-1}=O_\Prob(d^{-1/2})$, $\abs{\bar Z}=O_\Prob(k^{-1/2})$, $\sigma=\sigd\sqrt H$ and $s_z^H=s_z+(2g(z)-1)(H-1)\ip az$ with $\abs{\ip az}\le\sqrt r\norm a$. Because $(a,\theta)$ is fixed and $\sigd\to0$, every margin satisfies $s_z/(\sigd\norm a)\to\infty$. Choose $L_d\to\infty$ with $\sigd L_d\to0$ and $\delta_d\to0$ slowly, so that with probability tending to one $\abs{H-1}\le\delta_d$, $\abs{\bar Z}\le\delta_d$, and, by \eqref{eq:tis}, $T_z\le\Gamma_z(S)+L_d\norm a$ for all $z$.

On $\{d\ge(1+\varepsilon)d^*(S)\}$ every class has $\sigd\Gamma_z(S)_+\le s_z/\sqrt{1+\varepsilon}$, and then
\[
\sigma M_z\le(1+\delta_d)\Bigl(\frac{s_z}{\sqrt{1+\varepsilon}}+\sigd\norm a\,(L_d+\sqrt r\,\delta_d)\Bigr)<s_z-\delta_d\sqrt r\norm a\le s_z^H
\]
for all large $d$, because $\sigd\norm a(L_d+\sqrt r\delta_d)=o(s_z)$ and $\delta_d\norm a=o(s_z)$. On $\{d\le(1-\varepsilon)d^*(S)\}$ some class has $\sigd\Gamma_z(S)\ge s_z/\sqrt{1-\varepsilon}$, so $\Gamma_z(S)/\norm a\to\infty$ and \eqref{eq:tis} with $y=\tfrac\varepsilon4\Gamma_z(S)$ gives $T_z\ge(1-\tfrac\varepsilon4)\Gamma_z(S)$ with probability tending to one. For this class
\[
\sigma M_z\ge(1-\delta_d)\Bigl(\frac{(1-\varepsilon/4)\,s_z}{\sqrt{1-\varepsilon}}-\sigd\sqrt r\norm a\,\delta_d\Bigr)>s_z+\delta_d\sqrt r\norm a\ge s_z^H,
\]
since $(1-\varepsilon/4)/\sqrt{1-\varepsilon}>1$. There are $2^r$ classes, so both statements follow.
\end{proof}
\subsection{The box identity}
For the universal family the maximizing tuple can be written down. Group the positions of a class-$z$ gate by the pool they draw from ($p=1$ for the active features, $p=0$ for the inactive ones) and by the sign of their weight in the direction $b=\varsigma_za$.

\begin{lemma}[Box identity]\label{lem:boxid}
Let $\cF=\cU_r(g)$ and assume that each pool $p$ has at least as many features as the class has positions with $z_l=p$. For every $\xi\in\R^m$, $S$, $z$ and $b\in\R^r$,
\[
\max_{I\in\cF_z(S)}\ip b{\xi_I}=\sum_{p\in\{0,1\}}\ \sum_{\pm}\ \sum_{j=1}^{c^\pm_p}\abs b^{(p,\pm)}_{[j]}\,(\pm\xi)^{(p)}_{(j)},
\]
where $c^+_p$ and $c^-_p$ count the positions $l$ with $z_l=p$ and $b_l\ge0$, respectively $b_l<0$, $\abs b^{(p,\pm)}_{[1]}\ge\abs b^{(p,\pm)}_{[2]}\ge\cdots$ are their absolute weights in decreasing order, and $(\pm\xi)^{(p)}_{(1)}\ge(\pm\xi)^{(p)}_{(2)}\ge\cdots$ are the values $\pm\xi_i$, $i$ in pool $p$, in decreasing order.
\end{lemma}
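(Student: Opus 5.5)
The maximization splits by position groups, and within each group it is a rearrangement problem. Fix $S$, $z$, $b$ and $\xi$. A tuple $I\in\cF_z(S)$ of the universal family is exactly an injective assignment of positions to features: positions $l$ with $z_l=1$ go to distinct active features, and positions with $z_l=0$ go to distinct inactive features. The only constraint is injectivity within each pool. Positions in different pools never compete for the same feature, so the maximum of $\ip b{\xi_I}=\sum_lb_l\xi_{I_l}$ is the sum over $p\in\{0,1\}$ of separate maxima, one for each pool.

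Within pool $p$, write $b_l\xi_{I_l}=\abs{b_l}\,(\pm\xi_{I_l})$ with the sign of $b_l$. The problem is then to maximize
\[
\sum_{l:z_l=p,\,b_l\ge0}\abs{b_l}\,\xi_{I_l}+\sum_{l:z_l=p,\,b_l<0}\abs{b_l}\,(-\xi_{I_l})
\]
over injective maps from the $c^+_p+c^-_p$ positions into pool $p$.

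\textbf{Upper bound.} For any admissible assignment, the features used by the positive positions are $c^+_p$ distinct features. By the rearrangement inequality with nonnegative weights, their contribution is at most $\sum_j\abs b^{(p,+)}_{[j]}(\xi)^{(p)}_{(j)}$. The justification is that a nonnegative combination of $c$ distinct values, with sorted weights, is bounded by pairing sorted weights with the $c$ largest values, and the $j$-th largest chosen value is at most the $j$-th order statistic of the pool. The same bound holds for the negative positions with $-\xi$. Adding the two bounds gives the right side as an upper bound.

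\textbf{Attainment.} This is the step needing care, because the two sign groups may want the same feature. If the top $c^+_p$ values of $\xi$ and the top $c^-_p$ values of $-\xi$ (the bottom $c^-_p$ values of $\xi$) are disjoint, we assign greedily and equality holds. Disjointness holds whenever the pool has at least $c^+_p+c^-_p$ features. The lemma's hypothesis says "at least as many features as positions with $z_l=p$", which is exactly $c^+_p+c^-_p$. So we choose the top $c^+_p$ indices and the bottom $c^-_p$ indices of the sorted pool, which are disjoint sets, breaking ties by index. We then match the largest weights to the most extreme values. This achieves the bound.

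\textbf{Degenerate cases.} Zero weights are placed in the $+$ group, where they contribute nothing either way. A feature counted in both the top and bottom lists can only arise when the pool is too small, which the hypothesis excludes. Summing over $p$ gives the identity.
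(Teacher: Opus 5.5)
Your proof is correct and follows the paper's own argument: you split the sum into the four pool--sign groups, bound each group by the rearrangement inequality, and attain the bound by placing the $c^+_p$ largest and $c^-_p$ smallest values of each pool, which are disjoint because the pool has at least $c^+_p+c^-_p$ features. Your added remarks on index tie-breaking and on zero weights going to the $+$ group are harmless refinements of the same proof.
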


\begin{proof}
For any admissible tuple, split $\ip b{\xi_I}$ into the four groups. Within a group the features are distinct members of one pool, so by the rearrangement inequality the group sum is at most the displayed sum of products of decreasing weights with decreasing values of $\pm\xi$. The bound is attained by placing, in each pool, the $c^+_p$ largest values on the positions with $b_l\ge0$ and the $c^-_p$ smallest on those with $b_l<0$; these features are distinct because the pool has at least $c^+_p+c^-_p$ elements, which is the assumption.
\end{proof}

Each group sum involves at most $r$ extreme order statistics of one Gaussian sample, and each of these is $\lambda_p(1+o_\Prob(1))$. Hence the maximal projection of the class is $\sum_l\abs{a_l}\lambda_{z_l}=h_{\Boxz z}(a)$ to first order: the cloud fills the box. Because the relative error comes from finitely many order statistics that do not depend on $a$, the approximation is uniform over all representations at once.

\subsection{Computing the box program}
\begin{proposition}[Computing the box]\label{prop:boxprogram}
For each $\kappa\ge0$ the set of $(a,\theta)$ with $s_z(a,\theta)\ge\kappa\sum_l\abs{a_l}\beta^{1-z_l}$ for all $z$ is a convex polyhedral cone, so $v_g(\beta)^{-1/2}$ is the largest $\kappa$ for which this cone contains a point with $s_z\ge1$ for all $z$; after introducing $u_l\ge\pm a_l$ this is a linear feasibility problem, and $\kappa$ is found by bisection. If $g$ is symmetric, both $\Vbox$ and $\Vball$ are attained at $a=\one$ when $g$ is nondecreasing, as for $\THR^r_t$, and at $a=-\one$ when $g$ is nonincreasing; every symmetric threshold function is one of the two.
\end{proposition}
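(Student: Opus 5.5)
The proof has two parts: the convex-programming reformulation, and the symmetry reduction to equal weights.

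\emph{The linear program.} For fixed $z$, the margin $s_z(a,\theta)=(2g(z)-1)(\ip az-\theta)$ is linear in $(a,\theta)$. The map $a\mapsto\kappa\sum_l\abs{a_l}\beta^{1-z_l}$ is convex and positively homogeneous, because the weights $\beta^{1-z_l}$ are positive. So each constraint $s_z\ge\kappa\sum_l\abs{a_l}\beta^{1-z_l}$ says that a linear function dominates a convex homogeneous one. Such a set is a convex cone.

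To see that it is polyhedral, lift to $(a,u,\theta)$ with $u_l\ge a_l$ and $u_l\ge-a_l$, and impose $s_z\ge\kappa\sum_lu_l\beta^{1-z_l}$. At any feasible point we may lower each $u_l$ to $\abs{a_l}$ and stay feasible, since the coefficients are positive. Hence the projection of the lifted polyhedral cone onto $(a,\theta)$ is exactly the original set, and projections of polyhedral cones are polyhedral.

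By definition \eqref{eq:vbox}, $v_g(\beta)^{-1/2}=\sup_{\Rep(g)}\min_zs_z/\sum_l\abs{a_l}\beta^{1-z_l}$. Thus $\kappa\le v_g^{-1/2}$ exactly when some representation satisfies all the $\kappa$-constraints, up to the boundary case handled by closedness. Scale invariance lets us replace the open condition $s_z>0$ by the normalization $s_z\ge1$. Feasibility is monotone in $\kappa$, since a larger $\kappa$ gives a smaller cone, so bisection is valid. The supremum is attained because, after normalizing, the feasible region varies upper-semicontinuously in $\kappa$.

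\emph{Symmetry.} Let $g$ be symmetric. A permutation $\pi$ of the positions acts by $(a,\theta)\mapsto(a\circ\pi,\theta)$. This action maps classes $z\mapsto z\circ\pi$ while preserving $g(z)$ and $\abs z$. It also preserves $N_z$ for the universal family, since $N_z$ depends only on $\abs z$. Therefore both objective functions are invariant under the action. They are quasiconvex in $(a,\theta)$:
\begin{itemize}
\item the box objective is a maximum over $z$ of ratios of a convex numerator to a positive linear denominator;
\item the ball objective has the same form with numerator $\norm a$.
\end{itemize}
So their sublevel sets are convex cones, by the first part. Averaging a minimizer over the symmetric group gives $(\bar a\one,\theta)$, which is still a representation (margins are linear and stay positive) and has objective no larger than the minimizer's.

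It remains to show $\bar a\ne0$ and to identify its sign. If $g$ is nondecreasing and nonconstant, choose classes $z\le z'$ differing in one coordinate with $g(z)=0$ and $g(z')=1$. Summing their margins of the averaged representation gives $\bar a>0$; rescaling then gives $a=\one$. The nonincreasing case is symmetric and gives $a=-\one$.

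Finally, every symmetric threshold function is monotone in $\abs z$. A symmetric $g$ depends only on $\abs z$. A representing $(a,\theta)$ can be averaged in the same way, so $g(z)=\one\{c\abs z>\theta\}$ for some scalar $c$. This is nondecreasing when $c>0$ and nonincreasing when $c<0$; $c=0$ would make $g$ constant, which is excluded.

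The main subtlety is attainment: the infimum in $\Rep(g)$ is over an open set. I would handle it by normalizing $\sum_z s_z=1$ or $\norm a=1$, and checking that near-optimal points remain in the interior because the ratio blows up as any $s_z\to0$.
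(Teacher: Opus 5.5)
Your proposal is correct and follows the paper's proof. Each constraint is a linear function dominating a convex, piecewise-linear, homogeneous one, which gives a polyhedral cone. For symmetric $g$, averaging over permutations gives equal weights, which you justify via invariance and quasiconvexity of both objectives, and the sign of the common weight is forced by adding the positive margins across an edge $z\le z'$ where $g$ switches. You also supply details the paper leaves implicit: lift-and-project polyhedrality, monotonicity of feasibility in $\kappa$ for bisection, attainment, and why every symmetric threshold function is monotone. For attainment, rely on your final compact normalization ($\norm a=1$ with $\theta$ bounded) rather than the semicontinuity remark alone, because the slice $\{s_z\ge1\ \forall z\}$ of the cone is unbounded.
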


\begin{proof}
Each constraint says that a linear function minus $\kappa$ times a convex piecewise-linear function is nonnegative, which defines a convex polyhedral set; homogeneity makes it a cone. If $g$ is symmetric, permuting the coordinates of $a$ permutes the classes and preserves the constraint system of both programs, so averaging a feasible point over all permutations gives a feasible point with equal weights. Its margins are averages of margins over permutation orbits, hence positive, so its common weight $c$ is nonzero, with $c>0$ if $g$ is nondecreasing and $c<0$ if $g$ is nonincreasing; rescale to $a=\pm\one$.
\end{proof}
\subsection{Partial sharing}\label{sm:sharing}
For nonnegative weights, two gates of the same class that share a feature have positively correlated interference, and every additional shared feature raises the correlation. By the Sudakov--Fernique inequality \citep{Sudakov71,Fernique75} the expected supremum of a class can only fall below that of independent gates with the same variances, which is the ball. The universal family is the opposite extreme to disjoint gates: every tuple of features is a gate, and the supremum splits into a sum of independent maxima, one for each position. Between the two extremes the supremum depends on how much sharing there is, and a simple family already shows where the supremum leaves the ball.

The partial-sharing statement is given in \cref{prop:star}.

\noindent\emph{Proof sketch} (details in \cref{sm:star}). The class supremum is $\max_{i\in S}(Z_i+M_i)$ with $M_i=\max_{j\in C(i)}W_j$, a maximum of $k$ independent copies of a Gaussian plus a maximum of $D$ Gaussians. When $D\le k$, many active features have moderately large reads, and among them one finds a partner with an equally large read: the two coordinates split the budget $2\log(kD)$ evenly, as in the ball. When $D\ge k$ the shared coordinate saturates at $\lambda_1$ and the partner adds $\lambda_D$. \hfill$\square$

\medskip
Sharing therefore lowers the price of a class only once every active feature has more partners than there are active features. The two formulas agree at $D=k$. Disjoint partner sets require $kD\le n$; continued formally to $D=n$, where partner sets overlap as in the universal family, the second formula becomes the box value $\lambda_1+\lambda_0$ of \cref{thm:box}(iii). Partner sets drawn at random, as in a random regular graph of gates, satisfy the disjointness hypothesis with probability tending to one when $k^2D^2=o(m)$.

%% file: sm_ball.tex
\section{Proofs for the ball}\label{sm:ball}

Throughout, $\Phibar=1-\Phi$ is the standard normal tail, and we use the two Mills bounds
\begin{equation}\label{sm:eq:mills}
\Phibar(x)\le e^{-x^2/2}\quad(x\ge0),\qquad\Phibar(x)\ge\frac{x}{1+x^2}\,\frac{e^{-x^2/2}}{\sqrt{2\pi}}\quad(x>0),
\end{equation}
and the chi-square bounds of \citet{LaurentMassart00}: for $J\sim\chi^2_d/d$ and $0<\eta\le1$,
\begin{equation}\label{sm:eq:chi}
\Prob(\abs{J-1}\ge\eta)\le2e^{-d\eta^2/8},\qquad\Var J=2/d .
\end{equation}
For a representation $(a,\theta)$ of $g$, the bias lies strictly between the largest rejected and the smallest accepted clean value, and every clean value has modulus at most $\norm a_1\le\sqrt r\norm a$; hence
\begin{equation}\label{sm:eq:marginbound}
0<s_z(a,\theta)\le2\sqrt r\,\norm a\qquad\text{for every }z .
\end{equation}

\subsection{Proof of \texorpdfstring{\cref{thm:ball}}{the ball law}}
Fix a class $z$ and abbreviate $s=s_z(a,\theta)$, $c=\ip az$ and $v^2=k\norm a^2-c^2$. Since $c^2\le\abs z\,\norm a^2\le r\norm a^2$, we have $(k-r)\norm a^2\le v^2\le k\norm a^2$. By \cref{prop:onegate} and $\varsigma_z(c-\theta)=-s$, a gate of class $z$ errs exactly when $v\sqrt{J/d}\,G'\ge s-\varsigma_zc(J-1)$ with $G'=\varsigma_zG\sim N(0,1)$ independent of $J$, so
\begin{equation}\label{sm:eq:pz}
p_z=\Exp\,\Phibar\Bigl(\frac{s-\varsigma_zc(J-1)}{v\sqrt{J/d}}\Bigr).
\end{equation}
It suffices to prove each claim along a subsequence of an arbitrary subsequence.

\smallskip\noindent\emph{Upper bound.} Let $d\ge(1+\varepsilon)\Dball(a,\theta)$, so that $ds^2/(k\norm a^2)\ge(1+\varepsilon)\rho_z^2$ for every $z$. Put $\eta=s/(2\sqrt r\norm a\,k^{1/4})$, which is at most $k^{-1/4}$ by \eqref{sm:eq:marginbound}. On $\{\abs{J-1}\le\eta\}$ we have $\abs{c(J-1)}\le\sqrt r\norm a\eta=\tfrac12sk^{-1/4}$ and $v\sqrt{J/d}\le\norm a\sqrt{k(1+\eta)/d}$, so the argument of $\Phibar$ in \eqref{sm:eq:pz} is at least
\[
x_z=\frac{s(1-\tfrac12k^{-1/4})}{\norm a\sqrt{k(1+k^{-1/4})/d}},\qquad x_z^2\ge(1+\varepsilon)(1-3k^{-1/4})\rho_z^2 .
\]
By \eqref{sm:eq:mills} this part contributes at most $N_z^{-(1+\varepsilon)(1-3k^{-1/4})}\le N_z^{-1-\varepsilon/2}$ for large $k$. By \eqref{sm:eq:chi} the complement has probability at most
\[
2\exp\Bigl(-\frac{ds^2}{32r\norm a^2k^{1/2}}\Bigr)\le2\exp\Bigl(-\frac{(1+\varepsilon)k^{1/2}}{32r}\,\rho_z^2\Bigr)=2N_z^{-(1+\varepsilon)k^{1/2}/(16r)} .
\]
Hence $N_zp_z\le N_z^{-\varepsilon/2}+2N_z^{1-k^{1/2}/(16r)}\to0$ for every class, and $\Exp X=\sum_zN_zp_z\to0$ because there are $2^r$ classes.

\smallskip\noindent\emph{Lower bound.} Let $d\le(1-\varepsilon)\Dball(a,\theta)$ and pass to a subsequence along which one class $z$ attains the maximum in $\Dball$; then $x_0^2:=ds^2/(k\norm a^2)\le(1-\varepsilon)\rho_z^2$. Pass to a further subsequence along which either $x_0\to\infty$ or $x_0$ stays bounded.

If $x_0\to\infty$, put $\eta=s/(\sqrt r\norm a\log k)\le2/\log k$. By \eqref{sm:eq:marginbound}, $d\ge kx_0^2/(4r)\to\infty$, and $d\eta^2=kx_0^2/(r\log^2k)\to\infty$, so $\Prob(\abs{J-1}\le\eta)\to1$. On this event $\abs{c(J-1)}\le s/\log k$ and $v\sqrt{J/d}\ge\norm a\sqrt{(k-r)(1-\eta)/d}$, so the argument of $\Phibar$ is at most $x_1=x_0(1+1/\log k)\{(1-r/k)(1-\eta)\}^{-1/2}$, with $x_1^2\le(1-\varepsilon)(1+o(1))\rho_z^2$. The lower Mills bound gives $p_z\ge(1-o(1))N_z^{-(1-\varepsilon)-o(1)}$, and $N_zp_z\ge N_z^{\varepsilon-o(1)}\to\infty$.

If $x_0$ stays bounded, consider first subsequences with $d\to\infty$. By Chebyshev's inequality $\Prob(\abs{J-1}\le2\sqrt{2/d})\ge3/4$; on this event $\abs{c(J-1)}\le2\sqrt{2r}\norm a/\sqrt d$ and $v\sqrt{J/d}\ge\norm a\sqrt{(k-r)/(2d)}$ for $d\ge32$, so the argument of $\Phibar$ is at most $\sqrt2\,x_0\sqrt{k/(k-r)}+4\sqrt{r/(k-r)}$, which is bounded. Hence $p_z$ is bounded below and $N_zp_z\to\infty$. If instead $d$ stays bounded, then $\Prob(J\in[1/2,2])$ is bounded below, and on that event the argument of $\Phibar$ is at most $(s+\sqrt r\norm a)/(\norm a\sqrt{(k-r)/(2d)})\to0$ by \eqref{sm:eq:marginbound}; again $p_z$ is bounded below. In every case $\Exp X\ge N_zp_z\to\infty$. \hfill$\square$

\subsection{Proof of \texorpdfstring{\cref{thm:weighted}}{the weighted margin law}}
\emph{The program.} Write $L=\max_z\log N_z$. A fixed nonconstant threshold function has a representation with positive margins, which gives $\Vball\le C_gL$. Conversely, by \eqref{sm:eq:marginbound} every representation with $\norm a\le1$ has margins at most $2\sqrt r$, so $\Vball\ge L/(2r)$; thus $\Vball=\Theta_g(L)$. In the cone form \eqref{eq:socp} the bias may be restricted to $\abs\theta\le\sqrt r$ and $t$ to a bounded interval, so the feasible set is compact and the maximum $t_*>0$ is attained; at a maximizer all margins are positive. This minimizer of \eqref{eq:program} has $\Dball=k\Vball$, and the first claim is \cref{thm:ball}.

\smallskip\noindent\emph{Converse.} Fix a gate $\gamma$ with tuple $I$ and condition on $E_I$ and on the independent randomness of its rule; these determine $(w,\theta_\gamma)$. Write $q_z=N_z/\abs\cF$ for the probability that $\gamma$ has class $z$, which is the same for every gate of the family. If $w=0$ the output is constant and the gate errs with probability at least $\min_zq_z$. Otherwise rescale $(w,\theta_\gamma)$ so that $\norm w=1$, which does not change the output, and put $a=E_I^\top w$. Given the class $z$, the set $S\setminus I$ is a uniform $(k-\abs z)$-subset of the other features, whose columns are independent of $E_I$ and $w$; hence the score is $\ip az+\zeta$ with $\zeta\sim N(0,(k-\abs z)/d)$ given $E_I$, $w$ and the class. Only the marginal law of the class of one gate enters, so no exchangeability of the rules and no independence between gates is needed.

Pass to subsequences along which either $d\to\infty$ or $d$ stays bounded, and consider first $d\to\infty$. Let $\eta_d=d^{-1/4}$ and let $\mathcal G_\gamma=\{\norm{E_I^\top E_I-I_r}_{\rm op}\le\eta_d\}$. Gram concentration for a fixed number $r$ of Gaussian columns \citep[Theorem~4.6.1]{Vershynin18} gives $\Prob(\mathcal G_\gamma^c)\le C_re^{-c_rd\eta_d^2}\to0$. On $\mathcal G_\gamma$, $\norm a^2\le\norm{E_I^\top E_I}_{\rm op}\le1+\eta_d$; put $c=(1+\eta_d)^{1/2}$, so $\norm{a/c}\le1$.

If $(a/c,\theta_\gamma/c)$ does not represent $g$, some class $z$ has $s_z(a,\theta_\gamma)\le0$, and since $\zeta$ is symmetric the gate errs in that class with conditional probability at least $1/2$; its error probability is at least $\min_zq_z/2$. Otherwise, by the definition of $\Vball$ some class $z$ has $s_z(a/c,\theta_\gamma/c)^2\le2\log N_z/\Vball$, hence $s_z(a,\theta_\gamma)^2\le(1+\eta_d)2\log N_z/\Vball$. For $d\le(1-\varepsilon)k\Vball$ the squared margin-to-noise ratio of this class is
\[
\frac{d\,s_z(a,\theta_\gamma)^2}{k-\abs z}\le(1-\varepsilon)(1+\eta_d)\frac{k}{k-r}\,2\log N_z=2(1-\varepsilon)(1+o(1))\log N_z,
\]
uniformly in the rule and in the class. By \eqref{sm:eq:mills} its conditional error probability is at least $N_z^{-(1-\varepsilon)-o(1)}$ (and at least a constant if the ratio stays bounded), so the gate errs with probability at least $q_zN_z^{-(1-\varepsilon)-o(1)}=N_z^{\varepsilon-o(1)}/\abs\cF$. In both cases, on $\mathcal G_\gamma$,
\[
\Prob(\gamma\text{ errs}\mid E_I,\text{rule})\ge\frac{(\min_zN_z)^{\varepsilon-o(1)}}{\abs\cF}.
\]
Integrating over $\mathcal G_\gamma$, whose probability tends to one uniformly in $\gamma$ when $d\to\infty$, and summing over the $\abs\cF$ gates gives $\Exp X\ge(\min_zN_z)^{\varepsilon-o(1)}$. No event simultaneous over all gates is used.

It remains to treat subsequences along which $d\le D$ stays bounded. Let $\Omega_\gamma=\{\norm{E_I}_{\rm op}^2\le2r\}$; since $\Exp\norm{E_I}_F^2=r$, Markov's inequality gives $\Prob(\Omega_\gamma)\ge1/2$. Because $g$ is nonconstant, some edge of the cube joins a class $z$ with $g(z)=1$ to a class $z'$ with $g(z')=0$, differing in one position $l$. With $\norm w=1$ and $a=E_I^\top w$, on $\Omega_\gamma$ we have $\abs{a_l}\le\sqrt{2r}$. Put $u=\theta_\gamma-\ip az$. If $u\ge0$, class $z$ errs with conditional probability $\Prob(\zeta\le u)\ge1/2$. If $u<0$, class $z'$ errs when its noise exceeds $\theta_\gamma-\ip a{z'}\le u+\abs{a_l}<\sqrt{2r}$, which has probability at least $\Phibar(\sqrt{2rD/(k-r)})\to1/2$. Hence each gate errs with probability at least $(\tfrac14-o(1))\min_zq_z$, and $\Exp X\ge(\tfrac14-o(1))\min_zN_z\to\infty$. \hfill$\square$

\begin{remark}[Adapting to the local Gram matrix]\label{sm:rem:dual}
A gate-local rule may use the dual frame $w=E_I(E_I^\top E_I)^{-1}a$, whose clean weights equal $a$ exactly. By \cref{thm:weighted} this cannot lower the first-order expected-error threshold below $k\Vball$, and by \cref{thm:ball} the canonical rule already attains it: the Gram fluctuations of a canonical probe are absorbed exactly by the chi-square variable $J$ of \cref{prop:onegate}.
\end{remark}

%% file: sm_box.tex
\section{Proofs for the box}\label{sm:box}

\subsection{Extremes of a Gaussian sample}
\begin{lemma}\label{sm:lem:extremes}
Let $Z_1,\dots,Z_q$ be independent standard Gaussians, $q\to\infty$, and fix $j\ge1$. The $j$ largest values divided by $\sqrt{2\log q}$ converge to $1$ in probability, and so do the $j$ smallest values divided by $-\sqrt{2\log q}$. Their expectations, divided by $\pm\sqrt{2\log q}$, also converge to $1$.
\end{lemma}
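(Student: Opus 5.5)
The plan is to bound the distribution function of the order statistics directly, using binomial counts, and then pass to expectations by uniform integrability. By symmetry ($Z\mapsto-Z$) it suffices to treat the $j$ largest values. Write $\lambda=\sqrt{2\log q}$, fix $\delta\in(0,1)$, and let $Z_{(1)}\ge\cdots\ge Z_{(j)}$ be the top order statistics. Since $Z_{(1)}\ge Z_{(i)}\ge Z_{(j)}$ for $i\le j$, it is enough to show two things: $\Prob(Z_{(1)}>(1+\delta)\lambda)\to0$ and $\Prob(Z_{(j)}<(1-\delta)\lambda)\to0$.

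\emph{Upper side.} By the union bound and \eqref{sm:eq:mills},
\[
\Prob\bigl(Z_{(1)}>(1+\delta)\lambda\bigr)\le q\,\Phibar\bigl((1+\delta)\lambda\bigr)\le q\cdot q^{-(1+\delta)^2}\to0 .
\]

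\emph{Lower side.} Let $N$ count the indices with $Z_i\ge(1-\delta)\lambda$. Then $N\sim\Bin(q,p)$ with $p=\Phibar((1-\delta)\lambda)$. The lower Mills bound in \eqref{sm:eq:mills} gives
\[
qp\ge q^{1-(1-\delta)^2}/\bigl(C\sqrt{\log q}\bigr)\to\infty .
\]
Chebyshev's inequality then shows $\Prob(N<j)\le\Prob\bigl(N<qp/2\bigr)\le 4/(qp)\to0$ once $qp\ge2j$. The event $\{Z_{(j)}<(1-\delta)\lambda\}$ is exactly $\{N<j\}$, which completes the convergence in probability.

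\emph{Expectations.} This is the only mildly delicate step, since convergence in probability of $Z_{(i)}/\lambda$ to $1$ gives convergence of means once the family $\{Z_{(i)}/\lambda\}$ is uniformly integrable. I would bound second moments:
\[
\Exp Z_{(i)}^2\le\Exp\max_l Z_l^2\le 2\log q+C\sqrt{\log q}+C ,
\]
which follows from integrating the union-bound tail $\Prob(\max_l Z_l^2>t)\le\min\bigl(1,2q\,e^{-t/2}\bigr)$. Hence $\sup_q\Exp(Z_{(i)}/\lambda)^2<\infty$. Uniform integrability then yields $\Exp Z_{(i)}/\lambda\to1$ for each fixed $i\le j$, and by symmetry $\Exp Z_{[i]}/(-\lambda)\to1$ for the smallest values.
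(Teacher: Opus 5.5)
Your proposal is correct and follows the paper's proof essentially step by step: a union bound with the upper Mills bound for the top, a binomial count with diverging mean (which you make explicit via Chebyshev) for the lower side, and $L^2$-boundedness of $Z_{(i)}/\lambda$ for uniform integrability. The one variation is that you bound $\Exp\max_l Z_l^2$ by integrating the union-bound tail $\min\{1,2qe^{-t/2}\}$, whereas the paper uses the maximal inequality together with the Gaussian Poincar\'e inequality; your route is equally valid and slightly more self-contained.
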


\begin{proof}
Fix $\delta\in(0,1)$ and put $\ell=\sqrt{2\log q}$. By \eqref{sm:eq:mills}, $q\Phibar((1+\delta)\ell)\le q^{-2\delta-\delta^2}\to0$, so by a union bound no value exceeds $(1+\delta)\ell$ with probability tending to one. Also $q\Phibar((1-\delta)\ell)\ge q^{\delta}/(C\ell)\to\infty$, so the number of values above $(1-\delta)\ell$ is binomial with a diverging mean and is at least $j$ with probability tending to one. Apply the same argument to $-Z$. For the expectations, each of these order statistics is at most $\max_i\abs{Z_i}$ in modulus, and $\Exp\max_iZ_i^2=(\Exp\max_i\abs{Z_i})^2+\Var(\max_i\abs{Z_i})\le2\log(2q)+1$ by the maximal inequality \citep[Section~2.5]{BoucheronLugosiMassart13} and the Gaussian Poincar\'e inequality for the $1$-Lipschitz function $\max_i\abs{Z_i}$; so the ratios are bounded in $L^2$ and hence uniformly integrable.
\end{proof}

\subsection{Proof of \texorpdfstring{\cref{prop:star}}{the partial-sharing proposition}}\label{sm:star}
Let $c$ be the value claimed in the proposition, that is, $c=2\sqrt{\log(kD)}$ if $D\le k$ and $c=\lambda_1+\lambda_D$ if $D\ge k$. In both cases $2\log(kD)\le c^2\le4\log(kD)$. Since $a=(1,1)$ and $G$ has the law of $-G$, we may take $\varsigma_{10}=1$; the class supremum is then $V^*=\max_{i\in S}V_i$ with $V_i=Z_i+M_i$ and $M_i=\max_{j\in C(i)}W_j$, and $\Gamma_{10}(S)=\Exp V^*$; the $2k$ variables $Z_i,M_i$ are independent, and each $M_i$ is the maximum of $D$ independent standard Gaussians. For $x\ge0$ and $u\in[0,x]$ put
\[
F_x(u)=\frac{u^2}2+\Bigl(\frac{(x-u)^2}2-\log D\Bigr)_+ .
\]

\smallskip\noindent\emph{The exponent.} On $\{u\le x-\lambda_D\}$, $F_x(u)=u^2/2+(x-u)^2/2-\log D$, and on $\{u\ge x-\lambda_D\}$, $F_x(u)=u^2/2$. Hence $\min_uF_x(u)=x^2/4-\log D$ if $x\ge2\lambda_D$ and $(x-\lambda_D)^2/2$ if $\lambda_D\le x\le2\lambda_D$. If $D\le k$ then $c\ge2\lambda_D$ and $c^2/4-\log D=\log k$; if $D\ge k$ then $\lambda_D\le c\le2\lambda_D$ and $(c-\lambda_D)^2/2=\log k$. So $\min_uF_c(u)=\log k$ in both cases. For $s\ge1$, substituting $u=su'$ and using $s^2\log D\ge\log D$ gives $F_{sx}(su')\ge s^2F_x(u')$, hence $\min_uF_{sc}(u)\ge s^2\log k$. Maximizing $v_1+v_2$ over the set $K_D$ of \cref{prop:star} gives the same two cases, so $h_{K_D}(1,1)=c$.

\smallskip\noindent\emph{Upper bound.} Fix $\delta\in(0,1)$, let $x=(1+\delta)c$, $L=\lceil2(1+\delta)/\delta\rceil$ and $u_l=lx/L$. If $V_i>x$, then $M_i>x$ and $Z_i\le0$, or $Z_i\in(u_l,u_{l+1}]$ and $M_i>x-u_{l+1}$ for some $l<L$, or $Z_i>x$. With $\Phibar(v)\le e^{-v^2/2}$ and $\Prob(M_i>v)\le\min\{1,De^{-v^2/2}\}$ for $v\ge0$, and $u_{l+1}=u_l+x/L$,
\[
k\Prob(V_i>x)\le2kD\,e^{-x^2/2}+k\sum_{l<L}e^{-F_{x-x/L}(u_l)}\le2(kD)^{1-(1+\delta)^2}+L\,k^{1-(1+\delta/2)^2},
\]
because $x^2\ge2(1+\delta)^2\log(kD)$ and $x-x/L\ge(1+\delta/2)c$. Both terms tend to zero, so $\Prob(V^*>(1+\delta)c)\to0$.

\smallskip\noindent\emph{Lower bound.} If $D\ge k$, let $i^*$ maximize $Z_i$. By \cref{sm:lem:extremes}, $Z_{i^*}\ge(1-\delta)\lambda_1$ with probability tending to one, and $M_{i^*}$ is independent of $(Z_i)$ and at least $(1-\delta)\lambda_D$ with probability tending to one because $D\ge k\to\infty$; hence $V^*\ge(1-\delta)c$. If $D\le k$, let $y=(1-\delta)\sqrt{\log(kD)}$ and let $N$ count the $i$ with $Z_i\ge y$ and $M_i\ge y$, a binomial variable with success probability $p=\Phibar(y)\Prob(M_1\ge y)$. Since $\Prob(M_1\ge y)=1-(1-\Phibar(y))^D\ge\tfrac12\min\{1,D\Phibar(y)\}$, $\Phibar(y)\ge e^{-y^2/2}/(C(1+y))$ and $kD\le k^2$,
\[
kp\ge\frac12\min\Bigl\{\frac{(kD)^{1-(1-\delta)^2}}{C^2(1+y)^2},\ \frac{k^{1-(1-\delta)^2}}{C(1+y)}\Bigr\}\to\infty,
\]
so $\Prob(N\ge1)\to1$, and on $\{N\ge1\}$, $V^*\ge2y=(1-\delta)c$.

\smallskip\noindent\emph{Expectations.} $\abs{V^*}\le\max_i\abs{Z_i}+\max_j\abs{W_j}$, the second maximum over the $kD$ partner reads, so the bound in the proof of \cref{sm:lem:extremes} gives $\Exp(V^*)^2\le4\log(2k)+4\log(2kD)+4=O(c^2)$. By Cauchy--Schwarz, $\Exp(V^*-(1+\delta)c)_+$ and $\Exp((1-\delta)c-V^*)_+$ are at most $O(c)$ times the square roots of $\Prob(V^*>(1+\delta)c)$ and $\Prob(V^*<(1-\delta)c)$, which tend to zero. Hence $(1-\delta)c-o(c)\le\Gamma_{10}(S)\le(1+\delta)c+o(c)$ for every $\delta$. \hfill$\square$

\subsection{Proof of \texorpdfstring{\cref{thm:box}}{the box law}}
Work in the representation of \cref{thm:exact} and let $G\in\R^m$ collect $Z$ on $S$ and $W$ off $S$. By \cref{prop:cloud}, for every realization and every representation $(a,\theta)$ the strict criterion
\begin{equation}\label{sm:eq:boxcrit}
\sigd\sqrt H\,\bigl(T_z(a)-\varsigma_z\bar Z\ip az\bigr)<s_z(a,\theta)+(2g(z)-1)(H-1)\ip az\qquad\text{for all }z,
\end{equation}
where $T_z(a)=\max_{I\in\cF_z(S)}\ip{\varsigma_za}{G_I}$, implies $X_{a,\theta}=0$, and $X_{a,\theta}=0$ implies \eqref{sm:eq:boxcrit} with $<$ replaced by $\le$. The success statements use the first implication and the failure statements only the second, so no almost-sure qualification over the uncountably many representations is needed. By \cref{lem:boxid} applied to $\xi=G$ and $b=\varsigma_za$, $T_z(a)$ is a sum over the positions of $\abs{a_l}$ times one of the $r$ largest values of $G$ or of $-G$ in the pool of that position, the values being assigned by rearrangement. Both pools have at least $r$ elements for large $k$, so \cref{lem:boxid} applies to every class.

Fix $\delta\in(0,1/8)$ and let $\mathcal E_\delta$ be the event on which (i) the $r$ largest and the $r$ smallest values of $(Z_i)_{i\le k}$ lie within a factor $1\pm\delta$ of $\pm\lambda_1$, and those of $(W_j)_{j\le n}$ within a factor $1\pm\delta$ of $\pm\lambda_0$; (ii) $\abs{\bar Z}\le\delta\lambda_1$; (iii) $\abs{H-1}\le\delta\sigd\lambda_1$; and (iv) $\abs{\sqrt H-1}\le\delta$. By \cref{sm:lem:extremes}, (i) and (ii) hold with probability tending to one. For (iii), Chebyshev's inequality gives $\Prob(\abs{H-1}>\delta\sigd\lambda_1)\le(2/d)/(\delta^2k\lambda_1^2/d)=2/(\delta^2k\lambda_1^2)\to0$, for every $d$. Condition (iv) holds with probability tending to one when $d\to\infty$; we return to bounded $d$ at the end.

On $\mathcal E_\delta$, for every $a\ne0$ and every class, (i) gives $T_z(a)\in[1-\delta,1+\delta]\,h_{\Boxz z}(a)$. Since $\lambda_0\ge\lambda_1$ eventually, $h_{\Boxz z}(a)\ge\lambda_1\norm a_1\ge\lambda_1\abs{\ip az}$, so by (ii) $\abs{\bar Z\ip az}\le\delta h_{\Boxz z}(a)$ and by (iii) $\abs{(H-1)\ip az}\le\delta\sigd h_{\Boxz z}(a)$. With (iv), the non-strict form of \eqref{sm:eq:boxcrit} implies $s_z\ge(1-\delta)(1-2\delta)\sigd h_{\Boxz z}(a)-\delta\sigd h_{\Boxz z}(a)\ge(1-4\delta)\sigd h_{\Boxz z}(a)$ for all $z$, and it is implied by $s_z\ge(1+5\delta)\sigd h_{\Boxz z}(a)$ for all $z$, since then the left side is at most $(1+\delta)(1+2\delta)\sigd h_{\Boxz z}(a)$ and the right side at least $(1+4\delta)\sigd h_{\Boxz z}(a)$. These two implications hold simultaneously for all representations, because $\mathcal E_\delta$ does not depend on $(a,\theta)$.

\smallskip\noindent\emph{(i)} If $d\ge(1+\varepsilon)\Dbox(a,\theta)$, every class has $s_z\ge\sqrt{1+\varepsilon}\,\sigd h_{\Boxz z}(a)$, which exceeds $(1+5\delta)\sigd h_{\Boxz z}(a)$ once $\delta$ is small; so $X_{a,\theta}=0$ on $\mathcal E_\delta$. Here $d\ge\Dbox\ge4k\lambda_1^2\to\infty$ (see below), so (iv) is available. If $d\le(1-\varepsilon)\Dbox(a,\theta)$, some class has $s_z\le\sqrt{1-\varepsilon}\,\sigd h_{\Boxz z}(a)<(1-4\delta)\sigd h_{\Boxz z}(a)$, so $X_{a,\theta}\ne0$ on $\mathcal E_\delta$.

\smallskip\noindent\emph{(ii)} If $d\le(1-\varepsilon)k\Vbox$, every representation has $\Dbox(a,\theta)\ge k\Vbox\ge d/(1-\varepsilon)$, so the previous argument applies to all of them at once on $\mathcal E_\delta$.

\smallskip\noindent\emph{(iii)} By \cref{lem:boxid}, $\Gamma_z=\Exp T_z(a)$ is a combination of expectations of the $r$ largest values of $\pm G$ in the two pools with coefficients $\abs{a_l}$, and \cref{sm:lem:extremes} gives $\Gamma_z=h_{\Boxz z}(a)(1+o(1))$ uniformly in $a$.

\smallskip\noindent\emph{Small dimensions.} Because $g$ is nonconstant, some edge of the cube joins a class $z$ with $g(z)=1$ to a class $z'$ with $g(z')=0$; they differ in one position $l$, so $s_z+s_{z'}=\abs{a_l}\le\norm a_1$ and $s^H_z+s^H_{z'}=H\ip a{z-z'}\le H\norm a_1$. Since $h_{\Boxz z}(a)\ge\lambda_1\norm a_1$ for every class, the first identity gives $\Dbox(a,\theta)\ge4k\lambda_1^2$ for every representation; in particular $d\to\infty$ in the first half of (i). Adding the two non-strict criteria for $z$ and $z'$ shows that success requires $\sigd\sqrt H\,(M_z+M_{z'})\le H\norm a_1$, with $M_z=T_z(a)-\varsigma_z\bar Z\ip az$. On the event (i)--(ii), $M_z+M_{z'}\ge2(1-2\delta)\lambda_1\norm a_1$, so success requires $H\ge4(1-2\delta)^2\lambda_1^2k/d$. Along subsequences with bounded $d$ this has probability tending to zero, simultaneously for all representations, which completes the failure statements in (i) and (ii) without (iv). \hfill$\square$

\begin{remark}[What the converse covers]\label{sm:rem:oracle}
The event $\mathcal E_\delta$ does not depend on $(a,\theta)$, and on it every common representation fails. The converse in \cref{thm:box}(ii) therefore covers any common representation, including one chosen after observing $(A_i)$ and $(B_j)$, for instance the sample-adaptive bias recorded as an oracle in \cref{sm:experiments}. It does not cover rules in which different gates use different weights; for those, \cref{thm:weighted} bounds only the expected count from below, and no lower bound on their reliability threshold is known.
\end{remark}

%% file: sm_gates.tex
\section{Proofs for threshold gates}\label{sm:gates}

\subsection{Proof of \texorpdfstring{\cref{thm:thr}}{the threshold-gate theorem}}
By \cref{prop:boxprogram} it suffices, for the optimized thresholds, to consider $a=\one$; for the midpoint thresholds $a=\one$ by definition. With $a=\one$, all classes with $j$ active inputs have clean score $j$, box support $h_j=j\lambda_1+(r-j)\lambda_0$ and ball support $\sqrt r\,\rho_z$ with $\rho_z^2=2\log N_z=\rho_j^2(1+o(1))$, where $\rho_j^2=j\lambda_1^2+(r-j)\lambda_0^2$. A bias represents $\THR^r_t$ if and only if $\theta\in(t-1,t)$, and then the margins are $s_j=\theta-j$ for $j\le t-1$ and $s_j=j-\theta$ for $j\ge t$. Since $\lambda_0\ge\lambda_1$, both $h_j$ and $\rho_j$ decrease in $j$.

\smallskip\noindent\emph{Only the classes $t-1$ and $t$ bind.} For $j=t-1-i$ with $i\ge1$, $s_j=s_{t-1}+i$ and $h_j=h_{t-1}+i(\lambda_0-\lambda_1)$, so $h_j/s_j\le\max\{h_{t-1}/s_{t-1},\lambda_0-\lambda_1\}$. For $j=t+i$, $s_j=s_t+i$ and $h_j\le h_t$, so $h_j/s_j\le h_t/s_t$. Since $s_{t-1}<1$ and $h_{t-1}\ge\lambda_0>\lambda_0-\lambda_1$, the first maximum is $h_{t-1}/s_{t-1}$: the classes below $t-1$ never exceed the ratio of class $t-1$. For the ball the same argument uses $\rho_j^2=\rho_{t-1}^2+i(\lambda_0^2-\lambda_1^2)\le\rho_{t-1}^2+i\lambda_0^2\le(1+i)\rho_{t-1}^2$, so $\rho_j/s_j\le\sqrt{1+i}\,\rho_{t-1}/(s_{t-1}+i)\le\rho_{t-1}/(\sqrt2\,s_{t-1})$ because $s_{t-1}\le1$; the classes above $t$ satisfy $\rho_j/s_j\le\rho_t/(2s_t)$. These strict factors absorb the $1+o(1)$ difference between $\rho_z$ and $\rho_{\abs z}$.

\smallskip\noindent\emph{(a)} Equalizing $h_{t-1}/(\theta-t+1)=h_t/(t-\theta)$ gives $\theta-t+1=h_{t-1}/(h_{t-1}+h_t)$ and the common value $h_{t-1}+h_t=(2t-1)\lambda_1+(2r-2t+1)\lambda_0$; any other bias increases one of the two ratios. Hence $\Vbox=((2t-1)\lambda_1+(2r-2t+1)\lambda_0)^2$ exactly. For the half-slack bias $\theta_d=t-\tfrac12+\tfrac12\sigd(\lhat_0-\lhat_1)$ put $\hat h_j=j\lhat_1+(r-j)\lhat_0=h_j(1+o(1))$; one checks $s_{t-1}-\sigd\hat h_{t-1}=s_t-\sigd\hat h_t=\tfrac12(1-\sigd(\hat h_{t-1}+\hat h_t))$. If $d\ge(1+\varepsilon)k\Vbox$, then $\sigd(\hat h_{t-1}+\hat h_t)\le(1+\varepsilon)^{-1/2}(1+o(1))$, so both binding classes, and by the first step all classes, satisfy $s_j\ge(1+\eta_\varepsilon)\sigd h_j$ for some $\eta_\varepsilon>0$ and all large $k$; that is, $\Dbox(\one,\theta_d)\le d/(1+\eta_\varepsilon)^2$, and \cref{thm:box}(i) applies. The same argument applies with $\lambda_p$ in place of $\lhat_p$. The converse is \cref{thm:box}(ii).

\smallskip\noindent\emph{(b)} At $\theta=t-\tfrac12$ both binding margins equal $\tfrac12$ and $h_{t-1}\ge h_t$, so $\Dbox=k(2h_{t-1})^2$, and \cref{thm:box}(i) gives the threshold.

\smallskip\noindent\emph{(c)} At the midpoint the binding ball ratio is $2\sqrt r\rho_{t-1}$, so $\Dball=4rk\rho_{t-1}^2(1+o(1))$ and \cref{thm:ball} gives the threshold.

\smallskip\noindent\emph{(d)} Equalizing $\sqrt r\rho_{t-1}/(\theta-t+1)=\sqrt r\rho_t/(t-\theta)$ gives $\theta-t+1=\rho_{t-1}/(\rho_{t-1}+\rho_t)$ and $\Vball=r(\rho_{t-1}+\rho_t)^2(1+o(1))$. \Cref{thm:ball} at this bias and \cref{thm:weighted} give achievability and the gate-local converse. \hfill$\square$

\smallskip
For $\beta\to1$ every $h_j$ equals $r\lambda_1(1+o(1))$ and every $\rho_j$ equals $\sqrt r\lambda_1(1+o(1))$, which gives the first statement of \cref{cor:markov}; dividing (a)--(d) by $\lambda_0^2$ and letting $\beta\to\infty$ gives the second. For \cref{cor:recovery}, $r=t=1$ gives $h_0+h_1=\rho_0+\rho_1=\lambda_0+\lambda_1$, and the ratio $((2r-1+\beta)/(1+\beta))^2$ has derivative of the sign of $2-2r\le0$ in $\beta$.

\subsection{Unweighted margins of symmetric gates}
For a representation $(a,\theta)$ write $\tau=\min_zs_z(a,\theta)$ and $\kappa=\tau^2/(2\norm a^2)$, the \emph{unweighted margin rate}; its supremum $\kappa^*(g)$ over representations is the price of $g$ when all classes are equally numerous. For $\THR^r_t$, the representation $a=\one$, $\theta=t-\tfrac12$ has $\tau=\tfrac12$ and $\kappa=1/(8r)$. It is optimal: for each position $j$ choose a pattern of weight $t-1$ that excludes $j$; turning $j$ on changes the output, so $a_j=s_z+s_{z+e_j}\ge2\tau$, and $\norm a^2\ge4r\tau^2$. Thus conjunction, disjunction and majority have the same unweighted rate $1/(8r)$; their weighted prices differ because their class sizes differ.

\subsection{Comparators}
The comparator at threshold $T\in\{1,\dots,2^r-1\}$ is $\mathrm{COMP}^T_r(z)=\one\{\sum_{l=1}^r2^{l-1}z_l\ge T\}$. Its unweighted rate depends on $T$, because $T$ selects a Boolean function and different functions have different best representations.

\emph{Binary weights realize every threshold.} With $a=(1,2,\dots,2^{r-1})$, $\norm a^2=(4^r-1)/3$. The subset sums of distinct powers of two are all integers $0,\dots,2^r-1$, so the bias $T-\tfrac12$ has $\tau=\tfrac12$, and
\begin{equation}\label{sm:eq:cmp-rate}
\kappa=\frac{(1/2)^2}{2(4^r-1)/3}=\frac{3}{8(4^r-1)}=\Theta(4^{-r}).
\end{equation}
This is the rate of a representation, not of the gate: the dictator $T=2^{r-1}$ is the function $z_r$ of one input, with $\kappa^*=1/8$, and for the hardest thresholds the best representation is not binary.

\emph{The hardest thresholds have Fibonacci weights.} Let $F_1=F_2=1$ and $F_{i+1}=F_i+F_{i-1}$. The alternating threshold $T_r=(2^{r+1}+(-1)^r)/3$ and its Boolean dual attain
\begin{equation}\label{sm:eq:cmp-fib}
\kappa^*(\mathrm{COMP}^{T_r}_r)=\frac1{8F_rF_{r+1}}=\Theta\Bigl(\Bigl(\frac{3+\sqrt5}2\Bigr)^{-r}\Bigr),
\end{equation}
The representation $a_i=2F_i$ has minimum margin one and $\norm a^2=4\sum_iF_i^2=4F_rF_{r+1}$. The following induction proves its optimality and that no other comparator threshold has a smaller optimal rate.

\emph{Every comparator has a bounded Fibonacci representation.} Trailing zero bits of $T$ make the lowest inputs irrelevant, so assume the lowest bit of $T$ is one. Starting from the least significant one bit, the comparator is a nested monotone chain $h_1=z_1$, $h_i=z_i\circ_ih_{i-1}$, where $\circ_i$ is AND for a one bit of $T$ and OR for a zero bit. Let $P_i$ be the minimum accepted weighted sum of the prefix, $Q_i$ the maximum rejected sum, and $A_i$ the sum of the prefix weights; maintain $P_i-Q_i=1$ and write $D_i=A_i-P_i$, $E_i=Q_i$. Start with $a_1=1$, $D_1=E_1=0$. An AND step sets $a_i=D_{i-1}+1$, after which $D_i=D_{i-1}$ and $E_i=A_{i-1}$; an OR step sets $a_i=E_{i-1}+1$, after which $D_i=A_{i-1}$ and $E_i=E_{i-1}$. These identities follow from the two values of $z_i$: AND has $P_i=a_i+P_{i-1}$ and $Q_i=\max\{A_{i-1},a_i+Q_{i-1}\}$, and OR has $P_i=\min\{a_i,P_{i-1}\}$ and $Q_i=Q_{i-1}$. In the coordinates $U=D+1$, $V=E+1$ each step is $(U,V)\mapsto(U,U+V)$ or $(U+V,V)$, and the new weight is the unchanged coordinate. By induction the two sorted coordinates after $i-1$ steps are at most $(F_i,F_{i+1})$; before the $i$th weight is assigned there have been $i-2$ updates, so that weight is at most $F_i$. The bias $(P_r+Q_r)/2$ has margin $\tfrac12$, giving rate at least $1/(8\sum_iF_i^2)=1/(8F_rF_{r+1})$ for every threshold; discarded irrelevant inputs only decrease the norm.

\emph{No representation beats the alternating chain.} Take any representation of an alternating chain with minimum margin $\tau>0$. Every input is essential and monotone, so fixing the other inputs across an output-changing edge shows that its weight is at least $2\tau$. Fixing the higher inputs to their pass-through values, each prefix inherits a separating threshold with gap at least $2\tau$. With $P_i,Q_i,A_i,D_i,E_i$ as above, an AND step forces $a_i\ge D_{i-1}+2\tau$ and $E_i\ge E_{i-1}+a_i$ while $D_i=D_{i-1}$; an OR step forces $a_i\ge E_{i-1}+2\tau$ and $D_i\ge D_{i-1}+a_i$ while $E_i=E_{i-1}$. Let $\psi_i$ be the driver $D_{i-1}$ or $E_{i-1}$ used at step $i$. Alternation gives $\psi_2=0$, $\psi_3\ge2\tau$ and $\psi_i\ge\psi_{i-2}+a_{i-1}$ for $i\ge4$, and simultaneous induction yields $\psi_i\ge2\tau(F_i-1)$ and $a_i\ge2\tau F_i$. Hence $\norm a^2\ge4\tau^2F_rF_{r+1}$, which is the reverse bound in \eqref{sm:eq:cmp-fib}. The binary rate $\Theta(4^{-r})$ is therefore only achievable, and it overstates the dimension cost of the hardest comparators by a factor $\Theta((6-2\sqrt5)^r)$; the base $(3+\sqrt5)/2$ is the square of the golden ratio.

These constructions belong to the classical theme of weight geometry in threshold logic \citep{Muroga71,Hastad94}; here they show that for an asymmetric gate the choice of representation, not only of the bias, enters the price. Exhaustive finite checks are recorded in \cref{sm:numerics}.

%% file: sm_window.tex
\section{The critical window}\label{sm:window}

Throughout this section $\cF$ is the universal family of pairwise conjunctions, $k\to\infty$, $n=m-k$ and $n/k\to\infty$. We use the representation of \cref{thm:exact}, write $\alpha_i=A_i-1$ for the centered active reads, and use the quantile levels $\lhat_1=\Phibar^{-1}(1/k)$ and $\lhat_0=\Phibar^{-1}(1/n)$ with $\beta=\sqrt{\log n/\log k}$. By Mills' ratio, $\Phibar^{-1}(1/q)^2=2\log q-\log(4\pi\log q)+o(1)$ as $q\to\infty$; applied to $q=k$ and $q=n$ this gives $\lhat_1=\lambda_1(1+o(1))$ and $\lhat_0/\lhat_1=\beta(1+o(1))$.

\subsection{Normalization}
For independent standard Gaussians, Mills' ratio gives the classical normalization of maxima \citep{LeadbetterLindgrenRootzen1983}:
\[
k\Phibar(\lhat_1+y/\lhat_1)\to e^{-y},\qquad n\Phibar(\lhat_0+y/\lhat_0)\to e^{-y},
\]
so $\lhat_1(\max_iZ_i-\lhat_1)\Rightarrow G_A$ and $\lhat_0(\max_jW_j-\lhat_0)\Rightarrow G_B$, independent standard Gumbel variables. On either window of \cref{thm:window}, $d\asymp k(\lhat_1+\lhat_0)^2$, and the exact representation contributes only negligible perturbations in units of $\sigd/\lhat_1$:
\begin{equation}\label{sm:eq:window-errors}
\frac{\lhat_1(H-1)}{\sigd}=O_\Prob\Bigl(\frac{\lhat_1}{\sqrt k}\Bigr),\qquad\lhat_1\bar Z=O_\Prob\Bigl(\frac{\lhat_1}{\sqrt k}\Bigr),\qquad\lhat_1(\lhat_1+\lhat_0)(\sqrt H-1)=O_\Prob\Bigl(\frac{\lhat_1}{\sqrt k}\Bigr).
\end{equation}
The centers \eqref{eq:centers} use the scale $\sigd=\sqrt{k/d}$ of the Gaussian part $\sigma Z_i$ of a read; using the variance $(k+1)/d$ of an active read instead changes them by $O(\lhat_1^2/k)=o(1)$ in these units. Since $\lhat_1^2\sim2\log k$, all these errors vanish without any restriction on $\log n/k$, even when $\lhat_0/\lhat_1\to\infty$.

\subsection{The midpoint profile}
The one-active class succeeds when $\max_iA_i+\max_jB_j\le3/2$, that is, when
\[
\max_iZ_i+\max_jW_j\le\frac{1/2-(H-1)}{\sigd\sqrt H}+\bar Z .
\]
At $d=d_0+wx$ with $w=8k(\lhat_1+\lhat_0)/\lhat_1=8k(1+\beta)(1+o(1))$, the right side equals $\lhat_1+\lhat_0+x/\lhat_1+o_\Prob(1/\lhat_1)$ by \eqref{sm:eq:window-errors}. Multiplying the slack by $\lhat_1$, the success event of this class becomes
\[
G_A+\frac{\lhat_1}{\lhat_0}G_B\le x+o_\Prob(1).
\]
For $\beta\to\beta_0<\infty$ its probability tends to $\Prob(G_A+G_B/\beta_0\le x)=\Exp\exp(-e^{\beta_0(G_A-x)})$; for $\beta\to\infty$ the second fluctuation vanishes and the limit is the Gumbel distribution function. The convergence is uniform on compact $x$-intervals, because the limits are continuous and the errors in \eqref{sm:eq:window-errors} do not depend on $x$.

The two other classes do not enter. The smallest both-active score is $2-2\sigd\lhat_1+O_\Prob(\sigd/\lhat_1)$, while the midpoint balance is $\sigd(\lhat_1+\lhat_0)=\tfrac12+O(\sigd/\lhat_1)$; so the extra slack of this class, in units of $\sigd/\lhat_1$, is of order $\lhat_1(\lhat_0-\lhat_1)$, which tends to infinity because $\lhat_0^2-\lhat_1^2\sim2\log(n/k)\to\infty$ when $\lhat_0/\lhat_1\to1$, and trivially otherwise. The largest zero-active score is $2\sigd\lhat_0+o_\Prob(1)\le1+o_\Prob(1)<3/2$. This proves \cref{thm:window}(i).

The change of variables $\tau=e^{-G_A}$ gives $\Psi_\beta(x)=\int_0^\infty\exp(-\tau-\tau^{-\beta}e^{-\beta x})\,d\tau$. At $\beta=1$ the identity $\int_0^\infty e^{-\tau-c/\tau}\,d\tau=2\sqrt cK_1(2\sqrt c)$ gives the Bessel form, dominated convergence gives the Gumbel limit as $\beta\to\infty$, and minimizing $\tau+\tau^{-\beta}e^{-\beta x}$ shows $\log\Psi_\beta(x)\asymp-e^{-x\beta/(1+\beta)}$ as $x\to-\infty$ for fixed $\beta$.

\subsection{Conditional firing counts}
Given the active columns, the inactive reads are independent $N(0,\sigma^2)$ variables. Let $X_{\rm abs}$ be the number of inactive features that form at least one wrong one-active gate at the midpoint. Then, with $M=\max_i\alpha_i$,
\[
X_{\rm abs}\mid E_S\sim\Bin(n,q_S),\qquad q_S=\Phibar\bigl((1/2-M)/\sigma\bigr),\qquad\Lambda=nq_S .
\]
An inactive feature can create several gate errors, so $X_{\rm abs}\le X$ and in general $X_{\rm abs}\ne X$.

\begin{proposition}[Poisson approximation]\label{sm:prop:poisson}
Along window sequences with $x=(d-d_0)/w\to x_0$ and $\beta\to\beta_0\in[1,\infty)$,
\[
\Exp\,d_{TV}\bigl(\mathcal L(X_{\rm abs}\mid E_S),\ \mathrm{Poisson}(\Lambda)\bigr)\le\Exp\min\{1,nq_S^2\}\to0,
\]
and $\Lambda\Rightarrow e^{\beta_0(G_A-x_0)}$.
\end{proposition}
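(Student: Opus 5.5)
The proof has two parts: a conditional Poisson bound, and a distributional limit for $\Lambda=nq_S$ read off from the extreme-value normalization of the previous subsection.

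\emph{Total variation.} Condition on $E_S$, equivalently on $u$ and hence on $(H,\alpha)$. By \cref{thm:exact}, given $E_S$ the inactive reads are i.i.d.\ $N(0,\sigma^2)$, and $X_{\rm abs}$ counts the inactive features $j$ whose read exceeds $1/2-M$, the gap left by the largest active read at bias $3/2$. So $X_{\rm abs}\mid E_S\sim\Bin(n,q_S)$. Le Cam's inequality gives $d_{TV}(\Bin(n,q),\mathrm{Poisson}(nq))\le nq^2$, and total variation never exceeds one; hence the conditional distance is at most $\min\{1,nq_S^2\}$. Taking expectations gives the displayed inequality.

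\emph{Limit of $\Lambda$.} In the midpoint analysis, $(1/2-M)/\sigma=\lhat_0+\lhat_1^{-1}\bigl(x-\lhat_1(\max_iZ_i-\lhat_1)\bigr)+o_\Prob(1/\lhat_1)$. Here we use $\alpha_i=H-1+\sigma(Z_i-\bar Z)$ and absorb the errors of \eqref{sm:eq:window-errors}. Write $G_A^{(k)}=\lhat_1(\max Z_i-\lhat_1)\Rightarrow G_A$. Then
\[
\frac{1/2-M}{\sigma}=\lhat_0+\frac{y_k}{\lhat_0},\qquad y_k=\frac{\lhat_0}{\lhat_1}\bigl(x-G_A^{(k)}\bigr)+o_\Prob(1),
\]
where the $o_\Prob(1)$ term survives multiplication by $\lhat_0/\lhat_1\to\beta_0<\infty$. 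We need the normalization $n\Phibar(\lhat_0+y/\lhat_0)\to e^{-y}$ to hold uniformly for $y$ in compacts; this follows from Mills' ratio. Since $y_k\Rightarrow\beta_0(x_0-G_A)$ is tight, we get $\Lambda=e^{-y_k}(1+o_\Prob(1))\Rightarrow e^{\beta_0(G_A-x_0)}$ by the continuous mapping theorem.

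\emph{Vanishing of the bound.} Since $nq_S^2=\Lambda^2/n$ and $\Lambda$ is tight, we have $\Lambda^2/n\to0$ in probability as $n\to\infty$. The quantity $\min\{1,\Lambda^2/n\}$ is bounded by one, so bounded convergence gives $\Exp\min\{1,nq_S^2\}\to0$.

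The main obstacle is the uniformity in the second step. Both $\lhat_0$ and $\lhat_1$ diverge, so the perturbation must be tracked at scale $1/\lhat_0$ inside $\Phibar$, and the errors must stay $o_\Prob(1)$ after multiplication by $\lhat_0/\lhat_1$. This is exactly where $\beta_0<\infty$ is used. One also needs the bias shift coming from $H-1$ and $\bar Z$, and the replacement of $\sigma$ by $\sigd$, to be of relative order $o(1/(\lhat_0\lhat_1))$ in the argument; \eqref{sm:eq:window-errors} supplies this, and the remaining check is that these bounds hold.
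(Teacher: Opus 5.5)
Your proposal is correct and follows essentially the same route as the paper's proof. Both arguments apply Le Cam's binomial--Poisson bound conditionally on $E_S$, insert the midpoint expansion $(1/2-M)/\sigma=\lhat_0+y_k/\lhat_0$ into the uniform normalization $n\Phibar(\lhat_0+y/\lhat_0)\to e^{-y}$, and use tightness of $\Lambda$ with bounded convergence to get $\Exp\min\{1,nq_S^2\}\to0$. Your version is slightly more explicit than the paper's: you work with the finite-$k$ statistic $G_A^{(k)}=\lhat_1(\max_iZ_i-\lhat_1)$, and you note that the $o_\Prob(1/\lhat_1)$ error stays negligible after multiplication by $\lhat_0/\lhat_1$ only because $\beta_0<\infty$.
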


\begin{proof}
The bound is the binomial--Poisson estimate of Le Cam type \citep{ArratiaGoldsteinGordon1989,BarbourHolstJanson1992}. By the computation of the previous subsection, $(1/2-M)/\sigma=\lhat_0+\beta_0(x_0-G_A)/\lhat_0+o_\Prob(1/\lhat_0)$, so $\Lambda=n\Phibar\bigl(\lhat_0+\beta_0(x_0-G_A)/\lhat_0+o_\Prob(1/\lhat_0)\bigr)\Rightarrow e^{\beta_0(G_A-x_0)}$. Since $\Lambda$ is tight and $n\to\infty$, $nq_S^2=\Lambda^2/n\to0$ in probability, and bounded convergence gives the limit of the expectation.
\end{proof}

The limit intensity $e^{\beta_0(G_A-x_0)}$ has infinite mean for $\beta_0\ge1$, which is the statement used at the end of \cref{sec:window}.

\subsection{Two-sided extremes}
Standardize the active extremes by $T(y)=\Phibar^{-1}(e^{-y}/k)$: let $\max_i\alpha_i=\sigd T(Y^+)$, and let the two smallest be $-\sigd T(Y_1)$ and $-\sigd T(Y_2)$ with $Y_1\ge Y_2$.

\begin{lemma}[Asymptotic independence of the two tails]\label{sm:lem:twosided}
Jointly, $(Y^+,Y_1,Y_2)\Rightarrow(G_A,-\log\omega_1,-\log(\omega_1+\omega_2))$, where $G_A$ is standard Gumbel, $\omega_1,\omega_2$ are unit exponentials, and the three are independent; $\omega_1<\omega_1+\omega_2$ are the first two points of a unit Poisson process. In particular $Y_1+Y_2\Rightarrow-\log[\omega_1(\omega_1+\omega_2)]$, whose distribution function is
\begin{equation}\label{sm:eq:Fplus}
F_+(x)=e^{-\sqrt c}+\int_0^{\sqrt c}e^{-c/s}\,ds,\qquad c=e^{-x},
\end{equation}
with right tail $1-F_+(x)=(\tfrac12+o(1))\,x\,e^{-x}$ as $x\to\infty$.
\end{lemma}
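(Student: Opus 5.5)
Write $\alpha_i=H-1+\sigma(Z_i-\bar Z)$, so $\alpha_i-\alpha_l=\sigma(Z_i-Z_l)$. Since $H$ and $\bar Z$ shift all active reads by the same amount, they change $\sigd T(\cdot)$ only by an additive error. By \eqref{sm:eq:window-errors} this error is $o_\Prob(\sigd/\lhat_1)$, and it leaves the ratio $\sigma/\sigd=\sqrt H$ equal to $1+o_\Prob(1/\lhat_1^2)$. Because $T'(y)\approx1/\lhat_1$ near the relevant levels, an additive error of $o(\sigd/\lhat_1)$ in a read moves $Y$ by $o(1)$. It therefore suffices to prove the statement with $\alpha_i$ replaced by $\sigd Z_i$, i.e.\ for i.i.d.\ standard Gaussians $Z_1,\dots,Z_k$ with $\max Z_i=T(Y^+)$ and the two smallest values $-T(Y_1),-T(Y_2)$. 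I will do this reduction first, with a short uniform-continuity argument on compacts of $y$.

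For the Gaussian sample I will use the uniform transform. Put $U_i=\Phibar(Z_i)$, so $k\Phibar(T(y))=e^{-y}$. Then $e^{-Y^+}=k\min_iU_i$, and by symmetry, with $V_i=\Phi(Z_i)=1-U_i$, we get $e^{-Y_1}=k V_{(1)}$ and $e^{-Y_2}=kV_{(2)}$. Here $V_{(1)},V_{(2)}$ are the two smallest values of $V$, which correspond to the two largest values of $U$. The claim thus becomes: for $k$ i.i.d.\ uniforms, the rescaled bottom order statistic $k U_{(1)}$ and the rescaled top two gaps $k(1-U_{(k)}),k(1-U_{(k-1)})$ converge jointly to independent $(\omega_0,\omega_1,\omega_1+\omega_2)$ with unit exponentials. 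I will prove this by computing the joint survival function directly. For fixed $a,b_1<b_2$,
\[
\Prob\bigl(kU_{(1)}>a,\ k V_{(1)}>b_1\ \text{or}\ldots\bigr)
\]
is a multinomial count of how many points fall in $[0,a/k]$, $[1-b_2/k,1-b_1/k]$ and $[1-b_1/k,1]$. These disjoint intervals have lengths of order $1/k$, so the counts converge to independent Poisson variables with means $a$, $b_2-b_1$ and $b_1$ (Poisson limit of the multinomial). This gives independence of the two ends, and it shows that the top end is the first two points of a unit Poisson process. Since $G_A=-\log\omega_0$ is standard Gumbel, this establishes the joint convergence, and the continuous mapping theorem gives $Y_1+Y_2\Rightarrow-\log[\omega_1(\omega_1+\omega_2)]$.

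The formula for $F_+$ is a direct computation. We have $F_+(x)=\Prob(\omega_1(\omega_1+\omega_2)\ge c)$ with $c=e^{-x}$. Condition on $\omega_1=s$. If $s^2\ge c$ the event is automatic, which contributes $e^{-\sqrt c}$. If $s<\sqrt c$ the event requires $\omega_2\ge c/s-s$, which has probability $e^{-c/s+s}$; integrating against the density $e^{-s}\,ds$ on $(0,\sqrt c)$ gives $\int_0^{\sqrt c}e^{-c/s}ds$. For the right tail, take $c\to0$ and write $1-F_+=1-e^{-\sqrt c}-\int_0^{\sqrt c}e^{-c/s}ds$. Substituting $s=\sqrt c\,t$ turns this into $\sqrt c\int_0^1(1-e^{-\sqrt c/t}\cdots)$. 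It is easier, though, to write $1-F_+=\int_0^{\sqrt c}e^{-s}(1-e^{-(c/s-s)})ds$ and expand near $s$ of order $c$. The main contribution is $\int_0^{\sqrt c}(1-e^{-c/s})ds\approx c\int_{c}^{\sqrt c}ds/s\approx\tfrac12c\log(1/c)$, which is $\tfrac12xe^{-x}$. I expect the main obstacle to be bookkeeping rather than depth: the perturbation step must show that the common shift and the scale $\sqrt H$ are negligible on the $1/\lhat_1$ scale uniformly, and the tail asymptotic must control the error terms of size $O(c)$ against $c\log(1/c)$.
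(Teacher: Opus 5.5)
Your proposal is correct and is essentially the paper's argument. The paper also gets joint convergence from the Poisson limit of the counts of extreme observations in finitely many disjoint upper- and lower-tail intervals. You make those interval probabilities exact through $U_i=\Phibar(Z_i)$, where the paper uses Mills' ratio; your display should read $\{kU_{(1)}>a,\ kV_{(1)}>b_1,\ kV_{(2)}>b_2\}$. The paper then transfers the limit to the centered reads via \eqref{sm:eq:window-errors}, computes $F_+$ by conditioning on $\omega_1$, and obtains the tail by splitting $\int_0^{\sqrt c}(1-e^{-c/s})\,ds$ at $s=c$; you do the reduction to i.i.d.\ Gaussians first, which is only a change of order.
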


\begin{proof}
For the independent Gaussians $(Z_i)$ of \cref{thm:exact}, partition the upper and lower tails into finitely many disjoint extreme-level intervals. An observation belongs to at most one interval, and the interval probabilities are $(c_j+o(1))/k$ by Mills' ratio. The joint probability generating function of the counts is $[1+\sum_j(c_j+o(1))(t_j-1)/k]^k\to\exp(\sum_jc_j(t_j-1))$, so the upper and lower point processes converge jointly to independent Poisson processes with intensity $e^{-y}\,dy$. Their first upper point and first two lower points have the stated laws. The perturbations \eqref{sm:eq:window-errors} are $o_\Prob(1)$ on both extreme scales, so the convergence transfers to $(\alpha_i)$, and $T(y)=\lhat_1+y/\lhat_1+o(1/\lhat_1)$ uniformly on bounded $y$-intervals. Conditionally on $\omega_1=s$, the event $s(s+\omega_2)\ge c$ has probability $e^{-(c/s-s)}$ for $s<\sqrt c$ and one otherwise, which integrates to \eqref{sm:eq:Fplus}. Finally $1-F_+(x)=1-e^{-\sqrt c}-\sqrt c+\int_0^{\sqrt c}(1-e^{-c/s})\,ds$; the first three terms are $O(c)$, and splitting the integral at $s=c$ shows that it is $\tfrac12c\log(1/c)+O(c)$. Finite-sample upper and lower extremes can be dependent; only their limits are independent.
\end{proof}

\subsection{The optimal common bias}
At the optimal bias two classes bind, and their extremes come from opposite tails of the same active sample; \cref{sm:lem:twosided} decouples them.

\begin{proof}[Proof of \cref{thm:window}(ii)]
Use the slack coordinates
\[
x_-=\frac{\lhat_1}\sigd\bigl(\theta-1-\sigd(\lhat_1+\lhat_0)\bigr),\qquad x_+=\frac{\lhat_1}\sigd\bigl(2-\theta-2\sigd\lhat_1\bigr),
\]
for the one-active and both-active classes. For bounded $(x_-,x_+)$, \cref{sm:lem:twosided} and the previous subsections give the joint success limit $\Psi_\beta(x_-)F_+(x_+)$: the upper active extreme and the bottom pair are independent in the limit, and the inactive sample is independent of both. Their total slack is $x_-+x_+=\lhat_1(\sqrt{d/k}-3\lhat_1-\lhat_0)=x+o(1)$ at $d=d_0^{\rm opt}+2k(3+\beta)x$. The zero-active class is negligible by \cref{sm:lem:neither}. Maximizing over the deterministic bias is therefore the optimization over $y=x_-$. To pass the supremum through the limit, first restrict $x_-$ to a compact interval: if $x_-\to-\infty$ the one-active class fails with probability tending to one, and if $x_-\to+\infty$ with bounded total slack the both-active class does. Tightness of the limiting extremes makes these bounds uniform, and on compact intervals the continuous joint distribution functions converge uniformly.
\end{proof}

\begin{lemma}[The zero-active class stays slack]\label{sm:lem:neither}
On the optimal window, for biases with bounded slack coordinates, $\Prob(B_{(1)}+B_{(2)}>\theta)\to0$, where $B_{(1)}\ge B_{(2)}$ are the two largest inactive reads.
\end{lemma}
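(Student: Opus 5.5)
The approach is to show that, on the optimal window, the zero-active threshold $\theta$ exceeds the sum of the two largest inactive reads by a margin of order $\sigd\lambda_0$ with room to spare. This slack is of the same order as the fluctuation scale, so no fine Gumbel analysis is needed, only first-order extremes. First I pin down $\theta$. Bounded slack coordinates mean $x_-=\frac{\lhat_1}{\sigd}(\theta-1-\sigd(\lhat_1+\lhat_0))=O(1)$, so
\[
\theta=1+\sigd(\lhat_1+\lhat_0)+O(\sigd/\lhat_1).
\]
At $d=d_0^{\rm opt}+2k(3+\beta)x$ we have $\sigd(3\lhat_1+\lhat_0)=1+O(1/\lhat_1^2)$. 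Hence $1=\sigd(3\lhat_1+\lhat_0)(1+o(1))$, which gives
\[
\theta=\sigd(4\lhat_1+2\lhat_0)(1+o(1)).
\]

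Next I bound the two largest inactive reads. By \cref{thm:exact}, $B_j=\sigma W_j$ with $\sigma=\sigd\sqrt H$. By \cref{sm:lem:extremes} (with $j=2$), $B_{(1)}+B_{(2)}=2\sigd\sqrt H\,\lhat_0(1+o_\Prob(1))$. By \eqref{sm:eq:window-errors}, $\sqrt H=1+o_\Prob(1)$. Thus
\[
B_{(1)}+B_{(2)}=2\sigd\lhat_0(1+o_\Prob(1)).
\]

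Then I compare the two. Since $\lhat_0\ge\lhat_1$ eventually, the gap is
\[
\theta-(B_{(1)}+B_{(2)})=\sigd\bigl(4\lhat_1+2\lhat_0\,o(1)+o_\Prob(\lhat_0)\bigr).
\]
Here lies the main obstacle: when $\beta\to\infty$ the $o_\Prob(\lhat_0)$ error from the inactive extremes can swamp the $4\lhat_1$ slack. First-order ratios are therefore too crude, and the errors must be made additive. For the inactive maxima I will use the sharper tail bound
\[
\Prob\bigl(\max_j W_j>\lhat_0+t/\lhat_0\bigr)\le n\Phibar(\lhat_0+t/\lhat_0)\to e^{-t},
\]
so that $B_{(1)}+B_{(2)}\le 2\sigma(\lhat_0+t/\lhat_0)$ with probability at least $1-e^{-t}-o(1)$. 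The factor $\sigma=\sigd(1+O_\Prob(d^{-1/2}))$ then adds only $O_\Prob(\sigd\lhat_0/\sqrt d)=o_\Prob(\sigd)$, because $\lhat_0/\sqrt d\le 1/\sqrt k$.

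On the $\theta$ side I use the exact center $\sigd(3\lhat_1+\lhat_0)=1-x/(\ldots)$, whose deviation is $O(\sigd/\lhat_1)$ additively. With these additive errors the slack is
\[
\sigd\bigl(4\lhat_1-2t/\lhat_0-O(1/\lhat_1)\bigr)-o_\Prob(\sigd).
\]
This is positive with probability at least $1-e^{-t}-o(1)$, and letting $t\to\infty$ slowly gives $\Prob(B_{(1)}+B_{(2)}>\theta)\to0$.
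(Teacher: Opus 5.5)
Your final argument is correct and is essentially the paper's proof. You pin $\theta-2\sigd\lhat_0=4\sigd\lhat_1+O(\sigd/\lhat_1)$ from the bounded slack coordinate $x_-$ and the window centre, bound the two largest inactive reads additively by $2\sigd\lhat_0+O_\Prob(\sigd/\lhat_0)+o_\Prob(\sigd)$, and observe that the gap diverges in units of $\sigd/\lhat_1$ even when $\beta\to\infty$.

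You should delete the opening claim of a slack "of order $\sigd\lambda_0$" and the first multiplicative pass. As you yourself observe, the slack is only $4\sigd\lhat_1$, and it is the additive Gumbel-tail bound that makes the proof work.
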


\begin{proof}
At the center, $1=\sigd(3\lhat_1+\lhat_0)+o(\sigd/\lhat_1)$ and $\theta=1+\sigd(\lhat_1+\lhat_0)+O(\sigd/\lhat_1)$, so $\theta-2\sigd\lhat_0=4\sigd\lhat_1+O(\sigd/\lhat_1)$ uniformly on compact windows. The two largest inactive reads sum to $2\sigd\lhat_0+O_\Prob(\sigd/\lhat_0)+o_\Prob(\sigd/\lhat_1)$. The gap is of order $\lhat_1^2\to\infty$ in units of $\sigd/\lhat_1$, also when $\lhat_0/\lhat_1\to\infty$.
\end{proof}

The half-slack bias of \cref{thm:thr}(a) is $\theta=\tfrac32+\tfrac12\sigd(\lhat_0-\lhat_1)$, which gives $x_-=x_+=x/2$ and hence $\Psi_\beta(x/2)F_+(x/2)$; this need not equal the supremum. With $\lambda_p$ in place of $\lhat_p$ the split becomes $x_--x_+=\lhat_1[(\lambda_0-\lambda_1)-(\lhat_0-\lhat_1)]=(\log\beta)/\beta-\tfrac12(1-1/\beta)\log(4\pi\log k)+o(1)$, which tends to $-\infty$ when $\beta\to\beta_0>1$; that bias has the same first-order threshold but not this profile. The optimal window is narrower than the midpoint window by $w/w_{\rm opt}=4(1+\beta)/(3+\beta)\in[2,4)$ for every $\beta\ge1$. The profile concerns a deterministic common bias; it does not optimize gate-specific weights, and it does not determine the law of the full count $X$, which would require the number of active features attached to each firing inactive feature.

%% file: extensions.tex
\section{Extensions of the model}\label{sm:extensions}

The box law has three ingredients: the support is random and independent of the dictionary, the dictionary is Gaussian, and one layer of gates reads a stored input. We change each in turn, and then drop the linear dictionary altogether. Proofs and the precise statements of \cref{sec:composition} are in \cref{sm:beyond}.

\subsection{Adversarial supports}
If the support may be chosen after the dictionary, the cloud of a class is no longer a product of two samples: an adversary can place $k-1$ inactive columns that all align with one probe, and the interference grows linearly in $k$ instead of as $\sqrt k$.

The statement is in \cref{prop:uniform}.

For recovery by one threshold layer, \citet[Theorem~12]{GKP26} prove the analogous $\Omega(k^2\log(m/k)/\log k)$ bound for arbitrary linear encoders when $k<\sqrt m$, and for Gaussian dictionaries \citet[Theorem~8 and Proposition~6]{IvanitskiyJKM26} place the largest sparsity that is recoverable on every support between the orders $(d/\log m)^{1/2}$ and $d^{1/2}$. The factor $k$ between typical and uniform correctness is the price of letting the support depend on the dictionary; random and worst-case supports differ in the same way for sparse regression \citep{CandesPlan09}.

\subsection{Other entry laws}
The ball needs only one-gate tails, and for the midpoint pairwise network these are universal. Let the entries of $\sqrt d\,E$ be independent with a fixed centered, unit-variance, sub-Gaussian law, and let $q_{k,d}$ be the probability that a midpoint pairwise read with one active input exceeds $3/2$.

The statement is in \cref{thm:universality}.

The proof tilts independent coordinate rows exponentially. \Cref{sm:universality} also computes the next term, $\log(q_{k,d}/\Phibar(x_K))=c_\nu d/k^2+\cdots$ with $x_K=\sqrt{d/[8(k-1)]}$, in which the entry law enters through its skewness and excess kurtosis, and treats sparse Rademacher entries. The theorem concerns the midpoint pairwise network. The box is different: it rests on the exact joint law of \cref{thm:exact}, which has no counterpart for general entries, and whether the reliability threshold is universal is open (\cref{sec:discussion}).

\subsection{Composition and unrestricted encodings}
Writing outputs for a later layer costs dimensions of its own: pairwise conjunction has $\binom k2$ active outputs. For feedforward threshold circuits in which every wire has its own Gaussian direction and every active output is written once into a shared stream, \cref{thm:writebudget} shows that $d\ge(1+\varepsilon)\kappa_{\min}^{-1}K\log Q$ suffices, where $Q$ is the number of reads, $K$ the largest number of active directions in the stream before a read and $2\kappa_{\min}$ the smallest squared clean margin of a unit-norm read. Its proof is a union bound over an ideal execution, that is, a ball argument. Without the linear dictionary, order $k$ dimensions suffice for all pairwise conjunctions and, when $\log k=o(\log m)$, are necessary for polynomially many threshold units: counting sign patterns \citep{Cover65} gives $d\ge\bigl(k\log(m/k)+\log(1-\delta)\bigr)/(1+\log T)$ for $T$ units correct on a fraction $1-\delta$ of supports (\cref{prop:counting}), and the coefficients of $\prod_{l\in S}(y-l)^2$, the polynomial behind the neighborliness of cyclic polytopes \citep[Theorem~15.2]{Barvinok2010}, form a $2k$-dimensional code with large coefficients and small normalized margins (\cref{prop:polycode}). The $\Theta(k\log m)$ scale of this paper therefore belongs to accessible computation from a random linear dictionary rather than to the task itself.

%% file: sm_beyond.tex
\section{Statements and proofs for Section~\ref{sec:beyond}}\label{sm:beyond}

\subsection{Uniform correctness}\label{sm:uniform}
\begin{proof}[Proof of \cref{prop:uniform}]
For the upper bound, chi-square tail bounds and a union bound give $\max_i\abs{\norm{e_i}^2-1}\le\delta$ with probability tending to one for fixed small $\delta>0$, since $d\ge Ck^2\log m$. Conditionally on one column and its norm bound, the inner product with another column is Gaussian, so a second union bound gives $\max_{i\ne j}\abs{\ip{e_i}{e_j}}\le C_0\sqrt{\log m/d}$. For any support of size at most $k$, the score of any gate differs from its clean value $0$, $1$ or $2$ by at most $2\delta+2kC_0\sqrt{\log m/d}$. Choosing $\delta<1/8$ and then $C$ large makes this less than $1/2$ on one dictionary event that controls every support and every gate.

For the lower bound fix the gate $(1,2)$ and let $w=e_1+e_2$. As $d\to\infty$, $\norm w^2\to2$ and $\ip w{e_1}\to1$ in probability. Conditionally on these two columns, the alignments $\ip w{e_j}$, $j\ge3$, are independent $N(0,\norm w^2/d)$ variables. At the level $(1-\delta)\sqrt{2\log(m/k)}$ in standard units, Mills' bound makes the expected number of exceedances among the $m-2$ alignments much larger than $k$, and Chernoff's inequality makes fewer than $k-1$ exceedances unlikely. Let the support consist of feature $1$ and the $k-1$ best aligned features other than feature $2$. The gate $(1,2)$ then has one active input and score at least
\[
1+o_\Prob(1)+(1-o_\Prob(1))(k-1)\norm w\sqrt{2\log(m/k)/d}.
\]
Since $\log(m/k)\sim\log m$, a sufficiently small $c$ makes this exceed $3/2$ with probability tending to one when $d\le ck^2\log m$. The support is allowed to depend on the dictionary because the claim is uniform over supports.
\end{proof}

The upper bound is an existence statement for the canonical network and the lower bound concerns canonical probes; neither is a statement about all decoders.

\subsection{Other entry laws}\label{sm:universality}
Let the columns be $e_i=y_i/\sqrt d$, where all entries of all $y_i$ are independent with a fixed centered, variance-one, sub-Gaussian law $\nu$, and write $\gamma_1=\Exp Z^3$ and $\gamma_2=\Exp Z^4-3$ for the skewness and excess kurtosis of $Z\sim\nu$; in this subsection $Z$ and $Z_l$ denote entries, not the active sample of \cref{thm:exact}. Examples are Gaussian $(0,0)$, Rademacher $(0,-2)$ and uniform on $[-\sqrt3,\sqrt3]$ $(0,-6/5)$. Constants written $O_\nu$ may depend on $\nu$. Let $q_{k,d}$ be the probability that a midpoint pairwise read with one active and one inactive input exceeds $3/2$; it includes the fluctuation of the signal, not only the centered interference.

\begin{proposition}[Second-order term]\label{sm:prop:ledger}
For a fixed law $\nu$ and $d/k\to\infty$, put $x_K=\sqrt{d/[8(k-1)]}$. Then
\[
\log\frac{q_{k,d}}{\Phibar(x_K)}=c_\nu\frac d{k^2}+O_\nu\Bigl(\frac d{k^3}+\frac1{\sqrt k}+\frac kd\Bigr),\qquad c_\nu=\frac{132+41\gamma_2}{1024}+\frac{\gamma_1^2}{192}.
\]
In particular $q_{k,d}/\Phibar(x_K)\to1$ when $k\ll d\ll k^2$, and $c_\nu\ge50/1024$, with equality for Rademacher entries.
\end{proposition}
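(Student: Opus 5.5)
The plan is a Cramér-type moderate-deviation computation for a sum of i.i.d.\ coordinate contributions, done by exponential tilting, followed by an expansion of the rate function to second order and a comparison of the prefactor with $\Phibar(x_K)$.

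\textbf{Reduction to an i.i.d.\ sum.} Fix the gate with feature $1$ active and feature $2$ inactive, and let the other $k-1$ active features be $3,\dots,k+1$. The read is $d^{-1}\sum_{t=1}^dV_t$ with
\[
V_t=(Y_{1t}+Y_{2t})\,(Y_{1t}+R_t),\qquad R_t=\textstyle\sum_{i=3}^{k+1}Y_{it}.
\]
The $V_t$ are i.i.d.\ across coordinate rows, with $\Exp V=1$ and $\Var V=2(k-1)+O_\nu(1)$. Hence $q_{k,d}=\Prob\bigl(\sum_t(V_t-1)>d/2\bigr)$. The Gaussian heuristic puts the threshold at $\tfrac d2/\sqrt{2(k-1)d}=x_K$, which identifies $x_K$.

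\textbf{Step 1: the cumulant generating function.} Let $\Lambda_k(s)=\log\Exp e^{s(V-1)}$ for $s=O(1/k)$. Condition on $(Y_1,Y_2)$ and write $\psi(t)=\log\Exp e^{tZ}$, with $\psi(t)=t^2/2+\gamma_1t^3/6+\gamma_2t^4/24+O_\nu(t^5)$ near $0$. Then
\[
\Exp\bigl[e^{sV}\mid Y_1,Y_2\bigr]=e^{s(Y_1+Y_2)Y_1}\exp\bigl((k-1)\psi(s(Y_1+Y_2))\bigr).
\]
Sub-Gaussianity of $\nu$ makes the outer expectation over $(Y_1,Y_2)$ finite for $\abs s\le c_0/k$. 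This is where the hypothesis is used: the leading term $(k-1)s^2(Y_1+Y_2)^2/2$ is a quadratic in sub-Gaussians with coefficient $O(1/k)$. I would expand the outer expectation in powers of $s$, tracking the terms of orders $ks^2$, $k^2s^4$, $ks^3$, $s^2$ and the cross terms generated by $\Exp e^{\epsilon(Y_1+Y_2)^2}$. This yields $\Lambda_k(s)=(k-1)s^2+\kappa_3s^3/6+\kappa_4s^4/24+\dots$, with cumulants depending explicitly on $k,\gamma_1,\gamma_2$.

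\textbf{Step 2: saddle point and rate.} Solve $\Lambda_k'(s^*)=1/2$, so that $s^*=\frac1{4(k-1)}(1+O(1/k))$. Then compute $dI_k=d\bigl(s^*/2-\Lambda_k(s^*)\bigr)$ to relative order $1/k$. The Gaussian part reproduces $x_K^2/2$, and the next order gives $x_K^2/2-c_\nu d/k^2+O(d/k^3)$. Collecting coefficients should produce $c_\nu=(132+41\gamma_2)/1024+\gamma_1^2/192$. Here $\gamma_1^2$ enters through the squared third cumulant in the Legendre transform, and $\gamma_2$ together with the pure-Gaussian constant $132$ enters through $\kappa_4$ and the $Y_1^2$ signal term. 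I would check the constant $132$ against the exact Gaussian case, where the chi-square structure of \cref{prop:onegate} gives an independent derivation.

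\textbf{Step 3: prefactor (the main obstacle).} Apply a uniform Bahadur--Rao/local limit estimate under the tilted law, for which $V$ has mean $1/2$ and variance $\Lambda_k''(s^*)\sim2k$. The estimate gives
\[
q_{k,d}=\frac{e^{-dI_k}}{s^*\sqrt{2\pi d\Lambda_k''(s^*)}}\bigl(1+O(1/\sqrt k+k/d)\bigr).
\]
The leading factor matches the Mills asymptotic of $\Phibar(x_K)$ up to the same error, because $s^*\sqrt{d\Lambda''}\approx x_K$. The difficulty is that the law of $V$ changes with $k$, so I need a Berry--Esseen bound for the tilted standardized summand that is uniform along the triangular array. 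I would bound the standardized third absolute moment of $V$ under the tilt by $O(1)$, giving an $O(1/\sqrt d)$ error. The $O(1/\sqrt k)$ term absorbs lattice or non-smooth effects; for Rademacher entries I would smooth by the continuous part coming from the many summands in $R$. I would also carry the $k/d$ correction from the $1/x_K^2$ term in Mills' ratio.

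\textbf{Step 4: consequences.} For $k\ll d\ll k^2$, every error term and $c_\nu d/k^2$ vanish, so $q_{k,d}/\Phibar(x_K)\to1$. For the lower bound on $c_\nu$, Pearson's inequality $\gamma_2\ge\gamma_1^2-2$ gives
\[
c_\nu\ge\frac{50+41\gamma_1^2}{1024}+\frac{\gamma_1^2}{192}\ge\frac{50}{1024}.
\]
Equality forces $\gamma_1=0$ and $\gamma_2=-2$, that is, a two-point symmetric law, which is the Rademacher law.
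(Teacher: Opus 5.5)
Your route is the paper's: tilt the i.i.d.\ coordinate rows, expand the row cumulant generating function at $s\asymp1/k$, take the Legendre transform to relative order $1/k$, apply Berry--Esseen under the tilt, compare the prefactor with Mills' ratio, and finish with Pearson's inequality. Conditioning on $(Y_1,Y_2)$ and using the entry cumulant function $\psi$ is a convenient way to obtain the same cumulants that the paper computes from moments of $P+CS$.

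The difficulty is that the proposition is essentially the value of $c_\nu$, which you leave as ``should produce''. Two of the ingredients you name would give a different constant.

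\emph{The quadratic coefficient.} The quadratic coefficient of $\Lambda_k$ is not $k-1$. Since $\kappa_2=\Var V=2k+1+\gamma_2$,
\[
\Lambda_k(s)=\bigl[(k-1)+\tfrac12(3+\gamma_2)\bigr]s^2+\cdots,\qquad 3+\gamma_2=\Var(Y_1^2+Y_1Y_2).
\]
At $s\asymp1/k$ this $O(1)\cdot s^2$ term has the same order $k^{-2}$ as $\kappa_3s^3$ and $\kappa_4s^4$. It is the largest of the three contributions to $c_\nu$.

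\emph{The source of $\gamma_1^2$.} Skewness does not enter through the squared third cumulant. In
\[
I(x)=\frac{x^2}{2\kappa_2}-\frac{\kappa_3x^3}{6\kappa_2^3}+\frac{(3\kappa_3^2-\kappa_2\kappa_4)x^4}{24\kappa_2^5}+\cdots,\qquad x=\tfrac12,
\]
the $\kappa_3^2$ term is $O(k^{-3})$, because $\kappa_3=O(k)$ while $\kappa_2\asymp k$. Skewness enters linearly through $\kappa_3=k(12+3\gamma_2+2\gamma_1^2)+O(1)$, whose $\gamma_1^2$ part is $\Exp(Y_1+Y_2)^3\,\Exp R^3=2(k-1)\gamma_1^2$.

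\emph{The correct bookkeeping.} With $\kappa_4=k^2(24+6\gamma_2)+O(k)$, the contributions relative to $x_K^2/2=d/[16(k-1)]$ are
\[
c_\nu=\frac{3+\gamma_2}{32}+\frac{12+3\gamma_2+2\gamma_1^2}{384}+\frac{4+\gamma_2}{1024}=\frac{132+41\gamma_2}{1024}+\frac{\gamma_1^2}{192}.
\]
With your displayed coefficient $k-1$ you would instead get $(36+9\gamma_2)/1024+\gamma_1^2/192$. Your proposed Gaussian cross-check is sound and would catch this. Optimizing over $J$ in \cref{prop:onegate} gives $I_k=1/(16k)-68/(1024k^2)$. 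The remaining $64/1024$ is the order-$d/k^2$ part of $d/[16(k-1)]-d/(16k)$.

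\emph{The $O(k^{-1/2})$ term in Step 3.} This term is not a lattice correction; it is the Berry--Esseen error itself. An $O(d^{-1/2})$ Kolmogorov error, integrated against $e^{-\eta u}\one\{u>0\}$ (total variation at most $2$), is a relative error $O(\eta/\sqrt d)=O(k^{-1/2})$. This is measured against the main term $e^{\eta^2/2}\Phibar(\eta)\asymp1/\eta$, with $\eta\asymp\sqrt{d/k}$. The argument needs no density and applies verbatim to lattice laws, so no smoothing is required. The smoothing you propose would not be available anyway: for Rademacher entries $R$ has no continuous component and $V$ is integer-valued.

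The remaining steps are fine: $\log(\eta_k/x_K)=O(1/k)$, the $O(k/d)$ Mills correction, and Step 4. Your equality argument even shows that the Rademacher law is the only extremal law.
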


The correction dominates the remainder when $d\gg k^{3/2}$. Even for Gaussian entries $c_\nu=132/1024\ne0$: the term corrects the Gaussian tail $\Phibar(x_K)$ of a read, and the entry law enters only through $\gamma_1^2$ and $\gamma_2$, the skewness appearing squared by invariance under a global sign change of the entries.

\begin{corollary}[Sparse entries]\label{sm:cor:sparse}
Let $Z=\epsilon\chi/\sqrt q$ with $\chi\sim\mathrm{Ber}(q)$ and an independent Rademacher sign $\epsilon$, where $q=q(k)\in(0,1]$. If $qk\to\infty$ and $d/k\to\infty$, then $\log q_{k,d}=-\frac d{16k}[1+O((qk)^{-1}+(k/d)\log(d/k))]$. If also $d/(qk^2)\to0$, then $q_{k,d}/\Phibar(x_K)\to1$.
\end{corollary}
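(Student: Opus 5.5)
The corollary should follow by rerunning the exponential-tilting argument behind \cref{thm:universality} and \cref{sm:prop:ledger}, while tracking how every constant depends on the sparsity $q$. Those results cannot be quoted directly, because the law $\nu_q$ of $Z=\epsilon\chi/\sqrt q$ changes with $k$. Its sub-Gaussian constant degenerates as $q\to0$, and its cumulants are $\gamma_1=0$ and $\gamma_2=1/q-3$. The plan is to show that this degeneration is harmless once $qk\to\infty$. The reason is that each coordinate row of the read enters only through a sum over the $k$ (or $k-1$) columns of the support. The effective per-row law is therefore an average of about $k$ sparse entries, and its cumulants of order $j\ge3$ are of relative size $(qk)^{1-j/2}$, not $q^{1-j/2}$.

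First, write the midpoint one-active read as a sum over the $d$ coordinates of i.i.d.\ terms. With probe $w=e_1+e_2$, active column $e_1$ and other active columns $e_3,\dots,e_{k+1}$, coordinate $t$ contributes
\[
d^{-1}\bigl(y_{1t}+y_{2t}\bigr)\Bigl(y_{1t}+\sum_{l\ge3}y_{lt}\Bigr).
\]
This is a bilinear form in independent sparse entries. Second, compute the cumulant generating function $\Lambda_q(s)$ of one such row term for tilts $s$ up to the saddle-point scale, which is $s\asymp 1/k$ in the normalization of the theorem. To do this, condition on $(y_{1t},y_{2t})$ and use the exact moment generating function $\Exp e^{uZ}=1-q+q\cosh(u/\sqrt q)$ of a sparse entry. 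The sum over $l\ge3$ then has logarithmic moment generating function $(k-1)\log\bigl(1-q+q\cosh(u/\sqrt q)\bigr)$ with $u=O(1/\sqrt k)$ at the relevant scale. Taylor expansion gives
\[
\tfrac{u^2}{2}\,(k-1)\Bigl(1+O\bigl(u^2/q\bigr)\Bigr)=\tfrac{u^2}{2}\,(k-1)\Bigl(1+O\bigl((qk)^{-1}\bigr)\Bigr),
\]
uniformly as long as $u/\sqrt q$ stays bounded, which is exactly the condition $qk\to\infty$. The two probe entries appear only $O(1)$ times and are handled by the same bound together with a truncation: they are bounded by $1/\sqrt q$, and only polynomially rare configurations in $q$ matter. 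Third, feed this expansion into the Cramér/Bahadur--Rao tilting step of \cref{thm:universality}. The rate function becomes
\[
\frac{d}{16k}\Bigl[1+O\bigl((qk)^{-1}\bigr)\Bigr],
\]
and the prefactor and signal-fluctuation terms contribute the $O\bigl((k/d)\log(d/k)\bigr)$ correction exactly as in the fixed-law case. This gives the first claim.

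For the second claim, redo \cref{sm:prop:ledger} in this regime. With $\gamma_1=0$ and $\gamma_2=1/q-3$, the formal second-order coefficient is $c_{\nu_q}=(132+41(1/q-3))/1024=O(1/q)$, so the leading correction is $O\bigl(d/(qk^2)\bigr)$. The remainder terms $d/k^3$, $1/\sqrt k$ and $k/d$ should likewise acquire at most factors of $1/q$ or $1/\sqrt{qk}$, each of which vanishes when $d/(qk^2)\to0$, $d/k\to\infty$ and $qk\to\infty$. Hence $\log\bigl(q_{k,d}/\Phibar(x_K)\bigr)\to0$.

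The main obstacle is the uniformity of the tilting step. The fixed-law proof uses sub-Gaussian bounds with constants depending on $\nu$, and here those constants blow up like $1/q$. I would handle this by never applying sub-Gaussianity entrywise. Instead, I would apply it to the conditional sum over the $k-1$ other active columns, using the explicit $\cosh$ formula, and verify that the third- and higher-order cumulant contributions at the saddle point are bounded by $(qk)^{-1}$ times the quadratic term. If the probe-entry terms resist this, I would fall back on conditioning on the sparsity pattern $\chi$, which turns the row sum into a Rademacher sum over a $\mathrm{Bin}(k,q)$-sized set concentrated at $qk\to\infty$, and then applying the Rademacher case of \cref{sm:prop:ledger}.
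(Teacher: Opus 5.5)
Your route is the paper's route: condition on the probe entries $A,B$, use the exact factor $\{1-q+q\cosh(t(A+B)/\sqrt q)\}^{k-1}$, expand uniformly on $\abs t\le C_0/k$, and tilt. Two bookkeeping points. No truncation is needed, because for all nine values of $(A,B)$ both $\abs{tA(A+B)}$ and $\abs{t(A+B)}/\sqrt q$ are at most $2C_0/(qk)$. Also, the $O((qk)^{-1})$ corrections come from moments of the probe entries, for instance $\Var T=2k-2+1/q$ because $\Exp A^4=1/q$. The bracket expansion itself contributes only $O((qk)^{-2})$.

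The genuine gap is the tilting step, which you call ``exactly as in the fixed-law case''. Its Berry--Esseen constant is not uniform in $q$. Your heuristic that a row averages about $k$ sparse entries fails for $T=(A+B)(A+S)$, because $A+B$ vanishes with probability about $1-2q$. The standardized tilted third absolute moment is therefore of order $q^{-1/2}$, not small. The paper obtains this from $\Exp\abs{A+B}^3=O(q^{-1/2})$, $\Exp\abs S^3=O(k^{3/2}+kq^{-1/2})$ and $\Exp\abs P^3=O(q^{-2})$, divided by $k^{3/2}$. The resulting Kolmogorov error $O(q^{-1/2}d^{-1/2})$ must be small against the main term $e^{\eta_k^2/2}\Phibar(\eta_k)\asymp\sqrt{k/d}$, which gives a relative error $O((qk)^{-1/2})$. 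This second use of $qk\to\infty$ is needed already for the matching lower bound in the first claim.

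For the second claim, inserting $\gamma_2=1/q-3$ into $c_\nu$ is only formal. The $O_\nu$ remainders of \cref{sm:prop:ledger} are not controlled when $\nu$ varies with $k$. Your bound of ``at most factors of $1/q$ or $1/\sqrt{qk}$'' is also not enough. A factor $1/q$ on the $k^{-1/2}$ term gives $1/(q\sqrt k)$, which diverges for $q=k^{-3/4}$ and $d=k^{11/10}$, where all hypotheses hold.

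You do not need the ledger at all. Your uniform expansions of $K_k$, $K_k'$ and $K_k''$ already give $t_k=(4k)^{-1}[1+O((qk)^{-1})]$, $dI_k=d/(16k)+O(d/(qk^2))$ and $\eta_k=\sqrt{d/(8k)}\,[1+O((qk)^{-1})]$. Add the $O((qk)^{-1/2})$ Berry--Esseen term and Mills' expansion of $\Phibar(x_K)$, using $d/(16(k-1))-d/(16k)=O(d/k^2)$. This gives $\log(q_{k,d}/\Phibar(x_K))=O(d/(qk^2)+(qk)^{-1/2}+k/d)\to0$, as in the paper.

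Your fallback of conditioning on $\chi$ would not reduce to the Rademacher case of the ledger either. It leaves independent rows that are not identically distributed, and most of them are identically zero.
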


At $d=\Theta(k\log(km))$ the first condition corresponds to $dq/\log m\to\infty$ expected nonzeros per column, and the second to $dq/\log^2m\to\infty$; both are sufficient conditions, not sparsity thresholds.

\paragraph{An independent-row proof}
Let $A,B,Z_1,\dots,Z_{k-1}$ be independent with law $\nu$, $S=\sum_{l<k}Z_l$, and
\[
T=(A+B)(A+S),\qquad C=A+B,\qquad P=A^2+AB-1 .
\]
The one-active score is $d^{-1}\sum_{h=1}^dT_h$ with independent copies $T_h$, and $T-1=P+CS$. Write $K_k(t)=\log\Exp e^{t(T-1)}$.

\begin{lemma}[Row cumulants and tilting]\label{sm:lem:tilt}
For a fixed law $\nu$, uniformly on $\abs t\le C_0/k$ for any fixed $C_0$,
\begin{align*}
K_k(t)&=\frac{2k+\gamma_2+1}2t^2+\frac{k(12+3\gamma_2+2\gamma_1^2)+O_\nu(1)}6t^3\\
&\quad+\frac{k^2(24+6\gamma_2)+O_\nu(k)}{24}t^4+O_\nu(k^{-3}).
\end{align*}
The rate at displacement $1/2$ is
\[
I_k=\sup_t\{t/2-K_k(t)\}=\frac1{16k}-\Bigl(\frac{68+41\gamma_2}{1024}+\frac{\gamma_1^2}{192}\Bigr)\frac1{k^2}+O_\nu(k^{-3}).
\]
If $t_k$ solves $K_k'(t_k)=1/2$ and $\eta_k=t_k\sqrt{dK_k''(t_k)}$, then
\[
\log q_{k,d}=-dI_k-\log\eta_k-\tfrac12\log(2\pi)+O_\nu(k/d+k^{-1/2}),
\]
with $\eta_k=\sqrt{d/(8k)}\,[1+O_\nu(k^{-1})]$.
\end{lemma}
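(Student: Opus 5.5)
The lemma has three parts: the cumulant expansion, the rate function, and the exact-asymptotics statement. I would prove them in that order.

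\emph{Cumulant expansion.} Condition on $(A,B)$. Then $S$ is a sum of $k-1$ independent copies of $Z$, so
\[
\Exp\bigl[e^{tCS}\mid A,B\bigr]=\exp\bigl((k-1)\kappa_\nu(tC)\bigr),
\]
where $\kappa_\nu(s)=s^2/2+\gamma_1s^3/6+\gamma_2s^4/24+O_\nu(s^5)$ is the cumulant generating function of $\nu$.

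Sub-Gaussianity gives $\Exp e^{c\abs C^2}<\infty$ for small $c$. Since $\abs t\le C_0/k$, the exponent $(k-1)t^2C^2/2$ is of order $C^2/k$. So the expansion can be integrated against the law of $(A,B)$, and truncating $C$ at level $k^{1/4}$ costs only $e^{-ck^{1/2}}$.

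This leaves
\[
K_k(t)=\log\Exp\exp\Bigl(tP+\tfrac{k-1}2t^2C^2+\tfrac{(k-1)\gamma_1}6t^3C^3+\tfrac{(k-1)\gamma_2}{24}t^4C^4\Bigr)+O_\nu(k^{-3}).
\]
I would expand the outer exponential to fourth order in the small quantities $t\sim k^{-1}$ and $kt^2\sim k^{-1}$, and take the logarithm. The required moments of $(A,B)$ are elementary:
\[
\Exp C^2=2,\quad \Exp C^4=12+2\gamma_2,\quad \Exp P=0,\quad \Exp P^2=\Exp(A^2+AB-1)^2=\gamma_2+3,
\]
together with the mixed moments $\Exp PC^2$ and $\Exp C^3$, where $\gamma_1$ enters.

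Collecting terms by order gives the stated coefficients.
\begin{itemize}
\item $t^2$: $\tfrac12\Var P+(k-1)$. The stated value $\tfrac{2k+\gamma_2+1}2$ should follow once $\Var P$ is corrected with $\Exp P$ and the $-1$ in $k-1$.
\item $t^3$: leading part $\tfrac k6\bigl(3\Exp PC^2+\gamma_1\Exp C^3\bigr)$.
\item $t^4$: $\tfrac{k^2}8\Var(C^2)$.
\end{itemize}
Checking these against the claimed constants is routine bookkeeping. I expect the bookkeeping of which terms are $O(k^{-3})$ uniformly on $\abs t\le C_0/k$ to be the main labor.

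\emph{Rate function.} Write $K_k(t)=\sigma^2t^2/2+\kappa_3t^3/6+\kappa_4t^4/24+\cdots$ with $\sigma^2=2k+\gamma_2+1$. Solve $K_k'(t)=1/2$ by perturbation around $t_0=1/(2\sigma^2)\approx1/(4k)$, which lies in the allowed range $\abs t\le C_0/k$.

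The Legendre transform then expands as
\[
I_k=\frac1{8\sigma^2}-\frac{\kappa_3}{48\sigma^6}+\frac{3\kappa_3^2-\sigma^2\kappa_4}{384\sigma^{10}}\cdot\sigma^{2}\cdots
\]
(the standard series). Substituting $\kappa_3\sim k$ and $\kappa_4\sim k^2$ shows that every correction term is of order $k^{-2}$. Expanding $1/(8\sigma^2)=1/(16k)-(\gamma_2+1)/(32k^2)+\cdots$ and adding the corrections yields the constant $(68+41\gamma_2)/1024+\gamma_1^2/192$.

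\emph{Exact asymptotics.} This step uses the Bahadur--Rao argument. Tilt each row $h$ by $e^{t_k(T_h-1)-K_k(t_k)}$. Under the tilted measure the centred sum $\sum_h(T_h-1-1/2)$ has mean zero and variance $dK_k''(t_k)$. Then
\[
q_{k,d}=e^{-dI_k}\,\Exp_{\rm tilt}\Bigl[e^{-t_k\sum_h(T_h-3/2)}\,;\,\textstyle\sum_h\ge0\Bigr].
\]

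I expect this step to be the main obstacle. The summands have variance of order $k$ and the tilt parameter is of order $1/k$, so $\eta_k=t_k\sqrt{dK''}\asymp\sqrt{d/k}\to\infty$. An Edgeworth or Berry--Esseen expansion of the tilted sum on the scale $\sqrt{dk}$ then gives
\[
\int_0^\infty e^{-\eta_k y}\,\phi(y)\,dy\,(1+\text{err})=\frac{1}{\eta_k\sqrt{2\pi}}\,(1+\text{err}).
\]
There is a difficulty: the relevant window $y\lesssim1/\eta_k$ is smaller than the Berry--Esseen error $\rho/\sqrt d$. A plain Berry--Esseen bound therefore does not suffice, and one needs a local limit theorem. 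Conditionally on the $(A,B)$ rows the sum has a smooth Gaussian-like part of variance of order $dk$ through $S$. Its density exists even for lattice laws such as Rademacher, because $kd$ independent terms smooth the lattice on the scale of interest.

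I would try a smoothing argument. Write the tilted sum as a lattice-free component plus a remainder. Use the characteristic function of $CS$ conditional on $C$, bounded away from $C=0$, to get a local CLT with error $O(k^{-1/2})$. The condition $d\gg k$ gives the $O(k/d)$ term through the Laplace-integral correction $\eta_k^{-2}$.

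Finally, $\eta_k=\sqrt{d/(8k)}\,(1+O(1/k))$ follows from $t_k=(1/(4k))(1+O(1/k))$ and $K''=2k(1+O(1/k))$.
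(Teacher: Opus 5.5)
Your first two steps are sound. Conditioning on $(A,B)$ to write $K_k(t)=\log\Exp\exp\bigl(tP+(k-1)\kappa_\nu(tC)\bigr)$ is a tidy alternative to the paper's direct expansion of $P+CS$ through $\Exp S^2,\Exp S^3,\Exp S^4$. Your leading coefficients check out: $\tfrac12\Var P+(k-1)$, $\tfrac k6(3\Exp PC^2+\gamma_1\Exp C^3)$ and $\tfrac{k^2}8\Var(C^2)$ reproduce the statement, using $\Exp PC^2=4+\gamma_2$, $\Exp C^3=2\gamma_1$ and $\Var C^2=8+2\gamma_2$.

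In the Legendre series, drop the stray factor $\sigma^2$. The fourth-order term at displacement $1/2$ is $(3\kappa_3^2-\sigma^2\kappa_4)/(384\sigma^{10})=-(4+\gamma_2)/(1024k^2)+O_\nu(k^{-3})$. Together with $1/(8\sigma^2)=1/(16k)-(\gamma_2+1)/(32k^2)+\cdots$ and $-\kappa_3/(48\sigma^6)$, it gives the stated constant. The paper instead substitutes $t=1/(4k)$ into $t/2-K_k(t)$, which costs only $O_\nu(k^{-3})$.

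The genuine gap is the exact-asymptotics step, and it rests on a comparison that is backwards. In the standardized variable $U$, the window is $1/\eta_k\asymp\sqrt{k/d}$. Berry--Esseen bounds the Kolmogorov distance by $C\rho/\sqrt d$, where $\rho$ is the standardized tilted third absolute moment. This $\rho$ is $O_\nu(1)$ uniformly, because $(T-1)/\sqrt k$ has a uniform exponential moment near zero and the tilt acts on it with parameter $t_k\sqrt k\to0$. So the window exceeds the error by a factor of order $\sqrt k$, and plain Berry--Esseen suffices.

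To carry this out, write $\Exp_{\rm tilt}[e^{-\eta_kU};U>0]=\int_{(0,\infty)}e^{-\eta_ku}\,dF_U(u)$ and compare it with $\int_0^\infty e^{-\eta_ku}\phi(u)\,du=e^{\eta_k^2/2}\Phibar(\eta_k)$. Integrate by parts: since $u\mapsto e^{-\eta_ku}\one\{u>0\}$ has total variation at most $2$, you get $q_{k,d}=e^{-dI_k}[e^{\eta_k^2/2}\Phibar(\eta_k)+O_\nu(d^{-1/2})]$. Relative to the main term $(\sqrt{2\pi}\eta_k)^{-1}(1+O(\eta_k^{-2}))$, this error is $O_\nu(\eta_k/\sqrt d)=O_\nu(k^{-1/2})$. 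That is exactly the $k^{-1/2}$ in the statement. This is the paper's argument, and it needs no smoothness, so lattice laws are covered with the strict threshold.

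The local-limit route you put in its place is only announced, and as described it would fail. For Rademacher entries $T_h\in2\mathbb{Z}$, so the tilted sum has no density. Conditioning on $C\ne0$ does not smooth it: given $C=\pm2$, the variable $CS$ is still lattice-valued with span $4$ and has a periodic characteristic function. Rescuing that route would require a lattice local limit theorem with span bookkeeping, which the lemma does not need.
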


\begin{proof}
The variables $A$, $B$ and $S/\sqrt k$ have uniformly bounded sub-Gaussian norms, so $(T-1)/\sqrt k$ has a uniform exponential moment near zero; this also bounds its tilted absolute third moment uniformly for $t=O(1/k)$. Expanding $P+CS$ with $\Exp S^2=k-1$, $\Exp S^3=(k-1)\gamma_1$ and $\Exp S^4=3(k-1)^2+(k-1)\gamma_2$ gives
\begin{gather*}
\kappa_2(T)=2k+\gamma_2+1,\qquad\kappa_3(T)=k(12+3\gamma_2+2\gamma_1^2)+O_\nu(1),\\
\kappa_4(T)=k^2(24+6\gamma_2)+O_\nu(k).
\end{gather*}
The fifth cumulant is $O_\nu(k^2)$, since odd moments of $S$ through order five are $O_\nu(k^2)$ and the remaining terms contain at most four copies of $S$. The exponential-moment bound controls the Taylor remainder from order six by $O_\nu(k^3t^6)$ on this interval, with the same bounds for the derivatives. Since $K_k''(t)=2k+O_\nu(1)$, the saddle point is $t_k=1/(4k)+O_\nu(k^{-2})$, and substituting $1/(4k)$ into $t/2-K_k(t)$ costs only $O_\nu(k^{-3})$, which gives $I_k$.

Under the tilted measure with density $\exp(t_k\sum_h(T_h-1)-dK_k(t_k))$, let $U$ be the centered sum divided by $\sqrt{dK_k''(t_k)}$. The Berry--Esseen theorem gives Kolmogorov distance $O_\nu(d^{-1/2})$ from a standard Gaussian. The function $e^{-\eta_ku}\one\{u>0\}$ has total variation at most two, so
\[
q_{k,d}=e^{-dI_k}\bigl[e^{\eta_k^2/2}\Phibar(\eta_k)+O_\nu(d^{-1/2})\bigr];
\]
this applies to lattice laws as well, with the strict threshold. Since $e^{\eta^2/2}\Phibar(\eta)=(\sqrt{2\pi}\eta)^{-1}[1+O(\eta^{-2})]$, the relative Berry--Esseen error is $O_\nu(\eta_k/\sqrt d)=O_\nu(k^{-1/2})$. This is the exponential-tilting argument behind the Cram\'er--Petrov theory \citep{PetrovBook75,DJS22}.
\end{proof}

\begin{proof}[Proof of \cref{thm:universality}]
By \cref{sm:lem:tilt}, the leading terms give the exponent. For \cref{sm:prop:ledger}, use $\log\Phibar(x_K)=-d/[16(k-1)]-\log x_K-\tfrac12\log(2\pi)+O(k/d)$, $\log(\eta_k/x_K)=O_\nu(k^{-1})$ and $d/[16(k-1)]=d/(16k)+d/(16k^2)+O(d/k^3)$; the extra $64/1024$ turns $68+41\gamma_2$ into $132+41\gamma_2$. Pearson's moment inequality $\gamma_2\ge\gamma_1^2-2$, from the positive semidefiniteness of the covariance matrix of $(Z,Z^2)$, gives $c_\nu\ge50/1024$. For the expected count of the midpoint network, the both-active lower tail has the same leading exponent and the zero-active upper tail has exponent $(9+o(1))d/(16k)$ by the same row argument; union bounds over $O(km)$ and $O(m^2)$ gates give the sufficient dimension $(16+\varepsilon)k\log(km)$.
\end{proof}

\begin{proof}[Proof of \cref{sm:cor:sparse}]
For the sparse law the row moment generating function has the exact finite form
\[
\Exp e^{tT}=\Exp_{A,B}\Bigl[e^{t(A+B)A}\bigl\{1-q+q\cosh\bigl(t(A+B)/\sqrt q\bigr)\bigr\}^{k-1}\Bigr],
\]
with $A,B\in\{0,\pm q^{-1/2}\}$ of probabilities $1-q$, $q/2$, $q/2$. For $\abs t\le C_0/k$ every exponential or hyperbolic argument is $O((qk)^{-1})$, and Taylor expansion inside the braces and then over the nine cases gives, uniformly, $K_k(t)=kt^2[1+O((qk)^{-1})]$, $K_k'(t)=2kt[1+O((qk)^{-1})]$ and $K_k''(t)=2k[1+O((qk)^{-1})]$. The leading $kt^2$ comes from $\Exp(A+B)^2=2$ and $\Exp S^2=k-1$; the remaining quadratic term is $O(t^2/q)$. Hence $t_k=(4k)^{-1}[1+O((qk)^{-1})]$ and $I_k=(16k)^{-1}[1+O((qk)^{-1})]$. The standardized tilted third absolute moment is $O(q^{-1/2})$: $\Exp\abs{A+B}^3=O(q^{-1/2})$, $\Exp\abs S^3=O(k^{3/2}+kq^{-1/2})$ and $\Exp\abs P^3=O(q^{-2})$, divided by $k^{3/2}$ with $qk\to\infty$. The Berry--Esseen step therefore has relative error $O((qk)^{-1/2})$, and
\[
\log q_{k,d}=-dI_k-\log\sqrt{d/(8k)}-\tfrac12\log(2\pi)+O\bigl((qk)^{-1/2}+k/d\bigr),
\]
which proves the first claim; comparison with Mills' formula gives $\log(q_{k,d}/\Phibar(x_K))=O(d/(qk^2)+(qk)^{-1/2}+k/d)$ and the second.
\end{proof}

These results concern one gate and hence the ball. They do not extend the reliability thresholds of \cref{thm:box} or the window profiles to non-Gaussian dictionaries, because the exact law of \cref{thm:exact} is Gaussian. For correlated columns, isotropy and marginal coherence tails alone do not provide the joint moment control used here, and below $qk\asymp1$ the sparse expansion no longer applies.

\subsection{Composition}\label{sm:composition}
\paragraph{Reading a superposed output vector}
Let $N=\binom m2$, $K'=\binom k2$ and $J(S)$ the set of active pairs. Draw independent output directions $f_\gamma\sim N(0,I_p/p)$ for the $N$ gates, independent of the input dictionary, and let $v^*=\sum_{\gamma\in J(S)}f_\gamma$ be the ideal output stream. The set $J(S)$ is independent of the output dictionary, although its indicators are dependent across gates.

\begin{proposition}[Output channel]\label{sm:prop:output}
Suppose $K'\to\infty$, $N/K'\to\infty$, $\log N=o(K')$ and $p/K'\to\infty$. The matched-filter reads $\one\{\ip{f_\gamma}{v^*}>1/2\}$ have a vanishing expected number of errors if $p\ge(8+\varepsilon)K'\log N$. Optimizing their common bias gives the expected-error threshold
\[
P_{\rm out}=2K'\bigl(\sqrt{\log(N-K')}+\sqrt{\log K'}\bigr)^2,\qquad\theta_{\rm out}=\frac{\sqrt{\log(N-K')}}{\sqrt{\log(N-K')}+\sqrt{\log K'}},
\]
and below $(1-\varepsilon)P_{\rm out}$ every assignment of local output probes has a diverging expected number of read errors.
\end{proposition}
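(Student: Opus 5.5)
The plan is to recognise the output channel as an instance of feature recovery (arity $r=1$, gate $\THR^1_1$) in the model of \cref{sec:setting}. We make the substitutions $d\mapsto p$, $m\mapsto N$, $k\mapsto K'$, dictionary $(f_\gamma)$ and support $J(S)$. The ball results (\cref{thm:ball,thm:weighted} and \cref{thm:thr}(c,d)) are then transferred. The only departure from the setting is that $J(S)$ is not a uniformly random $K'$-subset of the $N$ gates. It is, however, independent of the output dictionary and has deterministic size $\abs{J(S)}=K'$. Since expected counts use only marginal one-gate laws, I expect this to be harmless.

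\emph{Step 1 (exact one-gate law).} Condition on $J(S)$. Because the $f_\gamma$ are independent $N(0,I_p/p)$ and independent of $J(S)$, the proof of \cref{prop:onegate} applies verbatim with $a=(1)$ and $k=K'$. An active read has the law of $J+\sqrt{(K'-1)J/p}\,\zeta$, an inactive read the law of $\sqrt{K'J/p}\,\zeta$, and neither depends on which set $J(S)$ is. By linearity, $\Exp X=N_1p_1+N_0p_0$ with $N_1=K'$ and $N_0=N-K'$. These are exactly the class sizes of the uniform model, with $\Prob(\gamma\in J(S))=K'/N$ the same for every gate. The dependence of the indicators across gates never enters.

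\emph{Step 2 (achievability).} Now apply the proof of \cref{thm:ball} with $r=1$. It needs $N_z\to\infty$, which holds since $K'\to\infty$ and $N/K'\to\infty$. At the bias $1/2$ both margins equal $1/2$, so $\Dball=4K'\max_z2\log N_z=8K'\log(N-K')\le8K'\log N$. This gives the first claim. For the optimized bias, the $r=1,t=1$ case of \cref{thm:thr}(d) (equivalently, balancing the two constraints of the program \eqref{eq:program}) gives $\theta=\rho_0/(\rho_0+\rho_1)$ and $\Vball=(\rho_0+\rho_1)^2$ with $\rho_1^2=2\log K'$ and $\rho_0^2=2\log(N-K')$. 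Hence $K'\Vball=2K'(\sqrt{\log(N-K')}+\sqrt{\log K'})^2=P_{\rm out}$, and the bias is $\theta_{\rm out}$. The proof of \cref{thm:ball} only used the chi-square window and Mills bounds in terms of $k$ and $d$. Here $p/K'\to\infty$ keeps every margin-to-noise ratio of order $\sqrt{\log N}$ and controls the $J-1$ term. The hypothesis $\log N=o(K')$ ensures that the centering correction and the window $\eta$ chosen there remain negligible relative to the margins at the scale $p\asymp K'\log N$.

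\emph{Step 3 (converse).} I would rerun the converse of \cref{thm:weighted} gate by gate. Fix $\gamma$, condition on $f_\gamma$ and the probe's randomness, and normalize $\norm w=1$. Given that $\gamma$ is active or inactive, the remaining active set consists of $K'-\abs z$ output directions that are independent of $f_\gamma$, whatever their identities. So the interference is $N(0,(K'-\abs z)/p)$ exactly, which is all that proof used in place of uniformity of $S$. The Gram event for the single column $f_\gamma$, the margin forced by the definition of $\Vball$, and the Mills lower tail then give $\Prob(\gamma\text{ errs})\ge(\min_zN_z)^{\varepsilon-o(1)}/N$. Summing over the $N$ gates yields divergence below $(1-\varepsilon)P_{\rm out}$. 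The main obstacle I anticipate is this step: one must check carefully that the non-uniform, dependent law of $J(S)$ does not break the conditional Gaussianity. The key point to verify is that the class probability $q_z=N_z/N$ is identical for every gate, and that the other active directions are independent of $f_\gamma$ after conditioning on the class. Bounded $p$ is excluded by $p/K'\to\infty$, so the separate small-dimension argument is unnecessary.
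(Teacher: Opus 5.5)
Your proposal is correct. Your Step 3 is essentially the paper's converse: normalize the probe, condition on $f_\gamma$, use that every wire is active with probability $K'/N$ and that the other active directions contribute $N(0,(K'-z)/p)$ interference, then run the two-class argument of \cref{sm:ball} with counts $N-K'$ and $K'$.

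The achievability half takes a different route from the paper.

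\emph{The paper's route.} It writes each read as a sum of $p$ i.i.d.\ coordinate rows: $AS$ for an inactive wire and $A(A+S)$ for an active one. Their moment generating functions $(1-K't^2)^{-1/2}$ and $(1-2t-(K'-1)t^2)^{-1/2}$ give the rate $\mu^2/(2K')[1+O(1/K')]$ at displacement $\mu$. The matching lower bound comes from the tilting argument of \cref{sm:lem:tilt}. The paper then balances $p\theta^2/(2K')>\log(N-K')$ against $p(1-\theta)^2/(2K')>\log K'$.

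\emph{Your route.} You observe that the output directions are Gaussian, so \cref{prop:onegate} holds exactly with $k=K'$, $d=p$ and $a=(1)$. Then \cref{thm:ball} with $r=1$, together with the two-constraint balancing, does the rest. This is shorter and exact.

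It also shows that $\log N=o(K')$ and $p/K'\to\infty$ are not needed for these first-order claims. \cref{thm:ball} and \cref{thm:weighted} require only $K'\to\infty$ and $N_z\to\infty$. Those hypotheses serve the paper's large-deviation route: a saddle point at $t=O(1/K')$, a normal approximation under tilting, and a relative rate correction $O(1/K')$ whose effect on the exponent, of order $\log N/K'$ at $p\asymp K'\log N$, should vanish.

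Your remark that $\log N=o(K')$ keeps the centering correction and the window $\eta$ negligible is therefore inaccurate, though harmless. The exact one-gate law has no centering term, and the window in the proof of \cref{thm:ball} is at most $k^{-1/4}$ regardless.

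What the paper's row-level computation buys in return is a form that parallels the non-Gaussian tilting analysis of \cref{sm:universality}, with a quantified $O(1/K')$ relative correction to the rate.
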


\begin{proof}
Condition on the input. For an inactive wire, $f_\gamma$ is independent of $v^*$, and the read has mean zero and variance $K'/p$. For an active wire, $\ip{f_\gamma}{v^*}-1$ has variance $(K'+1)/p$, the extra term coming from the random self-norm. An inactive coordinate row has the law of $AS$ with $A\sim N(0,1)$ and $S\sim N(0,K')$ independent, and an active row the law of $A(A+S)$ with $\Var S=K'-1$; their moment generating functions are $(1-K't^2)^{-1/2}$ and $(1-2t-(K'-1)t^2)^{-1/2}$. Expanding the centered logarithms at displacement $\mu$ gives the rate $\mu^2/(2K')[1+O(1/K')]$, and the tilting argument of \cref{sm:lem:tilt} gives the matching lower bound. At the midpoint both displacements are $1/2$, and summing over the $N$ reads gives the first claim. At a free bias the constraints are $p\theta^2/(2K')>\log(N-K')$ and $p(1-\theta)^2/(2K')>\log K'$, and balancing them gives $P_{\rm out}$ and $\theta_{\rm out}$. For the local converse, each wire is active with probability $K'/N$. Normalize its probe and condition on its own direction; the clean signals are $0$ and $\ip{w_\gamma}{f_\gamma}$, of modulus at most $1+o(1)$ on a high-probability norm event, and the other active directions stay independent Gaussians given any input, with projection variance $(K'-t)/p$ for local state $t\in\{0,1\}$. The two-class argument of \cref{sm:ball} applies with expected class counts $N-K'$ and $K'$, using only one-wire marginals.
\end{proof}

For pairwise conjunction $K'=\Theta(k^2)$, so the output stage can dominate the input dimension even though it reads one wire at a time. The hypotheses hold at the output threshold, for example, when $\log m=o(k)$; they must be imposed separately when combining with the input thresholds of the main text.

\paragraph{Comparison with an ideal execution}
Suppose an input stage produces bits $\widehat y_\gamma$ and writes $\widehat v=\sum_\gamma\widehat y_\gamma f_\gamma$. Let $\mathsf B_{\rm in}$ be the event that some input-stage gate is wrong and $\mathsf B_{\rm out}$ the event that some read of the ideal stream $v^*$ is wrong. Deterministically,
\begin{equation}\label{sm:eq:inclusion}
\{\text{some final read is wrong}\}\subseteq\mathsf B_{\rm in}\cup\mathsf B_{\rm out},
\end{equation}
because outside $\mathsf B_{\rm in}$ the actual stream equals $v^*$. An input budget with failure probability $\delta_{\rm in}$ and an output budget with $\delta_{\rm out}$ therefore compose with failure at most $\delta_{\rm in}+\delta_{\rm out}$, without conditioning on earlier success and without independence of the two events. In particular the box threshold of \cref{thm:box} composes with \cref{sm:prop:output}.

\paragraph{The fresh-write circuit class}
Fix a feedforward Boolean threshold circuit and an input, independent of all dictionaries. Each input wire and each gate output has its own independent direction $f\sim N(0,I_d/d)$, and every active output is written once into the shared stream. A gate $\gamma$ of arity $r_\gamma\le r_0$ reads previously written wires with $w_\gamma=\sum_ja_{\gamma,j}f_{w_j}$ and bias $\theta_\gamma$, where the coefficients represent its clean threshold function, are fixed before the dictionary is drawn, satisfy $\norm{a_\gamma}=1$, and have margin $\tau_\gamma=\min_z(2g_\gamma(z)-1)(\ip{a_\gamma}z-\theta_\gamma)\ge\tau_0>0$. Let $Q$ be the number of reads, $k$ the initial activity and $K$ the largest number of active directions in the ideal stream before a read; without erasure $K$ is the input activity plus all earlier ideal active writes. Put $\kappa_{\min}=\min_\gamma\tau_\gamma^2/2$.

\begin{theorem}[Write budget]\label{thm:writebudget}
Assume $k\to\infty$, $Q\to\infty$ and $\log Q=o(k)$. For fixed $\varepsilon>0$, if $d\ge(1+\varepsilon)\kappa_{\min}^{-1}K\log Q$, every gate of the circuit is evaluated correctly with probability at least $1-Q^{-\varepsilon+o(1)}$. If a fixed schedule exactly erases wires that are no longer needed, $K$ may be replaced by the peak number of live active wires.
\end{theorem}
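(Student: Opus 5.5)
We argue by comparison with an ideal execution, as in \eqref{sm:eq:inclusion}. Process the gates in topological order. On the event that every earlier gate was evaluated correctly, the actual stream before gate $\gamma$ equals the ideal stream $v^*_\gamma=\sum_{j\in\mathcal A_\gamma}f_j$. Here $\mathcal A_\gamma$ is the set of ideally active wires written before $\gamma$ reads, so $\abs{\mathcal A_\gamma}\le K$. The set $\mathcal A_\gamma$ is determined by the fixed circuit and input, so it is independent of the dictionary. Hence
\[
\{\text{some gate is wrong}\}\subseteq\bigcup_\gamma\{\text{the read of }v^*_\gamma\text{ by }\gamma\text{ is wrong}\},
\]
and a union bound reduces the claim to $\sum_\gamma\Prob(\text{ideal read of }\gamma\text{ errs})\le Q^{-\varepsilon+o(1)}$. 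No conditioning on earlier success is needed, because every event on the right concerns only the fixed ideal streams.

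For one gate, we proceed as in \cref{prop:onegate}. The probe is $w_\gamma=\sum_ja_{\gamma,j}f_{w_j}$ with $\norm{a_\gamma}=1$. The stream is $v^*_\gamma=\sum_{l\le r_\gamma}z_lf_{w_l}+\sum_{j\in\mathcal A_\gamma\setminus I_\gamma}f_j$, where $z$ is the ideal input pattern of $\gamma$. Since all directions are independent $N(0,I_d/d)$, the exact one-gate law applies verbatim with $k$ replaced by $K_\gamma=\abs{\mathcal A_\gamma}$. The score then has the law of $\ip{a_\gamma}zJ+\sqrt{(K_\gamma-\ip{a_\gamma}z^2)J/d}\,\zeta$. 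By the upper-bound half of the proof of \cref{thm:ball}, we concentrate $J$ in a window proportional to the margin, using \eqref{sm:eq:chi}, and apply the Gaussian tail bound \eqref{sm:eq:mills}. This gives
\[
\Prob(\text{error})\le\exp\Bigl(-\frac{d\tau_\gamma^2}{2K}(1-o(1))\Bigr)+2\exp\bigl(-c\,d\,\tau_0^2K^{-1/2}\bigr).
\]
Here we use $K_\gamma-\ip{a_\gamma}z^2\le K$, and $\abs{\ip{a_\gamma}z}\le\sqrt{r_0}$ controls the $c(J-1)$ drift as in that proof. Under $d\ge(1+\varepsilon)\kappa_{\min}^{-1}K\log Q$, the first term is at most $Q^{-(1+\varepsilon)(1-o(1))}$. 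The second term is at most $\exp(-c'K^{1/2}\log Q)$, which is $o(Q^{-2})$ since $K\ge k\to\infty$. Summing over the $Q$ reads gives $Q^{-\varepsilon+o(1)}$.

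The main point needing care is the uniformity of the $o(1)$ terms. The relative corrections are $O(K^{-1/4})$ from the $J$-window and $r_0/K$ from removing $\ip{a_\gamma}z^2$. Both are uniform in $\gamma$ because $r_0$ and $\tau_0$ are fixed and $K\ge k\to\infty$. The hypothesis $\log Q=o(k)$ should make the chi-square deviation term negligible even if the window is chosen more crudely, for instance with $\eta$ fixed small, since $d/K\gtrsim\log Q$ and we need $d\eta^2\gg\log Q$.

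For the erasure variant, an erased wire is simply absent from the ideal stream before later reads. In that case $\abs{\mathcal A_\gamma}$ is bounded by the peak number of live active wires, and the same argument applies unchanged.
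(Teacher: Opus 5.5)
Your proof is correct and has the same skeleton as the paper's. Both use the first-wrong-read inclusion of actual failure into the union of ideal-read failures, the independence of the ideal active sets from the directions, a union bound over the $Q$ marginal tails, and essentially the same treatment of erasure.

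The routes differ in the single-read estimate. The paper conditions on the at most $r_0$ directions queried by $\gamma$ and treats the remaining active directions as exact Gaussian interference of variance at most $K\norm{w_\gamma}^2/d$. It then controls the probe norm and clean score on a Gram event $\norm{G_\gamma-I}\le\eta$, which must itself be union-bounded over all $Q$ reads. The hypothesis $\log Q=o(k)$ enters the paper's argument only through its choice $\eta=(\log Q/k)^{1/4}$.

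You instead observe that the coordinates of $(w_\gamma,v^*_\gamma)$ are independent pairs with variances $1/d$ and $K_\gamma/d$ and covariance $\ip{a_\gamma}z/d$. So \cref{prop:onegate} holds exactly with $k$ replaced by $K_\gamma$, and you rerun the upper half of the proof of \cref{thm:ball}; the Gram fluctuations are absorbed into $J$, as in \cref{sm:rem:dual}.

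Your route is exact and needs only $K\to\infty$. With the window $\eta\propto\tau_\gamma K^{-1/4}$, the bad chi-square event has probability at most $2Q^{-cK^{1/2}}$, so $\log Q=o(k)$ is never used. Your closing remark misattributes this. A fixed $\eta$ would leave a drift $\sqrt{r_0}\,\eta$ comparable to the margin, and $d\eta^2\gg\log Q$ follows from $K\to\infty$, not from $\log Q=o(k)$. Nothing in your argument rests on that remark.

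The paper's conditioning buys generality. It is the same device as in the gate-local converse of \cref{thm:weighted}, and it applies unchanged to probes that are arbitrary functions of the gate's own directions. Your exact law needs $w_\gamma$ to be a fixed linear combination of them.

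In the erasure variant, both arguments tacitly need the peak live count itself, not just $k$, to diverge. Your corrections are powers of $K^{-1/4}$, and the paper's Gram union bound needs $\eta^2K\to\infty$.
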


\begin{proof}[Proof of \cref{thm:writebudget}]
Let the ideal stream before layer $\ell$ contain the true values of all earlier gates. Given the input, its active set is deterministic and independent of the dictionary. For gate $\gamma$, condition on its at most $r_0$ queried directions; the ideal interference from the other active wires is exactly Gaussian with variance at most $K\norm{w_\gamma}^2/d$. On the local Gram event $\norm{G_\gamma-I}\le\eta$ the probe norm is $1+O(\eta)$ and the clean score differs from $\ip{a_\gamma}z$ by $O(\eta)$, uniformly because the arity is bounded. Since $\tau_\gamma\ge\tau_0$, the Gaussian tail is at most $\exp\{-(1-o(1))\kappa_{\min}d/K\}\le Q^{-(1+\varepsilon)+o(1)}$. Choose $\eta=(\log Q/k)^{1/4}\to0$; Gram concentration bounds the bad event by $Ce^{-c\eta^2d}$, and $\eta^2d/\log Q\gtrsim\eta^2k\to\infty$ makes even its union over all $Q$ reads negligible. Let $\mathsf B_\gamma$ be a wrong read of the ideal stream. Outside $\bigcup_\gamma\mathsf B_\gamma$, induction over layers shows that the actual and ideal streams agree before every read, so actual failure is contained in the union, and summing the marginal bounds gives $Q^{-\varepsilon+o(1)}$. Under exact erasure by a fixed schedule the live active set at each ideal read is still determined by the input and the schedule and has at most $K$ elements, and the induction preserves equality through erasure, because the subtracted actual and ideal contributions agree whenever all earlier reads do.
\end{proof}

The theorem counts every read. Replacing $Q$ by the number of near-boundary gates is valid only if the remaining gates have a separately controlled sum of error probabilities. Non-local reads, error correction, reuse of directions, learned correlations and growing arity need further analysis. The connection to reliable computation from unreliable components \citep{vonNeumann56,EvansSchulman99} is at the level of composing error bounds; here the errors come from shared geometry rather than independent faults.

\subsection{Unrestricted encodings}\label{sm:representations}
Let $\mathcal X_k$ be the $k$-subsets of $[m]$, $2\le k\le m-1$, and for $S\in\mathcal X_k$ let the target be the indicator vector of its pairs; distinct supports have distinct targets. Let $\varphi:X'\to\R^d$ be any map on some $X'\subseteq\mathcal X_k$.

\begin{proposition}[Counting floor]\label{prop:counting}
Let $\varphi$ be any map from the $k$-subsets of $[m]$ to $\R^d$, read by $T$ affine threshold units followed by arbitrary Boolean processing. If the outputs of all pairwise conjunctions are correct on a fraction $1-\delta>0$ of supports, then $d\ge\bigl(k\log(m/k)+\log(1-\delta)\bigr)/(1+\log T)$. For polynomial $T$ and $\log k=o(\log m)$ this is $\Omega(k)$.
\end{proposition}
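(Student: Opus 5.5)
The plan is a sign-pattern (Cover/Warren-type) counting argument. All of the information that reaches the outputs passes through the $T$ threshold bits, and these bits are affine functions of a point in $\R^d$. So the number of distinct output behaviours is at most the number of cells cut out by $T$ hyperplanes in $\R^d$. Distinct supports have distinct pairwise-conjunction targets, so correctness on many supports forces many distinct bit vectors.

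\emph{Step 1 (injectivity).} Let $X'$ be the set of supports on which every pairwise conjunction is correct, so $\abs{X'}\ge(1-\delta)\binom mk$. For $S\in X'$ the output equals the target, which determines $S$. The final Boolean processing is a fixed function of the threshold bit vector $b(S)=(\one\{\ip{w_t}{\varphi(S)}>\theta_t\})_{t\le T}$. Hence $S\mapsto b(S)$ is injective on $X'$.

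\emph{Step 2 (cell count).} The vector $b(S)$ depends only on which cell of the arrangement of the $T$ affine hyperplanes $\{y:\ip{w_t}y=\theta_t\}$ in $\R^d$ contains $\varphi(S)$. Points on a hyperplane are assigned to one side by the strict inequality, so the count of realizable sign vectors still holds. By the classical bound of \citet{Cover65}, an arrangement of $T$ affine hyperplanes in $\R^d$ realizes at most $\sum_{i=0}^d\binom Ti$ sign vectors. This is at most $(eT/d)^d\le(eT)^d$ when $d\le T$, and at most $2^T\le(eT)^d$ otherwise. Thus $\log(\#\text{patterns})\le d(1+\log T)$.

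\emph{Step 3 (combine).} Steps 1 and 2 give $(1-\delta)\binom mk\le(eT)^d$. Using $\binom mk\ge(m/k)^k$ and taking logarithms,
\[
k\log(m/k)+\log(1-\delta)\le d(1+\log T),
\]
which is the claimed inequality. For $T=m^{O(1)}$ we have $1+\log T=O(\log m)$. When $\log k=o(\log m)$, $k\log(m/k)=k\log m(1-o(1))$, and $\log(1-\delta)$ is a fixed constant, so $d=\Omega(k)$.

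The only delicate point is Step 2: getting the clean constant $1+\log T$ through the elementary estimate $\sum_{i\le d}\binom Ti\le(eT)^d$, uniformly in whether $d\le T$ or $d>T$, and handling ties at the hyperplanes. Both are routine.
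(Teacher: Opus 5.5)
Your argument is the paper's: injectivity of $S\mapsto b(S)$ on the correct supports (this uses $k\ge2$, the paper's standing assumption there, since for $k=1$ every target is the zero vector), the cell bound $\sum_{i\le d}\binom Ti\le(eT)^d$ split into the cases $d\le T$ and $d>T$, and $\binom mk\ge(m/k)^k$.

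The one step to tighten is the handling of ties. A point on a hyperplane is not literally ``assigned to one side'' of the arrangement. For example, two coincident, oppositely oriented units realize the pattern $(0,0)$ only on their common hyperplane, so that pattern belongs to no open cell. The paper closes this by raising every bias by less than the smallest positive slack on the finite set $\varphi(X')$. This changes no output and places distinct patterns in distinct open cells of the shifted arrangement.
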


\begin{proposition}[A $2k$-dimensional code]\label{prop:polycode}
Encode a support $S$ by the $2k$ nonleading coefficients of $p_S(y)=\prod_{l\in S}(y-l)^2$. Then $x_ix_j=\one\{p_S(i)+p_S(j)<1/2\}$ is an affine threshold of the code for every pair, so all pairwise conjunctions on $k$-sparse inputs have sign-rank at most $2k+1$.
\end{proposition}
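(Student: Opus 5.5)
The plan is to check that each pairwise conjunction is the sign of a fixed affine functional of the coefficient vector, and that integrality of the feature labels gives a margin of one half.

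First I would write the code explicitly. Since $p_S$ is monic of degree $2k$, write $p_S(y)=y^{2k}+\sum_{t=0}^{2k-1}c_t(S)\,y^t$ and let $\varphi(S)=(c_0(S),\dots,c_{2k-1}(S))\in\R^{2k}$. For each fixed feature index $i\in[m]$, the evaluation $p_S(i)=i^{2k}+\sum_t c_t(S)\,i^t$ is an affine function of $\varphi(S)$. Its coefficient vector $(i^t)_{t<2k}$ and offset $i^{2k}$ depend only on $i$, not on $S$. Hence, for a pair $(i,j)$,
\[
\tfrac12-p_S(i)-p_S(j)=\bigl\langle\varphi(S),\,-(i^t+j^t)_{t<2k}\bigr\rangle+\bigl(\tfrac12-i^{2k}-j^{2k}\bigr),
\]
which is an affine threshold unit on the code with weights fixed in advance.

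Next I would verify that this unit computes $x_ix_j$, using integrality. If $i\in S$ then $p_S(i)=0$. If $i\notin S$, then every factor $(i-l)^2$ with $l\in S$ is the square of a nonzero integer, so it is at least $1$, and therefore $p_S(i)\ge1$. Consequently $p_S(i)+p_S(j)$ equals $0$ when both features are active, and is at least $1$ otherwise. This gives $x_ix_j=\one\{p_S(i)+p_S(j)<1/2\}$. Moreover the affine value above never equals $0$: it is $1/2$ on accepted supports and at most $-1/2$ on rejected ones. So the unit has no ties, with unnormalized margin $1/2$.

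For the sign-rank claim, form the sign matrix with rows indexed by supports $S$ and columns indexed by pairs $(i,j)$, with entries $\mathrm{sign}(\tfrac12-p_S(i)-p_S(j))=2x_ix_j-1$. By the display, each entry is the sign of $\langle(\varphi(S),1),v_{ij}\rangle$ for vectors in $\R^{2k+1}$, where $v_{ij}=\bigl(-(i^t+j^t)_{t<2k},\tfrac12-i^{2k}-j^{2k}\bigr)$. The real matrix of these inner products therefore has rank at most $2k+1$, and it realizes the sign pattern with no zero entries. This shows the sign-rank is at most $2k+1$.

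There is no serious obstacle here. The only point needing care is the lower bound $p_S(i)\ge1$ for inactive $i$, which uses that the features are labelled by the integers $1,\dots,m$. I would also remark, without proof here, that the coefficients $c_t(S)$ grow like $m^{2k}$. As a result the margin $1/2$, measured relative to the norms of $\varphi(S)$ and $v_{ij}$, is tiny, which is the ``small normalized margins'' caveat in the text.
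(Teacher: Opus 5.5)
Your proof is correct and follows the paper's argument. You encode the support by the nonleading coefficients, use integrality to get $p_S(i)=0$ on $S$ and $p_S(i)\ge1$ off $S$, absorb the leading terms into the bias, and lift the bias to one extra coordinate to bound the sign-rank by $2k+1$; your explicit check that the affine value is $\tfrac12$ on accepted supports and at most $-\tfrac12$ on rejected ones, so there are no ties, is a small addition the paper leaves implicit.
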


\begin{proof}[Proof of \cref{prop:counting}]
If $T$ affine threshold units followed by arbitrary Boolean processing are correct on $X'$, their sign vectors are distinct on $X'$. On the finite set $\varphi(X')$, ties can be removed without changing any output by moving each bias slightly upward, less than the smallest positive slack. Distinct sign vectors then occupy distinct open cells of an arrangement of $T$ affine hyperplanes in $\R^d$, whose number $C(T,d)$ satisfies $C(T,d)\le C(T-1,d)+C(T-1,d-1)$ with $C(0,d)=C(T,0)=1$, hence $C(T,d)\le\sum_{i\le d}\binom Ti$ \citep{Buck43,Cover65,Winder66}. For $1\le d\le T$, with $\vartheta=d/T$, $\vartheta^d\sum_{i\le d}\binom Ti\le(1+\vartheta)^T\le e^d$, so the sum is at most $(eT/d)^d\le(eT)^d$; for $d>T$ use $2^T\le(eT)^d$. Hence $\abs{X'}\le(eT)^d$, and $\abs{X'}\ge(1-\delta)\binom mk\ge(1-\delta)(m/k)^k$ gives the bound.
\end{proof}

The guarantee concerns whole output vectors and cannot be replaced by a small average fraction of wrong gates: changing one support element changes only $2(k-1)$ pairs. The same count covers $L-1$ hidden ReLU layers of widths $T_1,\dots,T_{L-1}$ followed by $T_L$ threshold units: after removing ties, each layer is affine on every region cut out by the earlier activation patterns, so $\abs{X'}\le\prod_\ell\sum_{i\le d}\binom{T_\ell}i$ and $d\ge\log\abs{X'}/\sum_\ell\log(eT_\ell)$. Constant depth and polynomial width still need $\Omega(k)$ dimensions in the skewed regime.

\begin{proof}[Proof of \cref{prop:polycode}]
Write $p_S(y)=\prod_{l\in S}(y-l)^2=y^{2k}+\sum_{j<2k}c_j(S)y^j$ and $\varphi(S)=(c_0(S),\dots,c_{2k-1}(S))$. At an integer $i\in[m]$, $p_S(i)=0$ if $i\in S$ and $p_S(i)\ge1$ otherwise, so $x_ix_j=\one\{p_S(i)+p_S(j)<1/2\}$. For fixed $(i,j)$ the right side is an affine threshold of the $2k$ coefficients, the leading terms being absorbed into the bias; lifting the bias adds one coordinate, so the sign matrix has sign-rank at most $2k+1$.
\end{proof}

With \cref{prop:counting}, $\Theta(k)$ dimensions are necessary and sufficient for arbitrary encodings with one threshold per pair in the skewed regime. The code is nonlinear in the input bits, its coefficients are large and its normalized margins small. Unrestricted readout complexity alone would make dimension vacuous, since a single real number can encode a finite binary input to arbitrary precision; both the encoding and the readout must be specified when a dimension lower bound is interpreted.

%% file: windowdetail.tex
\section{Window formulas and the expected-count gap}\label{sm:windowdetail}
The profiles are explicit: $\Psi_\beta(x)=\int_0^\infty\exp(-\tau-\tau^{-\beta}e^{-\beta x})\,d\tau$, with $\Psi_1(x)=2e^{-x/2}K_1(2e^{-x/2})$ for the modified Bessel function $K_1$ and $\Psi_\beta(x)\to\exp(-e^{-x})$ as $\beta\to\infty$; and $F_+(x)=e^{-\sqrt c}+\int_0^{\sqrt c}e^{-c/s}\,ds$ with $c=e^{-x}$. The optimized window is narrower than the midpoint window by the factor $4(1+\beta)/(3+\beta)\in[2,4)$. The half-slack bias is optimal to first order, but its profile $\Psi_\beta(x/2)F_+(x/2)$ is in general smaller than the supremum; a bias tuned to a target finite-size success level allocates the slack differently. At the centers the limits are $\Psi_\beta(0)$ for the midpoint and $\Psi_\beta(0)F_+(0)$ for the half-slack bias, with $F_+(0)=0.516$; \cref{sec:experiments} compares both with simulation.

\paragraph{The Markov gap inside the window}
Conditionally on the active reads, the inactive reads are independent, so the number of inactive features that create at least one wrong one-active gate is binomial, and it is asymptotically Poisson with a random intensity $\Lambda$ (\cref{sm:window}). At the midpoint, inside the window and along sequences with $\beta\to\beta_0\in[1,\infty)$, $\Lambda$ converges in law to $\Lambda_\infty=e^{\beta_0(G_A-x)}$. Hence
\[
\Prob(X=0)\to\Exp\,e^{-\Lambda_\infty}=\Psi_{\beta_0}(x)\in(0,1),
\]
while
\[
\Exp\,\Lambda_\infty=e^{-\beta_0x}\int_0^\infty\tau^{-\beta_0}e^{-\tau}\,d\tau=\infty ,
\]
and by Fatou's lemma the expected number of firing inactive features, and with it $\Exp X$, tends to infinity throughout the window. The limiting intensity is heavy-tailed: an unusually large active read lets an unusually large number of inactive features fire together. A typical draw has no error with probability $\Psi_{\beta_0}(x)$, while the mean is carried by rare draws. This is the second-order form of the gap between the ball and the box.

%% file: comparisons.tex
\section{Further comparisons and experiments}\label{sm:comparisons}
\subsection{The four pairwise thresholds}
\begin{figure}[t]
\centering
\input{fan}
\caption{The four first-order thresholds of pairwise conjunction and the threshold of feature recovery, in units of $k\log m$, as functions of $\log m/\log k$. All four conjunction thresholds equal $32k\log k$ when $\log m\sim\log k$; they separate toward $16$, $8$, $4$ and $2$ (dotted). The vertical lines mark the regimes of the simulations in \cref{sec:experiments}.}
\label{fig:fan}
\end{figure}

\subsection{Comparison with other decoders}
The ratio reflects the architecture rather than the task. A common bias for $r$-way conjunction must separate the sum of the $r$ smallest active reads from the sum of the $r-1$ largest active reads and the largest inactive read, so its box support contains $2r-1$ active levels, whereas that of recovery contains one. For $m=k^4$, where $\beta=2$, recovery needs $18k\log k$ dimensions and direct pairwise conjunction $50k\log k$; the midpoint pairwise network needs $72k\log k$ to be reliable and $80k\log k$ for a vanishing expected count. Decoders that are not a single threshold layer can do better. The Lasso recovers the support from about $2k\log(m-k)$ measurements \citep{Wainwright09}, and $\ell_1$ minimization recovers a typical nonnegative $k$-sparse vector from about $2k\log(m/k)$ noiseless Gaussian measurements \citep{DonohoTanner09}. When $\log k=o(\log m)$ both are $2k\log m$ to first order, the optimized reliability threshold of direct pairwise conjunction; when $\log m\sim\log k$ the second is $o(k\log k)$, far below $32k\log k$. The constants of this paper are therefore those of one layer of threshold readouts of a fixed random code.

For asymmetric gates the weights matter as well as the bias: at the alternating threshold of the comparator $\one\{\sum_l2^{l-1}z_l\ge T\}$ the optimal weights are Fibonacci numbers, and when all classes count equally binary weights overstate the price by a factor exponential in $r$ (\cref{sm:gates}).

\subsection{Additional experimental figures}
The first two panels of \cref{fig:e3} show the $k=1024$ comparisons. The figures below add the other activities, window profiles and bias rules.
\begin{figure}[t]
\centering
\includegraphics[width=\linewidth]{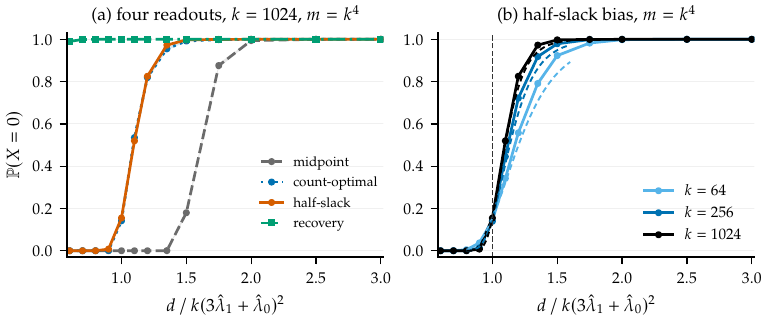}
\caption{Pairwise conjunction, 4096 exact draws per point, pointwise $95\%$ Wilson bands. (a) Three deterministic biases and feature recovery evaluated on the same draws; the count-optimal and half-slack curves nearly coincide, and recovery switches on near ratio $0.39$, left of the plotted range. (b) The half-slack bias of \cref{thm:thr}(a) as $k$ grows; dashed curves are the profiles $\Psi_{\hat\beta}(x/2)F_+(x/2)$ of \cref{thm:window}(ii) at the slack $x=\lhat_1(\sqrt{d/k}-3\lhat_1-\lhat_0)$.}
\label{fig:e1}
\end{figure}
\begin{figure}[t]
\centering
\includegraphics[width=\linewidth]{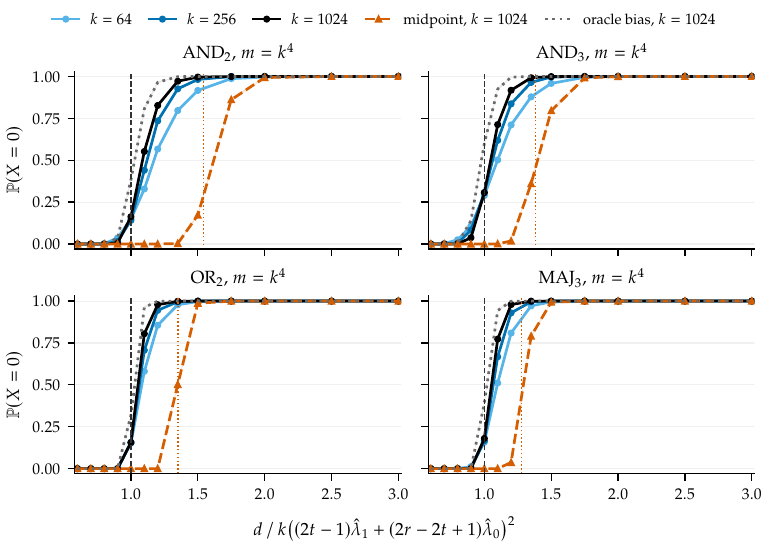}
\caption{Reliability of four threshold gates at $m=k^4$ with the half-slack bias of \cref{thm:thr}(a) for $k=64,256,1024$, and with the midpoint bias and the sample-adaptive oracle bias for $k=1024$. The horizontal axis is the dimension in units of the quantile box center of each gate. Dashed line: that center; dotted line: the midpoint box threshold of \cref{thm:thr}(b) in the same units.}
\label{fig:e2}
\end{figure}
\subsection{Full campaign results}
\paragraph{The bias matters before the limit}
The first campaign (\cref{fig:e1}) evaluates three deterministic biases for pairwise conjunction on shared draws, together with feature recovery, at $k=64,256,1024$ with $m=k^4$ and at $k=1024$ with $m=k^2$, with 13 dimensions per case and 4096 draws per cell. At $k=1024$, $m=1024^4$ and $d=1.35\,k(3\lhat_1+\lhat_0)^2$, the half-slack bias of \cref{thm:thr}(a) succeeds on 3984 of 4096 draws, the bias optimized for counting on 3912, and the midpoint on 2. At the smallest dimension of the grid feature recovery succeeds on 4056 draws and every conjunction rule on none, as \cref{cor:recovery} predicts. At the quantile center the three cases with $m=k^4$ succeed with frequencies $0.143$, $0.139$ and $0.155$, and \cref{thm:window}(ii) predicts $\Psi_{\hat\beta}(0)F_+(0)=0.155$, $0.154$ and $0.154$ for the half-slack bias, with $\hat\beta=\lhat_0/\lhat_1$; the dashed curves in \cref{fig:e1}(b) are the whole predicted profiles, and \cref{tab:window} adds the case $m=k^2$ and an independent replication.

\paragraph{The box beyond conjunction}
The second campaign tests \cref{thm:thr}(a) for $\ANDg_2$, $\ANDg_3$, $\ORg_2$ and $\MAJg_3$ at $m=k^4$ (\cref{fig:e2}). In units of the quantile box center $k((2t-1)\lhat_1+(2r-2t+1)\lhat_0)^2$, all four gates show the same picture at $k=1024$: success is at most $3.8\%$ at $0.9$, between $15.6\%$ and $30.7\%$ at $1$, and between $82.7\%$ and $97.9\%$ at $1.2$, and the curves steepen with $k$. The oracle, which chooses the bias after seeing the reads, reaches $50\%$ success between $0.98$ and $1.03$ times the center, $2\%$ to $7\%$ earlier in dimension than the half-slack rule and consistent with \cref{thm:box}(ii). The midpoint rule of each gate switches on near its own box threshold, marked by a dotted line; for $\ANDg_2$ it succeeds on none of the 4096 draws at $1.2$.

\paragraph{Separation of the two thresholds}
The third campaign fixes the midpoint pairwise network at $k=1024$ and increases $\log m/\log k$ from $4$ to $16$ to $64$, so that $\beta=2,4,8$ (\cref{fig:e3}). Reliability switches on at the box: at the center the success frequencies $0.295$, $0.324$ and $0.340$ match the limits $\Psi_{\hat\beta}(0)=0.298$, $0.326$ and $0.346$ of \cref{thm:window}(i). The expected number of wrong outputs crosses one at about $1.23$, $1.45$ and $1.67$ times the center $d_0$ of \eqref{eq:centers}, tracking the first-order ball positions $2(1+\beta^2)/(1+\beta)^2=1.11$, $1.36$ and $1.60$ with offsets $0.12$, $0.09$ and $0.06$ that shrink as $\beta$ grows. At $m=k^{64}$ and $d=1.5\,d_0$ all 4096 draws are correct, while $\Exp X\approx2.8\times10^{19}$. Values of $m$ this large are a mathematical stress test of the first-order theory, reachable because the exact law never builds the dictionary.

%% file: fan.tex
\begin{tikzpicture}
\begin{axis}[
  width=0.92\linewidth, height=5.2cm,
  xmode=log, ymode=log, log basis y=2, xmin=1, xmax=100, ymin=1.6, ymax=40,
  xlabel={$\log m/\log k$}, ylabel={threshold$/(k\log m)$},
  xtick={1,2,4,8,16,32,64}, xticklabels={$1$,$2$,$4$,$8$,$16$,$32$,$64$},
  ytick={2,4,8,16,32}, yticklabels={$2$,$4$,$8$,$16$,$32$},
  tick label style={font=\footnotesize}, label style={font=\small},
  legend style={font=\scriptsize, draw=none, fill=none, at={(0.5,-0.24)}, anchor=north, cells={anchor=west}, /tikz/every even column/.append style={column sep=14pt}},
  legend columns=2, legend cell align=left,
  samples=200, domain=1:100, clip=true,
]
\draw[black!30, thin] (axis cs:4,1.6) -- (axis cs:4,40);
\draw[black!30, thin] (axis cs:16,1.6) -- (axis cs:16,40);
\draw[black!30, thin] (axis cs:64,1.6) -- (axis cs:64,40);
\addplot[wongblue, very thick, dashed] {16*(1+1/x)};
\addlegendentry{count errors, midpoint (ball)}
\addplot[wongverm, very thick, dashed] {8*(1+1/sqrt(x))^2};
\addlegendentry{forbid errors, midpoint (box)}
\addplot[wongblue, very thick] {4*(sqrt(2/x)+sqrt(1+1/x))^2};
\addlegendentry{count errors, optimized (ball)}
\addplot[wongverm, very thick] {2*(1+3/sqrt(x))^2};
\addlegendentry{forbid errors, optimized (box)}
\addplot[wonggreen, thick, densely dotted] {2*(1+1/sqrt(x))^2};
\addlegendentry{feature recovery}
\addplot[black!45, dotted, forget plot] {16};
\addplot[black!45, dotted, forget plot] {8};
\addplot[black!45, dotted, forget plot] {4};
\addplot[black!45, dotted, forget plot] {2};
\node[font=\footnotesize, text=black!70, anchor=south, fill=white, inner sep=1pt] at (axis cs:4,1.7) {$m=k^4$};
\node[font=\footnotesize, text=black!70, anchor=south, fill=white, inner sep=1pt] at (axis cs:16,1.7) {$m=k^{16}$};
\node[font=\footnotesize, text=black!70, anchor=south, fill=white, inner sep=1pt] at (axis cs:64,1.7) {$m=k^{64}$};
\end{axis}
\end{tikzpicture}

%% file: numerics.tex
\section{Finite checks}\label{sm:numerics}

The checks below test identities, closed forms and implementations on finite instances. The code-and-data archive contains the scripts and outputs.

\paragraph{The box identity}
The script \texttt{check\_box\_identity.py} draws 400 random instances with $r\in\{1,2,3\}$, pools of sizes between $2r$ and $2r+2$, Gaussian reads and Gaussian weights (every third instance with nonnegative weights). For every class $z$ and both error directions it enumerates all ordered tuples of distinct features and compares the maximum of $\ip b{\xi_I}$ with the rearrangement formula of \cref{lem:boxid}. All $3720$ class--direction pairs agree to within $1.8\times10^{-15}$.

\paragraph{The two programs}
The script \texttt{check\_programs.py} solves the ball program \eqref{eq:socp}, with the radii $\norm{\lambda_z}$ in units $\lambda_1=1$, $\lambda_0=\beta$, as a second-order cone program, and the box program of \cref{prop:boxprogram} by bisection over linear programs, in both cases over all representations $(a,\theta)$ without imposing symmetry. For every $\THR^r_t$ with $r\le4$ and every $\beta\in\{1,1.5,2,4,10\}$, the optimal values agree with (a) and (d) of \cref{thm:thr}, and the midpoint prices with (b) and (c), to relative accuracy $8\times10^{-7}$. The optimal box weights returned by the solver are equal across positions up to solver tolerance. At $\beta=1$ all four values equal $4r^2$ in these units.

\paragraph{The reduced sampler}
The script \texttt{check\_thr\_direct.py} compares the exact reduced sampler with direct simulation: for each draw it generates a Gaussian dictionary, forms $u$, and enumerates every $r$-subset. For the largest negative and the smallest positive score, the two-sample Kolmogorov--Smirnov statistics over 4000 draws per method are as follows.

\begin{center}
\small
\begin{tabular}{@{}lcccc@{}}
\toprule
Gate, $(k,m,d)$ & largest negative & $p$ & smallest positive & $p$\\
\midrule
$\ANDg_3$, $(8,30,120)$ & $0.013$ & $0.87$ & $0.025$ & $0.18$\\
$\ORg_2$, $(8,30,120)$ & $0.020$ & $0.42$ & $0.022$ & $0.31$\\
$\MAJg_3$, $(8,30,120)$ & $0.020$ & $0.40$ & $0.024$ & $0.22$\\
recovery, $(8,40,60)$ & $0.015$ & $0.76$ & $0.014$ & $0.81$\\
\bottomrule
\end{tabular}
\end{center}

For pairwise conjunction the frozen campaign of \cref{sm:experiments} used an independent direct-matrix implementation at $(k,m,d)=(8,40,100)$ with 12000 draws per method: the Kolmogorov--Smirnov distances for the largest active read, the sum of the two smallest active reads, the largest inactive read and the largest negative score are $0.0121$, $0.0112$, $0.0075$ and $0.0117$. A separate enumeration confirms the all-pairs success identity of \cref{prop:cloud,lem:boxid} at five biases.

\paragraph{Algebra}
The script \texttt{verify\_algebra\_v2.py} checks all $1013$ nonconstant comparator thresholds of arities $1$ to $9$ against their $348502$ truth-table entries: the unit-gap construction, the Fibonacci bound on every weight, and attainment by the alternating threshold. It evaluates the polynomial code of \cref{prop:polycode} in integer arithmetic on $2427$ supports ($3\le m\le11$, $2\le k\le\min(5,m-1)$), checking $106658$ pair outputs, and verifies the row cumulants and the coefficient $c_\nu$ of \cref{sm:prop:ledger} symbolically.

%% file: sm_experiments.tex
\section{Experimental protocol and additional results}\label{sm:experiments}

\subsection{Sampling}
Each draw samples $H\sim\chi^2_d/d$ and $k$ standard Gaussians, forms the active reads of \eqref{eq:exact-law}, and sorts them. For the inactive sample only its $j$ smallest and $j$ largest values are needed ($j=2$ for pairwise conjunction, $j=r$ in general). Writing each inactive value through its distribution function, the smallest of $n$ uniforms is $1-U^{1/n}$; conditionally on the smallest $j$ values, the other points are uniform above the $j$th, and their largest values follow by the same rule on the shorter interval. The $2j$ uniforms are mapped through $\Phi^{-1}$ in the upper-tail parametrization, using \texttt{expm1} and logarithms to keep precision when $n$ is as large as $k^{64}$. Success is checked by \cref{prop:cloud,lem:boxid} with the strictness of the readout; ties have probability zero. A draw costs $O(k\log k)$ time and $O(k)$ memory, independent of $m$ and $d$, while direct simulation needs $O(dm)$ memory.

Expected error counts are computed by one-dimensional quadrature of \cref{prop:onegate}, $\Exp X=\sum_jC_jp_j$ with the exact class sizes $C_j=\binom kj\binom n{r-j}$ of the unordered family, integrating over $J$ in the logarithmic domain between chi-square quantiles of probability $10^{-15}$ with relative tolerance $10^{-10}$. For pairwise conjunction this reproduces the frozen values of the first campaign, for example $6.7416$ and $0.32739$ at ratios $1.2$ and $1.35$ below.

\subsection{Campaign E1: pairwise conjunction (frozen 22 September 2026)}
The grid uses $m=k^4$ with $k=64,256,1024$ and $m=k^2$ with $k=1024$, the dimension ratios $\{0.6,0.7,0.8,0.9,1,1.1,1.2,1.35,1.5,1.75,2,2.5,3\}$ relative to $k(3\lhat_1+\lhat_0)^2$, rounded to integers, and 4096 draws per cell: 52 cells and 212992 independent draws. Seed 20260922 generates one \texttt{SeedSequence} per cell. The three biases, fixed before sampling, are the midpoint $3/2$, the bias optimized for counting $1+\sqrt{\log N_-}/(\sqrt{\log N_-}+\sqrt{\log N_+})$ with $N_-=kn$ and $N_+=\binom k2$, and the half-slack bias $3/2+\tfrac12\sigd(\lhat_0-\lhat_1)$ of \cref{thm:thr}(a); feature recovery uses $\tfrac12+\tfrac12\sigd(\lhat_0-\lhat_1)$. All rules share the draws of a cell. The sample-adaptive event that the largest negative score lies below the smallest positive score is stored as a feasibility bound and is not counted as a rule. No cell or draw was excluded. The run used Python 3.11.3, NumPy 1.26.4 and SciPy 1.16.3 on one CPU node with one BLAS thread, 81 seconds of CPU time and about 221 MiB of memory; runner and configuration hashes match the recorded run.

\begin{figure}[ht]
\centering
\includegraphics[width=\linewidth]{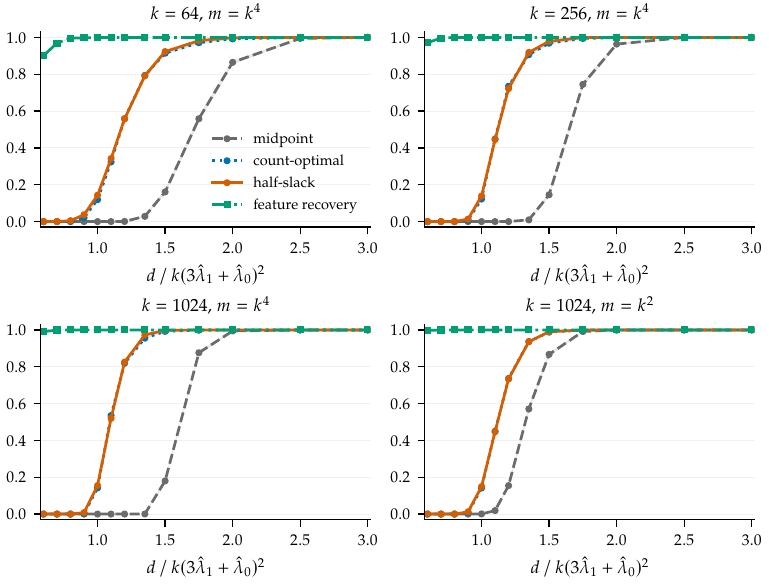}
\caption{All four cases of campaign E1. Each point uses 4096 independent draws; bands are pointwise $95\%$ Wilson intervals. The three bias rules and feature recovery share each cell's draws.}
\label{fig:e1all}
\end{figure}

\paragraph{Estimating the mean is harder than estimating reliability}
The mean count is carried by rare draws, so Monte Carlo estimates of it can mislead even when their standard errors are small. At $k=1024$, $m=1024^4$ and ratio $1.35$, the Rao--Blackwellized sample mean of $X$ is $0.063$ with standard error $0.011$, while quadrature of the exact one-gate law gives $0.327$ (\cref{tab:e1}). The main text therefore reports quadrature values of $\Exp X$ and uses simulation only for $\Prob(X=0)$.

\begin{table}[ht]
\caption{The half-slack bias at $k=1024$, $m=1024^4$ (campaign E1). Mean error is the Rao--Blackwellized sample mean over the inactive reads with its Monte Carlo standard error; the last column is the quadrature of \cref{prop:onegate}. Rare draws carry the mean, so the sample mean and its standard error can both be misleading, as at ratio $1.35$.}
\label{tab:e1}
\centering
\small
\begin{tabular}{@{}rrlrrr@{}}
\toprule
Ratio & $d$ & $\Prob(X=0)$ [95\% interval] & MC mean & MC s.e. & Quadrature\\
\midrule
\input{generated_results}
\end{tabular}
\end{table}

\subsection{Campaigns E2 and E3 (frozen 27 September 2026)}
The configuration \texttt{box-v1.json} was written before any run. Campaign E2 covers $\ANDg_2$, $\ANDg_3$, $\ORg_2$ and $\MAJg_3$ with $k=64,256,1024$ and $m=k^4$, the same 13 dimension ratios relative to the quantile box center $k((2t-1)\lhat_1+(2r-2t+1)\lhat_0)^2$, and 4096 draws per cell; the rules are the half-slack bias $t-\tfrac12+\tfrac12\sigd(\lhat_0-\lhat_1)$, the midpoint $t-\tfrac12$, and the oracle, which succeeds on a draw when some common bias separates its largest negative score from its smallest positive score; by \cref{thm:box}(ii) even this sample-adaptive rule fails below the box threshold. Campaign E3 fixes the midpoint $\ANDg_2$ network at $k=1024$ with $m=k^4,k^{16},k^{64}$ and 11 ratios relative to $4k(\lhat_1+\lhat_0)^2$. Seed 20260927 generates one \texttt{SeedSequence} per cell. The run used Python 3.13.9, NumPy 1.26.4 and SciPy 1.16.3 on a laptop CPU and took 188 seconds; all per-draw extremes are stored.

\begin{table}[ht]
\caption{Campaign E2 at $k=1024$, $m=k^4$: success frequency of the half-slack rule (out of 4096) and of the oracle at selected dimension ratios to the quantile box center.}
\label{tab:e2}
\centering
\small
\input{generated_e2}
\end{table}

\begin{table}[ht]
\caption{Campaign E3, the midpoint pairwise network at $k=1024$: success frequency (out of 4096) and exact expected number of wrong outputs at selected dimension ratios to $d_0=4k(\lhat_1+\lhat_0)^2$.}
\label{tab:e3}
\centering
\small
\input{generated_e3}
\end{table}

\subsection{Window profiles at the center}
\Cref{tab:window} compares the success frequencies at the quantile centers with the limits of \cref{thm:window}, evaluated at $\hat\beta=\lhat_0/\lhat_1$: $\Psi_{\hat\beta}(0)F_+(0)$ for the half-slack bias, whose slack splits evenly, and $\Psi_{\hat\beta}(0)$ for the midpoint. The pairwise rows of campaign E2 use the same rule as E1 with independent seeds. At $k=1024$ all six limits lie inside the $95\%$ intervals; at $k=64$ and $256$ the frequencies lie between $0.006$ and $0.015$ below the limits, a finite-size lag that disappears by $k=1024$.

\begin{table}[ht]
\caption{Success frequency at the quantile center (dimension ratio $1$, 4096 draws) against the limit of \cref{thm:window}.}
\label{tab:window}
\centering
\small
\input{generated_window}
\end{table}

%% file: generated_results.tex
0.60 & 164032 & 0.0000 [0.0000, 0.0009] & 2.462e+06 & 4.199e+05 & 2.485e+06 \\
1.00 & 273387 & 0.1548 [0.1440, 0.1662] & 312.3 & 127.7 & 427.3 \\
1.20 & 328064 & 0.8252 [0.8133, 0.8365] & 10.46 & 8.232 & 6.742 \\
1.35 & 369072 & 0.9727 [0.9672, 0.9772] & 0.06288 & 0.01097 & 0.3274 \\
1.50 & 410081 & 0.9976 [0.9955, 0.9987] & 0.002693 & 0.0008277 & 0.0178 \\
1.75 & 478427 & 0.9995 [0.9982, 0.9999] & 0.0004904 & 0.0003452 & 0.0002069 \\
2.00 & 546774 & 1.0000 [0.9991, 1.0000] & 1.202e-07 & 1.188e-07 & 4.052e-06 \\
\bottomrule

%% file: generated_e2.tex
\begin{tabular}{@{}lrrrrrrr@{}}
\toprule
Ratio & 0.8 & 0.9 & 1 & 1.1 & 1.2 & 1.35 & 1.5\\
\midrule
$\mathrm{AND}_2$, half-slack & 0 & 21 & 668 & 2262 & 3387 & 3980 & 4081\\
$\mathrm{AND}_2$, oracle & 1 & 141 & 1573 & 3305 & 3952 & 4092 & 4096\\
$\mathrm{AND}_3$, half-slack & 1 & 154 & 1258 & 2919 & 3758 & 4065 & 4093\\
$\mathrm{AND}_3$, oracle & 11 & 571 & 2456 & 3788 & 4077 & 4095 & 4096\\
$\mathrm{OR}_2$, half-slack & 0 & 0 & 640 & 3298 & 4009 & 4092 & 4096\\
$\mathrm{OR}_2$, oracle & 0 & 0 & 1308 & 3944 & 4094 & 4096 & 4096\\
$\mathrm{MAJ}_3$, half-slack & 0 & 0 & 738 & 3167 & 4011 & 4095 & 4096\\
$\mathrm{MAJ}_3$, oracle & 0 & 2 & 1602 & 3865 & 4094 & 4096 & 4096\\
\bottomrule
\end{tabular}

%% file: generated_e3.tex
\begin{tabular}{@{}rrlrlrl@{}}
\toprule
 & \multicolumn{2}{c}{$m=k^{4}$} & \multicolumn{2}{c}{$m=k^{16}$} & \multicolumn{2}{c}{$m=k^{64}$}\\
\cmidrule(lr){2-3}\cmidrule(lr){4-5}\cmidrule(l){6-7}
Ratio & succ. & $\mathbb{E}X$ & succ. & $\mathbb{E}X$ & succ. & $\mathbb{E}X$\\
\midrule
0.9 & 33 & $5.81\times10^{3}$ & 0 & $9.03\times10^{18}$ & 0 & $2.01\times10^{89}$\\
1 & 1208 & $422$ & 1327 & $3.29\times10^{15}$ & 1392 & $4.51\times10^{77}$\\
1.1 & 3191 & $30.9$ & 3896 & $1.21\times10^{12}$ & 4091 & $1.02\times10^{66}$\\
1.2 & 3922 & $2.26$ & 4089 & $4.43\times10^{8}$ & 4096 & $2.32\times10^{54}$\\
1.35 & 4084 & $0.0453$ & 4096 & $3.15\times10^{3}$ & 4096 & $7.98\times10^{36}$\\
1.5 & 4095 & $9.10\times10^{-4}$ & 4096 & $0.0225$ & 4096 & $2.76\times10^{19}$\\
1.75 & 4096 & $1.37\times10^{-6}$ & 4096 & $6.01\times10^{-11}$ & 4096 & $2.22\times10^{-10}$\\
2 & 4096 & $2.07\times10^{-9}$ & 4096 & $1.62\times10^{-19}$ & 4096 & $1.79\times10^{-39}$\\
\bottomrule
\end{tabular}

%% file: generated_window.tex
\begin{tabular}{@{}llrrrl@{}}
\toprule
Campaign, rule & $m$ & $k$ & $\hat\beta$ & Limit & Simulation [95\% interval]\\
\midrule
E1, half-slack & $k^{4}$ & 64 & 2.458 & 0.155 & 0.143 [0.133, 0.154]\\
E1, half-slack & $k^{4}$ & 256 & 2.342 & 0.154 & 0.139 [0.129, 0.150]\\
E1, half-slack & $k^{4}$ & 1024 & 2.275 & 0.154 & 0.155 [0.144, 0.166]\\
E1, half-slack & $k^{2}$ & 1024 & 1.538 & 0.147 & 0.149 [0.139, 0.160]\\
E2, half-slack & $k^{4}$ & 64 & 2.458 & 0.155 & 0.149 [0.138, 0.160]\\
E2, half-slack & $k^{4}$ & 256 & 2.342 & 0.154 & 0.141 [0.131, 0.152]\\
E2, half-slack & $k^{4}$ & 1024 & 2.275 & 0.154 & 0.163 [0.152, 0.175]\\
E3, midpoint & $k^{4}$ & 1024 & 2.275 & 0.298 & 0.295 [0.281, 0.309]\\
E3, midpoint & $k^{16}$ & 1024 & 4.730 & 0.326 & 0.324 [0.310, 0.338]\\
E3, midpoint & $k^{64}$ & 1024 & 9.570 & 0.346 & 0.340 [0.325, 0.354]\\
\bottomrule
\end{tabular}

%% file: applications.tex
\section{Applications, implications and open problems}\label{sm:applications}

A threshold in this paper answers a design question: how wide must a shared stored vector be before a family of logical reads becomes dependable? The answer depends on what ``dependable'' means, on the bias, and on what else the vector carries. This section turns the results into five rules that make these dependencies explicit. \Cref{sm:app:criteria} orders the error criteria by the width they need. \Cref{sm:app:capacity} inverts the thresholds into a feature capacity at fixed width. \Cref{sm:app:sizing} converts the window profiles into the width for a prescribed success probability. \Cref{sm:app:audit} checks every gate of a universal layer on any realized representation from a few extreme reads. \Cref{sm:app:unread} separates the part of a threshold paid for activity from the part paid for the read family. \Cref{sm:app:evaluation,sm:app:claims} draw the consequences for evaluation and for claims about superposition, and \cref{sm:app:open} states open problems. The corollaries and propositions below are proved from the results of the paper. Numbers new to this section are computed by \texttt{check\_sizing\_rule.py} from the window profiles and the frozen campaigns of \cref{sm:experiments}, without new sampling; the others are quoted from \cref{sm:experiments,sm:comparisons}.

\subsection{Which failure, which width}\label{sm:app:criteria}

A layer of $\abs\cF$ gates can be judged by the fraction of wrong outputs, by the expected number of wrong outputs, by the probability that some output is wrong, or by correctness on every input. The four criteria protect different objects: a typical gate, an additive loss, a typical input and every input. The widths they require differ by more than constants, and the first criterion carries no logarithm at all.

\begin{corollary}[Average accuracy carries no logarithm]\label{cor:accuracy}
Let $\cF$ be any gate family of fixed arity $r$, let $(a,\theta)\in\Rep(g)$ be fixed, and use the canonical rule. Then $\Exp X/\abs\cF\to0$ if and only if $d/k\to\infty$. No further condition on $m$ and no condition on the structure of $\cF$ is needed.
\end{corollary}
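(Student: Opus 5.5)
The plan is to work class by class through the exact one-gate law of \cref{prop:onegate}. By that proposition, $\Exp X=\sum_zN_zp_z$, and since $\sum_zN_z=\abs\cF$, the ratio $\Exp X/\abs\cF=\sum_zq_zp_z$ with weights $q_z=N_z/\abs\cF$ is a convex combination of the class error probabilities. The dependence on $m$ and on the structure of $\cF$ enters only through these weights. So it suffices to show two things: every $p_z\to0$ when $d/k\to\infty$, and some $p_z$ with non-negligible weight stays bounded away from zero otherwise. The number of classes is at most $2^r$, which is fixed.

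\emph{Sufficiency.} I would use formula \eqref{sm:eq:pz}, $p_z=\Exp\,\Phibar\bigl((s-\varsigma_zc(J-1))/(v\sqrt{J/d})\bigr)$ with $v^2\le k\norm a^2$. Since $(a,\theta)$ is fixed, $s=s_z>0$ and $c$ are constants. On the event $\{\abs{J-1}\le\eta\}$ with $\eta=\min\{1/2,\,s/(2\sqrt r\norm a)\}$, the argument is at least $(s/2)\sqrt{d/(2k)}/\norm a\to\infty$. The complement has probability at most $2e^{-d\eta^2/8}\to0$ by \eqref{sm:eq:chi}, because $d\ge d/k\to\infty$. Hence $p_z\to0$ for every $z$, and therefore $\sum_zq_zp_z\le\max_zp_z\to0$.

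\emph{Necessity.} Suppose $d/k\not\to\infty$, and pass to a subsequence with $d/k\le D$. I would use the edge argument from the proofs of \cref{thm:weighted} and \cref{thm:box}. Pick adjacent classes $z,z'$ with $g(z)=1$ and $g(z')=0$. Then $s_z+s_{z'}=\abs{a_l}$ is bounded, and the noise scale $v\sqrt{J/d}$ is at least of order $\norm a\sqrt{(k-r)J/d}\ge\norm a\sqrt{J/(2D)}$, which is bounded below on $\{J\ge1/2\}$. If $d$ stays bounded, $\Prob(J\in[1/2,2])$ is bounded below. If $d\to\infty$, $J\to1$ and the drift term $c(J-1)\to0$. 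In either case the argument of $\Phibar$ for class $z$ is bounded above with probability bounded below, so $p_z\ge c_0>0$ and likewise $p_{z'}\ge c_0$.

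The main obstacle is the weights. A lower bound on $p_z$ gives nothing if $q_z\to0$, and for a general family with $m/k\to\infty$ the classes that contain active inputs can have vanishing weight. The way around this is to choose the binding edge inside the dominant class. The all-inactive class $z=0$ has $q_0=(n)_r/(m)_r\to1$, since each gate tuple is uniformly distributed over ordered tuples. I would then compare class $0$ with a neighbour: either $g(0)=0$ and $g(e_l)=1$ for some $l$, or $g(0)=1$, which gives $s_0=\theta-0$ or $-\theta$, a fixed positive margin. The real issue is that class $0$ need not be close to the decision boundary. What I need is that the interference noise, of order $\norm a\sqrt{k/d}\gtrsim\norm a/\sqrt D$, is comparable to the fixed margin $s_0$, so that $p_0\ge\Phibar(C s_0\sqrt D/\norm a)>0$. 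A fixed margin over a bounded noise ratio gives a constant error probability, so this works without any boundary argument. Thus $\Exp X/\abs\cF\ge q_0p_0\ge c>0$ along the subsequence, and the equivalence follows.
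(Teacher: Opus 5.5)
Your proof is correct and follows the paper's route: write $\Exp X/\abs\cF$ as a convex combination of the class error probabilities $p_z$ from the exact one-gate law, show every $p_z\to0$ when $d/k\to\infty$ by concentration of $J$, and otherwise bound $p_z$ below by a constant on the event $\{1/2\le J\le 2\}$. The detour through the dominant zero-active class and $q_0\to1$ is unnecessary. Your closing observation, a fixed margin over a bounded noise ratio, applies to every class, as in the paper. The convex combination is therefore bounded below whatever the weights are, so the ``main obstacle'' about vanishing $q_z$ never arises.
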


\begin{proof}
A fixed gate has class $z$ with probability $N_z/\abs\cF$, so $\Exp X/\abs\cF=\sum_z(N_z/\abs\cF)\,p_z$ is a convex combination of the $2^r$ class error probabilities. By \eqref{sm:eq:pz}, $p_z=\Exp\Phibar(T_z)$ with $T_z=(s_z-\varsigma_z\ip az(J-1))/(v_z\sqrt{J/d})$ and $(k-r)\norm a^2\le v_z^2\le k\norm a^2$. If $d/k\to\infty$, then $J\to1$ in probability, and $s_z\sqrt d/(\norm a\sqrt k)\to\infty$ because $s_z$ and $a$ are fixed. Hence $T_z\to\infty$ in probability, $p_z\to0$ for every class by bounded convergence, and so does the convex combination. If instead $d\le Ck$ along a subsequence, then on $\{1/2\le J\le2\}$ the bound \eqref{sm:eq:marginbound} gives a numerator at most $3\sqrt r\norm a$ and a denominator at least $\norm a\sqrt{(k-r)/(2d)}$, so $T_z\le6\sqrt{rC}$ once $k\ge2r$. The probability of this event is positive for every $d$ and tends to one, so it is at least some $c_0>0$. Thus $p_z\ge c_0\Phibar(6\sqrt{rC})$ for every class, and the convex combination stays bounded away from zero.
\end{proof}

The average is dominated by the most numerous class. For sparse inputs that class has no active input, and its gates fail only when a Gaussian interference of scale $\sigd$ crosses a fixed margin; no extreme value enters. Campaign E1 shows the size of this blind spot. At $k=1024$, $m=1024^4$ and the quantile box centre, the half-slack network has a wrong output on $85\%$ of draws and an expected count of $427$, while the expected fraction of wrong outputs is $7.1\times10^{-22}$. At $0.6$ times the centre no draw is correct, and the fraction is $4.1\times10^{-18}$. The zero-active class contains all but about a fraction $2k/m$ of the gates, and at the centre it contributes $10^{-54}$ expected errors.

\begin{table}[!ht]
\caption{Four error criteria for the universal family of pairwise conjunctions when $\log k=o(\log m)$, in order of the width they require. The first row is a condition on $d/k$, the middle rows are sharp first-order thresholds, and the last row is sharp up to constants. With optimized biases, the reliability threshold holds against every common representation and the expected-count threshold against every gate-local rule.}
\label{tab:criteria}
\centering
\footnotesize
\renewcommand{\arraystretch}{1.15}
\begin{tabular}{@{}llccl@{}}
\toprule
Criterion & Protects & Midpoint bias & Optimized bias & Source\\
\midrule
$\Exp X/\abs\cF\to0$ & a typical gate & \multicolumn{2}{c}{$d/k\to\infty$ (any fixed bias)} & \Cref{cor:accuracy}\\
$\Prob(X=0)\to1$ & a typical input & $8k\log m$ & $2k\log m$ & \Cref{thm:thr}(a,b)\\
$\Exp X\to0$ & the total additive loss & $16k\log m$ & $4k\log m$ & \Cref{thm:thr}(c,d)\\
correct on every support & every input, chosen after $E$ & $\Theta(k^2\log m)$ & --- & \Cref{prop:uniform}\\
\bottomrule
\end{tabular}
\end{table}

\Cref{tab:criteria} is a specification checklist. A sentence such as ``the layer computes every pairwise conjunction in superposition at width $d$'' becomes checkable once it names a row. At width of order $k\log m$ the first row holds, the last row fails, and the middle rows are decided by constants that range over a factor of eight. The row follows from how the outputs are consumed. A batch of answers that is discarded after one error needs the second row. A cost that accrues per wrong answer needs the third. Inputs that may be chosen with knowledge of the dictionary need the fourth, and for midpoint probes the widths of the middle rows do not buy it: by \cref{prop:uniform}, a midpoint network whose width is of order $k\log m$ fails on some support with probability tending to one, since such a width is eventually below $ck^2\log m$.

\subsection{Feature capacity at fixed width}\label{sm:app:capacity}

Superposition is usually measured by how many features fit into $d$ dimensions. The thresholds can be read the same way: fix $d$ and $k$, and ask for the largest dictionary under which a criterion still holds. When $d/(k\log k)\to\infty$ this capacity is exponential in $d/k$, and the criterion and the bias change the constant in the exponent.

\begin{corollary}[Capacity at fixed width]\label{cor:capacity}
Let $\cF=\cU_r(\THR^r_t)$, let $k\to\infty$ with $d/(k\log k)\to\infty$, and fix $\varepsilon\in(0,1)$. For the four parts of \cref{thm:thr} put
\[
c_{\rm a}=(2r-2t+1)^2,\quad c_{\rm b}=4(r-t+1)^2,\quad c_{\rm c}=4r(r-t+1),\quad c_{\rm d}=r\bigl(\sqrt{r-t}+\sqrt{r-t+1}\bigr)^2 .
\]
For each part $\bullet\in\{{\rm a},{\rm b},{\rm c},{\rm d}\}$: if $m/k\to\infty$ and $\log m\le(1-\varepsilon)\,d/(2c_\bullet k)$, its criterion holds for the representation named there; if $\log m\ge(1+\varepsilon)\,d/(2c_\bullet k)$, its criterion fails, in part (a) for every common representation and in part (d) for every gate-local rule.
\end{corollary}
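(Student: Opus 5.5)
This corollary is a restatement of \cref{thm:thr} in which $m$ is solved for as a function of $(d,k)$, so the plan is to invert each of the four thresholds in the regime the hypothesis forces.

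The first step places us in the skewed regime. If $\log m\le(1-\varepsilon)d/(2c_\bullet k)$ while $d/(k\log k)\to\infty$, this hypothesis alone does not force $\log m/\log k\to\infty$. So I will split along subsequences.

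\emph{Case 1: $\beta\to\infty$.} By \cref{cor:markov}, each threshold of \cref{thm:thr} equals $2c_\bullet k\log m\,(1+o(1))$. Here I use $\log n=\log m\,(1+o(1))$, which holds because $m/k\to\infty$. The four constants $c_{\rm a},\dots,c_{\rm d}$ are exactly the limits listed in \cref{cor:markov}.

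\emph{Case 2: $\beta$ bounded along a subsequence.} Then every threshold is $O(k\log m)=O(k\log k)$, because $\log m\le C\log k$. So $d$, which is $\gg k\log k$, exceeds $(1+\varepsilon)$ times the threshold, and the criterion holds. In this case the failure hypothesis $\log m\ge(1+\varepsilon)d/(2c_\bullet k)\gg\log k$ cannot occur, since it contradicts boundedness of $\beta$. So failure statements only ever arise in Case 1.

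The second step applies the theorem in the skewed limit.

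\emph{Achievability.} Write the threshold as $D_\bullet=2c_\bullet k\log m(1+o(1))$. Then
\[
d\ge 2c_\bullet k\log m/(1-\varepsilon)\ge(1+\varepsilon/2)D_\bullet
\]
eventually. The corresponding part of \cref{thm:thr} then gives $\Prob(X=0)\to1$ in parts (a) and (b), and $\Exp X\to0$ in parts (c) and (d), for the named representation. This uses the half-slack bias in (a) and the ball-optimal bias in (d).

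\emph{Converse.} Here $d\le D_\bullet/(1+\varepsilon)(1+o(1))\le(1-\varepsilon/2)D_\bullet$. Part (a) then follows from the common-representation converse, \cref{thm:box}(ii). Part (d) follows from the gate-local converse, \cref{thm:weighted}. Parts (b) and (c) follow from the fixed-representation failure statements, \cref{thm:box}(i) and \cref{thm:ball}.

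The main obstacle is the careful handling of $(1+o(1))$ factors so that $\varepsilon$ transfers cleanly. One must check that the $o(1)$ in \cref{cor:markov} is uniform along any subsequence with $\beta\to\infty$. This holds because the exact expressions, for example $((2t-1)\lambda_1+(2r-2t+1)\lambda_0)^2$ and $r(\rho_t+\rho_{t-1})^2$, are explicit continuous functions of $1/\beta$ times $\lambda_0^2=2\log n$.

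A second point to check is that the converses do not implicitly require $d\to\infty$ or any other restriction. \Cref{thm:box} and \cref{thm:weighted} impose no such condition, and in the converse regime $d\asymp k\log m\to\infty$ anyway. The final step is a subsequence argument, which combines the two cases into convergence along the full sequence.
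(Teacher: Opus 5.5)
Your argument is correct, but it is organized differently from the paper's proof. You split along subsequences according to whether $\beta\to\infty$ or $\beta$ stays bounded. In the first case you use the exact limits of \cref{cor:markov}. In the second you note that every threshold is $O(k\log k)=o(d)$, and that the failure hypothesis cannot occur there. That observation, $\log m\gg\log k$, is also what supplies $m/k\to\infty$ in the failure direction, which \cref{thm:thr} needs and which you otherwise only state as an assumption.

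The paper avoids the case split. It writes each threshold as $kQ(\lambda_1,\lambda_0)(1+o(1))$, where $Q$ is continuous, nondecreasing in each argument, positively homogeneous of degree two, and satisfies $Q(0,1)=c_\bullet$. For achievability it sets $\Lambda^2=(1-\varepsilon)d/(c_\bullet k)$ and notes that $\lambda_0\le\Lambda$ and $\lambda_1/\Lambda\to0$. This gives $kQ(\lambda_1,\lambda_0)\le k\Lambda^2Q(\lambda_1/\Lambda,1)=(1-\varepsilon)d\,(1+o(1))$. For the converse it uses only the one-sided bound $Q(\lambda_1,\lambda_0)\ge Q(0,\lambda_0)=c_\bullet\lambda_0^2$.

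The paper's route makes the relevant small parameter $\lambda_1/\Lambda$ rather than $1/\beta$. That parameter tends to zero by the hypothesis $d/(k\log k)\to\infty$ whatever $\beta$ does, so no subsequence argument and no two-sided asymptotics are needed. Your route reuses \cref{cor:markov} directly, at the cost of the bounded-$\beta$ case and the subsequence bookkeeping. The uniformity concern you raise does not arise in either route: along any sequence with $\beta\to\infty$, the ratio of each threshold to $2k\log m$ simply tends to $c_\bullet$.
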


\begin{proof}
Each threshold of \cref{thm:thr} has the form $kQ(\lambda_1,\lambda_0)(1+o(1))$, where $Q$ is continuous, nondecreasing in each argument and positively homogeneous of degree two, with $Q(0,1)=c_\bullet$; for example, $Q=((2t-1)\lambda_1+(2r-2t+1)\lambda_0)^2$ in part (a) and $Q=r(\rho_t+\rho_{t-1})^2$ in part (d). Suppose first that $\log m\le(1-\varepsilon)d/(2c_\bullet k)$, and put $\Lambda^2=(1-\varepsilon)d/(c_\bullet k)$. Then $\lambda_0\le\Lambda$, and $\lambda_1/\Lambda\to0$ because $d/(k\log k)\to\infty$. Monotonicity and homogeneity give
\[
kQ(\lambda_1,\lambda_0)\le k\Lambda^2\,Q(\lambda_1/\Lambda,1)=(1-\varepsilon)\,d\,(1+o(1)),
\]
so for large $k$ the width exceeds the threshold by a factor at least $1+\varepsilon/2$, and \cref{thm:thr} gives the criterion. Suppose next that $\log m\ge(1+\varepsilon)d/(2c_\bullet k)$. Then $\log m/\log k\to\infty$, so $m/k\to\infty$ and $\lambda_0^2=2\log m\,(1+o(1))$, and
\[
kQ(\lambda_1,\lambda_0)\ge kQ(0,\lambda_0)=c_\bullet k\lambda_0^2\ge(1+\varepsilon)\,d\,(1-o(1)).
\]
The width is thus below the threshold by a fixed factor, and the failure statements and converses of \cref{thm:thr} apply.
\end{proof}

\begin{table}[!ht]
\caption{Capacity constants of \cref{cor:capacity}. At width $d$ and activity $k$, the criterion tolerates dictionaries with $\log m$ up to $d/(2c_\bullet k)$ to first order, so a smaller constant means exponentially more features. Columns follow the parts of \cref{thm:thr}.}
\label{tab:capacity}
\centering
\small
\renewcommand{\arraystretch}{1.15}
\begin{tabular}{@{}lcccc@{}}
\toprule
& \multicolumn{2}{c}{Reliability} & \multicolumn{2}{c}{Expected count}\\
\cmidrule(lr){2-3}\cmidrule(l){4-5}
Gate & (a) half-slack & (b) midpoint & (c) midpoint & (d) optimized\\
\midrule
$\ANDg_2$ & $1$ & $4$ & $8$ & $2$\\
$\ANDg_3$ & $1$ & $4$ & $12$ & $3$\\
$\ORg_2$ & $9$ & $16$ & $16$ & $6+4\sqrt2\approx11.66$\\
$\MAJg_3$ & $9$ & $16$ & $24$ & $9+6\sqrt2\approx17.49$\\
Recovery, $\THR^1_1$ & $1$ & $4$ & $4$ & $1$\\
\bottomrule
\end{tabular}
\end{table}

Because the capacity is exponential, a ratio of constants becomes a power of the dictionary size. For $r$-way conjunction $c_{\rm a}=1$ and $c_{\rm d}=r$: at equal width, the largest dictionary with a vanishing expected count is the $r$-th root of the largest reliable one, in the sense that the ratio of their logarithms tends to $1/r$. For conjunction the midpoint bias costs a fourth root under both criteria, because $c_{\rm b}/c_{\rm a}=c_{\rm c}/c_{\rm d}=4$; one scalar decides whether the layer tolerates $m$ features or $m^{1/4}$. Error criteria are known to change capacity constants in associative memories \citep{McEliece87}; here they change the constant in the exponent of an exponential capacity.

The constant $c_{\rm a}=(2r-2t+1)^2$ counts the inactive positions in the two binding classes, whose level $\lambda_0$ dominates when $m\gg k$. Conjunction binds on the classes with $r-1$ and $r$ active inputs and pays for one inactive position; disjunction binds on the classes with none and one active input and pays for $2r-1$. Reliable $r$-way conjunction and feature recovery share $c_{\rm a}=1$, so in this regime computing a conjunction directly costs no capacity beyond recovering its inputs. At finite $\beta$ the direct route costs the factor $((2r-1+\beta)/(1+\beta))^2$ in width (\cref{cor:recovery}), which tends to one in the regime of \cref{cor:capacity}.

\subsection{Width for a target success probability}\label{sm:app:sizing}

A first-order threshold locates the centre of a transition, not an operating point: at its quantile centre the half-slack network of pairwise conjunction succeeds with probability about $0.15$ (\cref{tab:window}). \Cref{thm:window} supplies the missing quantile. With $\Psi_\infty(x)=\exp(-e^{-x})$, the limit of $\Psi_\beta$ as $\beta\to\infty$, write
\[
P_{\rm mid}(x)=\Psi_{\beta_0}(x),\qquad P_{\rm half}(x)=\Psi_{\beta_0}(x/2)\,F_+(x/2),\qquad P_{\rm best}(x)=\sup_{y\in\R}\Psi_{\beta_0}(y)\,F_+(x-y).
\]

\begin{corollary}[Width for a target success probability]\label{cor:sizing}
Let $\cF$ be the universal family of pairwise conjunctions, $k\to\infty$, $m/k\to\infty$ and $\beta\to\beta_0\in[1,\infty]$. Each of $P_{\rm mid}$, $P_{\rm half}$ and $P_{\rm best}$ is a continuous, strictly increasing distribution function. For $\delta\in(0,1)$ let $x_\bullet(\delta)$ be the $(1-\delta)$-quantile of $P_\bullet$, and set
\begin{equation}\label{eq:sizing}
d_{\rm mid}(x)=4k\Bigl(\lhat_1+\lhat_0+\frac x{\lhat_1}\Bigr)^2,\qquad d_{\rm opt}(x)=k\Bigl(3\lhat_1+\lhat_0+\frac x{\lhat_1}\Bigr)^2 .
\end{equation}
Fix $\eta>0$. For all large $k$, the midpoint network has $\Prob(X=0)>1-\delta$ at $d=d_{\rm mid}(x_{\rm mid}(\delta)+\eta)$ and $\Prob(X=0)<1-\delta$ at $d=d_{\rm mid}(x_{\rm mid}(\delta)-\eta)$. The same holds for the half-slack bias of \cref{thm:thr}(a) with $d_{\rm opt}$ and $x_{\rm half}$, and for $\sup_\theta\Prob(X=0)$ over deterministic common biases with $d_{\rm opt}$ and $x_{\rm best}$.
\end{corollary}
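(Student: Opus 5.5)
The plan has three steps. First show that the three limiting functions are continuous, strictly increasing distribution functions. Then check that the widths in \eqref{eq:sizing} sit at the window coordinates of \cref{thm:window}, up to $o(1)$. Finally pass the strict $\pm\eta$ gaps through the uniform convergence of that theorem.

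\emph{Step 1 (the limits).} $\Psi_{\beta_0}$ is the law of $G_A+G_B/\beta_0$, or of $G_A$ when $\beta_0=\infty$. This is a sum of independent variables, one of which has a continuous density that is positive on $\R$. So its distribution function is continuous and strictly increasing.

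By \cref{sm:lem:twosided}, $F_+$ is the law of $-\log[\omega_1(\omega_1+\omega_2)]$. The product $\omega_1(\omega_1+\omega_2)$ has a positive density on $(0,\infty)$, so $F_+$ is also continuous and strictly increasing. Hence $P_{\rm half}(x)=\Psi_{\beta_0}(x/2)F_+(x/2)$ is a product of two such functions. It tends to $0$ and $1$ at $\mp\infty$.

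For $P_{\rm best}$, note that $\Psi_{\beta_0}(y)F_+(x-y)$ is strictly increasing in $x$ for each fixed $y$. The supremum over $y$ is attained on a compact set, because the product vanishes as $y\to\pm\infty$ locally uniformly in $x$. So $P_{\rm best}$ is strictly increasing. Continuity follows from equicontinuity of $x\mapsto\Psi_{\beta_0}(y)F_+(x-y)$ in $y$, since $F_+$ is uniformly continuous. The limits at $\pm\infty$ come from $P_{\rm half}\le P_{\rm best}\le\min\{1,\sup_y F_+(x-y)\cdots\}$, together with the bound $\Psi_{\beta_0}(y)F_+(x-y)\le\min\{\Psi_{\beta_0}(y),F_+(x-y)\}$. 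As $x\to-\infty$, one of $y$ or $x-y$ is below $x/2$. Strict monotonicity and continuity make each quantile $x_\bullet(\delta)$ unique, with $P_\bullet(x_\bullet\pm\eta)\gtrless1-\delta$ strictly.

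\emph{Step 2 (matching coordinates).} Expanding $d_{\rm mid}(x)=d_0+8k(\lhat_1+\lhat_0)x/\lhat_1+4kx^2/\lhat_1^2$, the window coordinate of \cref{thm:window}(i) is $(d-d_0)/w=x+O(1/\lhat_1^2)$, with $w$ as in the midpoint-profile proof. For $d_{\rm opt}$ we have directly $\lhat_1(\sqrt{d/k}-3\lhat_1-\lhat_0)=x$. This is exactly the total slack used in the proof of \cref{thm:window}(ii) and in the half-slack split $x_-=x_+=x/2$.

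The case $\beta_0=\infty$ needs care. There I would restate the window results in the slack coordinate rather than the form $8k(1+\beta)x$, because the ratio between the two parametrizations has a factor $(1+\beta)$ that diverges. The proofs in \cref{sm:window} already work in the slack coordinate and allow $\lhat_0/\lhat_1\to\infty$.

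\emph{Step 3 (conclusion).} Take $x=x_\bullet(\delta)\pm\eta$, which ranges over a compact set. The window theorem gives $\Prob(X=0)=P_\bullet(x)+o(1)$; for the best rule the same holds for $\sup_\theta\Prob(X=0)$. Along the sequence, $\beta\to\beta_0$, so $\Psi_\beta\to\Psi_{\beta_0}$ uniformly. By Step 1 the two limits are separated from $1-\delta$ by fixed gaps, so the $o(1)$ error is eventually smaller than both gaps. A subsequence argument handles any sequence.

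I expect the main obstacle to be Step 3's use of uniformity when $\beta$ varies, particularly at $\beta_0=\infty$. There I would verify that the limit of $\Psi_\beta$ and the optimal-bias supremum still converge uniformly on compacts. This follows from continuity of $(\beta^{-1},x)\mapsto\Psi_\beta(x)$ on $[0,1]\times\R$ and from the tightness argument in the proof of \cref{thm:window}(ii).
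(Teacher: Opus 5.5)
Your proposal is correct and follows the paper's proof essentially step for step. The shared steps are: continuity and strict monotonicity of the three profiles, including the bound $\Psi_{\beta_0}(y)F_+(x-y)\le\min\{\Psi_{\beta_0}(y),F_+(x-y)\}$ for $P_{\rm best}$ at $-\infty$; identifying $d_\bullet(x)$ with the window point of \cref{thm:window} at $x+o(1)$; and using uniform convergence $\Psi_\beta\to\Psi_{\beta_0}$ to carry the strict $\pm\eta$ gaps. The extra care you plan for $\beta_0=\infty$ is unnecessary, because no factor $1+\beta$ diverges between the two parametrizations. Expanding gives $d_{\rm mid}(x)=d_0+8k(1+\hat\beta)x+4kx^2/\lhat_1^2$ with $(1+\hat\beta)/(1+\beta)\to1$ for every $\beta$, so the window and slack coordinates differ by $o(1)$ on compact sets.
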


\begin{proof}
$\Psi_\beta$ is the distribution function of $G_A+G_B/\beta$, with $G_B/\infty=0$, and $F_+$ is that of $-\log[\omega_1(\omega_1+\omega_2)]$. Both variables have continuous positive densities on $\R$, so $\Psi_\beta$, $F_+$ and $P_{\rm half}$ are continuous and strictly increasing. For $P_{\rm best}$, the product $\Psi_{\beta_0}(y)F_+(x-y)$ tends to zero as $y\to\pm\infty$, so the supremum is attained, and it increases strictly in $x$ because $F_+$ does; uniform continuity of $F_+$ gives continuity. For every $y$, either $y\le x/2$ or $x-y<x/2$, so $P_{\rm best}(x)\le\max\{\Psi_{\beta_0}(x/2),F_+(x/2)\}\to0$ as $x\to-\infty$; and $P_{\rm best}\ge P_{\rm half}\to1$ as $x\to\infty$.

Next, expanding \eqref{eq:sizing} gives $d_{\rm mid}(x)=d_0+8k(1+\hat\beta)x+4kx^2/\lhat_1^2$ and $d_{\rm opt}(x)=d_0^{\rm opt}+2k(3+\hat\beta)x+kx^2/\lhat_1^2$, with $\hat\beta=\lhat_0/\lhat_1=\beta(1+o(1))$ (\cref{sm:window}). The window units $8k(1+\beta)\ge16k$ and $2k(3+\beta)\ge8k$ dominate the quadratic terms, so $d_\bullet(x)$ is the window point of \cref{thm:window} at $x+o(1)$, uniformly on compact sets. \Cref{thm:window} then gives success probabilities $P_\bullet(x)+o(1)$ with $\Psi_\beta$ in place of $\Psi_{\beta_0}$. As $\beta\to\beta_0$, $G_A+G_B/\beta\to G_A+G_B/\beta_0$ almost surely and the limit has a continuous distribution function, so $\Psi_\beta\to\Psi_{\beta_0}$ uniformly; the same follows for $P_{\rm half}$ and $P_{\rm best}$, since a supremum moves by at most the uniform distance. Strict monotonicity gives $P_\bullet(x_\bullet(\delta)+\eta)>1-\delta>P_\bullet(x_\bullet(\delta)-\eta)$.
\end{proof}

\begin{table}[!ht]
\caption{Width for a target success probability at $k=1024$ and $m=k^4$, from \eqref{eq:sizing} with the profiles evaluated at $\hat\beta=\lhat_0/\lhat_1=2.28$, in units of the optimized quantile centre $k(3\lhat_1+\lhat_0)^2$. The midpoint centre is $1.54$ in these units. The closed-form half-slack bias stays within $0.6\%$ of the best deterministic common bias.}
\label{tab:sizing}
\centering
\small
\renewcommand{\arraystretch}{1.15}
\begin{tabular}{@{}lcccc@{}}
\toprule
Target $\Prob(X=0)$ & $0.5$ & $0.9$ & $0.99$ & $0.999$\\
\midrule
Midpoint bias & $1.61$ & $1.81$ & $2.08$ & $2.35$\\
Half-slack bias & $1.10$ & $1.29$ & $1.54$ & $1.80$\\
Best deterministic common bias & $1.09$ & $1.29$ & $1.54$ & $1.79$\\
\bottomrule
\end{tabular}
\end{table}

\Cref{eq:sizing} is the first-order threshold with a slack $x_\bullet(\delta)/\lhat_1$ added to the extreme-value levels. In relative terms the overhead of the half-slack rule is $(1+x_{\rm half}(\delta)/(\lhat_1(3\lhat_1+\lhat_0)))^2-1$, of order $x_{\rm half}(\delta)/\log k$ when $\beta$ stays bounded: it vanishes, but only logarithmically. For $99\%$ success at $m=k^4$, the half-slack width is $2.20$, $1.75$, $1.54$, $1.35$ and $1.22$ times its centre at $k=2^6$, $2^8$, $2^{10}$, $2^{14}$ and $2^{20}$. \Cref{tab:sizing} adds two design facts at $k=1024$. A bias that needs only $k$, $m$ and $d$ costs at most $0.6\%$ in width against the best deterministic bias. The midpoint bias needs $1.30$ to $1.46$ times the width of the half-slack bias, depending on the target.

The limit profiles are conservative at the simulated sizes. In all $77$ frozen cells above the quantile centre for pairwise conjunction (the half-slack bias in campaigns E1 and E2, the midpoint in E3), the limit lies below ($44$ cells) or inside ($33$ cells) the $95\%$ Wilson interval of the observed success frequency, and never above the observed frequency. At $k=1024$ and $m=k^4$, for instance, the half-slack network at $1.5$ times the centre succeeds on $99.76\%$ of draws, with interval $[99.55\%,99.87\%]$, whereas \eqref{eq:sizing} asks for $1.54$ times the centre to reach $99\%$. The direction has a simple source. The window replaces the upper tail $k\Phibar(\lhat_1+y/\lhat_1)$ of a Gaussian maximum by its Gumbel limit $e^{-y}$, and Mills' ratio gives $k\Phibar(\lhat_1+y/\lhat_1)\approx e^{-y-y^2/(2\lhat_1^2)}$, a lighter tail exactly where a high success target probes. Within the tested range the rule therefore errs toward extra width.

\subsection{Auditing a layer on a realized representation}\label{sm:app:audit}

The box identity (\cref{lem:boxid}) is deterministic. It therefore applies to any stored vector and any read directions, provided that each gate reads a fixed combination of per-feature reads. Verifying a universal layer then needs $2r$ extreme reads per pool instead of an enumeration of $(m)_r$ gates.

\begin{proposition}[Exact audit of a factored layer]\label{prop:audit}
Let $u\in\R^d$ and $v_1,\dots,v_m\in\R^d$ be arbitrary, let $S\subseteq[m]$ satisfy $r\le\abs S\le m-r$, and put $R_i=\ip{v_i}u$. Evaluate $\cU_r(g)$ with readouts $w_I=\sum_la_lv_{I_l}$ and a common bias $\theta$, where $(a,\theta)\in\Rep(g)$, and let $B_z(R)$ be the right side of \cref{lem:boxid} with $\xi=R$ and $b=\varsigma_za$.
\begin{enumerate}[label=\textup{(\roman*)},leftmargin=2em]
\item Every gate is correct if and only if $B_z(R)\le\varsigma_z\theta$ for each class with $g(z)=0$ and $B_z(R)<\varsigma_z\theta$ for each class with $g(z)=1$.
\item Given $R$, the $2^r$ statistics $B_z(R)$ depend only on the $r$ largest and the $r$ smallest reads in $S$ and in $[m]\setminus S$, and they take $O(rm+2^rr\log r)$ operations. For $r=2$ the error count $X$ takes $O(m\log m)$ operations.
\item Suppose $\Delta=\min_z(\varsigma_z\theta-B_z(R))>0$. Every gate stays correct for every read vector $R'$ with $\norm{R'-R}_\infty<\Delta/\norm a_1$, and hence, if some $v_i\ne0$, for every stored vector $u'$ with $\norm{u'-u}<\Delta/(\norm a_1\max_i\norm{v_i})$. Conversely, if a class $z^*$ attains $\Delta$ with a maximizing tuple $I^*$ and $w_{I^*}\ne0$, then $u'=u+t\varsigma_{z^*}w_{I^*}$ makes gate $I^*$ wrong for every $t>\Delta/\norm{w_{I^*}}^2$.
\end{enumerate}
\end{proposition}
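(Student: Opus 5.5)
The plan is to reduce everything to the deterministic \cref{lem:boxid}, applied with $\xi=R$ in place of the Gaussian interference. First observe that gate $I$ has score $\ip{w_I}{u}=\sum_la_l\ip{v_{I_l}}{u}=\ip a{R_I}$. It outputs $\one\{\ip a{R_I}>\theta\}$, so it is correct exactly when $(2g(z)-1)(\ip a{R_I}-\theta)>0$ for $g(z)=1$, and when $\ip a{R_I}\le\theta$ for $g(z)=0$. Multiplying by $\varsigma_z$ rewrites both conditions as a single statement $\ip{\varsigma_za}{R_I}\ \square\ \varsigma_z\theta$, with $\square$ equal to $\le$ when $g(z)=0$ and $<$ when $g(z)=1$. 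Taking the maximum over the class, and noting that a finite maximum is attained so the strictness is preserved, gives (i) once we know $\max_{I\in\cF_z(S)}\ip{\varsigma_za}{R_I}=B_z(R)$. That equality is \cref{lem:boxid}. Its pool hypothesis holds because $r\le\abs S\le m-r$ gives each pool at least $r$ elements. Nothing in the lemma's proof used Gaussianity; it is pure rearrangement.

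For (ii), the formula in \cref{lem:boxid} involves only the top $c^\pm_p\le r$ values of $\pm R$ in each pool, which are the $r$ largest and $r$ smallest reads of $S$ and of its complement. A linear-time selection, or a single pass maintaining $r$-element heaps, extracts these in $O(rm)$. Each of the $2^r$ classes then needs a sort of at most $r$ weights and an $O(r)$ sum, for $O(2^rr\log r)$ in total. For the count $X$ at $r=2$, I would sort each pool once in $O(m\log m)$. For a fixed first coordinate $i$ in class $z$, the number of partners $j$ in the relevant pool with $a_1R_i+a_2R_j$ on the wrong side of $\theta$ is then a binary search or a two-pointer sweep, excluding $j=i$ when the pools coincide. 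Summing over $i$ and the four classes stays within $O(m\log m)$.

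For (iii), I would use that $\ip{\varsigma_za}{R'_I}-\ip{\varsigma_za}{R_I}\le\norm a_1\norm{R'-R}_\infty$, which gives $B_z(R')\le B_z(R)+\norm a_1\norm{R'-R}_\infty<\varsigma_z\theta$. By (i) every gate stays correct, and the strict inequality covers both cases. Since $\abs{R'_i-R_i}=\abs{\ip{v_i}{u'-u}}\le\max_i\norm{v_i}\norm{u'-u}$, the statement about $u'$ follows. For the converse, moving $u$ to $u'$ changes the score of $I^*$ by $t\varsigma_{z^*}\norm{w_{I^*}}^2$. The quantity $\varsigma_{z^*}(\text{score})$ therefore rises from $\varsigma_{z^*}\theta-\Delta$ to above $\varsigma_{z^*}\theta$ once $t>\Delta/\norm{w_{I^*}}^2$. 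This strictly exceeds the threshold, so the gate is wrong in either class, and any tie case is avoided.

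The only delicate step is bookkeeping rather than mathematics. It is the careful matching of $\le$ versus $<$ with the sign $\varsigma_z$ in (i), together with the $O(m\log m)$ count for $r=2$. That count needs the exclusion of $j=i$ and the handling of mixed-sign weights, where one pool is searched in increasing rather than decreasing order. I expect that to be where the argument needs the most care.
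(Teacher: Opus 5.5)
Your proposal is correct and follows the paper's proof essentially step for step. For (i) you use the score identity $\ip{w_I}u=\ip a{R_I}$, the $\varsigma_z$ sign bookkeeping that turns correctness into $\le$ or $<$, and a maximum over each class evaluated by \cref{lem:boxid}, whose pool hypothesis comes from $r\le\abs S\le m-r$. For (ii) you use top-$r$ buffers for the statistics, and for the $r=2$ count you sort each pool and binary-search the partners, excluding the read itself. For (iii) you use the bound $\norm a_1\norm{R'-R}_\infty$ and the exact shift $t\norm{w_{I^*}}^2$ along the probe of the binding gate.
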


\begin{proof}
Gate $I$ outputs $\one\{\ip a{R_I}>\theta\}$, because $\ip{w_I}u=\ip a{R_I}$. For a class with $g(z)=0$, where $\varsigma_z=1$, it is correct exactly when $\ip{\varsigma_za}{R_I}\le\varsigma_z\theta$; for a class with $g(z)=1$, where $\varsigma_z=-1$, exactly when $\ip{\varsigma_za}{R_I}<\varsigma_z\theta$. Taking the maximum over the class and applying \cref{lem:boxid}, whose pool condition is $r\le\abs S\le m-r$, gives (i). For (ii), one pass over each pool with sorted buffers of length $r$ finds its $r$ largest and $r$ smallest reads, and each class then needs the sorted weights of its four groups. For $r=2$, sort each pool; for every read of one pool, binary search counts the partners, other than the read itself, in the same or the other pool that put a gate of the corresponding class on the wrong side of $\theta$. For (iii), each $\ip{\varsigma_za}{R_I}$ is a linear function of $R$ whose coefficients have $\ell_1$ norm $\norm a_1$, because the features of $I$ are distinct. Hence $B_z(R')\le B_z(R)+\norm a_1\norm{R'-R}_\infty<\varsigma_z\theta$ for every class, and (i) applies; the bound for $u'$ follows from $\abs{R'_i-R_i}\le\norm{v_i}\norm{u'-u}$. Finally, $u'=u+t\varsigma_{z^*}w_{I^*}$ raises $\ip{\varsigma_{z^*}a}{R_{I^*}}$ by exactly $t\norm{w_{I^*}}^2>\Delta$, so it exceeds $\varsigma_{z^*}\theta$ and gate $I^*$ is wrong whatever $g(z^*)$ is.
\end{proof}

Under the model of \cref{sec:setting}, with $v_i=e_i$, part (i) is \cref{prop:cloud} without exceptional events, and part (ii) is the reason the exact sampler costs $O(k\log k)$ per draw at any $m$. Part (iii) gives the whole layer a robustness margin, and above the box threshold this margin is of the order of the clean margins. Let $d\ge(1+\varepsilon)\Dbox(a,\theta)$ for the universal family. Since $R_I=Hz+\sigma\xi_I$, we have $\varsigma_z\theta-B_z(R)=s^H_z-\sigma M_z$, and on the event $\mathcal E_\delta$ in the proof of \cref{thm:box} this is at least $s_z-(1+5\delta)\sigd h_{\Boxz z}(a)\ge s_z(1-(1+5\delta)(1+\varepsilon)^{-1/2})$ for every class. Moreover $d\ge\Dbox\ge k\lambda_0^2/(4r)$ by \eqref{sm:eq:marginbound}, so $\log m=o(d)$, and chi-square and Gaussian tails with a union bound give $\max_i\norm{e_i}\to1$ and $\max_I\abs{\norm{w_I}/\norm a-1}\to0$ in probability. With probability tending to one, therefore, every perturbation of the stored vector with
\[
\norm{u'-u}\le\bigl(1-(1+\varepsilon)^{-1/2}-o(1)\bigr)\frac{\min_zs_z}{\norm a_1}
\]
leaves all $(m)_r$ outputs correct, while a perturbation of norm $\Delta/\norm a\,(1+o(1))$ along the probe of a binding gate breaks one. For the representations of \cref{thm:thr}, which have $a=\one$, both radii are of order one, whereas $\norm u\approx\sqrt k$.

Two uses follow. The first is bias calibration from data. The oracle of campaign E2 succeeds when some common bias separates the largest negative score from the smallest positive score; by (ii), this event and a separating bias are computable from the extremes. \Cref{thm:box}(ii) shows that no such read-dependent common bias moves the reliability threshold, and in campaign E2 at $k=1024$ the oracle reaches $50\%$ success only $2\%$ to $7\%$ earlier in width than the closed-form half-slack bias (\cref{tab:e2}). The second is a direct test of the mechanism in learned models, which we propose as an experiment. Toy models of superposition and of learned universal-AND circuits \citep{Elhage22,HanniMVC24,Newgas25} have identifiable features. Fit per-feature read directions $v_i$ and a common representation $(a,\theta)$ to the learned readouts. The fitting residual measures the distance from the factored class, and on the factored part (i) predicts every failing input from $2r$ extremes per pool. Agreement between predicted and observed failures would show that the learned layer fails through shared extremes, as the box describes.

\subsection{Unread activity in the stream}\label{sm:app:unread}

The vector read by a gate family usually carries more than the features that the family reads: other dictionaries, earlier writes, positional content. The exact law extends to independent Gaussian content, and the extension separates the two roles that $k$ plays in every threshold.

\begin{proposition}[Unread content]\label{prop:unread}
Let $u=Ex+\nu g$, where $g\sim N(0,I_d/d)$ is independent of $(E,S)$ and $\nu\ge0$ may depend on $k$. Put $K=k+\nu^2$ and $c=1-\nu/\sqrt K\in(0,1]$. With $H,Z,W$ as in \cref{thm:exact} and $\sigma=\sqrt{KH/d}$,
\[
(A_i,B_j)\overset d=\bigl(H+\sigma(Z_i-c\bar Z),\ \sigma W_j\bigr)
\]
jointly in $i\le k$ and $j\le n$, and a canonical gate of class $z$ has the score law \eqref{eq:onegate} with $k$ replaced by $K$. Consequently \cref{thm:ball,thm:weighted,thm:sup,thm:box,thm:thr,thm:window} hold with $\sigd=\sqrt{K/d}$: the prefactor $k$ of every threshold, centre and window width becomes $K$, the condition $d/k\to\infty$ in \cref{thm:sup} becomes $d/K\to\infty$, and $\lambda_1$, $\lambda_0$, $\lhat_1$, $\lhat_0$, $N_z$, the programs \eqref{eq:socp} and \eqref{eq:vbox} and the window profiles are unchanged.
\end{proposition}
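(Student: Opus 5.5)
The plan is to redo the Gaussian conditioning behind \cref{thm:exact} with the new stored vector. Then I check that every downstream proof sees $k$ only through the interference scale $\sigma$, the law of $H$, and the centering term. Throughout, the extreme-value levels, the class sizes and both programs involve only $S$, $m$ and $(a,\theta)$, so they are untouched.

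\emph{Step 1: the exact law.} Fix $S=[k]$ and write $u=\sum_{i\le k}e_i+\nu g$.
\begin{itemize}
\item The vectors $(e_1,\dots,e_k,g)$ are jointly Gaussian with $\Cov(e_i,u)=I_d/d$ and $\Cov(u)=(k+\nu^2)I_d/d=KI_d/d$.
\item Gaussian regression gives $e_i=u/K+\eta_i$. Here $(\eta_i)_{i\le k}$ is independent of $u$, and $\Cov(\eta_i,\eta_l)=d^{-1}(\one\{i=l\}-1/K)I_d$.
\item Hence $A_i=\norm u^2/K+\ip{\eta_i}u$. Given $u$, the vector $(\ip{\eta_i}u)_{i\le k}$ is centered Gaussian with covariance $(\norm u^2/d)(I_k-\one\one^\top/K)$.
\item For independent standard $Z_i$, the vector $(Z_i-c\bar Z)_i$ has covariance $I_k-(2c-c^2)k^{-1}\one\one^\top$.
\item Matching requires $1-(1-c)^2=k/K$, that is $(1-c)^2=\nu^2/K$. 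This gives $c=1-\nu/\sqrt K\in(0,1]$.
\item The inactive columns are independent of $(u,\eta)$, so given $u$ the $B_j$ are i.i.d.\ $N(0,\norm u^2/d)$.
\item Finally $H=\norm u^2/K\sim\chi^2_d/d$, because $u\sim N(0,KI_d/d)$, and $\sigma=\norm u/\sqrt d=\sqrt{KH/d}$.
\end{itemize}

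\emph{Step 2: the one-gate law.} I repeat the proof of \cref{prop:onegate}. The coordinates of $(w,u)$ are i.i.d.\ pairs with variances $\norm a^2/d$ and $K/d$ and covariance $\ip az/d$. This is because $g$ is independent of $w=\sum_la_le_{I_l}$, and the gate's own columns enter $u$ exactly as before. The regression of $u$ on $w$ then yields \eqref{eq:onegate} with $k$ replaced by $K$.

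\emph{Step 3: transferring the theorems.} I go through the proofs in the appendix and mark every place where $k$ enters.
\begin{itemize}
\item \cref{thm:ball} uses only \eqref{sm:eq:pz}. There $v^2=K\norm a^2-c^2$ lies in $[(K-r)\norm a^2,K\norm a^2]$, and the argument is verbatim with $\sigd=\sqrt{K/d}$.
\item In the converse of \cref{thm:weighted}, the interference of a local probe given its own columns and class is $N(0,(K-\abs z)/d)$. This holds because the unread term $\nu g$ is independent Gaussian noise added to the other active columns. The Gram event is unchanged.
\item In \cref{thm:sup,thm:box}, the identity $M_z=T_z-\varsigma_zc\bar Z\ip az$ holds with $c\le1$, so every bound on the centering term only improves. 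The Lipschitz/Borell--TIS step and the order-statistic event $\mathcal E_\delta$ concern $G$ alone.
\item The Chebyshev bound (iii) in the proof of \cref{thm:box} becomes $2/(\delta^2K\lambda_1^2)\to0$. The small-dimension argument becomes $H\ge4(1-2\delta)^2\lambda_1^2K/d$.
\item \cref{thm:thr} is pure algebra on these laws. For \cref{thm:window}, the perturbations \eqref{sm:eq:window-errors} keep the order $\lhat_1/\sqrt k$ or better, because $c\bar Z$ is no larger and $H-1$ has the same law.
\end{itemize}

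The main obstacle I expect is this bookkeeping, not the conditioning. I must make sure no proof silently uses the specific centering $Z_i-\bar Z$, or uses $k$ as a count of active features in the role of the variance. Such spots include the condition $d/k\to\infty$ in \cref{thm:sup} (which becomes $d/K\to\infty$) and the bound $(k-r)\le v^2/\norm a^2$. For each theorem I would isolate the three inputs: the law of $H$, the scale $\sigma$, and the size of the centering term. I would then confirm that the original proof uses only these together with $S$-dependent quantities.
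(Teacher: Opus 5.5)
Your proposal is correct and takes essentially the same route as the paper. Both use the Gaussian regression $e_i=u/K+\eta_i$ with the matching $(1-c)^2=\nu^2/K$, then the one-gate regression with variance $K$, then bookkeeping that keeps $k$ in its role as the size of the active pool (levels, $N_z$, order statistics unchanged) and replaces it by $K$ only where it acts as a variance, with $c\bar Z$ only shrinking the centering terms. Your ``verbatim'' transfer of the ball proof relies on a point the paper states explicitly: auxiliary rates such as $k^{-1/4}$ and $1/\log k$ can be kept, because every bound they control only improves when $K\ge k$.
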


\begin{proof}
Fix $S=[k]$. Each coordinate of $u$ is $N(0,K/d)$, and $\Cov(e_i,u)=I_d/d$ for $i\le k$. Gaussian conditioning gives $e_i=u/K+\eta_i$, where $(\eta_i)_{i\le k}$ is independent of $u$ with $\Cov(\eta_i,\eta_l)=d^{-1}(\one\{i=l\}-1/K)I_d$. Hence $A_i=\norm u^2/K+\ip{\eta_i}u$, and given $u$ the vector $(\ip{\eta_i}u)_{i\le k}$ is centered Gaussian with covariance $(\norm u^2/d)(I_k-\one\one^\top/K)$. Since $(1-c)^2=\nu^2/K=1-k/K$, the vector $(Z_i-c\bar Z)_{i\le k}$ has covariance $I_k-(2c-c^2)k^{-1}\one\one^\top=I_k-\one\one^\top/K$. Put $H=\norm u^2/K\sim\chi^2_d/d$, so that $\norm u/\sqrt d=\sigma$. The inactive columns are independent of $(u,\eta)$, so given $u$ the reads $B_j$ are independent $N(0,\sigma^2)$ variables, independent of the active reads. For one gate, the coordinates of $(w,u)$ are independent pairs with variances $\norm a^2/d$ and $K/d$ and covariance $\ip az/d$, and the proof in \cref{sm:reduction} applies with $K$ in place of $k$.

The proofs in \cref{sm:reduction,sm:ball,sm:box,sm:gates,sm:window} use $k$ in two roles. As the size of the active pool, it fixes $\lambda_1$, $\lhat_1$, $N_z$ and the law of the active order statistics. As a variance, it enters through $\sigd^2=k/d$, the one-gate variance $k\norm a^2-\ip az^2$ and the conditional variance $(k-\abs z)/d$ of a local probe in the converse of \cref{thm:weighted}. The representation above leaves the first role unchanged and replaces $k$ by $K$ in the second; the local-probe variance becomes $(K-\abs z)/d$ because the unread content is independent of every column. It also replaces $\bar Z$ by $c\bar Z$, which only shrinks the centering terms, and every estimate that used $k\to\infty$ in the second role holds a fortiori with $K\ge k$. Auxiliary rates indexed by $k$, such as $k^{-1/4}$ and $1/\log k$ in \cref{sm:ball}, can be kept, since every bound they control only improves when $K\ge k$. These substitutions give the stated forms.
\end{proof}

Width is paid for activity; the logarithms are paid for the read family. Independent content of squared norm $\nu^2$ adds $\nu^2$ units to the interference variance but no extremes: the levels, the ratio between ball and box, the binding classes and the window profiles are those of the read family alone. If the stream also carries $\nu^2$ active directions of another independent Gaussian dictionary, whose sum is exactly $N(0,\nu^2I_d/d)$, every threshold in \cref{tab:four} is multiplied by $1+\nu^2/k$. When the read family accounts for a quarter of the active directions, the widths quadruple. The half-slack bias must be computed with $\sigd=\sqrt{K/d}$ as well; a bias calibrated to the read activity alone sits too close to the midpoint. A fixed number of families on independent dictionaries can share one stream: each sees the others as unread content, and a union bound over families composes their reliability. \Cref{prop:unread} is the sharp one-layer counterpart of the stream activity $K$ in \cref{thm:writebudget}.

\subsection{Evaluating shared errors}\label{sm:app:evaluation}

The gap between the ball and the box is a gap between a mean and a typical value. When $\Prob(X>0)>0$,
\begin{equation}\label{eq:burden}
\Exp X=\Prob(X>0)\,\Exp[X\mid X>0].
\end{equation}
For the universal family between the two thresholds, $(1+\varepsilon)\Dbox\le d\le(1-\varepsilon)\Dball$, the failure probability tends to zero and the mean to infinity, so the burden $\Exp[X\mid X>0]$ of a failed draw grows faster than the mean. For $B$ independent evaluation draws, $\Prob(\text{no draw fails})=\Prob(X=0)^B\ge1-B\,\Prob(X>0)$. Whenever $B\,\Prob(X>0)\to0$, the sample mean of $X$ and its standard error are both zero with probability tending to one, while $\Exp X\to\infty$. The frozen campaigns show the effect at both ends. At $m=k^{64}$ and $d=1.5\,d_0$, all $4096$ draws are correct while quadrature gives $\Exp X\approx2.8\times10^{19}$ (\cref{tab:e3}). At $m=k^4$ and $1.35$ times the centre, even the Rao--Blackwellized sample mean is $0.063$ with standard error $0.011$, against the quadrature value $0.327$ (\cref{tab:e1}).

Four quantities make an evaluation of a superposed layer informative. The first is whole-output success with a binomial interval. The second is the expected count from an exact one-gate law (\cref{prop:onegate}) rather than from a sample mean. The third is the burden $\Exp[X\mid X>0]$ together with the class of each wrong output, which separates frequent small failures from rare large ones. The fourth is the slack $\Delta$ of \cref{prop:audit} on every draw. The slack is continuous, so its empirical distribution is informative even when no draw fails, and in the window its law is explicit. For the midpoint network at $d=d_0+8k(1+\beta)x$ with $\beta\to\beta_0\in[1,\infty]$, the computation in \cref{sm:window} gives
\[
\frac{\lhat_1\Delta}{\sigd}\Rightarrow x-G_A-G_B/\beta_0 ,
\]
because the slack of the one-active class is $\tfrac32-\max_iA_i-\max_jB_j$, whose expansion in \cref{sm:window} gives the limit, while the other classes keep a diverging slack in these units. A Gumbel fit to observed slacks therefore extrapolates the failure probability below the level that the draws resolve.

\subsection{Implications for claims about superposition}\label{sm:app:claims}

Models of computation in superposition \citep{Elhage22,HanniMVC24,AdlerShavit24} are used to interpret what a layer of a network computes. \citet{ShiLHH26} survey the geometry, learning and computation of superposed representations and examine what evaluations of feature-recovery methods establish, noting that accurate reconstruction of activations does not by itself establish feature identity or causal use. The results here give claims about computation in superposition a checkable form in the same spirit.

\paragraph{Name the criterion}
Losses and error rates averaged over gates measure the first row of \cref{tab:criteria}, which does not depend on $m$. A layer can have an expected error fraction below $10^{-21}$ and fail on most inputs (\cref{sm:app:criteria}). Learned solutions of the universal-AND task \citep{Newgas25} and of related compressed-computation tasks \citep{BhagatMGH26} need not use the mechanism of theoretical constructions. A claim that a learned layer computes in superposition should therefore state which row of the table it establishes.

\paragraph{Reliability is not robustness}
The reliability laws average over supports drawn independently of the dictionary. Two adversaries break them at very different costs. A support chosen after the dictionary raises the width to order $k^2\log m$ (\cref{prop:uniform}). A perturbation of the stored vector along the probe of a binding gate needs a norm of order one, out of $\norm u\approx\sqrt k$ (\cref{prop:audit}(iii)). A guarantee certified on random inputs is a statement about typical use.

\paragraph{Lower bounds concern accessibility}
The $\Theta(k\log m)$ scale belongs to threshold reads of a random linear code. Without the linear dictionary, $2k$ coordinates carry all pairwise conjunctions with one threshold unit each (\cref{prop:polycode}), and order $k$ coordinates are necessary (\cref{prop:counting}). A width far below the number of features is therefore consistent with several mechanisms, and a dimension count alone does not identify computation in superposition.

\subsection{Open problems}\label{sm:app:open}

\paragraph{Reliability for gate-local rules}
\Cref{thm:box}(ii) covers every common representation, even one chosen after the reads are observed, and \cref{thm:weighted} covers every gate-local rule for the expected count. Can a gate-local rule, for instance the dual frame of \cref{sm:rem:dual}, make the network reliable below $k\Vbox$? A converse must control the joint extremes of adapted probes. Marginal error bounds cannot force a failure on a typical draw, since $\Prob(X>0)$ can vanish while $\Exp X$ diverges.

\paragraph{Sharp suprema for structured families}
Disjoint tuples give the ball, the universal family gives the box, and \cref{prop:star} interpolates for one level of private partners. For a family defined by a sharing hypergraph with several levels, generic chaining determines $\Gamma_z$ up to constants \citep{Talagrand14}. A sharp variational formula, possibly of the type of the generalized random energy model \citep{Derrida85}, would give the reliability price of any query family, for example relation queries in which each feature has a bounded number of partners at each level.

\paragraph{Universality of the box}
The one-gate exponent is universal (\cref{thm:universality}), but the box rests on the Gaussian joint law. A first target is the Rademacher dictionary with midpoint pairwise conjunction: is its reliability threshold $4k(\lambda_1+\lambda_0)^2(1+o(1))$? The required input is joint extreme-value control of the active and inactive reads, which are dependent through $u$.

\paragraph{Reliability of composed circuits}
\Cref{thm:writebudget} is a union bound over an ideal execution, that is, a ball argument. In a two-layer universal circuit the second layer reads $\binom k2$ active outputs, whose errors are correlated through the extremes of the first layer. A box law for the second layer needs the joint law of its reads given the first layer's realized errors; \cref{prop:unread} indicates, through the variance term, that it pays for its full stream activity.

\paragraph{Designed and learned linear dictionaries}
The Gaussian dictionary needs order $k\log m$ dimensions, and arbitrary encodings need order $k$ (\cref{prop:counting,prop:polycode}) at the price of large coefficients and small normalized margins. For uniform recovery, every linear encoder needs $\Omega(k^2\log(m/k)/\log k)$ dimensions when $k<\sqrt m$ \citep{GKP26}. For random supports, is order $k\log m$ necessary for every linear dictionary read by threshold units, or can a dictionary adapted to the support law, with bounded coefficients and margins, approach order $k$?

\paragraph{Non-asymptotic sizing}
The limit profiles of \cref{thm:window} were conservative in every simulated cell above the centre, and the Gaussian tail factor $e^{-y^2/(2\lhat_1^2)}$ explains the direction (\cref{sm:app:sizing}). A second-order window law with explicit error terms would turn \eqref{eq:sizing} into a certified width at finite $k$. For general threshold gates, even the first-order window requires the joint law of $r$ upper and $r$ lower order statistics in both pools.

%% file: refs.bib
@misc{AdlerShavit24,
  author = {Adler, Micah and Shavit, Nir},
  title = {On the Complexity of Neural Computation in Superposition},
  year = {2024},
  howpublished = {arXiv:2409.15318},
  url = {https://arxiv.org/abs/2409.15318},
}

@article{ArratiaGoldsteinGordon1989,
  author = {Arratia, Richard and Goldstein, Larry and Gordon, Louis},
  title = {Two Moments Suffice for {P}oisson Approximations: The {C}hen--{S}tein Method},
  journal = {The Annals of Probability},
  volume = {17},
  number = {1},
  pages = {9--25},
  year = {1989},
  doi = {10.1214/aop/1176991491},
}

@book{BarbourHolstJanson1992,
  author = {Barbour, A. D. and Holst, Lars and Janson, Svante},
  title = {Poisson Approximation},
  volume = {2},
  series = {Oxford Studies in Probability},
  publisher = {Oxford University Press},
  address = {Oxford},
  year = {1992},
  doi = {10.1093/oso/9780198522355.001.0001},
}

@misc{BhagatMGH26,
  author = {Bhagat, Jai and Molas-Medina, Sara and Giglemiani, Giorgi and Heimersheim, Stefan},
  title = {Compressed Computation is (Probably) not Computation in Superposition},
  year = {2026},
  howpublished = {arXiv:2606.14673},
  url = {https://arxiv.org/abs/2606.14673},
}

@article{Buck43,
  author = {Buck, R. Creighton},
  title = {Partition of Space},
  journal = {The American Mathematical Monthly},
  volume = {50},
  number = {9},
  pages = {541--544},
  year = {1943},
  doi = {10.1080/00029890.1943.11991447},
}

@article{CandesTao06,
  author = {Cand\`{e}s, Emmanuel J. and Tao, Terence},
  title = {Near-Optimal Signal Recovery from Random Projections: Universal Encoding Strategies?},
  journal = {IEEE Transactions on Information Theory},
  volume = {52},
  number = {12},
  pages = {5406--5425},
  year = {2006},
  doi = {10.1109/TIT.2006.885507},
}

@article{Cover65,
  author = {Cover, Thomas M.},
  title = {Geometrical and Statistical Properties of Systems of Linear Inequalities with Applications in Pattern Recognition},
  journal = {IEEE Transactions on Electronic Computers},
  volume = {EC-14},
  number = {3},
  pages = {326--334},
  year = {1965},
  doi = {10.1109/PGEC.1965.264137},
}

@article{Donoho06,
  author = {Donoho, David L.},
  title = {Compressed Sensing},
  journal = {IEEE Transactions on Information Theory},
  volume = {52},
  number = {4},
  pages = {1289--1306},
  year = {2006},
  doi = {10.1109/TIT.2006.871582},
}

@article{DJS22,
  author = {D\"oring, Hanna and Jansen, Sabine and Schubert, Kristina},
  title = {The Method of Cumulants for the Normal Approximation},
  journal = {Probability Surveys},
  volume = {19},
  pages = {185--270},
  year = {2022},
  doi = {10.1214/22-PS7},
  note = {arXiv:2102.01459},
}

@misc{Elhage22,
  author = {Elhage, Nelson and Hume, Tristan and Olsson, Catherine and Schiefer, Nicholas and Henighan, Tom and Kravec, Shauna and Hatfield-Dodds, Zac and Lasenby, Robert and Drain, Dawn and Chen, Carol and Grosse, Roger and McCandlish, Sam and Kaplan, Jared and Amodei, Dario and Wattenberg, Martin and Olah, Christopher},
  title = {Toy Models of Superposition},
  year = {2022},
  howpublished = {Transformer Circuits Thread (Anthropic)},
  note = {arXiv:2209.10652},
  url = {https://transformer-circuits.pub/2022/toy_model/index.html},
}

@article{EvansSchulman99,
  author = {Evans, William S. and Schulman, Leonard J.},
  title = {Signal Propagation and Noisy Circuits},
  journal = {IEEE Transactions on Information Theory},
  volume = {45},
  number = {7},
  pages = {2367--2373},
  year = {1999},
  doi = {10.1109/18.796377},
}

@inproceedings{GKP26,
  author = {Garg, Nikhil and Kleinberg, Jon and Peng, Kenny},
  title = {How Many Features Can a Language Model Store Under the Linear Representation Hypothesis?},
  booktitle = {Proceedings of Thirty Ninth Conference on Learning Theory},
  editor = {Hanneke, Steve and Lattimore, Tor},
  series = {Proceedings of Machine Learning Research},
  volume = {336},
  pages = {5358--5376},
  publisher = {PMLR},
  year = {2026},
  url = {https://proceedings.mlr.press/v336/garg26a.html},
}

@misc{HanniMVC24,
  author = {H\"anni, Kaarel and Mendel, Jake and Vaintrob, Dmitry and Chan, Lawrence},
  title = {Mathematical Models of Computation in Superposition},
  year = {2024},
  howpublished = {ICML 2024 Mechanistic Interpretability Workshop; arXiv:2408.05451},
  url = {https://arxiv.org/abs/2408.05451},
}

@article{Hastad94,
  author = {H{\aa}stad, Johan},
  title = {On the Size of Weights for Threshold Gates},
  journal = {SIAM Journal on Discrete Mathematics},
  volume = {7},
  number = {3},
  pages = {484--492},
  year = {1994},
  doi = {10.1137/S0895480192235878},
}

@book{LeadbetterLindgrenRootzen1983,
  author = {Leadbetter, M. R. and Lindgren, Georg and Rootz\'en, Holger},
  title = {Extremes and Related Properties of Random Sequences and Processes},
  series = {Springer Series in Statistics},
  publisher = {Springer-Verlag},
  address = {New York},
  year = {1983},
  doi = {10.1007/978-1-4612-5449-2},
}

@book{Muroga71,
  author = {Muroga, Saburo},
  title = {Threshold Logic and Its Applications},
  publisher = {Wiley-Interscience},
  address = {New York},
  year = {1971},
  isbn = {978-0-471-62530-8},
}

@misc{Newgas25,
  author = {Newgas, Adam},
  title = {Compressed Computation: Dense Circuits in a Toy Model of the Universal-{AND} Problem},
  year = {2025},
  howpublished = {arXiv:2507.09816},
  url = {https://arxiv.org/abs/2507.09816},
}

@book{PetrovBook75,
  author = {Petrov, Valentin V.},
  title = {Sums of Independent Random Variables},
  volume = {82},
  series = {Ergebnisse der Mathematik und ihrer Grenzgebiete},
  publisher = {Springer-Verlag},
  address = {Berlin},
  year = {1975},
  doi = {10.1007/978-3-642-65809-9},
}

@incollection{vonNeumann56,
  author = {von Neumann, John},
  title = {Probabilistic Logics and the Synthesis of Reliable Organisms from Unreliable Components},
  booktitle = {Automata Studies},
  editor = {Shannon, Claude E. and McCarthy, John},
  number = {34},
  pages = {43--98},
  series = {Annals of Mathematics Studies},
  publisher = {Princeton University Press},
  year = {1956},
  doi = {10.1515/9781400882618-003},
}

@article{Winder66,
  author = {Winder, Robert O.},
  title = {Partitions of {$N$}-Space by Hyperplanes},
  journal = {SIAM Journal on Applied Mathematics},
  volume = {14},
  number = {4},
  pages = {811--818},
  year = {1966},
  doi = {10.1137/0114068},
}

@misc{IvanitskiyJKM26,
  author = {Ivanitskiy, Michael I. and Jasper, John and King, Emily J. and Mixon, Dustin G.},
  title = {Towards a Mathematical Theory of Superposition},
  year = {2026},
  howpublished = {arXiv:2608.27540},
  url = {https://arxiv.org/abs/2608.27540},
}

@misc{Barvinok2010,
 author = {Barvinok, Alexander},
 title = {{MATH 669}: Combinatorics of Polytopes},
 year = {2010},
 note = {Winter 2010 lecture notes, University of Michigan, Theorem 15.2},
 url = {https://sites.lsa.umich.edu/barvinok/wp-content/uploads/sites/1434/2025/05/polynotes669.pdf},
}

@article{Borell75,
  author = {Borell, Christer},
  title = {The {B}runn--{M}inkowski inequality in {G}auss space},
  journal = {Inventiones Mathematicae},
  volume = {30},
  number = {2},
  pages = {207--216},
  year = {1975},
  doi = {10.1007/BF01425510},
}

@incollection{TIS76,
  author = {Tsirel'son, Boris S. and Ibragimov, Ildar A. and Sudakov, Vladimir N.},
  title = {Norms of {G}aussian sample functions},
  booktitle = {Proceedings of the Third Japan--USSR Symposium on Probability Theory},
  series = {Lecture Notes in Mathematics},
  volume = {550},
  pages = {20--41},
  publisher = {Springer},
  address = {Berlin},
  year = {1976},
  doi = {10.1007/BFb0077482},
}

@article{Slepian62,
  author = {Slepian, David},
  title = {The one-sided barrier problem for {G}aussian noise},
  journal = {Bell System Technical Journal},
  volume = {41},
  number = {2},
  pages = {463--501},
  year = {1962},
  doi = {10.1002/j.1538-7305.1962.tb02419.x},
}

@incollection{Fernique75,
  author = {Fernique, Xavier},
  title = {R\'egularit\'e des trajectoires des fonctions al\'eatoires gaussiennes},
  booktitle = {\'Ecole d'\'Et\'e de Probabilit\'es de Saint-Flour IV---1974},
  series = {Lecture Notes in Mathematics},
  volume = {480},
  pages = {1--96},
  publisher = {Springer},
  address = {Berlin},
  year = {1975},
  doi = {10.1007/BFb0080190},
}

@book{BoucheronLugosiMassart13,
  author = {Boucheron, St\'ephane and Lugosi, G\'abor and Massart, Pascal},
  title = {Concentration Inequalities: A Nonasymptotic Theory of Independence},
  publisher = {Oxford University Press},
  address = {Oxford},
  year = {2013},
  doi = {10.1093/acprof:oso/9780199535255.001.0001},
}

@book{Resnick87,
  author = {Resnick, Sidney I.},
  title = {Extreme Values, Regular Variation, and Point Processes},
  series = {Applied Probability},
  volume = {4},
  publisher = {Springer-Verlag},
  address = {New York},
  year = {1987},
}

@article{Derrida81,
  author = {Derrida, Bernard},
  title = {Random-energy model: An exactly solvable model of disordered systems},
  journal = {Physical Review B},
  volume = {24},
  number = {5},
  pages = {2613--2626},
  year = {1981},
  doi = {10.1103/PhysRevB.24.2613},
}

@book{Talagrand14,
  author = {Talagrand, Michel},
  title = {Upper and Lower Bounds for Stochastic Processes: Modern Methods and Classical Problems},
  series = {Ergebnisse der Mathematik und ihrer Grenzgebiete. 3. Folge},
  volume = {60},
  publisher = {Springer},
  address = {Berlin},
  year = {2014},
  doi = {10.1007/978-3-642-54075-2},
}

@article{DonohoTanner09,
  author = {Donoho, David L. and Tanner, Jared},
  title = {Counting faces of randomly projected polytopes when the projection radically lowers dimension},
  journal = {Journal of the American Mathematical Society},
  volume = {22},
  number = {1},
  pages = {1--53},
  year = {2009},
  doi = {10.1090/S0894-0347-08-00600-0},
}

@article{LaurentMassart00,
  author = {Laurent, B\'eatrice and Massart, Pascal},
  title = {Adaptive estimation of a quadratic functional by model selection},
  journal = {The Annals of Statistics},
  volume = {28},
  number = {5},
  pages = {1302--1338},
  year = {2000},
  doi = {10.1214/aos/1015957395},
}

@book{Vershynin18,
  author = {Vershynin, Roman},
  title = {High-Dimensional Probability: An Introduction with Applications in Data Science},
  publisher = {Cambridge University Press},
  address = {Cambridge},
  year = {2018},
  doi = {10.1017/9781108231596},
}

@article{JosephBarron14,
  author = {Joseph, Antony and Barron, Andrew R.},
  title = {Fast Sparse Superposition Codes Have Near Exponential Error Probability for {$R<\mathcal{C}$}},
  journal = {IEEE Transactions on Information Theory},
  volume = {60},
  number = {2},
  pages = {919--942},
  year = {2014},
  doi = {10.1109/TIT.2013.2289865},
}

@inproceedings{BarronJoseph11,
  author = {Barron, Andrew R. and Joseph, Antony},
  title = {Analysis of fast sparse superposition codes},
  booktitle = {2011 IEEE International Symposium on Information Theory Proceedings},
  pages = {1772--1776},
  year = {2011},
  doi = {10.1109/ISIT.2011.6033853},
}

@article{Derrida85,
  author = {Derrida, Bernard},
  title = {A generalization of the {R}andom {E}nergy {M}odel which includes correlations between energies},
  journal = {Journal de Physique Lettres},
  volume = {46},
  number = {9},
  pages = {401--407},
  year = {1985},
  doi = {10.1051/jphyslet:01985004609040100},
}

@article{Olah20,
  author = {Olah, Chris and Cammarata, Nick and Schubert, Ludwig and Goh, Gabriel and Petrov, Michael and Carter, Shan},
  title = {Zoom In: An Introduction to Circuits},
  journal = {Distill},
  volume = {5},
  number = {3},
  year = {2020},
  doi = {10.23915/distill.00024.001},
}

@article{Sudakov71,
  author = {Sudakov, Vladimir N.},
  title = {Gaussian random processes and measures of solid angles in {H}ilbert space},
  journal = {Soviet Mathematics Doklady},
  volume = {12},
  pages = {412--415},
  year = {1971},
}

@article{Wainwright09,
  author = {Wainwright, Martin J.},
  title = {Sharp Thresholds for High-Dimensional and Noisy Sparsity Recovery Using $\ell_1$-Constrained Quadratic Programming ({L}asso)},
  journal = {IEEE Transactions on Information Theory},
  volume = {55},
  number = {5},
  pages = {2183--2202},
  year = {2009},
  doi = {10.1109/TIT.2009.2016018},
}

@article{FletcherRanganGoyal09,
  author = {Fletcher, Alyson K. and Rangan, Sundeep and Goyal, Vivek K.},
  title = {Necessary and Sufficient Conditions for Sparsity Pattern Recovery},
  journal = {IEEE Transactions on Information Theory},
  volume = {55},
  number = {12},
  pages = {5758--5772},
  year = {2009},
  doi = {10.1109/TIT.2009.2032726},
}

@article{CandesPlan09,
  author = {Cand\`{e}s, Emmanuel J. and Plan, Yaniv},
  title = {Near-ideal model selection by $\ell_1$ minimization},
  journal = {The Annals of Statistics},
  volume = {37},
  number = {5A},
  pages = {2145--2177},
  year = {2009},
  doi = {10.1214/08-AOS653},
}

@article{McEliece87,
  author = {McEliece, Robert J. and Posner, Edward C. and Rodemich, Eugene R. and Venkatesh, Santosh S.},
  title = {The capacity of the {H}opfield associative memory},
  journal = {IEEE Transactions on Information Theory},
  volume = {33},
  number = {4},
  pages = {461--482},
  year = {1987},
  doi = {10.1109/TIT.1987.1057328},
}

@article{Plate95,
  author = {Plate, Tony A.},
  title = {Holographic reduced representations},
  journal = {IEEE Transactions on Neural Networks},
  volume = {6},
  number = {3},
  pages = {623--641},
  year = {1995},
  doi = {10.1109/72.377968},
}

@article{ThomasDasguptaRosing21,
  author = {Thomas, Anthony and Dasgupta, Sanjoy and Rosing, Tajana},
  title = {A Theoretical Perspective on Hyperdimensional Computing},
  journal = {Journal of Artificial Intelligence Research},
  volume = {72},
  pages = {215--249},
  year = {2021},
  doi = {10.1613/jair.1.12664},
}

@misc{ShiLHH26,
  author = {Shi, Dai and Li, Xiaoyu and Han, Andi and Hern{\'a}ndez-Lobato, Jos{\'e} Miguel},
  title = {Feature Superposition in Neural Networks: From Theory to Practice},
  year = {2026},
  howpublished = {arXiv:2609.06862},
  url = {https://arxiv.org/abs/2609.06862},
}
